\documentclass{article}

\usepackage{arxiv}

\usepackage{amsmath,amsfonts,bm}

\def\eqref#1{equation~\ref{#1}}

\def\1{\bm{1}}

\DeclareMathAlphabet{\mathsfit}{\encodingdefault}{\sfdefault}{m}{sl}
\SetMathAlphabet{\mathsfit}{bold}{\encodingdefault}{\sfdefault}{bx}{n}

\newcommand{\E}{\mathbb{E}}

\newcommand{\R}{\mathbb{R}}

\newcommand{\Var}{\mathrm{Var}}

\DeclareMathOperator*{\argmin}{arg\,min}

\DeclareMathOperator{\sign}{sign}

\usepackage[utf8]{inputenc}
\usepackage[T1]{fontenc}
\usepackage[nopatch=footnote]{microtype}
\usepackage{graphicx}
\usepackage{booktabs}
\usepackage{amsfonts}
\usepackage{nicefrac}
\usepackage{xcolor}
\usepackage{hyperref}
\usepackage{url}
\usepackage{stfloats}
\usepackage[numbers,sort&compress]{natbib}

\usepackage{amsmath,amssymb,amsthm,mathtools}
\usepackage{subcaption}
\usepackage{caption}
\usepackage{multirow}
\usepackage{adjustbox}
\usepackage{tabularx}
\usepackage{array}
\usepackage{longtable}
\usepackage{siunitx}
\usepackage{algorithm}
\usepackage{algpseudocode}
\usepackage{enumitem}
\usepackage[capitalize,noabbrev]{cleveref}
\usepackage[textsize=tiny]{todonotes}

\makeatletter

\@ifundefined{theHALG@line}{}{%
  \renewcommand{\theHALG@line}{\thealgorithm.\arabic{ALG@line}}%
}
\makeatother

\newcommand{\norm}[1]{\left\lVert#1\right\rVert}
\newcommand{\V}[1]{\boldsymbol{#1}}
\newcommand{\clip}{\mathrm{clip}}
\newcommand{\phiPLE}{\phi}
\newcommand{\DeltaT}{\Delta}
\newcommand{\ReLU}{\mathrm{ReLU}}
\newcommand{\Pp}{\mathbb{P}}
\newcommand{\ip}[2]{\langle #1,#2\rangle}
\newcommand{\abs}[1]{\lvert #1\rvert}
\newcommand{\ind}{\mathbf{1}}
\newcommand{\prox}{\mathrm{prox}}
\newcommand{\half}{\tfrac{1}{2}}
\newcommand{\soft}{\mathcal{S}}

\newtheorem{example}{Example}
\theoremstyle{plain}
\newtheorem{theorem}{Theorem}[section]
\newtheorem{proposition}[theorem]{Proposition}
\newtheorem{lemma}[theorem]{Lemma}
\newtheorem{corollary}[theorem]{Corollary}
\theoremstyle{definition}
\newtheorem{definition}[theorem]{Definition}
\newtheorem{assumption}[theorem]{Assumption}
\theoremstyle{remark}
\newtheorem{remark}[theorem]{Remark}

\title{LassoFlexNet: a Flexible Neural Architecture for Tabular Data}
\date{}

\author{
Kry Yik Chau Lui\thanks{Equal contribution} \\
RBC Borealis \\
\texttt{yikchau.y.lui@borealisai.com}
\And
Cheng Chi\footnotemark[1] \\
RBC Borealis \\
\texttt{cheng.chi@borealisai.com}
\And
Kishore Basu \\
RBC \\
\texttt{kishore.basu@rbc.com}
\And
Yanshuai Cao \\
RBC Borealis \\
\texttt{yanshuai.cao@borealisai.com}
}

\renewcommand{\headeright}{Preprint}
\renewcommand{\undertitle}{Preprint}
\renewcommand{\shorttitle}{LassoFlexNet: Flexible Neural Architecture for Tabular Data}

\hypersetup{
  pdftitle={LassoFlexNet: a Flexible Neural Architecture for Tabular Data},
  pdfauthor={Kry Yik Chau Lui, Cheng Chi, Kishore Basu, Yanshuai Cao},
  pdfkeywords={Machine Learning, Tabular Data, Sparse Deep Learning}
}

\begin{document}
\maketitle

\begin{abstract}
Despite their dominance in vision and language, deep neural networks often underperform relative to tree-based models on tabular data. 
To bridge this gap, we incorporate five key inductive biases into deep learning: robustness to irrelevant features, axis alignment, 
localized irregularities, feature heterogeneity, and training stability. We propose \emph{LassoFlexNet}, 
an architecture that evaluates the linear and nonlinear marginal contribution of each input via Per-Feature Embeddings, 
and sparsely selects relevant variables using a Tied Group Lasso mechanism. 
Because these components introduce optimization challenges that destabilize standard proximal methods, 
we develop a \emph{Sequential Hierarchical Proximal Adaptive Gradient optimizer with exponential moving averages (EMA)} to 
ensure stable convergence. Across $52$ datasets from three benchmarks, LassoFlexNet matches or outperforms leading tree-based models, 
achieving up to a $10$\% relative gain, while maintaining Lasso-like interpretability. We substantiate these empirical results with ablation studies and theoretical proofs 
confirming the architecture's enhanced expressivity and structural breaking of undesired rotational invariance.
\end{abstract}

\keywords{Machine Learning \and Tabular Data \and Sparse Deep Learning}

\section{Introduction}
\label{sec:intro}

Despite the dominance of deep neural networks in vision and language \citep{he2016deep, vaswani2017attention}, 
tree-based methods such as XGBoost and CatBoost \citep{chen2016xgboost, prokhorenkova2018catboost} 
remain the default choice for tabular data. \citet{grinsztajn2022tree} identified three primary reasons for this performance gap: 
(1) robustness to uninformative features, (2) preservation of data orientation, and (3) the ability to learn irregular, localized functions.

We argue that input heterogeneity-the mix of varying abstraction levels-is a fourth, overlooked but equally critical factor. 
Unlike homogeneous vision or text data, tabular datasets frequently mix raw physical measurements (e.g., headcount) with 
complex derived metrics (e.g., price-to-equity ratio). Standard neural networks linearly blend these disparate inputs into 
a shared representation space during the first layer, irreparably obscuring their marginal information. 
Tree-based models, conversely, recursively partition the input space in an axis-aligned manner, 
natively isolating heterogeneity and modeling sharp, step-like decision boundaries.

Fusing these four inductive biases into deep learning while ensuring robust optimization requires moving fundamentally beyond 
architectures amenable to standard stochastic gradient descent. We propose \emph{LassoFlexNet}, an architecture engineered to 
mimic the coordinate-wise localized flexibility of trees, coupled with a novel optimization algorithm to resolve the resulting 
non-convex non-smooth training dynamics. 
Our primary contributions are:
\begin{enumerate} 
    \item \textbf{Tabular-Native Architecture}: We introduce a Tied Group Lasso mechanism that operates on non-linear 
    Per-Feature Embeddings rather than raw inputs. Furthermore, we rigorously prove that Piecewise Linear Encoding (PLE) 
    fundamentally breaks the undesired rotational invariance of neural networks \citep{ng2004feature} and 
    amplifies functional expressivity for localized irregularities (
        Propositions \ref{prp:informal_ple_not_invariant} and \ref{prp:informal_ple_more_expressivie}).
    \item \textbf{Stable Proximal Optimization}: 
    We derive a \textit{Sequential Hierarchical Proximal Adaptive Gradient optimizer with exponential moving averages (Seq-Hier-Prox-EMA)}. 
    This algorithm decouples the optimization trajectory, ensuring the Lasso selection mechanism and the deep MLP-Mixer module 
    \citep{tolstikhin2021mlp} learn synergistically, overcoming the severe training instabilities of prior sparse neural methods 
    like LassoNet \citep{lemhadri2021lassonet}.
    \item \textbf{State-of-the-Art Empirical Performance}: Across 52 datasets spanning three recent benchmarks 
    \citep{cherepanova2023performance, rubachev2025tabred}, \emph{LassoFlexNet} consistently matches or surpasses leading tree-based models, 
    achieving up to a 10\% relative improvement, while providing high-fidelity feature importance metrics.
\end{enumerate}
Overall, \emph{LassoFlexNet} bridges the representational and optimization gaps between neural networks and decision trees, 
offering a principled, robust foundation for deep tabular learning.

\section{Related Work}
\label{sec:related_work}

\paragraph{Lasso-Based Approaches.} Lasso regression \citep{tibshirani1996regression} utilizes $\ell_1$ regularization for variable selection, 
typically solved via proximal gradient methods. 
LassoNet \citep{lemhadri2021lassonet} extends this paradigm to deep learning by employing a linear skip connection to guide a neural module. 
While theoretically sound for inducing feature sparsity,
LassoNet frequently underperforms on modern tabular benchmarks (as shown in Sec.\ \ref{sec:result}) 
and fails to serve as a reliable alternative to tree-based models. 
\emph{LassoFlexNet} addresses these deficiencies through two critical innovations:
\begin{itemize}
    \item \textbf{Architecture}: 
    Rather than performing selection on raw inputs, we introduce Per-Feature Embeddings (PFE) combined with a Tied Group Lasso mechanism. 
    This enables feature selection on learned, nonlinear representations, 
    capturing marginal predictive contributions that raw inputs might obscure.
    \item \textbf{Optimization}: We replace LassoNet's joint optimization with a sequential procedure utilizing 
    proximal adaptive gradients stabilized via exponential moving averages (EMA).
\end{itemize}

\paragraph{Neural Networks on Tabular Data.} The broader landscape of deep tabular learning has focused heavily on 
adapting interaction modules to rival decision trees. Models such as TabNet \citep{arik2021tabnet} and 
FT-Transformer \citep{gorishniy2021revisiting} leverage attention and feature-interaction mechanisms. 
Recently, TabM \citep{gorishniy2025tabm} enhanced these backbones via parameter-efficient ensembling, 
improving generalization through diverse predictions. 

\paragraph{In-Context and Retrieval-Based Methods.} An orthogonal class of modern methods focuses on retrieval and in-context learning. 
TabR \citep{gorishniy2024tabr} utilizes a retrieval-augmented design to query similar training examples at inference. 
Alternatively, TabPFN \citep{hollmann2023tabpfntransformersolvessmall} and its successor 
TabPFN-v2.5 \citep{grinsztajn2025tabpfn25advancingstateart} operate as pre-trained in-context learners optimized over diverse synthetic 
and real datasets, an approach that TabICL \citep{qu2025tabicl} scales further into a foundation model. 
However, the quadratic complexity of their attention mechanisms, coupled with inherent context window limitations, 
restricts these approaches primarily to small-to-medium datasets, leaving scalable, 
tree-competitive representation learning an open problem that \emph{LassoFlexNet} directly targets.

\begin{table}[ht]
\centering
\caption{LassoFlexNet components addressing tabular data challenges.}
\label{tab:contributions_summary}
\small
\setlength{\tabcolsep}{2pt} 
\renewcommand{\arraystretch}{0.9}
\begin{tabular}{l@{\hspace{6pt}}p{3.3cm}@{\hspace{6pt}}p{2.9cm}}
\toprule
\textbf{Error} & \textbf{Inductive Bias} & \textbf{Solution} \\ 
\midrule
Generalization & Robustness to irrelevant features; axis alignment & Lasso Linear Skip Connection with Tied Group Lasso \\ 
Optimization & Stable training & Sequential Hierarchical Proximal Adaptive Gradient optimizer + EMA \\ 
Approximation & Axis alignment, localized irregularities; heterogeneity; complex interactions & PLE; PFE; MLP-Mixer Module \\ 
\bottomrule
\end{tabular}
\end{table}

\section{Design Principles for Tabular Inductive Biases}
\label{sec:design_principle}

To rival the performance of tree-based models, neural architectures must natively encode five critical inductive biases: 
robustness to irrelevant features, axis alignment, localized irregularities, feature heterogeneity, 
and training stability (\ref{sec:intro}). 
We formalize our design approach by decomposing the test loss into three distinct error components—generalization, 
optimization, and approximation:
\begin{align}
\mathcal{L}_{\text{Test}} &= 
\underbrace{\mathcal{L}_{\text{Test}} - \mathcal{L}_{\text{Train}}}_{\text{Generalization Error}} 
+ \underbrace{\mathcal{L}_{\text{Train}} - \mathcal{L}_{\text{Train, Min}}}_{\text{Optimization Error}} 
\phantom{\mathcal{L}_{\text{Test}}} + \underbrace{\mathcal{L}_{\text{Train, Min}}}_{\text{Approximation Error}}
\end{align}

\emph{LassoFlexNet} structurally bounds these errors via targeted architectural and algorithmic interventions 
(summarized in Table~\ref{tab:contributions_summary}):
\begin{itemize}
    \item \textbf{Approximation Error}: To model localized irregularities and manage feature heterogeneity, 
    we utilize \textbf{Piecewise Linear Encoding (PLE)}~\citep{gorishniy2022embeddings} and \textbf{Per-Feature Embeddings (PFE)}. 
    A subsequent \textbf{MLP-Mixer Module}~\citep{tolstikhin2021mlp} captures complex cross-feature interactions.
    \item \textbf{Generalization Error}: To preserve axis orientation and strictly mitigate overfitting to irrelevant features, 
    we construct a \textbf{Linear Skip Connection governed by a Tied Group Lasso} penalty over the learned embeddings.
    \item \textbf{Optimization Error}: To guarantee stable convergence across these high-capacity, non-homogeneous modules, 
    we derive a \textbf{Seq-Hier-Prox-Adam-EMA} optimizer.
\end{itemize}

Sections \ref{sec:model_detail} and \ref{sec:coordinate} detail these components mathematically, 
before Section \ref{app:address_3_questions} analyzes how they jointly satisfy the required inductive biases.

\section{Architecture}
\label{sec:model_detail}

\subsection{Preliminaries: LassoNet}
\label{sec:preliminary_lassonet}
LassoNet \citep{lemhadri2021lassonet} integrates feature selection by augmenting a standard neural module, $\mathrm{NN}_{W}(\V{x})$, with a linear skip connection, $\V{\beta}^\top \V{x}$, yielding the prediction $\hat{y} = \V{\beta}^\top \V{x} + \mathrm{NN}_{W}(\V{x})$. The linear coefficients $\V{\beta}$ and neural weights $W$ are jointly optimized via a hierarchical penalty: 
\begin{align}
\min_{\V{\beta}, W} L(\V{\beta}, W) + \lambda \|\V{\beta}\|_1 \quad \text{s.t.} \quad \|W^{(1)}_{i,\_}\|_\infty \leq M |\beta_i|
\end{align}
where $W^{(1)}_{i,\_}$ denotes the outgoing weights from the $i$-th input. This $\ell_1$ penalty promotes strict sparsity on $\V{\beta}$, while the $\ell_\infty$ constraint ensures that a feature can only be utilized by the neural network if its corresponding linear counterpart is non-zero. However, LassoNet's reliance on raw input correlations severely limits its ability to capture nonlinear feature importance. Furthermore, its joint Hier-Prox optimization often suffers from poor, unstable training dynamics (analyzed in Section \ref{sec:ablation}).

\subsection{LassoFlexNet Overview}
\label{sec:lassoflex_overview}
\emph{LassoFlexNet} (Figure~\ref{fig:lassoflexnet_model}) circumvents these limitations via a three-component design: (1) independent per-feature neural networks $\V{z}_i = \mathrm{NN}_{\V{\phi}_i}(x_i)$ utilizing Piecewise Linear Encoding (PLE) and Per-Feature Embedding (PFE); (2) a Tied Group Linear Skip Connection $\V{\beta}^\top \V{\bar{z}}$, operating on the mean-pooled representation $\V{\bar{z}}_i$ of each embedding; and (3) a deep MLP-Mixer \citep{tolstikhin2021mlp} module $\tau \cdot \mathrm{NN}_{W}(\V{z})$, where $\tau$ acts as a soft-curriculum scale to control the inductive bias toward simplicity if $\tau < 1$ and towards expressivity if $\tau > 1$ (Eq.~\ref{eq:lassoflexnet_tau}). Formally, we define:
\begin{align}
    \V{z} &= [\dots, \V{z}_i, \dots], \quad \V{\bar{z}} = [\dots, \text{mean}(\V{z}_i), \dots] \nonumber \\
    \hat{y} &= \V{\beta}^\top \V{\bar{z}} + \tau \cdot \mathrm{NN}_{W}(\V{z})  \label{eq:lassoflexnet_tau} \\
    \V{\beta}_{\text{Lasso}} &= \arg \min_{\V{\beta}} L(\V{\phi}, \V{\beta}, W) + \lambda \|\V{\beta}\|_1 \nonumber \\ 
    \text{s.t. } &\|W^{(1)}_{i, \_}\|_\infty \leq M | \V{\beta}_{\text{Lasso}; i}| \nonumber
\end{align}
where $i$ indexes the $i$-th feature.

\begin{figure*}[t]
    \centering
    \includegraphics[width=0.9\linewidth]{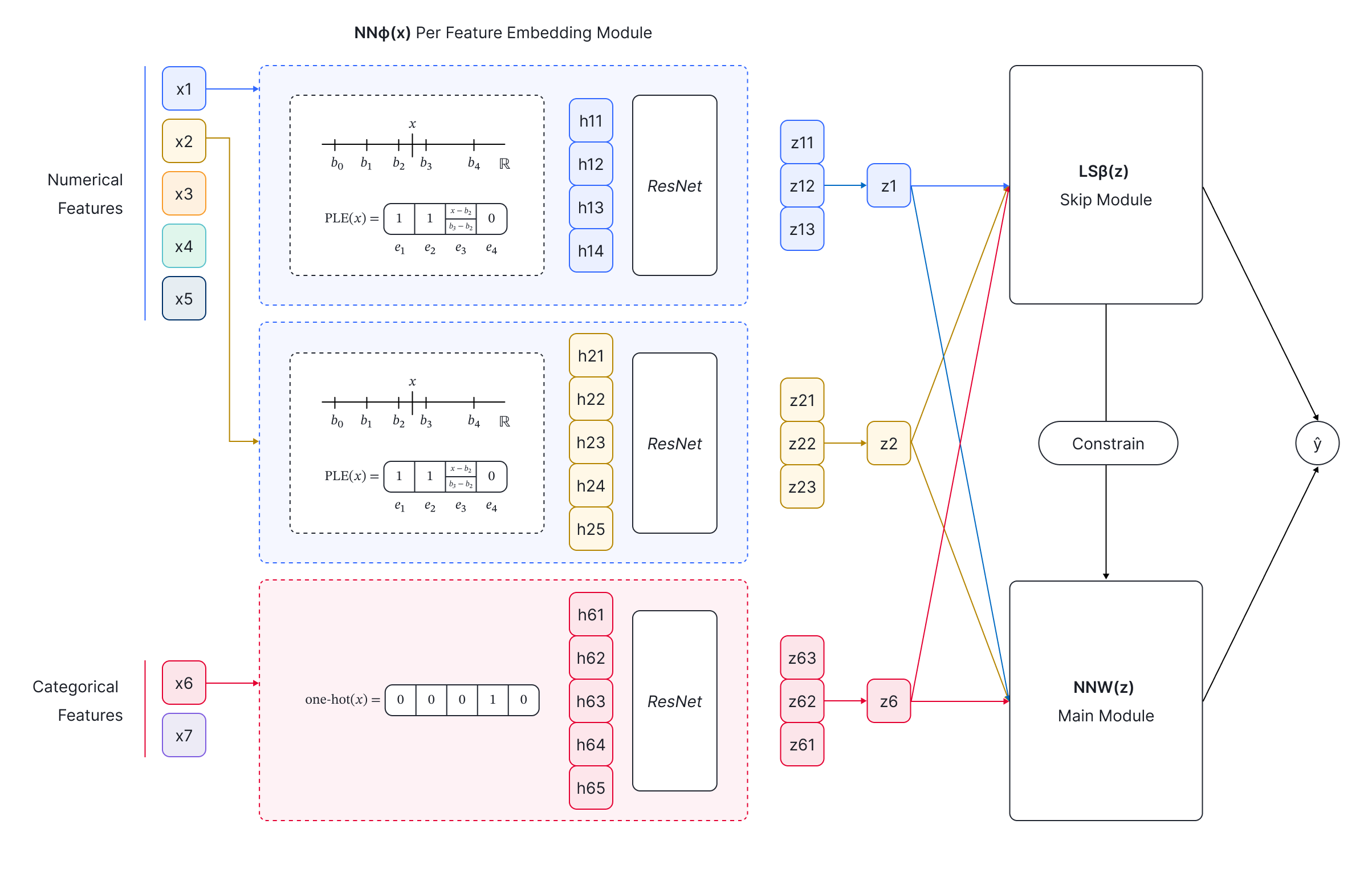}
    \caption{\emph{LassoFlexNet} architecture: Per-Feature Embedding (PLE for numerical, one-hot for categorical), Linear Skip Connection, and the MLP-Mixer Module.}
    \label{fig:lassoflexnet_model}
\end{figure*}

\subsubsection{Piecewise Linear Encoding and Per-Feature Embedding}
\label{sec:motif_nonlinear}
To inherently adapt to tabular heterogeneity, PLE \citep{gorishniy2022embeddings} partitions each numerical feature $x_i$ into $K_i$ distinct intervals via decision tree splits $\{b_{i,k}\}$. The resulting encoding $\text{PLE}(x_i) = \V{e}_i = [e_{i,1}, \dots, e_{i,K_i}]$ is defined as:
\begin{equation}
\label{eq:ple}
e_{i,t} = \max\left(0, \min\left(1, \frac{x_i - b_{i,t-1}}{b_{i,t} - b_{i,t-1}}\right)\right)
\end{equation}
This formulation generates $K_i \cdot k$ localized gradients during backpropagation, lifting the neural capacity at a linear rate to successfully capture highly irregular, localized target dependencies (proved rigorously in Appendix~\ref{app:ple_more_expressivie}). 

Subsequently, PFE transforms $\V{e}_i$ through an independent per-feature ResNet block: $\V{h}_{i,l} = \V{h}_{i,l-1} + \text{LN}(\text{ReLU}(W_{i,l} \V{h}_{i,l-1} + \V{b}_{i,l}))$. To ensure that the feature selection via the magnitude of $\V{\beta}$ remains valid and interpretable \citep{friedman2009elements}, the final embedding $\text{PFE}(\V{e}_i)$ is normalized via parameter-free BatchNorm ($\mu=0, \sigma=1$). Unlike existing soft regularization attempts \citep{sudo2021lassolayer}, PFE supports arbitrary architectural depth and allows exact feature removal via proximal gradients.

\subsubsection{Tied Group Lasso Mechanism}
\label{sec:skip_and_module}

LassoNet’s fundamental reliance on raw inputs ignores nonlinear dependencies, risking a scenario where "the blind guide the blind" if the linear skip connection converges slower than the highly-expressive neural module. \emph{LassoFlexNet} resolves this tension through two mechanisms:

\paragraph{Selection via Nonlinear Embeddings.} We apply the linear skip connection strictly to the mean-pooled embeddings $\V{\bar{z}}_i$. This mathematically implements a \emph{Tied Group Lasso}, wherein a single scalar $\beta_i$ controls the entire $i$-th embedded group. This weight-sharing severely reduces parameter count and enables feature selection based on rich nonlinear representations, while preserving interpretability since $\V{z}$ is rigorously pre-normalized.

\paragraph{Soft Curriculum Learning.} To prevent the expressive MLP-Mixer module from instantly dominating the training trajectory and bypassing feature selection, we may scale its output by $\tau < 1$ (Eq.~\ref{eq:lassoflexnet_tau}). This structurally downscales the Mixer gradients, forcing early convergence of the linear, selection-guiding components. This Lasso-Mixer pairing is vital for tabular generalization; without it, the Mixer rapidly overfits small tabular datasets, while substituting a simpler MLP fatally reduces representational capacity (demonstrated in Sec.~\ref{sec:ablation}).

\section{Optimization}
\label{sec:coordinate}

LassoNet's foundational methodological contribution is its Hier-Prox-Gradient (HPG) optimization (Algorithm~\ref{alg:lassonet_prox} in App. \ref{appendix:lassonet_prox}), which enables strict feature sparsity under stochastic optimization. Let $(\tilde{\V{\theta}}, \tilde{W})$ denote the proximal-processed parameters. HPG operates by interleaving two steps: (1) an unconstrained regular update via SGD, $(\tilde{\V{\theta}}_{t-1}, \tilde{W}_{t-1}) \to (\V{\theta}_{t}, W_{t})$, and (2) a proximal operation enforcing the hierarchical constraint, $(\V{\theta}_{t}, W_{t}) \to (\tilde{\V{\theta}}_{t}, \tilde{W}_{t})$. If $\lambda = 0$, the HPG reduces to SGD. While \emph{LassoFlexNet} adopts this alternating structure (Algorithm~\ref{alg:Lasso_train}), we fundamentally redesign the proximal operator to overcome three severe failure modes of HPG.

\begin{algorithm}
\caption{{Update Algorithm for LassoNet / LassoFlexNet}}
\label{alg:Lasso_train}
\begin{algorithmic}[1]
\State \textbf{Input:} training dataset $X \in \mathbb{R}^{n \times d}$, training labels $Y$, Lasso parameter $\beta$, neural model $g_W(\cdot)$, number of epochs $N$, hierarchy multiplier $M$, path multiplier $\varepsilon$, learning rate $\eta$
\State Initialize and train the feed-forward network on the loss $L(\beta, W)$
\State Initialize the penalty path, $\boldsymbol{\lambda} = [0, \lambda_1 \cdots, ]$, and the number of active features, $k = d$.

\For{$\lambda \in \boldsymbol{\lambda}$}
    \While{$k > 0$}
        \For{$b = 1$ to $N$}
            \State Compute $\nabla L(\beta, W)$
            \State Update $(\beta, W)$ via SGD (LassoNet or LassoFlexNet) or Adam (LassoFlexNet only)
            \If{$\lambda \neq 0$}
                \State Update $(\beta, W^{(1)})$ by Hier-Prox (LassoNet) or Seq-Hier-Prox (LassoFlexNet)
            \EndIf
        \EndFor
        \State Update $k$ to be the number of non-zero coordinates of $\beta$
    \EndWhile
\EndFor
\end{algorithmic}
\end{algorithm}
We improve in three ways: 1) joint→sequential optimization; 2) proximal gradient→proximal adaptive gradient; 3) EMA stabilization. Algorithm \ref{alg:seq-hier-prox} details our method:
\begin{algorithm}[H]
\caption{LassoFlexNet Seq-Hier-Prox Adam EMA Operator}
\label{alg:seq-hier-prox}
\begin{algorithmic}[1]
\Procedure{Seq-Hier-Prox-Adam-EMA}{$\beta, W^{(1)}, \beta_{\text{EMA}}, W^{(1)}_{\text{EMA}}; \lambda, M$} \Comment{$\beta, W^{(1)}$ can come from Adam}
  \For{$j = 1$ \textbf{to} $d$}
    \State {$\tilde\beta_j = \mathrm{Sign}(\beta) \cdot |\mathcal{S}_\lambda(\beta_{\text{EMA}, j})|$} \Comment{Sequential: Lasso first}
    \State
    \(\displaystyle
      \begin{aligned}[t]
        \text{Sort }|W_{\text{EMA}, j}^{(1)}|\text{ into }
        &\;|W_{\text{EMA}, (j,1)}^{(1)}|\ge\dots\ge|W_{\text{EMA},(j,K)}^{(1)}|
      \end{aligned}
    \)
    \For{$m = 0$ \textbf{to} $K$}
      \State
      \(\displaystyle
        \begin{aligned}[t]
          w_m
          &= \tfrac{M}{1 + m\,M^2}
             \,\mathcal{S}_\lambda\Bigl(
               |\tilde\beta_j|
               + M \sum_{i=1}^m |W_{\text{EMA},(j,i)}^{(1)}|
             \Bigr)
        \end{aligned}
      \)
    \EndFor
    \State
    \(\displaystyle
      \begin{aligned}[t]
        \text{Find first }\tilde m:\quad
        |W_{\text{EMA}, (j,\tilde m+1)}^{(1)}|\le w_{\tilde m} \le|W_{\text{EMA}, (j,\tilde m)}^{(1)}|
      \end{aligned}
    \)
    \State \(\tilde W_j^{(1)}
      = \mathrm{Sign}(W_j^{(1)})\,
        \min\bigl(w_{\tilde m},\,|W_{\text{EMA}, j}^{(1)}|\bigr)\)
  \EndFor
  \State \Return $(\tilde\beta,\tilde W^{(1)})$
\EndProcedure
\end{algorithmic}
\small\itshape
Conventions:
$W_{(j,K+1)}=0$, $W_{(j,0)}=+\infty$, min and $|\cdot|$ are elementwise.
\end{algorithm}

\paragraph{Joint Optimization Blinds Lasso Guidance.} LassoNet relies on linear skip connections to robustly guide the neural module away from irrelevant features. However, its joint optimization of $\V{\theta}$ and $W^{(1)}$ mathematically undermines this intent. HPG computes a shared scalar $w_{\tilde{m}}$ to update both parameters:
\[
\tilde{\theta}_j \!=\! \frac{1}{M} \cdot \text{sign}(\theta_j) \cdot w_{\tilde{m}}, \quad \tilde{W}_j^{(1)} \!=\! \text{sign}(W_j^{(1)}) \cdot \min(w_{\tilde{m}}, |W_j^{(1)}|)
\]
Because $w_{\tilde{m}}$ is dominated by the massive parameter count of $W^{(1)}$, the update trajectory biases heavily toward the neural module, effectively sidelining the skip connection. This does not contradict LassoNet's proximal global optimum lemma \citep{lemhadri2021lassonet}, as that lemma only guarantees optimality for the points returned by the optimizer \textit{along specific trajectories}, not the global learning objective itself.

\paragraph{Sequential Optimization: Lasso-First Guidance.} \emph{LassoFlexNet} structurally prioritizes the skip connection via a sequential optimization scheme. Let $\mathcal{S}_{\lambda \cdot \eta_t }(\cdot)$ denote the soft-thresholding operator \citep{parikh2014proximal}. Given the unconstrained update $\V{\beta}$, we first solve for the independent Lasso parameter $\tilde{\V{\beta}}$:
\begin{align}
\label{eqn:lasso_beta_prox}
    \tilde{\V{\beta}} 
    = 
    \mathcal{S}_{\lambda \cdot \eta_t }(\V{\beta}) 
    = 
    \arg\min_{\V{b}} \left[ \frac{ 1 }{2 \eta_t} \|\V{\beta} - \V{b}\|_2^2 + \lambda \| \V{b} \|_1 \right]
\end{align} 
Subsequently, we optimize $\tilde{W}$ under a constraint strictly guided by the newly computed $\tilde{\V{\beta}}$:
\begin{equation*}
    \tilde{W} 
    \!=\!
    \arg \min_{U} \left[ \frac{1}{2} \|U - W\|_2^2 + \bar{\lambda} \|U\|_1 \right] \quad \text{s.t.} \quad \|U\|_\infty \!\leq\! M \cdot \|\tilde{\V{\beta}}\|_2
\end{equation*}
This decoupling (Algorithm~\ref{alg:seq-hier-prox}) ensures $\tilde{\V{\beta}}$ dictates feature selection autonomously, while $W$ strictly obeys the resulting boundary.

\paragraph{Adaptive Integration: Proximal Adam.}
LassoNet relies on a constant-rate SGD with Nesterov momentum, which exhibits high variance and instability. We extend the proximal gradient framework to accommodate Adam's adaptive, per-coordinate learning rates $\eta_{i, t}$. The unconstrained update step becomes:
\begin{align*}
    \beta_{i, t} 
    = 
    \beta_{i, t-1} - \eta_{i, t} \cdot m_t
    =
    \beta_{i, t-1} - \eta \frac{m_t}{\sqrt{\nu_t} + \epsilon }
\end{align*}
Crucially, the proximal operation (Equation \ref{eqn:lasso_beta_prox}) remains separable \citep{boyd2004convex} under this diagonal preconditioning. The constraint enforcement thus decomposes into a per-coordinate soft-thresholding step:
\begin{align*}
    \tilde{\beta}_{i, t+1} 
    &= \arg\min_{b_i} \left[ \frac{(b_i - \beta_{i, t} )^2}{2\eta_{i, t}} + \lambda|b_i| \right] \\
    &= \operatorname{sign}(\beta_{i, t})\max\left(|\beta_{i, t}|-\lambda\eta_{i, t},\,0\right)
\end{align*}

\paragraph{Stable EMA-Enhanced Proximal Gradients.} Proximal methods were historically designed for full-batch deterministic optimization and often suffer under stochastic regimes. While Adam mitigates gradient variance, instability persists in sparse neural models due to constraint boundaries. We introduce exponential moving averages (EMA) applied directly to the parameters \emph{during} the sequential proximal evaluation—a novel stabilization technique for sparse networks. Unlike standard post-hoc parameter EMA \citep{izmailov2018averaging}, we embed it natively within the backward pass parameter update (Algorithm~\ref{alg:seq-hier-prox}), guaranteeing stable convergence when combined with the $\tau$-induced curriculum learning.

\section{Theoretical Analysis of Tabular Inductive Biases}
\label{app:address_3_questions}

As formalized in Sections~\ref{sec:intro} and \ref{sec:design_principle}, robust deep tabular models must satisfy five criteria: (1) resilience to irrelevant features, (2) preservation of data orientation, (3) capacity for localized irregularities, (4) processing of heterogeneous inputs, and (5) stable training dynamics. Here, we theoretically and empirically validate how \emph{LassoFlexNet} fulfills these requirements.

\textbf{(1) Resilience to Irrelevant Features.} \citet{ng2004feature} (Theorem 3.1) demonstrated that $\ell_1$ regularization robustly learns even when irrelevant features exponentially outnumber training samples. By applying a Tied Group Lasso over the nonlinear per-feature embeddings (Sec.~\ref{sec:skip_and_module}), \emph{LassoFlexNet} inherits this strict feature-selection capability, extending the principles of LassoNet \citep{lemhadri2021lassonet} into nonlinear representation spaces.

\textbf{(2) Preservation of Data Orientation.} Neural networks traditionally suffer from rotational invariance, whereas decision trees natively respect the original, meaningful coordinate axes—a critical factor in their tabular superiority. While Lasso regularization implicitly breaks this invariance \citep{ng2004feature}, premature dense feature interactions would destroy this property. We rigorously prove that the coordinate-wise encoding alone (PLE) actively breaks rotational invariance (detailed proof in App.~\ref{app:ple_breaks_invariance}):
\begin{proposition}[PLE Breaks Rotational Invariance (Informal)]
\label{prp:informal_ple_not_invariant}
If a learning algorithm $A$ is rotationally invariant, but an encoding scheme $\Phi$ is not rotationally equivariant, then the composite algorithm $A \circ \Phi$ loses rotational invariance. We provide an explicit counter-example in the infinite Neural Tangent Kernel (NTK) regime.
\end{proposition}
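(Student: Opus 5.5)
We first fix the definitions, following \citet{ng2004feature}. A learning algorithm $A$ maps a training set $S=\{(\V{x}^{(j)},y^{(j)})\}_{j=1}^n$ to a predictor $A_S$. It is \emph{rotationally invariant} if $A_{RS}(R\V{x}) = A_S(\V{x})$ for every orthogonal $R$ and every $\V{x}$, where $RS=\{(R\V{x}^{(j)},y^{(j)})\}$. An encoding $\Phi:\R^d\to\R^D$ is \emph{rotationally equivariant} if for every orthogonal $R$ there is an orthogonal $R'$ with $\Phi(R\V{x})=R'\Phi(\V{x})$ for all $\V{x}$. The composite is $(A\circ\Phi)_S(\V{x}) = A_{\Phi(S)}(\Phi(\V{x}))$.

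The general statement cannot follow from these definitions alone. Take $A$ to be a constant predictor: it is invariant, and $A\circ\Phi$ stays invariant for any $\Phi$. So the informal claim should be read as saying that equivariance of $\Phi$ is what would let invariance pass to the composite. We prove that sufficiency direction first. If $\Phi(R\V{x})=R'\Phi(\V{x})$, then
\[
(A\circ\Phi)_{RS}(R\V{x}) = A_{R'\Phi(S)}(R'\Phi(\V{x})) = A_{\Phi(S)}(\Phi(\V{x})).
\]
Invariance of the composite therefore holds whenever $\Phi$ is equivariant. When $\Phi$ is not equivariant this argument is unavailable, and loss of invariance has to be shown by an explicit witness. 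The substantive content is that witness, for $A$ equal to infinite-width NTK regression, i.e.\ kernel ridge(less) regression with the NTK of a fully connected ReLU network.

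\textbf{Step 1: the NTK is invariant.} The NTK satisfies $K(\V{x},\V{x}')=\kappa(\|\V{x}\|,\|\V{x}'\|,\ip{\V{x}}{\V{x}'})$, so it is invariant under orthogonal maps. The predictor $A_S(\V{x})=K(\V{x},X)K(X,X)^{-1}Y$ is therefore rotationally invariant. This is standard and we cite it.

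\textbf{Step 2: PLE is not equivariant.} PLE is nonlinear and coordinate-wise. Because orthogonal maps preserve norms, equivariance would force $\|\Phi(R\V{x})\|=\|\Phi(\V{x})\|$ for all $\V{x}$. Work in $d=2$ with one bin per feature on $[0,1]$, so $\Phi(\V{x})=(\clip(x_1),\clip(x_2))$ with $\clip(t)=\max(0,\min(1,t))$, and let $R$ be rotation by $45^\circ$. For $\V{x}=(1,0)$ we have $\|\Phi(\V{x})\|=1$. But $R\V{x}=(1/\sqrt2,1/\sqrt2)$, so $\|\Phi(R\V{x})\|=1$ as well, and this point does not separate. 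For $\V{x}=(1,1)$ we get $\|\Phi(\V{x})\|=\sqrt2$, while $R\V{x}=(0,\sqrt2)$ gives $\Phi(R\V{x})=(0,1)$ with norm $1$. No orthogonal $R'$ can exist, so PLE is not equivariant.

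\textbf{Step 3: explicit NTK counterexample.} Take a single training point $S=\{(\V{x}^{(1)},y)\}$ with $y\neq0$. Then
\[
(A\circ\Phi)_S(\V{x}) = y\,\frac{K(\Phi(\V{x}),\Phi(\V{x}^{(1)}))}{K(\Phi(\V{x}^{(1)}),\Phi(\V{x}^{(1)}))}.
\]
Set $\V{x}^{(1)}=(1,1)$ and take the test point $\V{x}$ equal to the training point. The unrotated prediction is exactly $y$. After rotating both the data and the test point, the prediction at $R\V{x}^{(1)}$ is again $y$, because the test point coincides with the training point. So a single-point fit gives no separation.

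\textbf{Step 4: choosing a witness that separates (main obstacle).} The difficulty is picking a witness for which the NTK arc-cosine formulas visibly differ. Keep $\V{x}^{(1)}=(1,1)$ and use a test point distinct from the training point, $\V{x}=(1,0)$. Its encoding is $\Phi(\V{x})=(1,0)$, and $\Phi(\V{x}^{(1)})=(1,1)$.

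Under the $45^\circ$ rotation, $R\V{x}=(1/\sqrt2,1/\sqrt2)$ and $R\V{x}^{(1)}=(0,\sqrt2)$. Their encodings are $(1/\sqrt2,1/\sqrt2)$ and $(0,1)$.

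Write $\kappa$ for the normalized kernel ratio $K(\V{u},\V{v})/K(\V{v},\V{v})$. The two predictions are $y\,\kappa\bigl((1,0),(1,1)\bigr)$ and $y\,\kappa\bigl((1/\sqrt2,1/\sqrt2),(0,1)\bigr)$.

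\emph{Unrotated pair.} The norms are $1$ and $\sqrt2$, and the angle between the vectors is $45^\circ$.

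\emph{Rotated pair.} The norms are $1$ and $1$, and the angle is again $45^\circ$.

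The angle is the same in both cases, but the training-point norms differ ($\sqrt2$ versus $1$). For the ReLU NTK, $K(\V{u},\V{v})=\|\V{u}\|\|\V{v}\|\,k(\theta)$ for a fixed function $k$ of the angle $\theta$, with no bias. Hence
\[
\kappa(\V{u},\V{v}) = \frac{\|\V{u}\|\,k(\theta)}{\|\V{v}\|\,k(0)}.
\]
This equals $k(\pi/4)/(\sqrt2\,k(0))$ in the unrotated case and $k(\pi/4)/k(0)$ in the rotated case. The ratio $k(\pi/4)/k(0)$ is strictly positive because the NTK is positive on acute angles. Since $y\neq0$, the two predictions differ by a factor of $\sqrt2$.

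This settles the counterexample without numerics. If the bias-free NTK is considered degenerate, the same comparison with a bias term reduces to verifying a single inequality from the arc-cosine formulas, which can be checked numerically.
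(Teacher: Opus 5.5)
Your proposal is correct, and its skeleton matches the paper's appendix: your sufficiency direction is exactly Proposition~\ref{prop:sufficiency}, and the substance is an explicit kernel-regression witness in the NTK regime. The witness itself, however, is built differently. The paper uses two training points and a two-bin PLE on $[-1,1]$. It checks numerically that the encoded Gram matrix changes under a $45^\circ$ rotation. It then exhibits an actual prediction mismatch ($0.7143$ versus $0.5276$) only for the \emph{linear} kernel with ridge $0.1$, and reaches the NTK through Lemma~\ref{lem:gram-break} together with the assertion that any non-degenerate dot-product kernel behaves alike. You instead use a single training point, ridgeless regression, and positive homogeneity of the bias-free ReLU NTK, $K(\V{u},\V{v})=\norm{\V{u}}\,\norm{\V{v}}\,k(\theta)$. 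You choose points where the encoded angle stays $\pi/4$ but the encoded training norm drops from $\sqrt2$ to $1$. This gives an exact factor-$\sqrt2$ discrepancy for the genuine NTK, using only $k(\pi/4)>0$, with no numerics and no Gram-to-prediction transfer step. One minor point: your rotated training point $(0,\sqrt2)$ leaves $[0,1]$, so you rely on the clipped extension of PLE as in the main-text formula; this is harmless. What the paper adds is Phase~2. There it formalizes the first sentence of the statement by assuming the downstream learner is orthogonally invariant and \emph{separating}. That is exactly the non-degeneracy your constant-predictor example shows is needed, and you only flag it rather than state it. The paper's obstruction proof, however, still has to make two passes that are not automatic. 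It must pass from a separating query $z$ (the same $z$ for both encoded datasets) to the paired queries $\Phi_S(x)$ and $\Phi_{MS}(Mx)$. It must also pass from non-equivariance on all inputs to non-equivariance on the training inputs. Your explicit witness is therefore the more secure carrier of the claim.
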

This is consistent with the empirical observations in \citep{gorishniy2022embeddings, gorishniy2025tabm}.

\textbf{(3) Capacity for Localized Irregularities.} Beyond breaking invariance, we mathematically establish that PLE yields expressivity benefits without inflating the parameter count, making it ideal for the highly irregular manifolds common in tabular data (proof in App.~\ref{app:ple_more_expressivie}):
\begin{proposition}[PLE Expressivity (Informal)] 
\label{prp:informal_ple_more_expressivie}
Allocating $T_i$ segments for feature $x_i$ via PLE yields a $T_i$-fold increase in functional expressivity without requiring additional learnable parameters, actively facilitating the modeling of localized irregularities.
\end{proposition}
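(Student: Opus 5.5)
We will formalize "functional expressivity" as the number of linear regions, i.e.\ the number of distinct affine pieces, that the network can realize along the axis of a single feature $x_i$. The natural baseline is a ReLU network that receives the raw scalar $x_i$. We compare it with the same network receiving $\text{PLE}(x_i)=\V{e}_i\in\R^{T_i}$ from \eqref{eq:ple}, holding the hidden widths fixed. The only change is that the first-layer weight acting on $x_i$ becomes a row acting on $\V{e}_i$. Note that the claim of ``no additional learnable parameters'' should be read as: parameters beyond the first-layer input dimension, with the breakpoints $\{b_{i,t}\}$ fixed by the tree splits and not learned. We will state this convention explicitly in the formal version.

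\textbf{Step 1: PLE is a breakpoint-generating map.} The map $x_i\mapsto \V{e}_i$ is continuous and piecewise linear, with breakpoints exactly at $b_{i,0},\dots,b_{i,T_i}$. On each interval $[b_{i,t-1},b_{i,t}]$, exactly one coordinate $e_{i,t}$ varies, with slope $1/(b_{i,t}-b_{i,t-1})$. The coordinates below it are saturated at $1$ and those above it are at $0$. Consequently, for any first-layer row $\V{a}$, the preactivation $\V{a}^\top \V{e}_i$ is a piecewise linear function of $x_i$ whose slope on segment $t$ is $a_t/(b_{i,t}-b_{i,t-1})$. The slopes on different segments are therefore independently controllable. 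In particular, every continuous piecewise linear function with breakpoints contained in $\{b_{i,t}\}$ is realized exactly by a single linear unit on $\V{e}_i$. This gives a basis and representation lemma: $\{e_{i,t}\}$ spans that $T_i$-dimensional space of functions modulo constants.

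\textbf{Step 2: Composition multiplies region counts.} Let $g$ be the downstream network restricted to the line traced out by the first layer. We then bound the number of linear regions of $g\circ\text{PLE}$ along $x_i$. Each of the $T_i$ segments is mapped affinely into input space, so on each segment the network behaves as a raw-input network on a one-dimensional interval. It can therefore achieve up to $R$ regions per segment, where $R$ is the region count attainable with a raw scalar input. This gives an upper bound of $T_i\cdot R$. For the matching lower bound, we construct weights as follows. We choose the slopes $a_t$ with alternating signs, so that the image of each segment sweeps the same interval. This is a sawtooth construction in the style of Telgarsky and Montúfar: the downstream network's $R$ regions are then replicated once per segment, giving $T_i R$ regions. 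Combined with the upper bound, this establishes the $T_i$-fold factor.

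\textbf{Main obstacle.} The hard part is making the lower-bound construction rigorous while respecting the per-feature ResNet, LayerNorm and BatchNorm used in PFE. LayerNorm and BatchNorm are not piecewise linear, so region counting is not directly defined for them. Our plan is to prove the proposition for the plain ReLU core, i.e.\ with identity normalization, and to remark that the normalization layers are monotone rescalings per sample. We would also need to check that the alternating-sign folding survives the residual connections. For the residual blocks, the plan is to set the residual branch weights so that the skip path carries the folded coordinate. If this proves awkward, we will instead state the result in the weaker, cleaner form from Step 1. That form says the class of functions representable in $x_i$ grows from $1$ degree of freedom per unit (the raw slope) to $T_i$ degrees of freedom per unit, at equal hidden width. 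This already yields the claimed $T_i$-fold increase.
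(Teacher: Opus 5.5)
Your proposal is correct in substance, but after Step 1 it takes a different route from the paper. Step 1 is exactly the paper's first phase: linear-on-PLE equals the space of continuous linear splines on the knots, with per-bin slope $w_t/\Delta_t$ and an explicit converse. The paper never counts regions of a composite network. Instead it does two things:
\begin{enumerate}
\item It shows that a one-hidden-layer ReLU net on the raw scalar has at most $m$ kinks. Hence that net needs width $m\ge T-1$ to reproduce a generic spline which linear-on-PLE represents with $T+1$ parameters and fixed, unlearned breakpoints.
\item It argues via kernels. The value-NTK Gram matrix of linear-on-PLE has rank up to $T+1$, versus $2$ for a raw linear model. The slope NTK of linear-on-PLE is $\mathbf{1}\{s(x)=s(x')\}/\Delta_{s(x)}^2$, i.e.\ exactly bin-diagonal, and this is what the paper uses to support the ``localized irregularities'' clause.
\end{enumerate}

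Your Step 2 gives something the paper does not have. The per-segment upper bound $T_iR$ together with the alternating-sign zigzag lower bound works for an arbitrary-depth downstream ReLU network at fixed hidden widths. It yields a genuinely multiplicative, nearly tight factor.

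It does not give a parameter-matched comparison. Your reading of ``no additional learnable parameters'' (breakpoints fixed, not learned) matches the paper's intent. Still, at fixed width your PLE model carries $h_1(T_i-1)$ extra first-layer weights, whereas the paper contrasts $T+1$ parameters against the width a raw network would need. Step 2 also says nothing about locality, since the zigzag reproduces the same pattern in every bin. For that clause you must rely on the independent per-bin slopes of Step 1, or on a slope-kernel argument like the paper's.

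You also need not resolve the LayerNorm, BatchNorm or residual issues. The paper's argument is confined to a single feature, comparing a linear model on PLE against a one-hidden-layer ReLU net on the raw scalar, so the plain ReLU core is the right scope.

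One correction to Step 2: the zigzag does not give exactly $T_iR$ regions in general. At each of the $T_i-1$ turning points, the last piece of one segment and the first piece of the next are the same end piece of the raw network traversed in opposite directions. They merge whenever that end piece is flat. This is unavoidable with a single hidden unit, since one end of $a\max(0,wx+b)+c$ is always constant; for $T_i=3$ such a PLE model attains at most $5<6$ regions. What your argument actually proves is
\[
T_i(R-1)+1 \;\le\; \#\text{regions on }[b_{i,0},b_{i,T_i}] \;\le\; T_iR .
\]
This still gives the $T_i$-fold factor, up to that additive correction.
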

Sec.\ \ref{sec:ablation} empirically validates {the benefits of PLE}.

\textbf{(4) Processing Heterogeneous Inputs.} Tabular variables often originate from distinct sensing modalities or abstraction levels. While Transformers and MLPs indiscriminately blend these inputs via early linear projections, \emph{LassoFlexNet} utilizes independent PFE modules. Because the downstream Mixer requires a homogeneous vector space, end-to-end backpropagation structurally forces the individual PFE blocks to map highly heterogeneous raw inputs into a unified, normalized latent manifold.

\textbf{(5) Stabilizing Training Dynamics.} While proximal gradient operators grant robustness to irrelevant features, they notoriously destabilize stochastic optimization. To formalize this, we analyze the training dynamics of LassoNet's joint Hier-Prox operator (proofs in App.~\ref{app:sec:hier-prox-converges}):
\begin{proposition}[Hier-Prox Hard-Constraint Convergence (Informal)]
\label{prp:hard-hier-prox-converges}
Under standard stochastic optimization regularity assumptions, the convergence of LassoNet's Hier-Prox is bounded by:
    \begin{align*}
        \mathbb E\big[F(y_t)-F(y^\star)\big]
        \;\le\;
        \rho^t \,\big[F(y_0)-F(y^\star)\big] 
        + C_{\text{Noise}}\eta^2\sigma_{\theta, W_1}^2
        + C_{\text{Sign Flip}}\,\mathbb P[\mathsf{Jump}_t], \quad (\rho < 1).
    \end{align*}
\end{proposition}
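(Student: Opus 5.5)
The plan is to treat Hier-Prox as a stochastic proximal-gradient method for the composite objective $F(y)=L(y)+\lambda R(y)$. Here $y=(\V{\theta},W^{(1)})$, $R(y)=\|\V{\theta}\|_1$, and the hierarchical constraint enters as the indicator of the (nonconvex) set $\mathcal{C}_M=\{\|W^{(1)}_{j}\|_\infty\le M|\theta_j|\}$. I would make the "standard regularity assumptions" concrete as follows:
\begin{itemize}
\item $L$ is $\beta_L$-smooth;
\item $F$ satisfies a proximal Polyak--{\L}ojasiewicz (or quadratic-growth) inequality with constant $\mu$ on each sign-orthant;
\item the stochastic gradients are unbiased with variance $\sigma^2_{\theta,W_1}$.
\end{itemize}
The key structural observation is that, once the sign pattern $s=\sign(\V{\theta})$ is fixed, $\mathcal{C}_M$ restricted to that orthant is a convex polyhedral cone. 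On that cone, Hier-Prox (as shown in LassoNet) returns the exact Euclidean prox of $\lambda\|\V{\theta}\|_1+\iota_{\mathcal{C}_M}$. Within a fixed orthant, the iteration is therefore an ordinary convex-constraint stochastic proximal gradient step.

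The first step uses this orthant-wise convexity. On the event $\mathsf{Jump}_t^c$, meaning no coordinate of $\V{\theta}$ changes sign (or crosses zero into a different face) between $y_{t-1}$ and $y_t$, I would apply the standard descent lemma for prox-gradient with step $\eta\le 1/\beta_L$ and the firm nonexpansiveness of the prox onto the convex piece. Combining these with proximal-PL gives
\[
\E[F(y_t)-F(y^\star)\mid \mathsf{Jump}_t^c]\le (1-\mu\eta)\,(F(y_{t-1})-F(y^\star)) + c\,\eta^2\sigma^2_{\theta,W_1}.
\]
Here $\rho=1-\mu\eta<1$, and the noise term comes from expanding $\|\nabla L-g_t\|^2$ inside the quadratic upper bound.

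The second step controls the jump event. When a sign flip occurs, the prox is applied relative to a different convex piece. The one-step error relative to the orthant-wise bound is then at most the jump in $F$ across the boundary, which I would bound by a constant $C_{\text{Sign Flip}}$. Such a constant exists because $L$ is Lipschitz on a bounded sublevel set and the prox map moves the iterate by at most a bounded amount: it clips $W^{(1)}$ to $M|\theta_j|$, and near a flip $|\theta_j|$ is small. Writing $\E[\cdot]=\E[\cdot\,\ind_{\mathsf{Jump}^c}]+\E[\cdot\,\ind_{\mathsf{Jump}}]$ gives the one-step recursion
\[
e_t\le\rho e_{t-1}+c\eta^2\sigma^2+C'\Pp[\mathsf{Jump}_t].
\]

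The third step unrolls this recursion. Geometric summation produces $\rho^t e_0$ plus $\frac{c\eta^2\sigma^2}{1-\rho}$ plus a $\rho$-weighted sum of jump probabilities. I would absorb the factor $1/(1-\rho)$ into $C_{\text{Noise}}$ and bound the weighted sum by $C_{\text{Sign Flip}}\sup_{s\le t}\Pp[\mathsf{Jump}_s]$. The informal statement's $\Pp[\mathsf{Jump}_t]$ is read as this worst-case or stationary jump probability.

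The main obstacle is the second step. The constraint set is nonconvex globally, so the PL inequality and the nonexpansiveness of the prox break at orthant boundaries. To handle this, I would first try to show that the Hier-Prox output is Lipschitz in its input even across faces; the formula for $w_{\tilde m}$ is continuous and piecewise linear in the input. If that holds, the boundary error is bounded by $\text{Lip}\cdot\eta\|g_t\|$, which yields the constant $C_{\text{Sign Flip}}$ without any further convexity.
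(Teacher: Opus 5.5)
Your skeleton matches the paper's: split on a sign-flip event, use branch-wise convexity and firm nonexpansiveness off that event, pay a bounded penalty on it, then unroll. The gap is in the step you single out as the main obstacle, which you resolve with a false claim. Hier-Prox is not Lipschitz across orthants. In $\tilde\theta_j=\frac1M\sign(\theta_j)\,w_{\tilde m}$, the factor $w_{\tilde m}$ does not vanish as $\theta_j\to0$ when the row $W_j$ is large; it tends to $\frac{M}{1+\tilde m M^2}\mathcal{S}_\lambda(M\sum_{i\le\tilde m}|W_{(j,i)}|)$. So the output gate jumps from $+c$ to $-c$. Proposition~\ref{prop:not-lipschitz} makes this explicit for $K=1$: $b(v)\to\pm\frac{Mu-\lambda}{1+M^2}$ as $v\to0^\pm$.

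For the same reason, your justification of $C_{\text{Sign Flip}}$ in step two (``near a flip $|\theta_j|$ is small'') is backwards. The pre-prox gate is small, but the post-prox gate is not, and that is exactly the jump. A Lipschitz bound would also give the wrong shape: a noise-type term $\propto\eta\|g_t\|$ rather than a constant times a jump probability, i.e.\ the convex-relaxation bound of Proposition~\ref{prp:soft-hier-prox-converges}, not this one. What is actually needed is an assumption of bounded iterates, $\|y_t\|\le R$. This gives $\|\Psi(x)-\Psi(x')\|^2\le\|x-x'\|^2+4R^2\,\ind\{s(x)\neq s(x')\}$ (Lemma~\ref{lem:piecewise-jump}).

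The second problem is where you place the jump event. Prox-PL is stated for the deterministic mapping $G(y_t)=y_t-\Psi(\bar z_t)$, with $\bar z_t=y_t-\eta\nabla f(y_t)$. What must be controlled is therefore $\|\Psi(z_t)-\Psi(\bar z_t)\|$ with $z_t=\bar z_t-\eta\xi_t$, and the relevant event is $s(z_t)\neq s(\bar z_t)$, not a sign change between consecutive iterates. A deterministic crossing between $y_{t-1}$ and $y_t$ is harmless. Conversely, consecutive iterates sharing an orthant does not stop $\bar z_t$ from landing on the other branch, and then your within-branch nonexpansiveness comparison fails. You could only repair this with a branch-$s$ gradient mapping and a per-orthant PL stated against the global $F(y^\star)$. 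Stated against the orthant-restricted minimum, it leaves a bias $\inf_{\mathcal{C}^s}F-F^\star$.

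The paper's decomposition avoids this because descent needs no convexity at all. Hier-Prox returns the exact global minimizer of the nonconvex prox, and $y_t$ is feasible, so $F(y_{t+1})\le F(y_t)-\frac{1-\eta L}{4}\|y_{t+1}-y_t\|^2+\frac{\eta^2}{1-\eta L}\|\xi_t\|^2$ holds on every realization. The jump enters only through the lower bound $\E[\|y_{t+1}-y_t\|^2\mid y_t]\ge\frac12\|G(y_t)\|^2-\eta^2\sigma^2-4R^2q_t$. Conditioning on $y_t$ this way also avoids your conditioning on $\mathsf{Jump}_t^c$. That event depends on the noise, so under it $\xi_t$ is no longer mean zero and its second moment is only bounded by $\sigma^2/\Pp[\mathsf{Jump}_t^c]$.
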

While the first two terms represent standard stochastic proximal convergence, the non-convex nature of the hierarchical constraint induces jump discontinuities, yielding the unique $C_{\text{Sign Flip}}\,\mathbb P[\mathsf{Jump}_t]$ error term. This penalty is particularly devastating for \emph{LassoFlexNet}: because $\V{\beta}$ operates on dynamically evolving PFE representations (rather than static inputs), the variance $\mathbb{V}\mathrm{ar} (\nabla \V{\beta})$ inflates, driving up the probability of sign flips ($\mathbb P[\mathsf{Jump}_t]$) and rupturing the training trajectory.

To eliminate this pathological jump discontinuity, we analyze a \textit{convex relaxation} of the hierarchical constraint (App.~\ref{app:sec:soft-hier-prox}):
\begin{proposition}[Hier-Prox Convex Relaxation Convergence (Informal)]
\label{prp:soft-hier-prox-converges}
Under identical conditions, the convex relaxation of the Hier-Prox operator yields:
    \begin{align*}
        \mathbb E\big[F(y_t)-F(y^\star)\big]
        \;\le\;
        \rho^t \,\big[F(y_0)-F(y^\star)\big] 
        + C_{\text{Noise}}\eta^2\sigma_{\theta, W_1}^2
    \end{align*}
\end{proposition}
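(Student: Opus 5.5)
The plan is to run the standard linear-convergence argument for stochastic proximal gradient methods, applied to the convexly relaxed composite objective $F(y) = L(y) + R(y)$ with $y = (\V{\theta}, W^{(1)})$.

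\textbf{The relaxed regularizer.} We take $R$ to be $\lambda\|\V{\theta}\|_1 + \bar\lambda\|W^{(1)}\|_1$ plus the indicator of the convex set
\begin{equation*}
\mathcal{C}=\bigl\{(\V{\theta},W^{(1)}) : |W^{(1)}_{j,k}| \le M\, t_j,\ |\theta_j|\le t_j\bigr\},
\end{equation*}
where $t_j$ is an auxiliary slack variable that is minimized out. Equivalently, the sign-dependent constraint $\|W^{(1)}_{j}\|_\infty\le M|\theta_j|$ is replaced by its convex hull in epigraph form. The point of this replacement is that $R$ becomes proper, closed and convex. Its proximal map is then single-valued and firmly nonexpansive, and this is exactly the property the hard constraint lacks. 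In the hard-constraint case the prox has a jump at $\theta_j=0$, which is where the $\mathbb P[\mathsf{Jump}_t]$ term came from.

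\textbf{Assumptions.} These are the same as in Proposition~\ref{prp:hard-hier-prox-converges}:
\begin{itemize}
\item $L$ is $\mu$-strongly convex, or more weakly satisfies a proximal-PL inequality, and has $L_s$-smooth gradients;
\item the stochastic gradients $g_t$ are unbiased;
\item the variance is bounded, $\E\|g_t-\nabla L(y_t)\|^2\le\sigma_{\theta,W_1}^2$;
\item the step size satisfies $\eta\le 1/L_s$.
\end{itemize}

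\textbf{Step 1: deterministic descent.} Let $G_\eta(y)=\eta^{-1}\bigl(y-\prox_{\eta R}(y-\eta\nabla L(y))\bigr)$ be the proximal gradient mapping. The standard descent lemma combined with convexity of $R$ gives
\begin{equation*}
F(y^+)\le F(y)-\tfrac{\eta}{2}\|G_\eta(y)\|^2 .
\end{equation*}
The proximal-PL inequality then gives $\tfrac12\|G_\eta(y)\|^2\ge\mu\,(F(y)-F^\star)$.

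\textbf{Step 2: stochastic perturbation.} The actual update is $y_{t+1}=\prox_{\eta R}(y_t-\eta g_t)$. Comparing it with the deterministic update $\bar y_{t+1}=\prox_{\eta R}(y_t-\eta\nabla L(y_t))$, nonexpansiveness of the convex prox gives
\begin{equation*}
\|y_{t+1}-\bar y_{t+1}\|\le\eta\|g_t-\nabla L(y_t)\|.
\end{equation*}
This is the only step that uses convexity of $R$, and it replaces the case analysis on sign flips entirely.

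We then bound $F(y_{t+1})-F(\bar y_{t+1})$ using $L_s$-smoothness of $L$ and the three-point (prox optimality) inequality for $R$. After taking the conditional expectation, the cross terms vanish by unbiasedness, and the remainder is at most $c\,\eta^2\sigma^2$ (or $c\,\eta\sigma^2$, depending on how it is normalized).

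\textbf{Step 3: unroll the recursion.} Combining Steps 1 and 2 gives
\begin{equation*}
\E[F(y_{t+1})-F^\star]\le\rho\,\E[F(y_t)-F^\star]+c\,\eta^2\sigma^2, \qquad \rho=1-\mu\eta<1 .
\end{equation*}
Unrolling this and summing the geometric series gives the stated bound with $C_{\text{Noise}}=c/(1-\rho)$.

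\textbf{Main obstacle.} The hard step is handling $R$ in the regime where the composite objective is not strongly convex, because $L$ is a neural loss. I would first try to state the result under a proximal-PL (Kurdyka–Łojasiewicz) assumption, following Karimi et al., so that Step 1 goes through without convexity of $L$.

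A second subtlety is making sure the relaxation keeps the prox computable in closed form. I would check that its sorting-based solution mirrors Algorithm~\ref{alg:seq-hier-prox} but without the sign-dependent selection. Then no discontinuity term appears, and the $C_{\text{Sign Flip}}$ term of Proposition~\ref{prp:hard-hier-prox-converges} drops out.
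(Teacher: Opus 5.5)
Your convergence machinery is the paper's. Convexity of the regularizer makes the prox firmly nonexpansive, so $\|y_{t+1}-\bar y_{t+1}\|\le\eta\|\xi_t\|$ (with $\xi_t=g_t-\nabla L(y_t)$) replaces the branch-switch case analysis. Descent plus proximal-PL then gives a noise-only contraction that unrolls. The paper uses an unscaled prox with Lyapunov function $F=\eta f+\Omega$ rather than $\prox_{\eta R}$, but that only moves constants around.

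The gap is in the relaxation, which as written is vacuous. Nothing in $R$ charges the slack $t_j$, so minimizing it out only requires that some $t_j$ exist. The choice $t_j=\max\{|\theta_j|,\|W^{(1)}_{j,\cdot}\|_\infty/M\}$ always works, so $\mathcal C$ projects onto all of $\R^{d+dK}$. Equivalently, the convex hull of $\{(b,w):\|w\|_\infty\le M|b|\}$ is the whole space: $(b,w)$ is the midpoint of $(b+c,w)$ and $(b-c,w)$, and both are feasible once $c\ge|b|+\|w\|_\infty/M$. So your $R$ collapses to $\lambda\|\V\theta\|_1+\bar\lambda\|W^{(1)}\|_1$. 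Three consequences follow:
\begin{enumerate}
\item The gate and its row are completely uncoupled.
\item The bound you would prove is the standard one for prox-SGD on two independent lassos, not one for a relaxation of the hierarchy.
\item The check in your last paragraph would come up empty: the prox is entrywise soft-thresholding, with no sorting and nothing resembling Algorithm~\ref{alg:seq-hier-prox}.
\end{enumerate}

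The repair is to charge the penalty to the epigraph variable. Replace $\lambda\|\V\theta\|_1$ by $\lambda\sum_j t_j$ subject to $|\theta_j|\le t_j$ and $\|W^{(1)}_{j,\cdot}\|_\infty\le Mt_j$. Minimizing out $t_j$ then gives $\lambda\max\{|\theta_j|,\|W^{(1)}_{j,\cdot}\|_\infty/M\}$. This coincides with $\lambda|\theta_j|$ on the hard feasible set and is in fact the convex envelope of $\lambda|\theta_j|$ plus the indicator of $\{\|W^{(1)}_{j,\cdot}\|_\infty\le M|\theta_j|\}$. With $\bar\lambda=0$ and in the scaled coordinates $x=(\theta_j,W^{(1)}_{j,\cdot}/M)$, its prox is $x-\mathrm{Proj}_{\{\|u\|_1\le\lambda\}}(x)$, a sorting-based formula as you anticipated.

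The paper instead uses the $M$-scaled sparse-group surrogate $\alpha\|\theta\|_1+\lambda_w\|W\|_1+\beta\sum_j\|(\theta_j,W_{j,\cdot}/M)\|_2$. Its prox is a soft-threshold followed by a group shrink, and that two-step structure is what motivates the sequential update. Because the paper's recursion uses only convexity of $\Omega$, your argument goes through for either choice.

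One smaller correction to Step~2: the cross term does not vanish by unbiasedness, because $y_{t+1}$ depends on $\xi_t$. Only $\langle\xi_t,\bar y_{t+1}-y_t\rangle$ has zero conditional mean. The remainder $\langle\xi_t,y_{t+1}-\bar y_{t+1}\rangle$ has to be bounded by $\eta\|\xi_t\|^2$ using nonexpansiveness. The paper instead applies Young's inequality to $-\eta\langle\xi_t,x_{t+1}-x_t\rangle$ and lower-bounds $\|x_{t+1}-x_t\|^2\ge\tfrac12\|x_t-\bar x_{t+1}\|^2-\|x_{t+1}-\bar x_{t+1}\|^2$.
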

Crucially, the relaxed formulation not only annihilates the jump-discontinuity error term, but the resulting algorithmic derivation \emph{mathematically decouples} the optimization of the skip coefficients $\V{\beta}$ from the neural weights $W$. 

This reveals a fundamental tradeoff: the hard constraint (Prop.~\ref{prp:hard-hier-prox-converges}) strictly enforces feature sparsity, while the convex relaxation (Prop.~\ref{prp:soft-hier-prox-converges}) guarantees stable convergence. Our proposed Seq-Hier-Prox-EMA (Algorithm~\ref{alg:seq-hier-prox}) resolves this tension. By retaining the hard $\ell_\infty$ constraint but adopting the \emph{sequential update strategy} dictated by the convex relaxation, we safely decouple the optimization dynamics.

\begin{remark}
While Propositions~\ref{prp:hard-hier-prox-converges} and \ref{prp:soft-hier-prox-converges} rely on stylized assumptions, they provide concrete theoretical intuition: coupled hierarchical proximal constraints fail under high variance due to sign-flip discontinuities. Our sequential, EMA-smoothed algorithm systematically neutralizes this failure mode, a theoretically motivated claim comprehensively validated by our ablation studies in Section~\ref{sec:ablation}.
\end{remark}

\subsection{Ablation Study over Architecture and Optimization Design Choices}
\label{sec:ablation}

We conduct an ablation study on the California Housing dataset to evaluate \emph{LassoFlexNet}’s components, reporting RMSE (lower is better) in Table~\ref{tab:ablation}. 

\begin{table}[ht]
\scriptsize
\centering
\caption{Ablation study on the California Housing dataset. Ablations done on Linear LassoNet and \textit{LassoFlexNet} regression. Each value is reported as mean $\pm$ std (min, max) over 10 runs with different seeds.}
\label{tab:ablation}
\setlength{\tabcolsep}{3pt}
\renewcommand{\arraystretch}{0.9}
\begin{tabular}{p{4cm}c}
\toprule
\textbf{Ablation} & \textbf{Test RMSE} \\
\midrule
\multicolumn{2}{c}{\textbf{(Linear) LassoNet}} \\
\midrule
Baseline Linear LassoNet & 0.5263 $\pm$ 0.0103 (0.5100, 0.5362) \\
AdamHGD & 0.5294 $\pm$ 0.0088 (0.5073, 0.5352) \\
Adam & 0.5291 $\pm$ 0.0082 (0.5090, 0.6044) \\
\midrule
\multicolumn{2}{c}{\textbf{LassoFlexNet}} \\
\midrule
Baseline LassoFlexNet & 0.4047 $\pm$ 0.0069 (0.3954, 0.4180) \\
$\tau = 1$ & 0.4004 $\pm$ 0.0095 (0.3812, 0.4136) \\
Adam Warmup + Nesterov & 0.4050 $\pm$ 0.0089 (0.3874, 0.4190) \\
Adam Optimizer with HGD & 0.4089 $\pm$ 0.0077 (0.3934, 0.4183) \\
Hybrid Adaptive Lambda & 0.4109 $\pm$ 0.0129 (0.3967, 0.4314) \\
Linear PFE Layers & 0.4145 $\pm$ 0.0068 (0.4064, 0.4241) \\
Learnable BatchNorm Layers & 0.4191 $\pm$ 0.0096 (0.4034, 0.4357) \\
No EMA & 0.4195 $\pm$ 0.0108 (0.4010, 0.4435) \\
$\tau = 1$, no skip (Mixer only) & 0.4221 $\pm$ 0.0117 (0.4027, 0.4325) \\
Original Proximal Grad. w/ no EMA & 0.4320 $\pm$ 0.0097 (0.4208, 0.4532) \\
Adam w/ no Proximal Gradient & 0.4272 $\pm$ 0.0098 (0.4146, 0.4516) \\
No PLE & 0.4387 $\pm$ 0.0124 (0.4086, 0.4538) \\
Linear LassoNet + PLE & 0.5226 $\pm$ 0.0098 (0.5050, 0.5343) \\
No PFE & 0.5371 $\pm$ 0.0166 (0.5144, 0.5770) \\
MLP 1 (same width/depth) w/ HGD & 0.4768 $\pm$ 0.0079 (0.4649, 0.4859) \\
MLP 2 (same \# params) w/ HGD & 0.4548 $\pm$ 0.0101 (0.4338, 0.4655) \\
MLP 1 (same width/depth) & 0.8453 $\pm$ 0.2016 (0.5815, 1.0330) \\
MLP 2 (same \# params) & 0.8406 $\pm$ 0.2009 (0.5815, 1.0329) \\
\bottomrule
\end{tabular}
\end{table}

The full model achieves an RMSE of $0.4047$ (on average). Removing EMA from Seq-Hier-Prox increases RMSE to $0.4195$, confirming EMA’s role in stabilizing training. Replacing the Mixer with an MLP backbone of similar depth (MLP 1) or similar numbers of parameters (MLP 2) yields RMSE of $0.8453$, and 0.8406, respectively. This instability is remedied substantially by the inclusion of HyperGrad, but is still not competitive with LassoFlex. Replacing Seq-Hier-Prox with LassoNet’s Hier-Prox harms performance (RMSE $0.4320$), as the joint optimization undermines the linear skip connection. Using Adam without proximal gradients yields $0.4272$, highlighting the necessity of proximal methods for feature selection. Setting $\tau=1$ (disabling curriculum learning) marginally improves RMSE to $0.4004$, although the variance in this regime is higher. Using learnable BatchNorm causes overfitting ($0.4191$). The Mixer alone (with HyperGrad) performs poorly ($0.4221$), and the original LassoNet (with Adam or HyperGrad) yields $0.5294$ and $0.5291$, respectively. Each component—EMA, Seq-Hier-Prox, fixed BatchNorm and the Mixer component contributes significantly to \emph{LassoFlexNet}’s performance.

\section{Comprehensive Benchmarking}
\label{sec:result}
We evaluate \emph{LassoFlexNet} across $52$ datasets spanning three rigorous benchmarks: a feature selection suite~\citep{feature_selection_benchmark}, TabZilla~\citep{performance_benchmark}, and TabRed~\citep{rubachev2025tabred}. Across these diverse regimes, \emph{LassoFlexNet} consistently outperforms state-of-the-art deep tabular models (e.g., FT-Transformer~\citep{gorishniy2021revisiting}, TabM~\citep{gorishniy2025tabm}) and gradient-boosted trees (e.g., XGBoost~\citep{chen2016xgboost}, CatBoost~\citep{prokhorenkova2018catboost}). Comprehensive experimental settings are detailed in Appendix~\ref{app:exp_settings}. Additional analyses regarding hyperparameter robustness and the generalization impact of Seq-Hier-Prox are provided in Appendices~\ref{app:hyperparam} and \ref{app:la_improvement}, respectively.


\subsection{Feature Selection Under Noise}
\label{sec:feature_selection_result}
We first assess the capability of \emph{LassoFlexNet} to isolate relevant signals amidst severe noise. We utilize the feature selection benchmark from \citet{feature_selection_benchmark}, which corrupts $12$ standard datasets by injecting either purely random noise features (Table~\ref{tab:fs_method_result}) or highly correlated second-order features (Table~\ref{tab:fs_method_result_c}) at extraneous rates of $50\%$ and $75\%$. 
\textbf{Baselines and Protocol.} We compare our end-to-end architecture against traditional sequential ``two-stage'' pipelines. In these baselines, a dedicated feature selection algorithm (e.g., Lasso, LassoNet, XGBoost) first filters the inputs before passing them to a downstream deep model (MLP or FT-Transformer). Crucially, we evaluate these baselines in an \textit{oracle setting}: they are explicitly provided with the ground-truth number of relevant features ($k$). In stark contrast, \emph{LassoFlexNet} operates blindly, autonomously discovering the sparse support set during training without any prior knowledge of $k$.
\textbf{Results.} Despite this severe informational disadvantage, \emph{LassoFlexNet} consistently outperforms the oracle two-stage pipelines. On the random noise benchmark, it achieves top-3 performance in $8/12$ datasets at the $50\%$ noise rate and $7/12$ at the $75\%$ rate. This confirms that selecting features based on learned, nonlinear representations via a Tied Group Lasso is more robust than linear pre-filtering.

\begin{table*}[t!]
\scriptsize
\renewcommand{\arraystretch}{0.65}
\centering
\begin{adjustbox}{max width=\textwidth}
\begin{tabular}{c l |c c c c c c c c c c c c}
\toprule
\% & FS Method & AL & CH & CO & EY & GE & HE & HI & HO & JA & MI & OT & YE \\
\midrule
\multirow{20}{*}{50} 
& No FS + MLP & 0.941 & -0.480 & 0.961 & 0.538 & 0.466 & 0.366 & 0.798 & -0.622 & 0.703 & -0.911 & 0.773 & -0.801 \\  
& Univariate + MLP & 0.960 & -0.447 & 0.970 & 0.575 & 0.515 & 0.379 & 0.811 & \textbf{-0.549} & 0.715 & \textbf{-0.891} & 0.808 & -0.776 \\  
& Lasso + MLP & 0.949 & -0.454 & 0.969 & 0.547 & 0.458 & 0.378 & 0.812 & -0.575 & 0.715 & -0.907 & 0.805 & \textbf{-0.774} \\  
& 1L Lasso + MLP & 0.952 & -0.453 & 0.969 & 0.564 & 0.473 & 0.380 & 0.811 & -0.568 & 0.715 & -0.897 & 0.796 & \textbf{-0.773} \\  
& AGL + MLP & 0.958 & -0.512 & 0.969 & 0.578 & 0.495 & 0.386 & 0.813 & -0.557 & 0.718 & -0.898 & 0.791 & -0.783 \\  
& LassoNet + MLP & 0.954 & -0.445 & 0.969 & 0.495 & 0.383 & 0.386 & 0.811 & -0.557 & 0.705 & -0.907 & 0.783 & -0.776 \\  
& AM + MLP & 0.953 & -0.444 & 0.968 & 0.554 & 0.498 & 0.382 & 0.813 & -0.566 & 0.722 & -0.904 & 0.801 & -0.777 \\  
& RF + MLP & 0.955 & -0.444 & 0.969 & 0.554 & 0.498 & 0.382 & 0.813 & -0.566 & 0.722 & -0.904 & 0.801 & -0.777 \\  
& XGBoost + MLP & 0.956 & -0.448 & 0.969 & 0.590 & 0.502 & 0.385 & 0.812 & -0.560 & 0.720 & -0.893 & 0.800 & -0.774 \\  
& Deep Lasso + MLP & 0.959 & -0.443 & 0.968 & 0.573 & 0.485 & 0.383 & 0.814 & \textbf{-0.549} & 0.720 & -0.894 & 0.802 & -0.776 \\ 
& No FS + FT & 0.959 & -0.432 & 0.966 & 0.673 & 0.500 & 0.384 & 0.817 & -0.577 & 0.730 & -0.902 & 0.813 & -0.792 \\  
& Univariate + FT & \textbf{0.963} & -0.424 & 0.970 & 0.700 & 0.519 & 0.389 & \textbf{0.819} & \textbf{-0.554} & 0.733 & \textbf{-0.897} & 0.819 & -0.788 \\  
& Lasso + FT & 0.952 & -0.419 & 0.969 & 0.703 & 0.504 & 0.392 & 0.817 & -0.594 & 0.728 & -0.999 & 0.820 & -0.998 \\  
& 1L Lasso + FT & \textbf{0.963} & -0.423 & 0.969 & \textbf{0.722} & 0.489 & 0.389 & \textbf{0.819} & -0.577 & 0.732 & -0.904 & 0.819 & \textbf{-0.775} \\  
& AGL + FT & 0.899 & \textbf{-0.419} & 0.969 & 0.701 & 0.480 & 0.386 & \textbf{0.822} & -0.586 & 0.733 & -0.915 & 0.814 & -0.586 \\  
& LassoNet + FT & \textbf{0.963} & -0.423 & 0.970 & 0.690 & 0.481 & 0.392 & 0.818 & -0.559 & 0.733 & -0.904 & 0.808 & -0.789 \\  
& AM + FT & 0.962 & -0.432 & 0.969 & 0.703 & 0.504 & 0.392 & 0.817 & -0.568 & \textbf{0.735} & -0.903 & \textbf{0.820} & -0.789 \\  
& RF + FT & 0.962 & -0.432 & 0.970 & 0.715 & \textbf{0.591} & \textbf{0.395} & \textbf{0.820} & -0.558 & \textbf{0.737} & -0.900 & \textbf{0.820} & -0.791 \\  
& XGBoost + FT & \textbf{0.963} & -0.420 & 0.970 & \textbf{0.725} & \textbf{0.572} & \textbf{0.400} & \textbf{0.819} & -0.568 & \textbf{0.734} & -0.898 & \textbf{0.820} & -0.789 \\  
& Deep Lasso + FT & 0.962 & \textbf{-0.419} & 0.969 & 0.703 & 0.504 & 0.392 & 0.817 & -0.560 & 0.733 & -0.900 & 0.817 & -0.788 \\ 
\midrule
& \textbf{Ours: LassoFlexNet} & 0.962 & \textbf{-0.409} & 0.969 & \textbf{0.720} & \textbf{0.609} & \textbf{0.396} & \textbf{0.819} & -0.567 & 0.730 & \textbf{-0.897} & 0.815 & \textbf{-0.775} \\ 
\midrule
\multirow{10}{*}{75} 
& No FS + MLP & 0.925 & -0.527 & 0.955 & 0.502 & 0.417 & 0.348 & 0.778 & -0.674 & 0.671 & -0.917 & 0.749 & -0.812 \\  
& Univariate + MLP & \textbf{0.960} & -0.447 & 0.970 & 0.575 & 0.502 & 0.381 & 0.810 & \textbf{-0.549} & 0.713 & \textbf{-0.890} & \textbf{0.806} & -0.776 \\  
& Lasso + MLP & \textbf{0.959} & -0.454 & 0.967 & 0.543 & 0.491 & 0.381 & \textbf{0.811} & -0.612 & 0.716 & -0.907 & 0.802 & -0.789 \\  
& 1L Lasso + MLP & 0.957 & -0.448 & 0.968 & 0.555 & 0.432 & 0.380 & 0.809 & -0.572 & 0.717 & -0.903 & 0.799 & \textbf{-0.775} \\  
& AGL + MLP & 0.954 & \textbf{-0.447} & 0.968 & 0.561 & 0.429 & 0.386 & 0.809 & \textbf{-0.571} & 0.719 & -0.901 & 0.762 & -0.777 \\  
& LassoNet + MLP & \textbf{0.958} & -0.452 & 0.966 & 0.528 & 0.475 & 0.383 & 0.809 & \textbf{-0.555} & 0.705 & -0.913 & 0.768 & -0.794 \\  
& RF + MLP & 0.949 & -0.453 & 0.968 & \textbf{0.584} & \textbf{0.610} & 0.386 & \textbf{0.814} & -0.585 & 0.718 & -0.902 & \textbf{0.808} & -0.784 \\  
& XGBoost + MLP & \textbf{0.958} & -0.451 & 0.969 & \textbf{0.576} & \textbf{0.583} & 0.382 & 0.810 & -0.568 & \textbf{0.720} & \textbf{-0.892} & 0.804 & \textbf{-0.774} \\  
& Deep Lasso + MLP & 0.957 & \textbf{-0.446} & 0.969 & 0.569 & 0.479 & 0.387 & \textbf{0.814} & -0.559 & \textbf{0.721} & \textbf{-0.893} & 0.800 & \textbf{-0.774} \\ 
\midrule
& \textbf{Ours: LassoFlexNet} & \textbf{0.958} & \textbf{-0.411} & 0.968 & \textbf{0.665} & \textbf{0.602} & \textbf{0.392} & 0.810 & -0.572 & \textbf{0.721} & -0.895 & \textbf{0.805} & -0.776 \\ 
\bottomrule
\end{tabular}
\end{adjustbox}
\caption{Feature selection benchmark on datasets with random extra features. Reports accuracy (classification) and negative RMSE (regression). Bold indicates top-3 performance per dataset at 50\% and 75\%. “No FS” indicates training without FS, as done for \emph{LassoFlexNet}, which performs end-to-end feature selection.} 
\label{tab:fs_method_result}

\scriptsize
\renewcommand{\arraystretch}{0.7}
\centering
\begin{adjustbox}{max width=\textwidth}
\begin{tabular}{c l |c c c c c c c c c c c c }
\toprule
\% & FS method & AL & CH & CO & EY & GE & HE & HI & HO & JA & MI & OT & YE  \\ \midrule
\multirow{20}{*}{50} 
& No FS + MLP      & 0.946 & -0.475 & 0.965 & 0.557 & 0.525 & 0.370 & 0.802 & -0.607 & 0.703 & -0.909 & 0.778 & -0.797 \\
& Univariate + MLP & 0.955 & -0.451 & 0.966 & 0.556 & 0.514 & 0.346 & 0.810 & -0.620 & 0.717 & -0.920 & 0.795 & -0.828 \\
& Lasso + MLP      & 0.955 & -0.449 & 0.968 & 0.548 & 0.512 & 0.382 & 0.813 & -0.602 & 0.713 & -0.903 & 0.796 & -0.795 \\
& 1L Lasso + MLP   & 0.955 & -0.447 & 0.968 & 0.566 & 0.515 & 0.382 & 0.812 & -0.581 & 0.718 & -0.902 & 0.795 & -0.780 \\
& AGL + MLP        & 0.953 & -0.450 & 0.968 & 0.588 & 0.538 & 0.386 & 0.813 & -0.561 & 0.722 & -0.902 & 0.796 & -0.780 \\
& LassoNet + MLP   & 0.955 & -0.452 & 0.969 & 0.570 & 0.556 & 0.382 & 0.811 & -0.551 & 0.719 & -0.905 & 0.795 & \textbf{-0.777} \\
& AM + MLP         & 0.955 & -0.449 & 0.967 & 0.583 & 0.527 & 0.381 & 0.814 & -0.555 & 0.722 & -0.905 & 0.797 & -0.780 \\
& RF + MLP         & 0.951 & -0.453 & 0.967 & 0.574 & 0.568 & 0.383 & 0.810 & -0.565 & 0.724 & -0.904 & 0.788 & -0.786 \\
& XGBoost + MLP    & 0.954 & -0.454 & 0.969 & 0.583 & 0.510 & 0.385 & 0.815 & -0.553 & 0.722 & -\textbf{0.892} & 0.803 & -0.779 \\
& Deep Lasso + MLP & 0.955 & -0.447 & 0.968 & 0.577 & 0.525 & 0.388 & 0.815 & -0.567 & 0.721 & -\textbf{0.895} & 0.801 & -\textbf{0.776}\\ 
\\
& No FS + FT        & 0.960 & -0.430 & 0.967 & 0.686 & 0.576 & 0.345 & 0.812 & -0.574 & 0.733 & -0.920 & 0.809 & -0.826  \\
& Univariate + FT   & 0.963 & -0.422 & 0.965 & 0.681 & 0.574 & 0.387 & 0.820 & -0.586 & 0.732 & -0.937 & 0.812 & -0.915  \\
& Lasso + FT        & 0.952 & -0.422 & 0.936 & 0.697 & 0.556 & 0.389 & 0.820 & -0.570 & 0.731 & -0.899 & \textbf{0.816} & -0.795  \\
& 1L Lasso + FT     & 0.962 & -\textbf{0.419} & 0.969 & \textbf{0.718} & 0.571 & 0.392 & 0.820 & -0.552 & 0.735 & -0.914 & \textbf{0.816} & -0.830  \\
& AGL + FT          & 0.906 & -0.426 & 0.969 & 0.697 & 0.591 & \textbf{0.393} & 0.820 & -0.572 & \textbf{0.736} & -0.903 & 0.813 & -0.790  \\
& LassoNet + FT     & 0.962 & -0.426 & \textbf{0.970} & 0.679 & 0.578 & \textbf{0.392} & 0.820 & -\textbf{0.549} & 0.734 & -0.901 & \textbf{0.817} & -0.790  \\
& AM + FT           & 0.962 & -0.424 & 0.969 & 0.680 & 0.572 & \textbf{0.392} & 0.820 & -0.557 & 0.735 & -0.898 & 0.806 & -0.793  \\
& RF + FT           & 0.962 & -0.422 & 0.969 & 0.711 & \textbf{0.600} & 0.387 & 0.819 & -\textbf{0.548} & 0.735 & -\textbf{0.897} & \textbf{0.816} & -0.790  \\
& XGBoost + FT      & 0.963 & -0.422 & \textbf{0.970} & 0.706 & 0.564 & \textbf{0.392} & \textbf{0.821} & -\textbf{0.561} & \textbf{0.736} & -0.898 & 0.809 & -0.788  \\
& Deep Lasso + FT   & 0.961 & -0.422 & 0.968 & \textbf{0.725} & 0.577 & \textbf{0.393} & \textbf{0.821} & -0.598 & \textbf{0.736} & -0.898 & 0.801 & -\textbf{0.776}  \\
\midrule
& \textbf{Ours: LassoFlexNet} & 0.960 & \textbf{-0.415} & \textbf{0.970} & \textbf{0.712} & \textbf{0.599} & 0.390 & 0.817 & -0.590 & 0.731 & -0.903 & 0.809 & -0.780 \\ 
\midrule
& No FS + MLP        & 0.921 & -0.516 & 0.956 & 0.518 & 0.503 & 0.356 & 0.788 & -0.632 & 0.686 & -0.913 & 0.762 & -0.808 \\
& Univariate + MLP   & 0.955 & -0.569 & 0.941 & 0.510 & 0.495 & 0.347 & 0.742 & -0.620 & 0.686 & -0.921 & 0.779 & -0.838 \\
& Lasso + MLP        & 0.948 & -0.454 & 0.963 & \textbf{0.565} & 0.490 & 0.373 & 0.810 & -0.593 & 0.717 & -0.903 & 0.795 & -0.791 \\
& 1L Lasso + MLP     & \textbf{0.955} & -\textbf{0.444} & 0.967 & 0.549 & 0.495 & 0.380 & \textbf{0.811} & -0.576 & 0.715 & -\textbf{0.903} & 0.797 & -\textbf{0.779} \\
& AGL + MLP          & 0.928 & -0.566 & 0.967 & 0.548 & 0.490 & \textbf{0.382} & \textbf{0.811} & -0.574 & 0.714 & -0.904 & 0.788 & -\textbf{0.780} \\
& LassoNet + MLP     & 0.947 & -0.452 & \textbf{0.969} & 0.539 & \textbf{0.533} & \textbf{0.383} & 0.805 & -\textbf{0.572} & 0.708 & -0.908 & 0.791 & -0.785 \\
& RF + MLP           & 0.952 & -0.450 & 0.963 & 0.547 & \textbf{0.533} & 0.372 & 0.805 & -0.573 & 0.716 & -\textbf{0.903} & 0.765 & -0.788 \\
& XGBoost + MLP      & 0.954 & -0.515 & \textbf{0.968} & \textbf{0.571} & 0.530 & 0.381 & \textbf{0.811} & -\textbf{0.571} & \textbf{0.721} & -\textbf{0.895} & \textbf{0.800} & -0.784 \\
& Deep Lasso + MLP   & \textbf{0.959} & -\textbf{0.441} & \textbf{0.968} & 0.554 & 0.517 & \textbf{0.386} & \textbf{0.813} & -\textbf{0.563} & \textbf{0.718} & -\textbf{0.898} & \textbf{0.804} & -\textbf{0.778} \\
\midrule
& \textbf{Ours: LassoFlexNet} & \textbf{0.955} & \textbf{-0.422} & 0.963 & \textbf{0.675} & \textbf{0.579} & \textbf{0.382} & \textbf{0.811} & -0.588 & \textbf{0.718} & -0.905 & \textbf{0.801} & -0.785 \\ 
\bottomrule
\end{tabular}
\end{adjustbox}
\caption{Benchmarking feature selection methods for MLP and FT-Transformer downstream models on datasets \textbf{with corrupted features}. We report performance of models trained on features selected by different FS algorithms in terms of accuracy for classification and negative RMSE for regression problems. \% refers to percent of extra features in the dataset: either 50\% or 75\% features are second-order. Bold font indicates the top-3 performance and lower rank indicates better overall result, and the comparisons are done for a given dataset at a noise level (50 or 75).}
\label{tab:fs_method_result_c}
\end{table*}

\subsection{TabZilla Performance Benchmarks}
\label{sec:performance_result}

\begin{figure*}[t!]
    \centering
    \includegraphics[width=.6\linewidth]{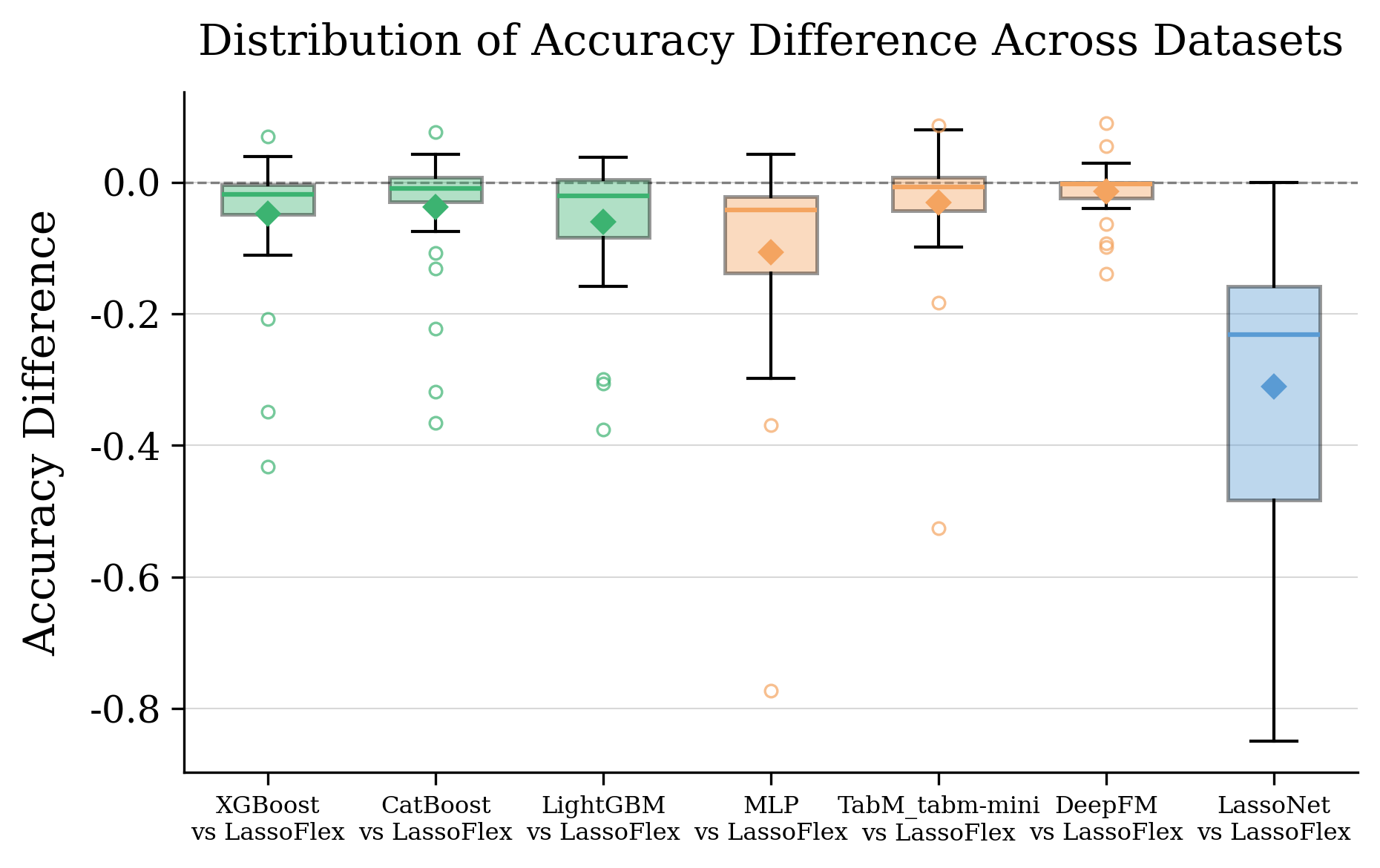}
    \caption{Relative performance distribution of 6 benchmark methods compared to \textit{LassoFlexNet} across 35 TabZilla datasets. The $y$-axis represents the difference in accuracy (Benchmark v.s.
    \textit{LassoFlexNet}). Thus, values below zero indicate that \textit{LassoFlexNet} outperforms the competing method. The boxplots summarize the variance across different datasets.}
    \label{fig:performance_bench_result}
\end{figure*}

\begin{table*}[t!]
\centering
\small
\caption{TabRed Benchmark Results}
\label{tab:tabred}
\begin{tabularx}{\textwidth}{l *{6}{>{\raggedright\arraybackslash}X}}
\toprule
Dataset \textbackslash Model
& CatBoost
& LightGBM
& RF
& XGBoost
& TabM-mini
& LassoFlexNet \\
\midrule
ecom-offers (class. AUC)
& 0.6507 ± 0.0554 [6]
& 0.6317 ± 0.0538 [6]
& 0.6447 ± 0.0529 [6]
& 0.6447 ± 0.0628 [6]
& \textbf{0.6533 ± 0.0603 [6]}
& 0.6322 ± 0.0419 [6] \\
homecredit-default (class. AUC)
& 0.8211 ± 0.0115 [6]
& 0.8092 ± 0.0110 [6]
& 0.7365 ± 0.0168 [6]
& 0.8046 ± 0.0136 [6]
& \textbf{0.8413 ± 0.0102 [6]}
& 0.8365 ± 0.0096 [6] \\
homesite-insurance (class. AUC)
& 0.9559 ± 0.0088 [6]
& 0.9576 ± 0.0075 [6]
& 0.8414 ± 0.0168 [6]
& 0.9568 ± 0.0088 [6]
& 0.9557 ± 0.0083 [6]
& \textbf{0.9605 ± 0.0071 [6]} \\
cooking-time (reg. RMSE)
& 0.4718 ± 0.0084 [6]
& 0.4899 ± 0.0095 [6]
& 0.5192 ± 0.0062 [6]
& 0.4866 ± 0.0108 [6]
& 0.5629 ± 0.0157 [6]
& \textbf{0.4658 ± 0.0093 [6]} \\
delivery-eta (reg. RMSE)
& \textbf{0.5370 ± 0.0094 [6]}
& 0.5641 ± 0.0095 [6]
& 0.5701 ± 0.0079 [6]
& 0.5425 ± 0.0086 [6]
& 0.6121 ± 0.0057 [5]
& 0.5375 ± 0.0104 [6] \\
maps-routing (reg. RMSE)
& 0.1665 ± 0.0015 [6]
& 0.1641 ± 0.0015 [6]
& 0.1746 ± 0.0012 [6]
& 0.1709 ± 0.0018 [6]
& 0.1656 ± 0.0021 [6]
& \textbf{0.1619 ± 0.0013 [6]} \\
sberbank-housing (reg. RMSE)
& 0.2585 ± 0.0171 [6]
& 0.2568 ± 0.0171 [6]
& 0.2908 ± 0.0141 [6]
& 0.2761 ± 0.0254 [6]
& 0.2769 ± 0.0098 [5]\footnotemark
& \textbf{0.2544 ± 0.0138 [6]} \\
weather (reg. RMSE)
& 1.5922 ± 0.0565 [6]
& 1.6272 ± 0.0735 [6]
& 2.4514 ± 0.0994 [6]
& 1.5632 ± 0.0766 [6]
& 2.2434 ± 0.1835 [6]
& \textbf{1.5591 ± 0.0764 [6]} \\
\bottomrule
\end{tabularx}
\end{table*}
\footnotetext{5 seeds because one seed leads to unstable training.}

\textbf{Benchmark and Baselines.} To evaluate robustness across highly diverse data distributions, we utilize the TabZilla benchmark suite~\citep{performance_benchmark}. From its collection of the 36 hardest classification tasks, we select the subset of 35 datasets containing numerical or mixed features. We compare \emph{LassoFlexNet} against six formidable competitors, representing both the gradient-boosted decision tree (GBDT) family (XGBoost~\citep{chen2016xgboost}, CatBoost~\citep{prokhorenkova2018catboost}, LightGBM~\citep{ke2017lightgbm}) and state-of-the-art deep tabular models (DeepFM~\citep{guo2017deepfm} and TabM~\citep{gorishniy2025tabm}).
\textbf{Protocol and Results.} We adopt a rigorous two-stage evaluation protocol. For each dataset, we first perform a hyperparameter search on a single fold, selecting the configuration that minimizes validation log-loss on a hold-out split. Using these fixed hyperparameters, we then report the final performance across a separate 5-fold cross-validation. Figure~\ref{fig:performance_bench_result} illustrates the distribution of accuracy differentials between the baselines and our method. Because the $y$-axis represents $(\text{Benchmark Accuracy} - \text{LassoFlexNet Accuracy})$, values below zero indicate that \emph{LassoFlexNet} outperformed the competitor. The results demonstrate consistent superiority: \emph{LassoFlexNet} outperforms all six baselines in the majority of tasks, exhibiting a robust downward shift in the error distributions.

\subsection{TabRed: Robustness to Distribution Shift}

\textbf{Benchmark and Protocol.} Standard tabular benchmarks often rely on i.i.d.\ shuffling, which masks the temporal distribution shifts prevalent in real-world applications. To assess robustness under these challenging conditions, we evaluate \emph{LassoFlexNet} on the TabRed benchmark~\citep{rubachev2025tabred}. Adopting the TabZilla protocol, we aggregate performance across six splits per dataset: three randomized splits and three strict temporal splits (where the training data strictly precedes the test data in time). This rigorously tests the model's ability to generalize to future, shifted distributions. 
\textbf{Results.} Table~\ref{tab:tabred} details the performance of \emph{LassoFlexNet} against leading GBDT methods and the state-of-the-art neural model TabM-mini. \emph{LassoFlexNet} establishes a new state-of-the-art, achieving the best performance in 5 out of 8 tasks. In the remaining tasks, it remains highly competitive; for instance, it trails the top performer (CatBoost) by a negligible margin of 0.0005 RMSE in \emph{delivery-eta}, and ranks second in \emph{homecredit-default}. This consistent performance across both randomized and temporal splits highlights \emph{LassoFlexNet}'s superior capacity to handle real-world tabular heterogeneity and non-stationarity.

\section{Conclusion}
\label{sec:conclusion}

We introduced \emph{LassoFlexNet}, a novel neural architecture accompanied by a novel optimization algorithm that sets the new state-of-the-art for tabular deep learning. 
The method is designed specifically to address five challenges: (i) resilience to irrelevant features; (ii) respect for data orientations; (iii) the ability to learn complex and irregular functions; (iv) the capability to model a larger number of heterogeneous inputs; and (v) stability and robustness in training. LassoFlexNet bridges the gap between tree-based and neural methods, offering a practical solution for tabular data. 

\clearpage

\bibliographystyle{unsrtnat}
\bibliography{references}

\newpage
\appendix
\section{Appendix: Background and More Experiments}
\subsection{LassoNet Training Algorithm}
\label{appendix:lassonet_prox}
\begin{algorithm}[H]
\caption{LassoNet Hierarchical Proximal Operator~\citep{lemhadri2021lassonet}}
\label{alg:lassonet_prox}
\begin{algorithmic}[1]
\Procedure{Hier-Prox}{$\theta, W^{(1)}; \lambda, M$}
  \For{$j = 1$ \textbf{to} $d$}
    \State
    \(\displaystyle
      \begin{aligned}[t]
        \text{Sort }|W_j^{(1)}|\text{ into }
        &\;|W_{(j,1)}^{(1)}|\ge\dots\ge|W_{(j,K)}^{(1)}|
      \end{aligned}
    \)
    \For{$m = 0$ \textbf{to} $K$}
      \State
      \(\displaystyle
        \begin{aligned}[t]
          w_m
          &= \tfrac{M}{1 + m\,M^2}
             \,\mathcal{S}_\lambda\Bigl(
               |\theta_j|
               + M \sum_{i=1}^m |W_{(j,i)}^{(1)}|
             \Bigr)
        \end{aligned}
      \)
    \EndFor
    \State
    \(\displaystyle
      \begin{aligned}[t]
        \text{Find first }\tilde m:\quad
        |W_{(j,\tilde m+1)}^{(1)}|\le w_{\tilde m}
        \le|W_{(j,\tilde m)}^{(1)}|
      \end{aligned}
    \)
    \State $\tilde\theta_j = \tfrac1M\,\mathrm{sign}(\theta_j)\,w_{\tilde m}$
    \State \(\tilde W_j^{(1)}
      = \mathrm{sign}(W_j^{(1)})\,
        \min\bigl(w_{\tilde m},\,|W_j^{(1)}|\bigr)\)
  \EndFor
  \State \Return $(\tilde\theta,\tilde W^{(1)})$
\EndProcedure
\end{algorithmic}
\small\itshape
Notation: $d$ is the number of features; $K$ is first-layer width.
\end{algorithm}

\subsection{Lambda-Training Improvements Over Pre-Trained Models}
\label{app:la_improvement}

In this section, we evaluate the additional generalization gains provided by the $\lambda$-training stage (utilizing our proposed Seq-Hier-Prox-EMA optimizer) compared to the baseline pre-trained models. For each dataset (GE, EY, and CH), we randomly initialized 300 LassoFlexNet models with diverse hyperparameter configurations. We first pre-trained each model for 200 epochs, followed by a $\lambda$-training phase of 100 epochs at each $\lambda$ value (both utilizing early stopping). We then compared the validation metrics of the final $\lambda$-trained models against their pre-trained counterparts. The results are visualized in Figure~\ref{fig:lambda_vs_pretrain}.
\begin{figure}[ht]
    \centering
    \begin{subfigure}{0.48\linewidth}
        \includegraphics[width=\linewidth]{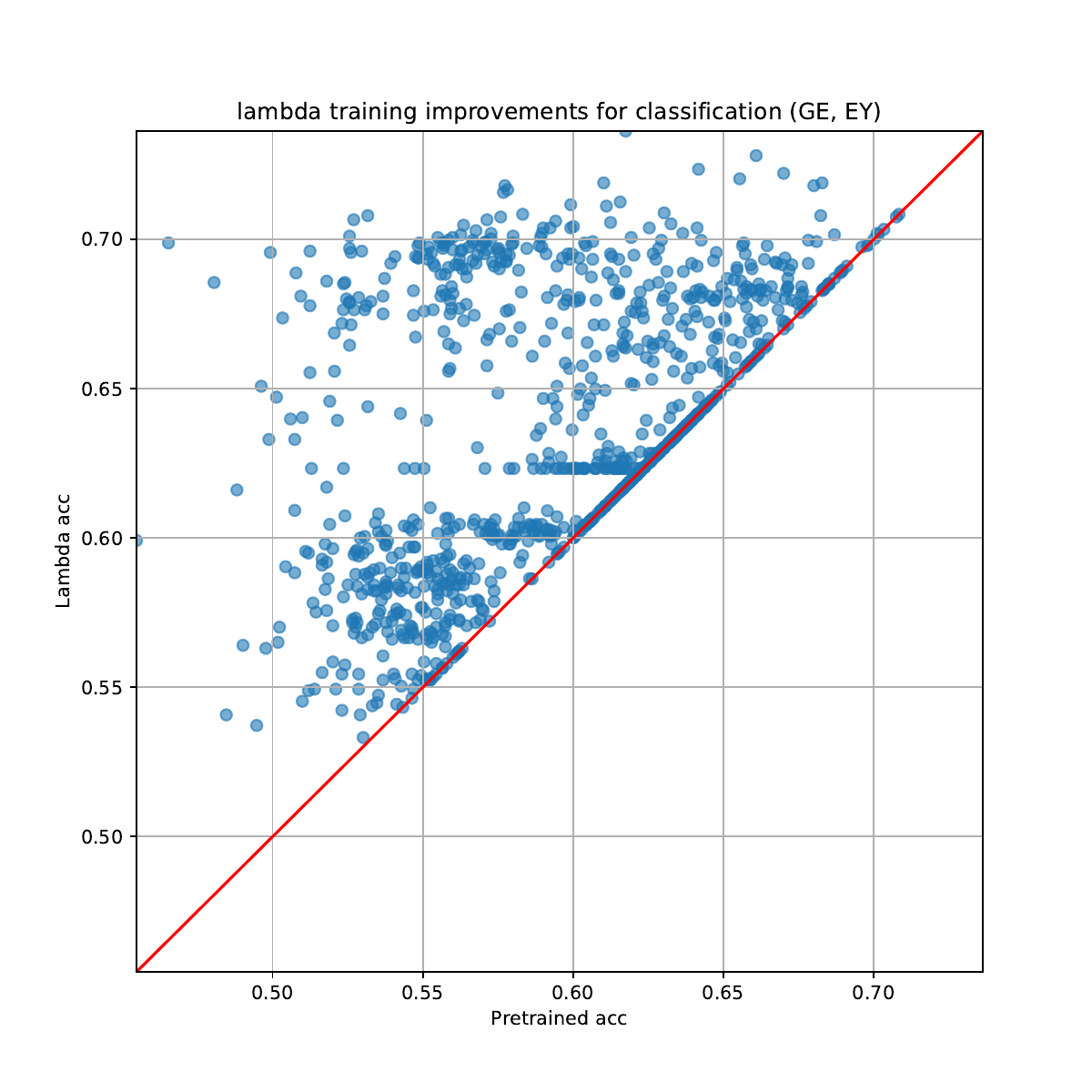}
        \label{fig:lambda_vs_pretrain_class}
    \end{subfigure}
    \hfill
    \begin{subfigure}{0.48\linewidth}
        \includegraphics[width=\linewidth]{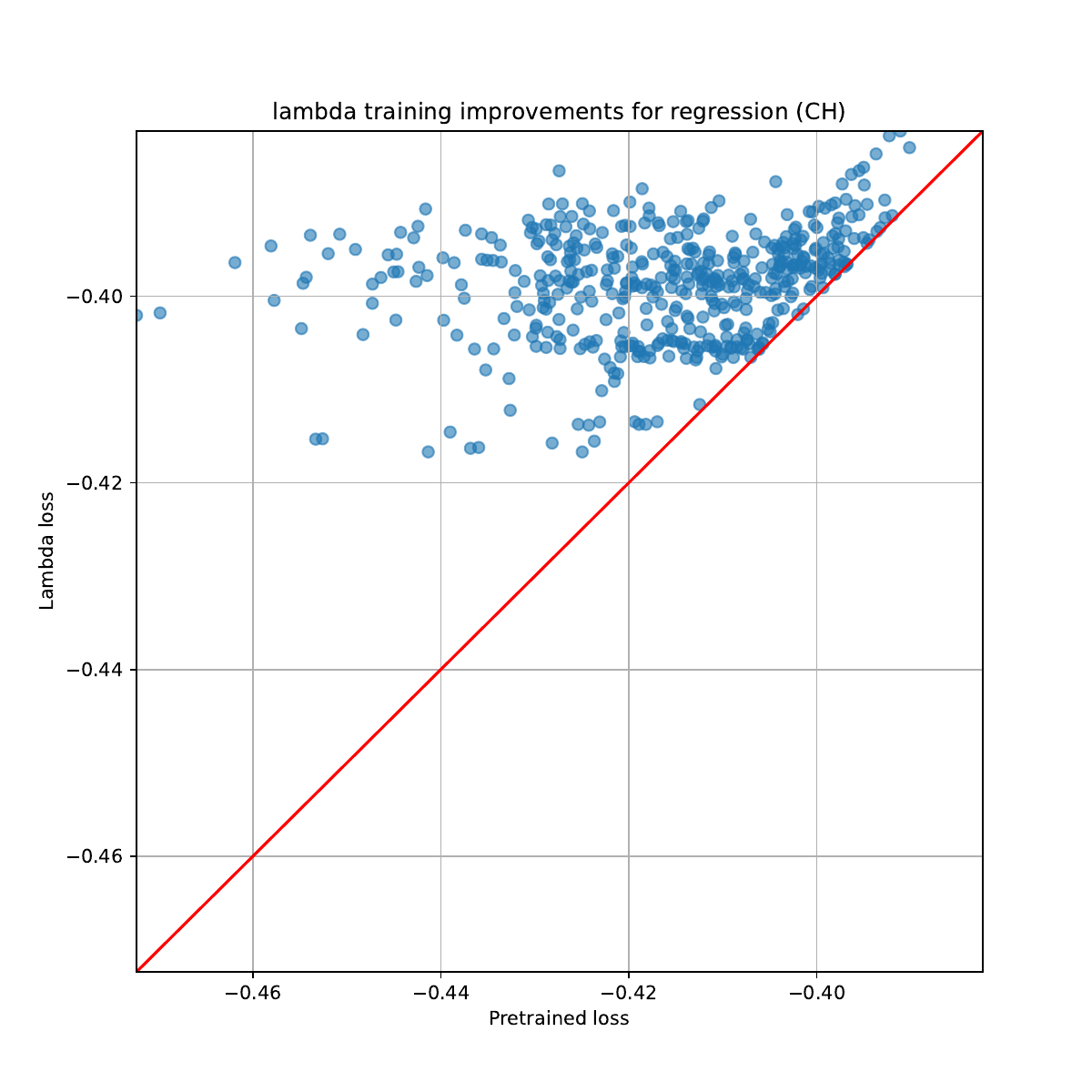}
        \label{fig:lambda_vs_pretrain_reg_CH}
    \end{subfigure}
    \caption{Validation performance comparison between 300 lambda-trained models with its corresponding pre-trained model: GE, EY classification tasks (left, higher acc means better, so \textbf{left of y=x means lambda improves}); CH regression (right, lower the loss magnitude means better, so \textbf{left of y=x means lambda improves}) }
    \label{fig:lambda_vs_pretrain}
\end{figure}

Our empirical results across both classification and regression tasks demonstrate that the vast majority of data points lie above the $y=x$ line (or to the left, depending on the metric orientation), indicating a consistent, strict improvement in validation metrics over the pre-trained baseline. Points situated directly on the $y=x$ identity line correspond to specific hyperparameter configurations where $\lambda$-training yielded no marginal gain; in these instances, the early-stopping mechanism simply reverted the final model to the pre-trained weights.

This consistently positive delta demonstrates the efficacy of our $\lambda$-training phase and the stability of the proposed Seq-Hier-Prox-EMA algorithm. Furthermore, because $\lambda$-training relies entirely on the learned skip weights $\V{\beta}$ accurately capturing feature importance to constrain the neural module $W^{(1)}$, this experiment implicitly confirms that $\V{\beta}$ is successfully learned during the pre-training stage, validating the end-to-end LassoFlexNet pipeline.

Figure~\ref{fig:lasso_improves_gen} provides a detailed trajectory of LassoFlexNet training on the California Housing dataset. Training is governed by early stopping based on validation loss. The pre-training phase halts at the epoch indicated by the red vertical line, after which the $\lambda$-training phase activates and continues to drive the validation loss down, even as the training loss naturally rises due to the increasing $\ell_1$ penalty from the $\lambda$-training (Seq-Hier-Prox-EMA). The EMA validation loss comes from the EMA of the model weights ensemble. It further lowers the validation loss by smoothing out the noise from the model weights ensemble, similar to EMA in diffusion models~\citep{ho2020denoising}.

\begin{figure}[H]
    \centering
    \includegraphics[width=0.5\linewidth]{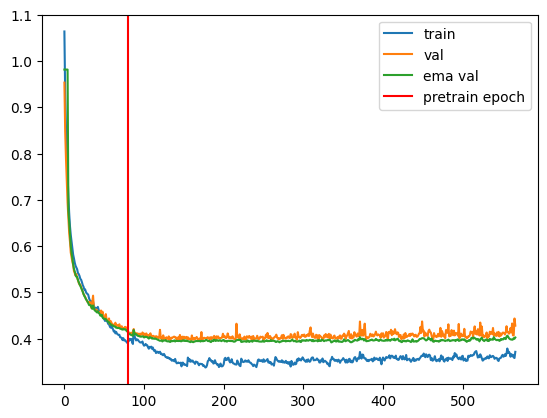}
    \caption{LassoFlexNet model improves generalization gap. Validation continues to go down, as training loss goes up.}
    \label{fig:lasso_improves_gen}
\end{figure}

\subsection{LassoFlexNet Hyperparameter Sensitivity Analysis}
\label{app:hyperparam}

To evaluate the sensitivity of LassoFlexNet to hyperparameter configurations, we conducted extensive experiments across 39 distinct feature selection benchmarks (each representing a unique combination of dataset and noise settings). We restricted this sensitivity analysis to classification tasks to facilitate standardized result aggregation, as accuracy is strictly bounded within $[0, 1]$. For each task, we sampled 100 hyperparameter configurations following the distributions detailed in Table~\ref{tab:hypertuned} (Appendix~\ref{app:exp_settings}). 

\begin{figure}[ht]
    \centering
    \includegraphics[width=\linewidth]{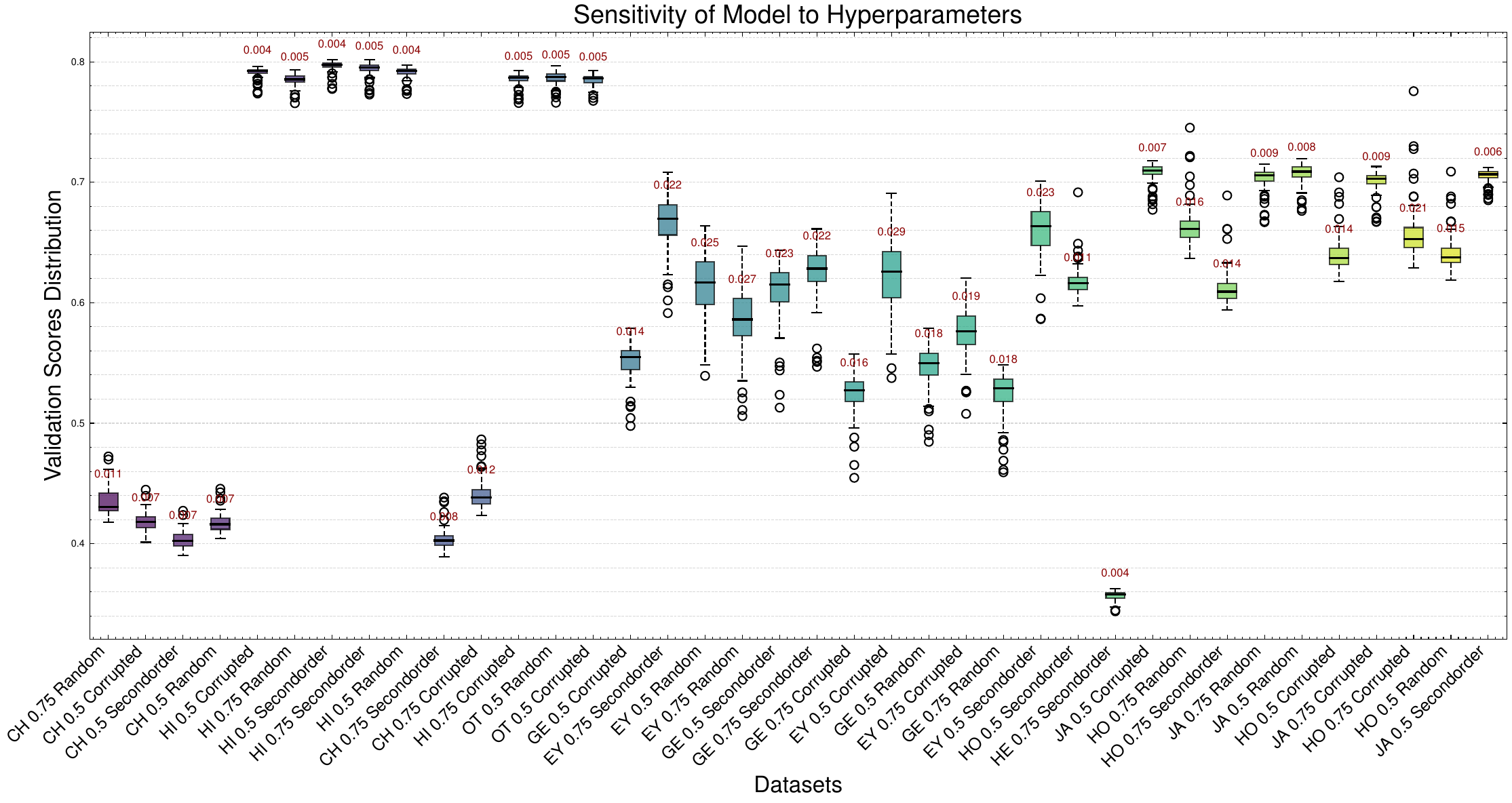}
    \caption{Sensitivity to hyperparameters of LassoFlexNet evaluated on 39 classification tasks} 
    \label{fig:sensitivity}
\end{figure}

Figure~\ref{fig:sensitivity} presents the resulting distribution of validation scores via box plots, with red annotations denoting one standard deviation of the validation accuracy for each task. We observe that LassoFlexNet's performance is highly robust to hyperparameter variations, evidenced by the narrow variance distributions across the vast majority of the benchmarks.

\subsection{Targeted Experiment: The Failure of Joint Optimization}
\label{app:target_exp}
To explicitly test the failure modes of the original LassoNet architecture, we construct an artificial dataset governed by the following generating function:
\begin{equation}
y =  \V{\beta}_{\ast}^\top \V{x} + \alpha \cdot \mathrm{NN}_{R}(\V{x}) + \gamma \cdot \epsilon
\end{equation}
where $\V{\beta}_{\ast}$ represents the ground-truth linear correlations. Only a sparse subset of elements in $\V{\beta}_{\ast}$ are large, while the remainder are exactly or approximately zero. $\mathrm{NN}_{R}$ is a fixed, randomly initialized neural network used to inject non-linearity, and $\epsilon \sim \mathcal{N}(0, 1)$ is standard Gaussian noise. The scalars $\alpha$ and $\gamma \in [0, 1]$ control the strength of the non-linearity and noise, respectively. Notably, by setting $\alpha = 0$, we isolate whether LassoNet can successfully uncover pure linear relations, decoupling the model's performance from its core theoretical assumption (that linear correlations can adequately constrain nonlinear relations).

The success of LassoNet hinges entirely on the pre-training stage, where the linear skip weights $\V{\beta}$ must converge toward the true sparse support $\V{\beta}_{\ast}$. Only with this accurate foundation can the subsequent $\lambda$-training phase effectively utilize $\V{\beta}$ to enforce feature sparsity on the non-linear components.

We trained the original LassoNet model on this artificial dataset with strong linear signals ($\alpha=0$) and visualized the evolution of the training loss, validation loss, and skip weights over time. The results, averaged over ten random seeds (varying the initialization of $\V{\beta}$ and the neural weights, but keeping $\V{\beta}_{\ast}$ fixed), are shown in Figure~\ref{fig:targeted_exp2}.
\begin{figure}[ht]
    \centering
    \begin{subfigure}[t]{0.4\textwidth} 
        \includegraphics[width=\textwidth, height=4cm]{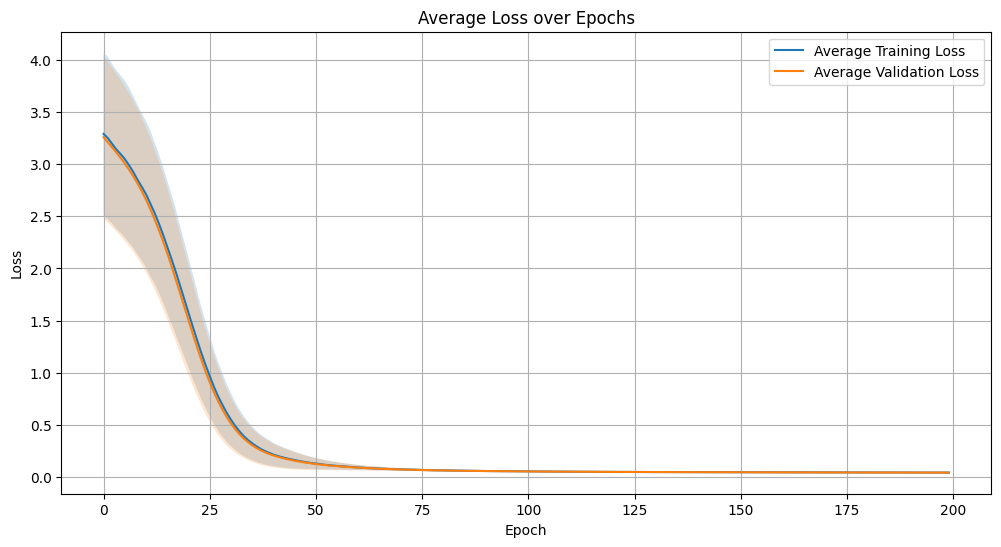} 
        \label{fig:loss_tau1}
    \end{subfigure}
    \begin{subfigure}[t]{0.44\textwidth} 
        \includegraphics[width=\textwidth, height=4cm]{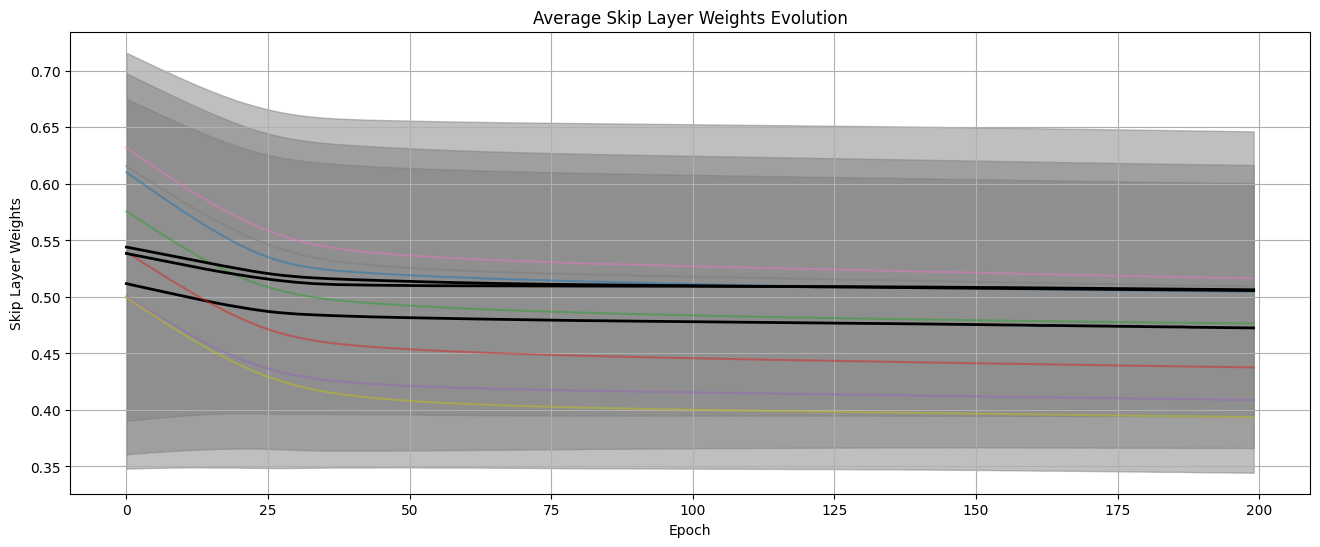} 
        \label{fig:skip_tau1}
    \end{subfigure}
    \caption{Average loss and skip weights evolution for original LassoNet during training across 10 runs}
    \label{fig:targeted_exp2}
\end{figure} \\
As observed in the skip weight evolution plot (Figure~\ref{fig:targeted_exp2}, right), the linear LassoNet fails to isolate the strong ground-truth linear weights; the trajectories of the important features (black lines) remain hopelessly entangled with the non-important features (colored lines). 

This training dynamic fatally undermines the premise of using skip weights to constrain feature participation. If the skip weights $\V{\beta}$ fail to separate important from unimportant features during pre-training, the subsequent $\lambda$-training phase will constrain the hidden layers using corrupted guidance. It is highly unlikely that proximal $\lambda$-training can reverse this adverse initialization, rendering the intended end-to-end feature selection ineffective.

Our intuition for this failure is as follows: the linear component of the model, $\V{\beta}^\top \V{x}$, is a significantly weaker learner than the nonlinear component $\mathrm{NN}_{W}(\V{x})$. During joint training, the optimization trajectory is dominated by $\mathrm{NN}_{W}(\V{x})$, preventing $\V{\beta}^\top \V{x}$ from independently uncovering the ground-truth linear signal. In over-parameterized regimes, the global loss can be minimized even if the skip weights fail to prioritize the correct features. Because there is no structural curriculum enforcing "learn the linear part first, the nonlinear part next", $\V{\beta}$'s ideal gradient share is dispersed into the neural module.

We validate this intuition by modifying the original LassoNet architecture slightly. Instead of $\hat{y} = \V{\beta}^\top \V{x} + \mathrm{MLP}(\V{x})$, we introduce a small scalar $\tau$ such that $\hat{y} = \V{\beta}^\top \V{x} + \tau \cdot \mathrm{MLP}(\V{x})$. This modification acts as a soft curriculum on the joint learning dynamics, suppressing the MLP's gradients and encouraging the linear weights to dominate early training. We plot the identical experiment setting $\tau = 0.001$ in Figure~\ref{fig:targeted_exp}.
\begin{figure}[ht]
    \centering
    \begin{subfigure}[t]{0.4\textwidth} 
        \includegraphics[width=\textwidth, height=4cm]{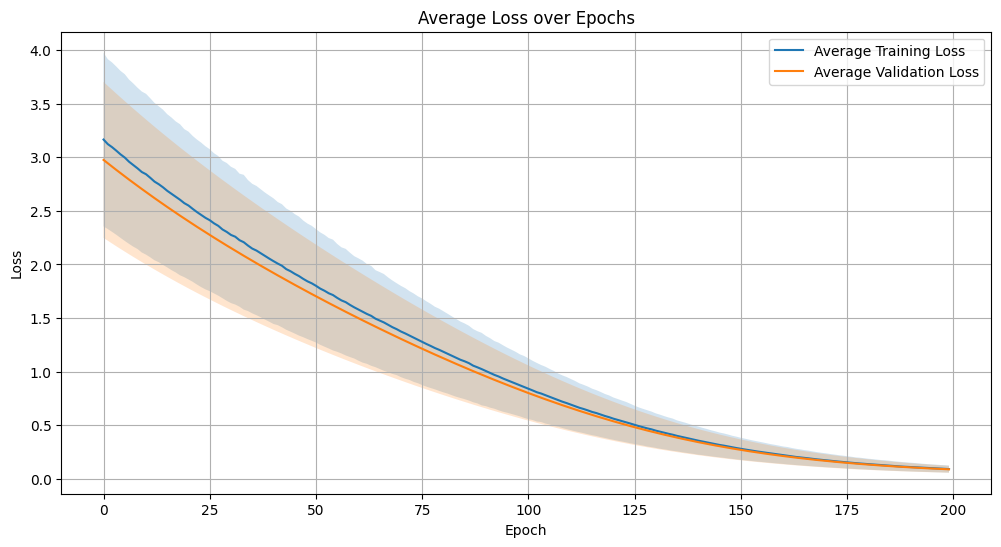} 
        \label{fig:loss_tau001}
    \end{subfigure}
    \begin{subfigure}[t]{0.44\textwidth} 
        \includegraphics[width=\textwidth, height=4cm]{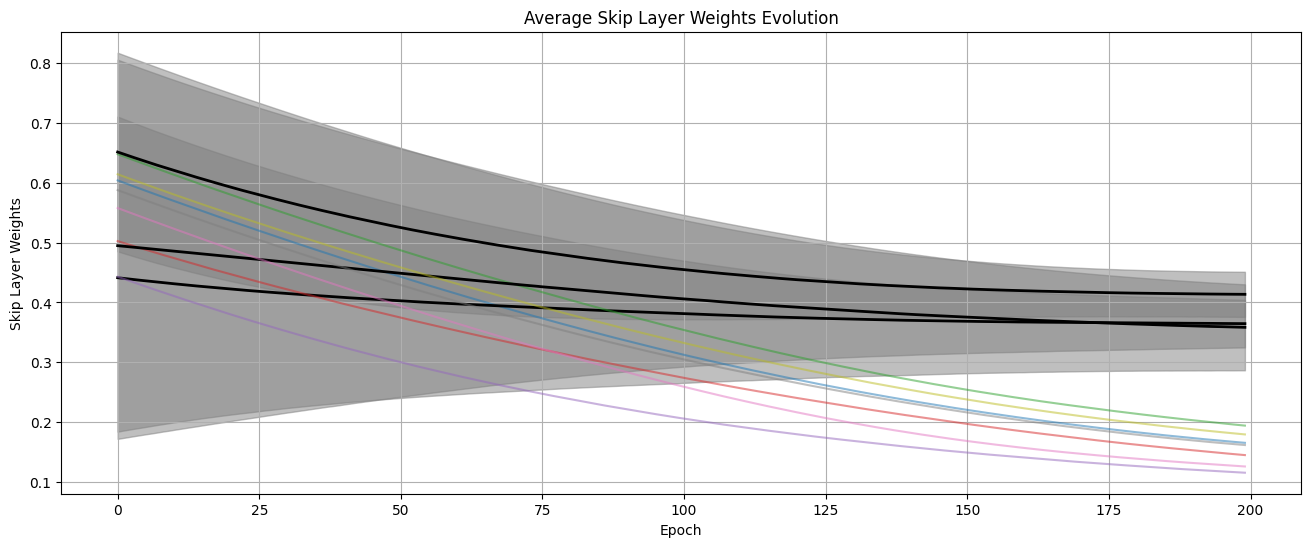} 
        \label{fig:skip_tau001}
    \end{subfigure}
    \caption{Average loss and skip weights evolution for original LassoNet with small scalar on the MLP part during training across 10 runs}
    \label{fig:targeted_exp}
\end{figure}
Under this soft curriculum, the learned skip weights for the important features cleanly separate from the unimportant ones. Note that the train and validation losses decrease at a slower overall pace with $\tau=0.001$. This convergence rate difference further corroborates our hypothesis that the unconstrained MLP was previously dominating the optimization dynamics at the expense of the linear skip connection.

\section{Appendix: LassoFlexNet Main Module Architecture}
\label{app:main_arch}

\paragraph{MLP-Mixer Modification: Enforcing the Hierarchical Bottleneck.}
The main neural module of LassoFlexNet utilizes the MLP-Mixer architecture proposed by \citet{tolstikhin2021mlp}. The original MLP-Mixer consists of per-patch linear embeddings, Mixer blocks, and a classifier head. Each Mixer block contains one feature-mixing MLP and one embedding-mixing MLP, each consisting of two fully-connected layers and a GELU nonlinearity, alongside residual connections, dropout, and LayerNorm. 

In LassoFlexNet, our Per-Feature Embedding (PFE) module connects directly to the Mixer blocks. Crucially, our Tied Group Lasso skip module is designed to constrain the feature-mixing MLP within the \emph{first} Mixer block. To ensure the skip module strictly gates all information flow—analogous to the first hidden layer in the original LassoNet—we designate the feature-mixing MLP of the first Mixer block as the sole computational bottleneck. Consequently, we explicitly omit the LayerNorm and residual connections in this initial block to prevent any information bypass that would circumvent the hierarchical constraint. This architectural modification is illustrated in Figure~\ref{fig:main_module}.

\begin{figure}[ht]
    \centering
    \includegraphics[width=0.8\linewidth]{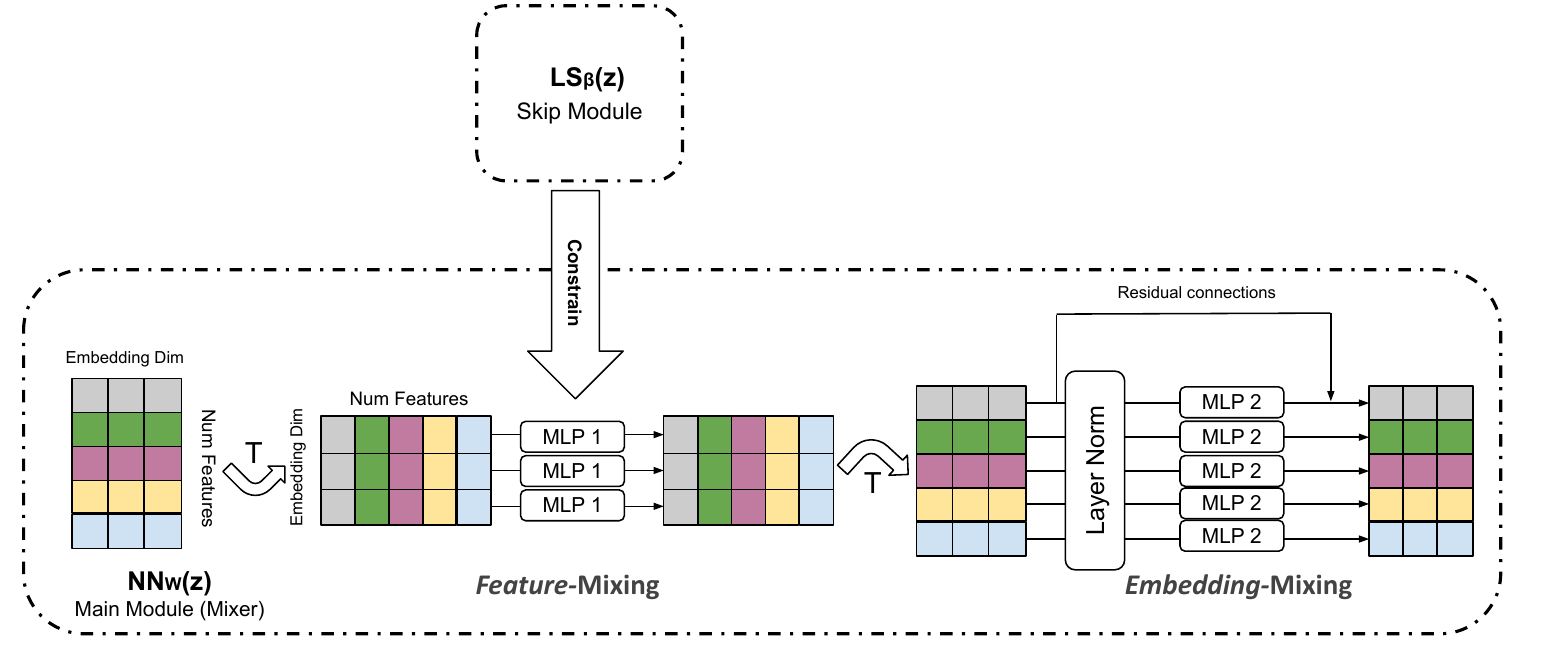}
    \caption{Details of the main module for LassoFlexNet, Mixer architecture is used}
    \label{fig:main_module}
\end{figure}

\subsection{Proximal Gradient Weight Evolution Comparison}
\label{app:prox_grad}

In this section, we empirically compare the training dynamics of LassoNet's original Hier-Prox operator against our proposed Seq-Hier-Prox operator. We initialize a pre-trained LassoFlexNet model and iteratively apply either Hier-Prox or Seq-Hier-Prox to update the skip weights $\V{\beta}$ and the first-layer neural weights $W^{(1)}$. 

The model is pre-trained on the California Housing dataset augmented with 50\% random noise \citep{feature_selection_benchmark}. We set the curriculum scalar $\tau=0.1$ to ensure $\V{\beta}$ is robustly learned during pre-training. For the iterative proximal updates, we use a fixed linear sequence of $\lambda$ values ranging from $10^{-5}$ to $10^{-2}$. To ensure robust observations, we sample different values of the constraint multiplier $M$ and inject Gaussian noise into the loaded pre-trained weights across multiple random seeds. We record the evolution paths for $\V{\beta}$ and $W^{(1)}$ for each seed and report the average trajectory.

The loaded pre-trained skip weights ($\V{\beta}$) and first hidden layer weights ($W^{(1)}$) are visualized in Figure~\ref{fig:ex1_pre-trained_weights}.

\begin{figure}[ht]
    \centering
    \begin{subfigure}[t]{0.55\textwidth}
        \centering
        \includegraphics[width=\textwidth]{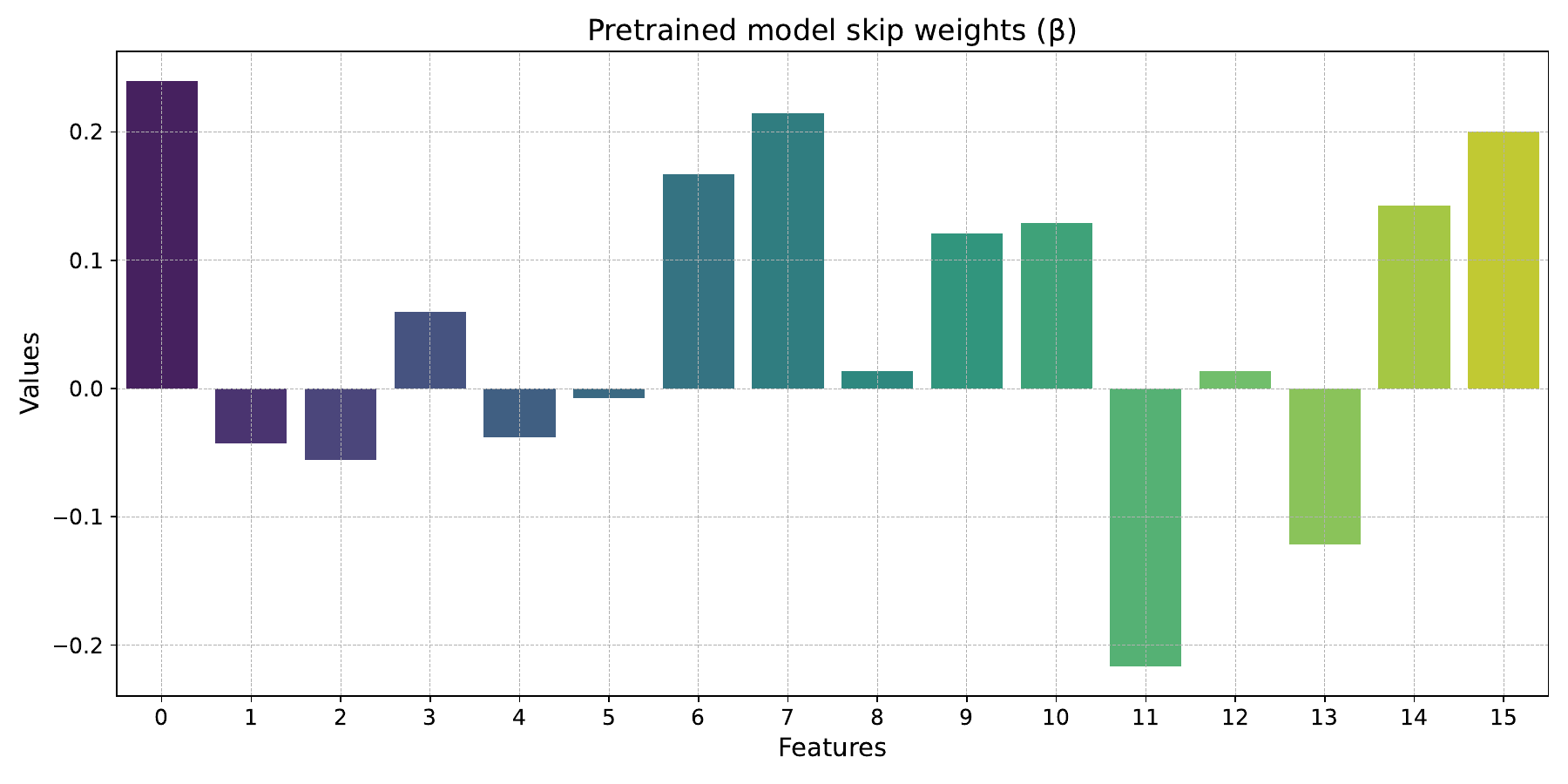}
        \caption{Skip (Feature Importance)}
        \label{fig:skip}
    \end{subfigure}
    \hfill
    \begin{subfigure}[t]{0.35\textwidth}
        \centering
        \includegraphics[width=\textwidth]{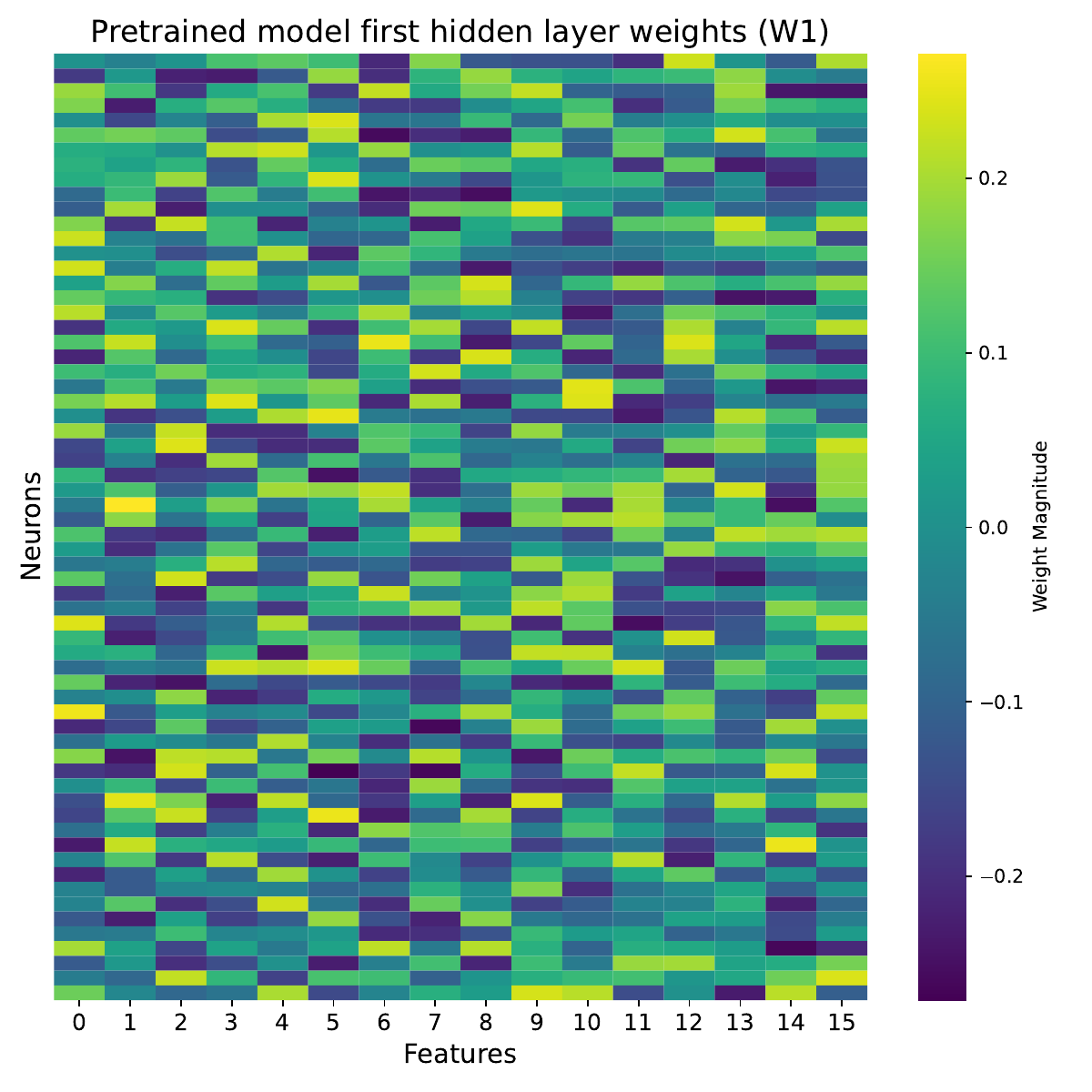}
        \caption{W1 (First Fully Connected Layer Weights)}
        \label{fig:w1}
    \end{subfigure}
    \caption{Visualizations of pre-trained LassoFlexNet model skip weights and W1 weights}
    \label{fig:ex1_pre-trained_weights}
\end{figure}

As shown, the pre-trained skip weights successfully capture feature importance, exhibiting distinct magnitudes (some close to zero, others large) that separate the true features from the injected noise. Conversely, the distribution of the hidden weights $W^{(1)}$ remains relatively uniform. The objective of the subsequent $\lambda$-training is to utilize these informative skip weights to constrain $W^{(1)}$, ensuring that unimportant features do not participate in the neural network's computations.

\textbf{Failure of Original Hier-Prox.} The average evolution of $\V{\beta}$ and $W^{(1)}$ under the original Hier-Prox operator is visualized in Figure~\ref{fig:original_proxgrad_evolve}. Before any proximal updates (iteration 0), the skip weights are cleanly separated. However, the moment the joint proximal gradient update begins, the skip weights $\V{\beta}$ become heavily influenced by the dense, noisier first hidden layer weights $W^{(1)}$. Consequently, the ability of $\V{\beta}$ to guide $W^{(1)}$ toward sparsity is severely weakened, and the critical feature importance learned during the pre-training stage is rapidly washed away.

\begin{figure}[ht]
    \centering
    \begin{subfigure}[t]{0.48\textwidth}
        \centering
        \includegraphics[width=\linewidth]{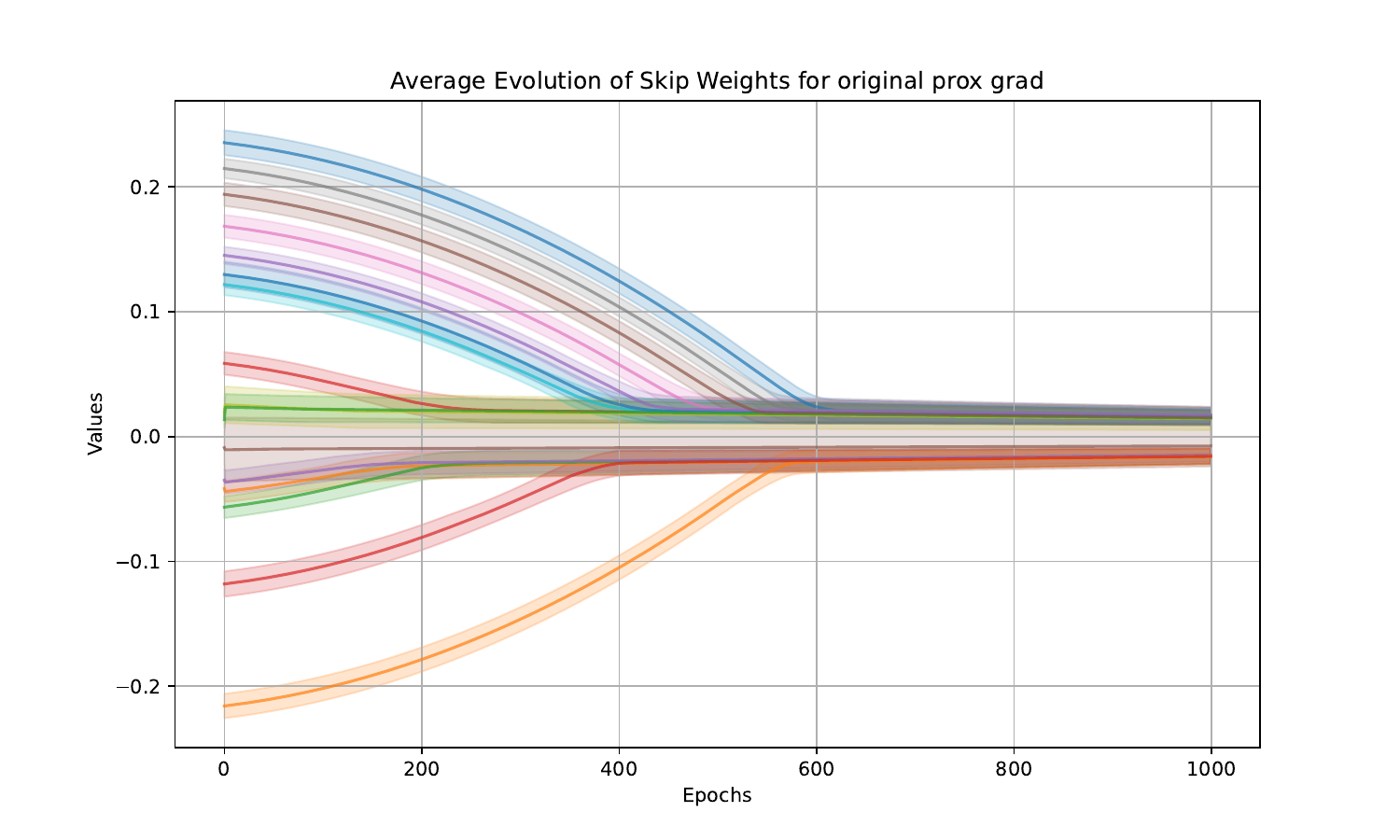}
        \label{fig:plot1}
    \end{subfigure}
    \hfill
    \begin{subfigure}[t]{0.48\textwidth}
        \centering
        \includegraphics[width=\linewidth]{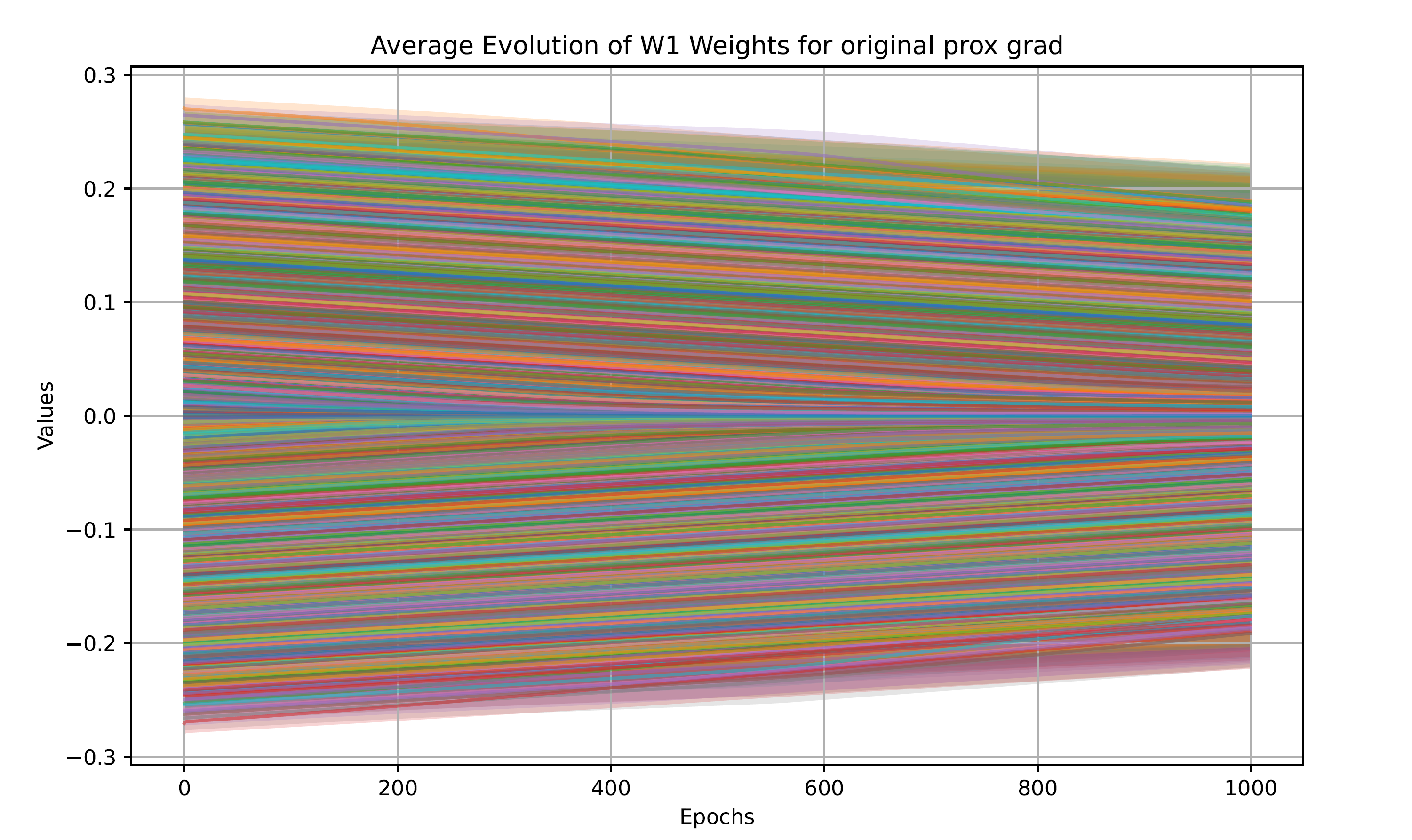}
        \label{fig:plot2a}
    \end{subfigure}
    \caption{Average evolution of skip weights (left) and W1 hidden layer weights (right) over original Hier-Prox updates}
    \label{fig:original_proxgrad_evolve}
\end{figure}

\begin{figure}[ht]
    \centering
    \begin{subfigure}[t]{0.48\textwidth}
        \centering
        \includegraphics[width=\linewidth]{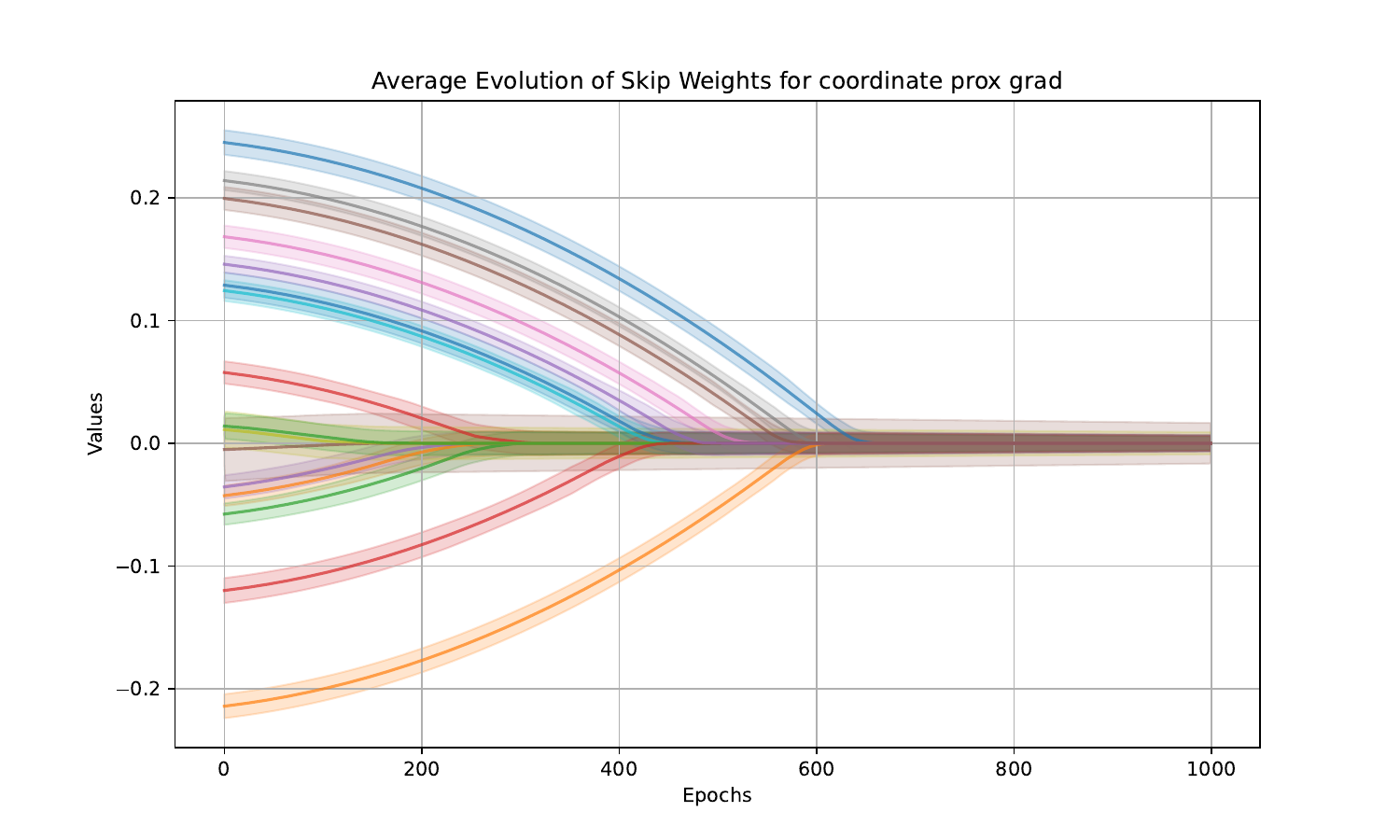}
        \label{fig:plot2b}
    \end{subfigure}
    \hfill
    \begin{subfigure}[t]{0.48\textwidth}
        \centering
        \includegraphics[width=\linewidth]{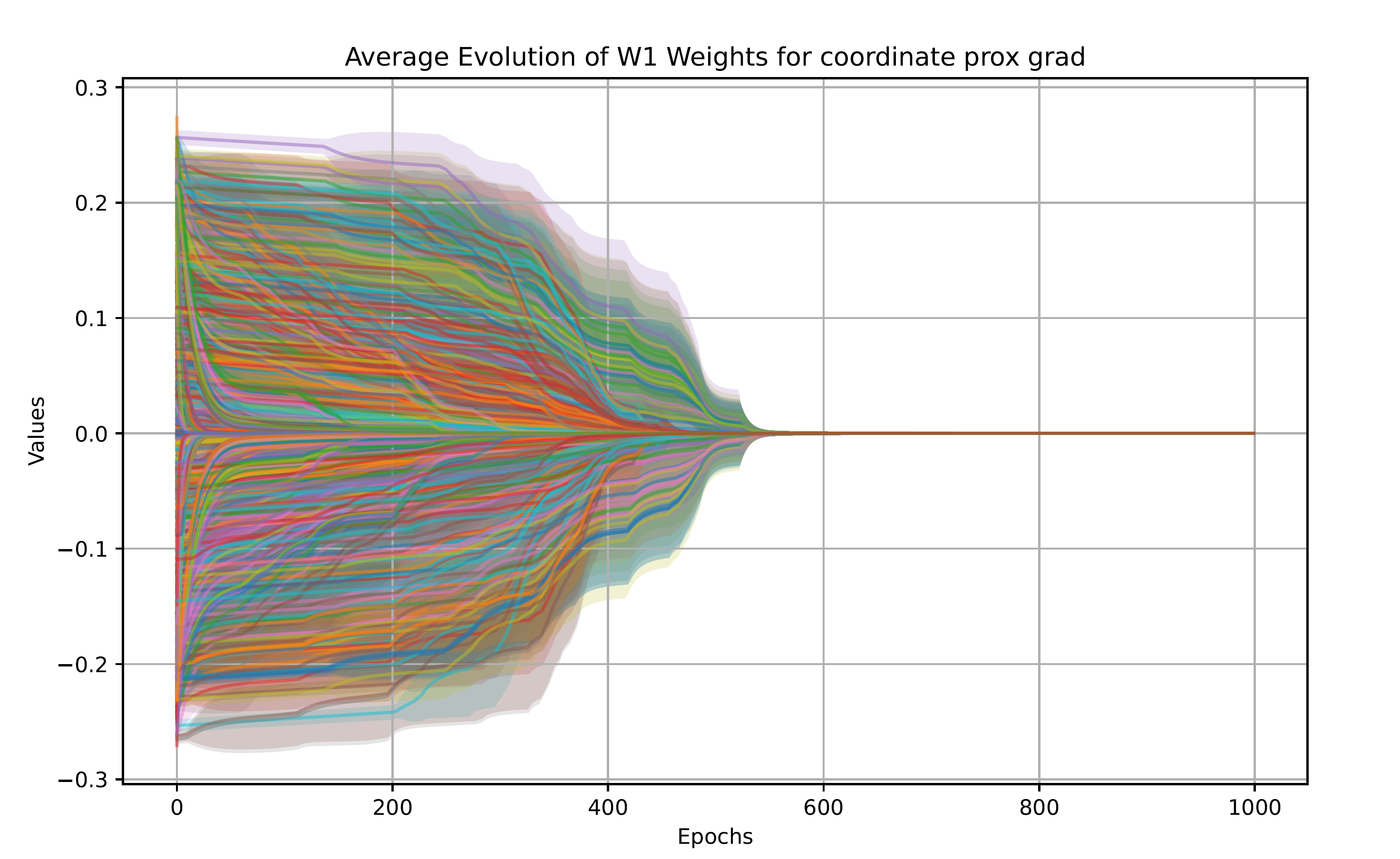}
        \label{fig:plot2c}
    \end{subfigure}
    \caption{Average evolution of skip weights (left) and W1 hidden layer weights (right) over Seq-Hier-Prox updates}
    \label{fig:coordinate_descent_proxgrad}
\end{figure}

\textbf{Success of Seq-Hier-Prox.} We next examine the evolution under LassoFlexNet's Seq-Hier-Prox operator (Figure~\ref{fig:coordinate_descent_proxgrad}). Because the sequential update mathematically decouples the parameters, the skip weights evolve smoothly without any initial jump discontinuities. The $\lambda$-training phase effectively leverages the feature importance learned during pre-training. As a result, the first hidden layer weights $W^{(1)}$ become cleanly sparse over time, successfully gating the neural network computations to ignore irrelevant features. Based on these dynamics, we utilize Seq-Hier-Prox-EMA for all experiments in Section~\ref{sec:result}.

\subsection{Experimental Settings for Section \ref{sec:result}}
\label{app:exp_settings}

In this section, we detail the experimental configurations used in Section~\ref{sec:result}. For LassoFlexNet, the pre-training stage is set to a maximum of 200 epochs, and the $\lambda$-training stage runs for a maximum of 100 epochs per $\lambda$ value. We employ an early-stopping patience of 20 epochs for both stages. During $\lambda$-training, if the validation loss does not improve for 20 epochs at the current $\lambda$, the training automatically advances to the next $\lambda$ value in the sequence.

For \textbf{feature selection benchmark} result in Section \ref{sec:feature_selection_result},  we directly take the result in \cite{feature_selection_benchmark} for all benchmark methods, and we use exact same setting for getting our LassoFlexNet result: We use the default train, val and test split used in \cite{feature_selection_benchmark}, which is [0.65, 0.15, 0.2]; we conduct hyperparameter tuning for LassoFlexNet for 100 iterations using Bayesian optimization engine Optuna \cite{akiba2019optuna}, and we use the Bayesian sampler's suggested hyperparameter from iterations to iteration. The objective for this hyperparameters tuning is again the model's validation metrics. The results presented in Table \ref{tab:fs_method_result} are model's test metrics with best hyperparameters, calculated across 10 random model initializations with different seeds. For FT-transformers and MLP, the hyperparameters and their respective ranges remain consistent with those reported in \cite{feature_selection_benchmark}. The specific hyperparameters for LassoFlexNet are detailed in Table \ref{tab:hypertuned}. 

For \textbf{TabZilla and TabRed benchmark} results in Section \ref{sec:performance_result}, we again use Bayesian optimization engine Optuna \cite{akiba2019optuna} with ASHA (Asynchronous Successive Halving Algorithm). For each algorithm for each dataset, we tune the model only using validation metrics from 1 fold split, and uses this split to tune objective validation loss. For both classification tasks and regression tasks, we use sklearn's log loss as the validation and test metrics for classification and MSE loss for regression. The box plot results in Figure \ref{fig:performance_bench_result} for each algorithm are best test metrics over 40 hyperparameters tuning trials, averaged across another 5 unseen random splits. For all benchmark methods, we use the same implementations as in \cite{performance_benchmark}.

\begin{table}[ht]
\centering
\begin{tabular}{@{}lcc@{}}
\toprule
\textbf{Hyperparameter} & \textbf{Range} & \textbf{Step/Scale} \\
\midrule
Mini batch sizes & 512--2048 & \text{scaled by} 2 \\
Number of Mixer blocks & 2--5 & 1 \\
MLP1\&2 hidden dimension & 64--512 & \text{scaled by} 2 \\
Feature embedding output dimension & 8--32 & 2 \\
Number of bins used in PLE & 4--32 & 2 \\
Tau (scalar multiplied with $NN_W$) & 0.01--2.0 & Uniform \\
ResNet dropout & 0.05--0.25 & Uniform \\
Main module dropout & 0.05--0.25 & Uniform \\
Number of ResNet layers & 2--5 & 1 \\
ResNet layers dimension & 8--32 & 1 \\
Learning rate (lr) & \num{1e-4}--\num{1e-3} & Loguniform \\
AdamW weight decay & \num{1e-6}--\num{1e-3} & Loguniform \\
\bottomrule
\end{tabular}
\caption{Hyperparameters of LassoFlexNet Tuned by Optuna and Their Ranges, in Feature Selection and TabZilla Benchmarks}
\label{tab:hypertuned}
\end{table}

\begin{table}[ht]
  \centering
  \begin{tabular}{@{}lcc@{}}
  \toprule
  \textbf{Hyperparameter} & \textbf{Range} & \textbf{Step/Scale} \\
  \midrule
  Mini batch sizes & 1024--4096 & \text{scaled by} 2 \\
  Number of Mixer blocks & 2--6 & 1 \\
  MLP1\&2 hidden dimension & 64--512 & \text{scaled by} 2 \\
  Feature embedding output dimension & 8--32 & 2 \\
  Number of bins used in PLE & 16--64 & 2 \\
  Tau (scalar multiplied with $NN_W$) & 0.01--2.0 & Uniform \\
  ResNet dropout & 0.05--0.5 & Uniform \\
  Main module dropout & 0.05--0.25 & Uniform \\
  Number of ResNet layers & 2--8 & 1 \\
  ResNet layers dimension & 8--32 & 1 \\
  Learning rate (lr) & \num{1e-5}--\num{1e-3} & Loguniform \\
  AdamW weight decay & \num{1e-6}--\num{1e-3} & Loguniform \\
  \bottomrule
  \end{tabular}
  \caption{Hyperparameters of LassoFlexNet Tuned by Optuna and Their Ranges, in TabRed Benchmark}
  \label{tab:hypertuned_tabred}
  \end{table}

\textbf{Computing environment:} We conduct our experiments in a Slurm managed environment for batched parallel jobs, and for each individual run or experiment we use a single GPU (A100 or V100), depending on dataset sizes.

\textbf{$\lambda$ Path:} In LassoFlexNet, the $\lambda$ path controls the regularization strength schedule. In this section, we describe our simple scheduling.

Concretely, we construct the $\lambda$ path using polynomial spacing. $p = 0.95$ is our experiment, the other hyperpameters are searched by Optuna.

\begin{align*}
\lambda_0 &:= \texttt{lambda\_start}, \quad
S := \texttt{lambda\_seq\_size}, \quad
p := \texttt{power}, \\
\lambda_E &:= 
\begin{cases}
\texttt{lambda\_end}, & \text{if provided} \\
\lambda_0 \cdot \texttt{lambda\_multiplier}, & \text{otherwise}
\end{cases} \\
b_i &:= \lambda_0^{1/p} + \frac{i}{S-1}\left(\lambda_E^{1/p} - \lambda_0^{1/p}\right), \quad i=0,\dots,S-1, \\
\lambda_i &:= b_i^{p}, \quad i=0,\dots,S-1.
\end{align*}

We generate the $\lambda$ sequence using the exponent-scaled schedule defined above. This design is motivated by empirical observations: under a simple linear schedule, we found that $\lambda$-training primarily reduces validation loss during the early, smaller $\lambda$ values, whereas larger $\lambda$ values later in the sequence rarely yield improvements. This indicates that a linear sequence is overly fine-grained at the beginning and too coarse at the end. By applying an exponent $p < 1$, we skew the distribution of $\lambda$ values to grow more rapidly at the beginning and more slowly at the end, thereby uniformizing the marginal impact of each step in the sequence.

In our hyperparameter search, the starting value $\lambda_0$ is sampled from the interval $[10^{-8}, 10^{-6}]$, and the $\texttt{lambda\_multiplier}$ is sampled from $[10^{3}, 10^{5}]$.

\section{Appendix: PLE Breaks Rotational Invariance}
\label{app:ple_breaks_invariance}
Piecewise Linear Encoding (PLE) for tabular data is intentionally \emph{axis-aligned}: it encodes each numerical feature with a coordinate-wise piecewise linear map defined by feature-wise breakpoints. This section explains, in two phases, why this design breaks rotational invariance in the sense of \cite{ng2004feature}, and why adding a powerful downstream model such as an neural nets trained by SGD, does not restore rotational invariance.

\noindent\textbf{Phase 1 (NTK approach).} In the wide/lazy regime, SGD-trained MLPs behave like kernel regression with an orthogonally invariant neural tangent kernel (NTK) \citep{jacot2018neural}. The induced kernel on raw inputs is the NTK applied to encoded vectors. If the encoder is not rotation-equivariant, the induced kernel is not rotation-invariant, which yields an explicit counterexample.

\noindent\textbf{Phase 2 (general obstruction).} Even beyond NTK, if the downstream learning rule is itself symmetric to orthogonal changes of coordinates in feature space, then the only way the \emph{end-to-end} raw-space algorithm can be rotationally invariant is if the encoder intertwines raw rotations with feature-space orthogonal maps. PLE generally does not.

\subsection{Setup and definitions}

\subsubsection{Rotational invariance}
Let \(O(n)=\{M\in\R^{n\times n}: M^\top M=I\}\) denote the orthogonal group \citep{ng2004feature}.
A labeled dataset of size \(m\) is \(S=\{(x^{(i)},y^{(i)})\}_{i=1}^m\) with \(x^{(i)}\in\R^n\).
For \(M\in O(n)\), define the rotated dataset
\[
M S := \{(M x^{(i)}, y^{(i)})\}_{i=1}^m.
\]

\begin{definition}[Rotational invariance of a learning algorithm]
Let \(L\) be a (possibly randomized) learning algorithm. We write \(L[S](x)\) for the prediction (a random variable if \(L\) is randomized) obtained by training \(L\) on \(S\) and evaluating at \(x\).
We say:
\begin{itemize}[leftmargin=2.2em]
\item \(L\) is \emph{rotationally invariant (deterministic)} if for all \(S\), \(M\in O(n)\), and \(x\),
\[
L[S](x) = L[MS](Mx).
\]
\item \(L\) is \emph{rotationally invariant (randomized)} if for all \(S\), \(M\in O(n)\), and \(x\), the random variables \(L[S](x)\) and \(L[MS](Mx)\) have the same distribution.
\end{itemize}
\end{definition}

\subsubsection{A concrete PLE map (coordinate-wise, piecewise linear)}
For a single scalar feature \(x\in[b_0,b_T]\), fix breakpoints \(b_0<b_1<\dots<b_T\) and segment lengths \(\Delta_t=b_t-b_{t-1}>0\).
Define the clipping operator \(\clip(u)=\min\{1,\max\{0,u\}\}\).

\begin{definition}[PLE for one feature]
The piecewise linear encoding (PLE) map \(\phi:\,[b_0,b_T]\to\R^T\) is
\[
\phi(x) := \big(\phi_1(x),\dots,\phi_T(x)\big),\qquad
\phi_t(x) := \clip\!\left(\frac{x-b_{t-1}}{\Delta_t}\right).
\]
\end{definition}

For an \(n\)-dimensional input \(x=(x_1,\dots,x_n)\), we use a coordinate-wise encoder (with possibly different \(T_j\) per coordinate):
\[
\Phi(x) := \big(\phi^{(1)}(x_1),\ \phi^{(2)}(x_2),\ \dots,\ \phi^{(n)}(x_n)\big)\in\R^D,\qquad D:=\sum_{j=1}^n T_j.
\]
In practical PLE for tabular data, the breakpoints are often \emph{data-dependent} (e.g., learned from axis-aligned trees), but not learned end to end with the neural nets. The counterexamples below hold for any fixed breakpoints, consistent with practice.

\subsection{Phase 1: Counterexample via the NTK/kernel-regression view of SGD-trained MLPs}

The main observation in this section is that we can explicitly construct a counterexample for a simple MLP. Because NTK theory approximately reduces the analysis of wide neural networks to kernel regression, demonstrating a symmetry break in the induced kernel is sufficient to prove that the network breaks rotational invariance.
\subsubsection{From wide MLP + SGD to kernel regression}
Consider an MLP \(f_\theta:\R^D\to\R\) trained on encoded data \(\{(z_i,y_i)\}_{i=1}^m\) with \(z_i\in\R^D\) by (stochastic) gradient descent on squared loss.
In the wide-network, lazy-training regime (weights move little from initialization), the training dynamics linearize around the initialization \(\theta_0\) and are governed by the neural tangent kernel (NTK),
\[
K_{\mathrm{NTK}}(z,z') := \ip{\nabla_\theta f_{\theta_0}(z)}{\nabla_\theta f_{\theta_0}(z')}.
\]
In this regime, the learned predictor is (approximately, and exactly in the infinite-width gradient-flow limit) the kernel regression / kernel ridge solution with kernel \(K_{\mathrm{NTK}}\) \citep{jacot2018neural}.

\medskip
\noindent\textbf{Key symmetry property.}
For standard isotropic random initialization of fully-connected networks, the limiting NTK is \emph{orthogonally invariant} in input space:
\begin{equation}
\label{eq:ntk-orth-inv}
K_{\mathrm{NTK}}(Qz,Qz') = K_{\mathrm{NTK}}(z,z')\qquad \forall\,Q\in O(D).
\end{equation}
Intuitively: there is no preferred coordinate system in feature space.

\subsubsection{Applying the NTK to PLE + MLP}
Fix a raw dataset \(S=\{(x_i,y_i)\}_{i=1}^m\subset \R^n\times\R\). Define encoded points
\[
z_i := \Phi(x_i)\in\R^D.
\]
The NTK view says that training the MLP on PLE features is (approximately) kernel regression with kernel \(K_{\mathrm{NTK}}\) applied to \(\{z_i\}\).

This induces a \emph{raw-space} kernel (possibly data-dependent, if \(\Phi\) is data-dependent):
\begin{equation}
\label{eq:induced-kernel}
K_S(x,x') := K_{\mathrm{NTK}}(\Phi(x),\Phi(x')).
\end{equation}

\paragraph{What rotational invariance would force.}
If the end-to-end pipeline \(x\mapsto \Phi(x)\mapsto\) (SGD-trained MLP) were rotationally invariant, then for every \(M\in O(n)\),
the induced kernels would satisfy
\begin{equation}
\label{eq:kernel-rot-condition}
K_S(x_i,x_j) = K_{MS}(Mx_i,Mx_j)\qquad \text{for all training pairs }(i,j),
\end{equation}
because kernel regression predictions are determined by the Gram matrix on the training set (and its interaction with test points).

We now show \eqref{eq:kernel-rot-condition} fails for PLE via a fully explicit computation.

\subsubsection{An explicit counterexample: Rotation leads to different inner products}
We work in raw dimension \(n=2\). Use the same PLE map on each coordinate with breakpoints \(-1,0,1\) (so \(T=2\), \(\Delta_1=\Delta_2=1\)).
For a scalar \(x\in[-1,1]\), the 2-dimensional PLE is
\[
\phi(x) = (\phi_1(x),\phi_2(x)),\qquad \phi_1(x)=\clip(x+1),\quad \phi_2(x)=\clip(x),
\]
and the full encoder is \(\Phi(x_1,x_2)=(\phi(x_1),\phi(x_2))\in\R^4\).

Let the two raw training points be
\[
x^{(1)} := (0.5,0),\qquad x^{(2)} := (0,0.5).
\]
Compute their encodings:
\[
\phi(0.5)=(1,0.5),\qquad \phi(0)=(1,0),
\]
so
\[
z_1=\Phi(x^{(1)})=(1,0.5,\,1,0),\qquad
z_2=\Phi(x^{(2)})=(1,0,\,1,0.5).
\]
Hence
\[
\ip{z_1}{z_1}=2.25,\quad \ip{z_2}{z_2}=2.25,\quad \ip{z_1}{z_2}=2.0.
\]

Now rotate raw space by \(45^\circ\):
\[
M := \frac{1}{\sqrt{2}}
\begin{pmatrix}
1 & -1\\
1 & 1
\end{pmatrix}\in O(2).
\]
Then
\[
Mx^{(1)}=\Big(\tfrac{1}{2\sqrt{2}},\tfrac{1}{2\sqrt{2}}\Big)\approx(0.3536,0.3536),\qquad
Mx^{(2)}=\Big(-\tfrac{1}{2\sqrt{2}},\tfrac{1}{2\sqrt{2}}\Big)\approx(-0.3536,0.3536).
\]
Encode these rotated points:
\[
\phi(0.3536)=(1,0.3536),\qquad \phi(-0.3536)=(0.6464,0),
\]
so
\[
z_1'=\Phi(Mx^{(1)})=(1,0.3536,\,1,0.3536),\qquad
z_2'=\Phi(Mx^{(2)})=(0.6464,0,\,1,0.3536).
\]
A direct dot-product computation gives
\[
\ip{z_1'}{z_1'}=2.25,\quad
\ip{z_2'}{z_2'}\approx 1.5429,\quad
\ip{z_1'}{z_2'}\approx 1.7714.
\]
In particular, the feature-space geometry has changed: for example \(\ip{z_1}{z_2}=2.0\) while \(\ip{z_1'}{z_2'}\approx 1.7714\). The rotation leads to changes on the NTK component. We summarize these as a lemma:

\begin{lemma}[PLE breaks Gram-matrix matching for dot-product kernels]
\label{lem:gram-break}
Let \(k:\R^D \times \R^D\to\R\) be any kernel that is a nonconstant function of the inner product \(\ip{z}{z'}\) and the norms \(\norm{z},\norm{z'}\).
For the PLE encoder $\Phi$, there exists a Gram matrix \(\big(k(z_i,z_j)\big)_{i,j\in\\{1,2\\}}\) that changes under the rotation \(M\).
\end{lemma}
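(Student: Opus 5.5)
The plan is to reuse the explicit two-point computation above and turn it into a statement about arbitrary kernels of the form $k(z,z')=g\bigl(\ip{z}{z'},\norm{z},\norm{z'}\bigr)$. The Gram matrix on $\{z_1,z_2\}$ is then determined by three triples: $(\norm{z_1}^2,\norm{z_1},\norm{z_1})$ and $(\norm{z_2}^2,\norm{z_2},\norm{z_2})$ on the diagonal, and $(\ip{z_1}{z_2},\norm{z_1},\norm{z_2})$ off the diagonal. If every entry of $\bigl(k(z_i,z_j)\bigr)$ were preserved under $M$, then $g$ would have to agree on the unrotated and rotated triples. So it suffices to exhibit triples that move under $M$ and then argue that a nonconstant $g$ separates them.

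The first step is to parametrize the example so that we are not relying on one numerical coincidence. Take $x^{(1)}=(r,0)$ and $x^{(2)}=(0,r)$ with $r\in(0,1]$, with the same breakpoints $-1,0,1$ and the same $45^\circ$ rotation $M$. The same clipping computation as above gives:
\begin{itemize}
\item before rotation, $\norm{z_1}^2=\norm{z_2}^2=2+r^2$ and $\ip{z_1}{z_2}=2$;
\item after rotation, $\norm{z_1'}^2=2+r^2$, while $\norm{z_2'}^2=(1-r/\sqrt2)^2+1+r^2/2=2-\sqrt2\,r+r^2$, and $\ip{z_1'}{z_2'}=(1-r/\sqrt2)+1+r^2/2$.
\end{itemize}
Hence the $(2,2)$ diagonal entry changes from $h(2+r^2)$ to $h(2-\sqrt2 r+r^2)$, where $h(s):=g(s,\sqrt s,\sqrt s)$. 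The argument of $h$ is shifted by exactly $\sqrt2\,r\neq0$. The off-diagonal inner product also changes, from $2$ to $2-r/\sqrt2+r^2/2$. At $r=0.5$ this recovers the numbers $2.25$, $1.5429$, $2.0$ and $1.7714$ computed above.

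The second step converts ``nonconstant'' into an actual change of some entry. Suppose, for contradiction, that all three entries are preserved for every $r\in(0,1]$.
\begin{itemize}
\item The diagonal identity gives $h(2+r^2)=h(2-\sqrt2 r+r^2)$ for all $r$. As $r$ ranges over $(0,1]$, this chains together values of $h$ across the interval $[2-\sqrt2+1,\,3]$ and forces strong rigidity of $h$.
\item The off-diagonal identity gives $g\bigl(2,\sqrt{2+r^2},\sqrt{2+r^2}\bigr)=g\bigl(2-r/\sqrt2+r^2/2,\sqrt{2+r^2},\sqrt{2-\sqrt2 r+r^2}\bigr)$, which constrains $g$ jointly in its inner-product and norm arguments.
\end{itemize}
To get more freedom, I would also vary the rotation angle $\theta$ instead of fixing $45^\circ$, and the scale of the points. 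This yields a two-parameter family of triples with open range. I would then show that, if every such Gram entry were invariant, $g$ would be constant on the connected region swept out, contradicting nonconstancy there.

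The main obstacle is precisely this step: a bare ``nonconstant'' $g$ can in principle coincide at cleverly placed points, and it could vary only outside the region PLE features reach. My first attempt is to make the statement's hypothesis precise as ``nonconstant on the image of $\Phi$'' (the relevant domain in any case) and then run the sweeping argument above. As a concrete, fully rigorous fallback, I would verify the lemma directly for the kernels that matter in Phase 1:
\begin{itemize}
\item the arc-cosine/NTK kernel and linear or polynomial dot-product kernels are strictly increasing in $\ip{z}{z'}$ at fixed norms, and in the norm along the diagonal;
\item RBF kernels depend strictly monotonically on $\norm{z-z'}^2=\norm{z}^2+\norm{z'}^2-2\ip{z}{z'}$.
\end{itemize}
For these kernels, the explicit change of $\norm{z_2}^2$ from $2.25$ to about $1.5429$ at $r=0.5$ (or of $\norm{z_1-z_2}^2$) already gives a Gram entry that differs. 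This contradicts \eqref{eq:kernel-rot-condition} and completes the counterexample.
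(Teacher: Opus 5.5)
Your computation is the paper's, parametrized in $r$, and all the values check out. The paper offers nothing beyond it: the lemma is asserted right after the $r=0.5$ numbers, silently reading ``$\ip{z_i}{z_j}$ and $\norm{z_i}$ change'' as ``$k(z_i,z_j)$ changes''. You are right that this is the crux, and right that the statement as written is false. Clipping puts $\Phi(\R^2)$ inside $[0,1]^4$, so $\norm{z}\le 2$ for every encoded vector. The positive semidefinite kernel $k(z,z')=(\norm{z}-2)_+(\norm{z'}-2)_+\,(1+\ip{z}{z'})$ is then a nonconstant function of the inner product and norms whose PLE Gram matrices vanish identically, before and after any rotation. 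So this gap is shared with the paper, not introduced by you.

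Your bridge for general $g$ does not close it, though.

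First, the fixed-$45^\circ$ family $x^{(1)}=(r,0)$, $x^{(2)}=(0,r)$ is compatible with nonconstant $g$. The map $r\mapsto 2+r^2$ sends $(0,1]$ injectively onto $(2,3]$, while $r\mapsto 2-\sqrt2\,r+r^2$ sends it onto $[1.5,2)$ (minimum $1.5$ at $r=1/\sqrt2$). So any $h$ on $[1.5,2)$, extended by \emph{defining} $h(2+r^2):=h(2-\sqrt2\,r+r^2)$, satisfies every diagonal identity. The off-diagonal identity is equally easy to meet with nonconstant $g$: the unrotated and rotated curves of triples are disjoint from each other and from the diagonal triples (the rotated inner product $2-r/\sqrt2+r^2/2$ is below $2$), and each curve is injectively parametrized by $r$.

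Second, the angle--scale sweep rescues only the diagonal. For $x$ on the circle of radius $\rho\le 1$, $\norm{\Phi(x)}^2$ takes every value in $[2(1-\rho/\sqrt2)^2,\,2+\rho^2]$. All these intervals contain $2$, so invariance under all rotations forces $h$ constant on $[3-2\sqrt2,\,3]$. The off-diagonal entry is different. A piecewise-smooth two-parameter family cannot sweep an open set of triples in $\R^3$. Moreover, the unrotated triple depends only on $r$, so invariance only says $g$ is constant along each curve $\theta\mapsto T(r,\theta)$ of rotated triples, and nothing in your sketch glues different $r$ together. You would need to move both points independently, and, to keep the lemma's fixed $M$, vary the data rather than the angle. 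You would then have to prove that the generated equivalence classes exhaust the image. That is a genuine transversality/connectedness argument, which the proposal only announces.

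What you do prove rigorously is the fallback, and it is correct. For kernels with $k(z,z)$ strictly increasing in $\norm{z}$ (linear, polynomial with nonnegative offset, ReLU arc-cosine/NTK), the $(2,2)$ entry moves because $\norm{z_2}^2$ drops from $2.25$ to $\approx 1.5429$. For RBF kernels the diagonal is constant, but the off-diagonal entry moves because $\norm{z_1-z_2}^2$ drops from $0.5$ to $0.25$. This covers the kernels the paper actually uses downstream: the linear kernel in Example~\ref{app:ntk_regression_example} and the standard NTK in Proposition~\ref{thm:phase1}. The sound outcome is therefore to restate the lemma under such a strict-monotonicity hypothesis rather than bare nonconstancy.
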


\subsubsection{Turning kernel mismatch into a prediction mismatch}
A kernel regressor (e.g., kernel ridge regression) trained on a dataset is determined by the Gram matrix on the training set.
Therefore, once the Gram matrix differs between \(S\) and \(MS\), we can force a change in predictions by an appropriate choice of labels.

To keep the algebra transparent, we illustrate this using kernel ridge regression with the \emph{linear} kernel \(k(z,z')=\ip{z}{z'}\) and a ridge penalty of \(\lambda=0.1\). Let \(y=(1,-1)\) be the label vector for \((x^{(1)},x^{(2)})\). Substituting any fixed, non-degenerate dot-product kernel would yield the same conclusion.

\begin{example}[Explicit prediction mismatch for kernel ridge regression]
\label{app:ntk_regression_example}
With the setup above, kernel ridge regression trained on \(\{(z_1,y_1),(z_2,y_2)\}\) and evaluated at \(x^{(1)}\) yields a different prediction than kernel ridge regression trained on the rotated dataset and evaluated at \(Mx^{(1)}\).
\end{example}

\begin{proof}
Kernel ridge regression has the closed form
\[
f(x)=k_x^\top (K+\lambda I)^{-1}y,\qquad (k_x)_j := k(z, z_j),\ z:=\Phi(x),
\]
where \(K_{ij}=k(z_i,z_j)\).
For the unrotated dataset,
\[
K=
\begin{pmatrix}
2.25 & 2.0\\
2.0 & 2.25
\end{pmatrix},\qquad
k_{x^{(1)}} = (2.25,2.0).
\]
A direct computation gives \(f(x^{(1)})\approx 0.7143\).

For the rotated dataset,
\[
K'=
\begin{pmatrix}
2.25 & 1.7714\\
1.7714 & 1.5429
\end{pmatrix},\qquad
k_{Mx^{(1)}} = (2.25,1.7714),
\]
and solving the same linear system gives \(f'(Mx^{(1)}) \approx 0.5276\).
Therefore \(f(x^{(1)}) \neq f'(Mx^{(1)})\), so the learning rule is not rotationally invariant.
\end{proof}

\subsubsection{The NTK conclusion}
Example~\ref{app:ntk_regression_example} used the linear kernel only to keep arithmetic transparent.
Lemma~\ref{lem:gram-break} shows that for a broad class of dot-product kernels, including standard fully-connected NTKs, the Gram matrix changes under rotations once we insert a PLE map. We summarize the counter-example discussion into a proposition.

\begin{proposition}[Phase 1 conclusion: not rotationally invariant in the NTK/lazy regime]
\label{thm:phase1}
Consider the end-to-end pipeline: raw input \(x\in\R^n\) is encoded by a coordinate-wise PLE map \(\Phi\), then a fully-connected MLP is trained on \(\Phi(x)\) by SGD in the wide/lazy regime.
Assume the limiting NTK satisfies the orthogonal invariance property \eqref{eq:ntk-orth-inv} and is nonconstant in \(\ip{z}{z'}\) and norms.
Then the resulting learning algorithm is \emph{not} rotationally invariant.
\end{proposition}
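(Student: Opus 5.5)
Since rotational invariance is a universally quantified statement, it suffices to exhibit one raw dataset \(S\), one rotation \(M\in O(n)\), and one test point \(x\) for which the end-to-end predictions differ. The plan is to reduce the SGD-trained wide MLP on \(\Phi(x)\) to kernel regression with \(K_{\mathrm{NTK}}\), as in the NTK discussion above. Then we transport the Gram-matrix mismatch of Lemma~\ref{lem:gram-break} into a prediction mismatch, following the template of Example~\ref{app:ntk_regression_example}.

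\textbf{Step 1: the predictor as a function of the NTK values.} We work in the infinite-width gradient-flow limit. There the trained network from zero initial output (or after symmetrizing the initialization, which is orthogonally invariant in distribution) is the kernel interpolant or ridge solution
\[
f_S(x)=k_S(x)^\top\bigl(K_S+\lambda I\bigr)^{-1}y, \qquad (K_S)_{ij}=K_{\mathrm{NTK}}(z_i,z_j),
\]
with \(z_i=\Phi(x_i)\). Rotational invariance would require \(f_S(x^{(1)})=f_{MS}(Mx^{(1)})\) for every label vector \(y\). By \eqref{eq:ntk-orth-inv}, \(K_{\mathrm{NTK}}(z,z')=\kappa(\norm{z}^2,\norm{z'}^2,\ip{z}{z'})\) for some function \(\kappa\). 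Hence both the Gram matrix and the test-kernel vector depend only on the inner products of the encoded points computed above:
\[
(2.25,2.25,2.0) \text{ versus } (2.25,1.5429,1.7714).
\]

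\textbf{Step 2: invoke Lemma~\ref{lem:gram-break}.} Take the two-point configuration \(x^{(1)},x^{(2)}\) and the \(45^\circ\) rotation \(M\). The lemma gives \(K_S\neq K_{MS}\) as \(2\times 2\) matrices, provided \(\kappa\) separates these triples.

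\textbf{Step 3: choose labels that expose the mismatch.} Evaluate at the training point \(x^{(1)}\), so the test vector is the first row of the Gram matrix. Then
\[
f_S(x^{(1)})=e_1^\top K_S(K_S+\lambda I)^{-1}y .
\]
The map \(A\mapsto A(A+\lambda I)^{-1}\) is injective on positive semidefinite matrices for \(\lambda>0\), since it equals \(I-\lambda(A+\lambda I)^{-1}\). So \(K_S\neq K_{MS}\) yields distinct hat matrices. If the first rows coincide, evaluate at \(x^{(2)}\) instead, or at any test point. Because a nonzero row vector has a label vector \(y\) with nonzero pairing, we can pick \(y\) to separate the two predictions; \(y=(1,-1)\) is generic. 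In the interpolation case \(\lambda=0\), we instead use a test point off the training set.

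\textbf{Main obstacle.} The delicate step is Step 2 for a general NTK. The statement only asks that \(\kappa\) be nonconstant, which does not immediately guarantee that \(\kappa\) distinguishes the specific triples above. I would handle this in two ways. First, note that the ReLU NTK is strictly increasing in \(\ip{z}{z'}\) at fixed norms and homogeneous in the norms. Second, and more robustly, use the freedom to scale the configuration \(x^{(j)}\mapsto s x^{(j)}\) for small \(s\), giving a one-parameter family of inner-product triples. If \(\kappa\) agreed on all of them, it would be constant along a curve that moves both norms and inner products; combined with analyticity of standard NTKs, this would contradict nonconstancy. I expect this analytic-continuation argument to need care. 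A reasonable fallback is to strengthen the hypothesis to \say{\(\kappa\) strictly monotone in the inner product}, which covers standard architectures.
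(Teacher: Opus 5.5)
This is essentially the paper's proof: NTK reduction to kernel regression, a Gram-matrix mismatch from Lemma~\ref{lem:gram-break}, then a choice of labels. Your injectivity argument for $A\mapsto A(A+\lambda I)^{-1}$ makes the last step more rigorous than the paper's numerical linear-kernel example. In the ridgeless limit your off-training-point claim is unargued; you can instead use finite training time, where training-point predictions are $e_i^\top(I-e^{-\eta tK})y$ and $A\mapsto e^{-\eta tA}$ is likewise injective on symmetric matrices.

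Your Step~2 caveat is legitimate and applies equally to the paper, whose proof just cites the unproved lemma. If you close it yourself, monotonicity in $\ip{z}{z'}$ alone will not compare these triples, because the norms change too ($\norm{z_2'}^2\approx 1.5429$ vs.\ $\norm{z_2}^2=2.25$). What does separate them is homogeneity of the kernel, via the $(2,2)$ diagonal entry, or, for a stationary kernel with strictly monotone profile, the off-diagonal values $\norm{z_1-z_2}^2=0.5$ vs.\ $\norm{z_1'-z_2'}^2=0.25$.
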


\begin{proof}
By the NTK theory, the learned predictor is kernel regression with kernel \(K_{\mathrm{NTK}}\) on encoded points, i.e., the induced kernel \eqref{eq:induced-kernel} on raw inputs.
Lemma~\ref{lem:gram-break} provides a dataset \(S\) and rotation \(M\) for which the Gram matrix on the training set differs between \(S\) and \(MS\).
Since kernel regression predictions depend on the Gram matrix, there exists a label vector for which predictions differ at corresponding test points (Example \ref{app:ntk_regression_example}).
\end{proof}

\begin{remark}[Finite width: why the message typically persists]
Theorem~\ref{thm:phase1} is a statement about the lazy/wide regime where SGD is well-approximated by kernel regression.
At finite width, SGD no longer exactly equals kernel regression. However, the \emph{source} of symmetry breaking remains: \(\Phi\) is axis-aligned and generally not rotation-equivariant.
Thus, unless training uses additional mechanisms that explicitly enforce rotational invariance in raw space, one should not expect invariance to hold in general.
\end{remark}

\subsection{Phase 2: A general obstruction --- invariance cannot be recovered without encoder equivariance}

Phase 1 used the NTK story (SGD behaves like a kernel method) to make the failure concrete.
We now present a more general, kernel-free viewpoint that applies to non-lazy regimes.

\subsubsection{Two symmetry notions (raw space vs feature space)}
Let \(S=\{(x_i,y_i)\}_{i=1}^m\) be a raw dataset.
Let \(\Phi_S:\R^n\to\R^D\) be a (possibly data-dependent) encoder and define the encoded dataset
\[
Z_S := \{(z_i,y_i)\}_{i=1}^m,\qquad z_i := \Phi_S(x_i).
\]
A downstream learning algorithm \(A\) maps an encoded dataset \(Z\) to a predictor \(A(Z):\R^D\to\mathcal Y\).
The end-to-end learner is
\[
L[S](x) := A(Z_S)\big(\Phi_S(x)\big).
\]

\begin{definition}[Orthogonal invariance of the downstream learner]
\label{def:orth-inv-A}
For \(Q\in O(D)\), define \(QZ:=\{(Qz_i,y_i)\}_{i=1}^m\).
We say \(A\) is \emph{orthogonally invariant} if for all datasets \(Z\), all \(Q\in O(D)\), and all feature vectors \(z\in\R^D\),
\[
A(QZ)(Qz)=A(Z)(z).
\]
\end{definition}

This captures the idea that the downstream training has no preferred coordinate system in feature space.
(For example, standard fully-connected networks with isotropic initialization and SGD dynamics satisfy this kind of invariance in an idealized sense.)

\subsubsection{A sufficient condition for end-to-end rotational invariance}
The next definition isolates the \emph{only} way a feature-space orthogonal symmetry can translate back to the original space rotational symmetry.

\begin{definition}[Encoder equivariance up to a feature-space orthogonal map]
\label{def:encoder-equiv}
The encoder family \(\{\Phi_S\}\) is \emph{rotation-equivariant up to feature orthogonals} if for every raw dataset \(S\) and every \(M\in O(n)\), there exists \(Q_{S,M}\in O(D)\) such that for all raw inputs \(x\),
\[
\Phi_{MS}(Mx) = Q_{S,M}\,\Phi_S(x).
\]
(Here \(Q_{S,M}\) may depend on \(S\) and \(M\), but must be a single global orthogonal map, independent of \(x\).)
\end{definition}

\begin{proposition}[Sufficiency: encoder equivariance + feature-space invariance implies raw rotational invariance]
\label{prop:sufficiency}
If \(A\) is orthogonally invariant (Definition~\ref{def:orth-inv-A}) and \(\Phi\) is rotation-equivariant up to a feature-space orthogonal (Definition~\ref{def:encoder-equiv}), then the composite learner \(L[S](x)=A(Z_S)(\Phi_S(x))\) is rotationally invariant.
\end{proposition}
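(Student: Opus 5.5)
The argument is a direct unfolding of the definitions, chaining Definition~\ref{def:encoder-equiv} with Definition~\ref{def:orth-inv-A}. I first treat the deterministic case, where $A$ is a deterministic map from encoded datasets to predictors. Fix a raw dataset $S=\{(x_i,y_i)\}_{i=1}^m$, a rotation $M\in O(n)$, and a test point $x\in\R^n$. Let $Q:=Q_{S,M}\in O(D)$ be the orthogonal map supplied by Definition~\ref{def:encoder-equiv}. The target identity is $L[MS](Mx)=L[S](x)$.

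The first step is to identify the encoded version of the rotated dataset. By definition, $Z_{MS}=\{(\Phi_{MS}(Mx_i),y_i)\}_{i=1}^m$. Applying the equivariance identity at each training input $x_i$ gives $\Phi_{MS}(Mx_i)=Q\Phi_S(x_i)=Qz_i$, so $Z_{MS}=QZ_S$ as labeled datasets. Here it matters that $Q$ is one global map, not depending on $x$. That is exactly what allows the same $Q$ to act on all training points at once and produce the rotated dataset $QZ_S$ in the sense of Definition~\ref{def:orth-inv-A}.

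The second step handles the test point. Applying the same identity at $x$ gives $\Phi_{MS}(Mx)=Q\Phi_S(x)$. Therefore
\[
L[MS](Mx)=A(Z_{MS})\bigl(\Phi_{MS}(Mx)\bigr)=A(QZ_S)\bigl(Q\Phi_S(x)\bigr)=A(Z_S)\bigl(\Phi_S(x)\bigr)=L[S](x),
\]
where the third equality is orthogonal invariance of $A$ applied with $Z=Z_S$ and $z=\Phi_S(x)$. Since $S$, $M$ and $x$ were arbitrary, this is the deterministic rotational invariance asked for in the definition.

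For a randomized $A$, I would interpret Definition~\ref{def:orth-inv-A} as equality in distribution of $A(QZ)(Qz)$ and $A(Z)(z)$. The chain above then goes through unchanged, with the third equality read distributionally. The first two equalities are deterministic identities between the inputs fed to $A$, so they substitute directly.

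I do not expect a real obstacle; the only points needing care are bookkeeping. First, the equivariance identity must be invoked for the dataset-dependent encoder $\Phi_{MS}$, which is fitted on the rotated data, not for $\Phi_S$. Second, in the randomized case the randomness of $A$ must not depend on the dataset in a way that conflicts with the distributional reading; I would state this as part of how the randomized version of Definition~\ref{def:orth-inv-A} is understood.
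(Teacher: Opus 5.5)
Your proof is correct and is essentially the paper's argument: you take the single global \(Q=Q_{S,M}\) to get \(Z_{MS}=QZ_S\) and \(\Phi_{MS}(Mx)=Q\Phi_S(x)\), then apply orthogonal invariance of \(A\) in the same four-term chain of equalities. Your remark on the randomized case, where the third equality is read in distribution, is a reasonable addition that the paper's proof does not address.
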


\begin{proof}
Fix \(S\), \(M\), \(x\). Let \(Q:=Q_{S,M}\) be from Definition~\ref{def:encoder-equiv}.
For each training point \(x_i\),
\[
\Phi_{MS}(Mx_i)=Q\Phi_S(x_i),
\]
so \(Z_{MS}=QZ_S\).
Also \(\Phi_{MS}(Mx)=Q\Phi_S(x)\).
Therefore
\[
L[MS](Mx)=A(Z_{MS})(\Phi_{MS}(Mx))=A(QZ_S)(Q\Phi_S(x))=A(Z_S)(\Phi_S(x))=L[S](x),
\]
where we used orthogonal invariance of \(A\) in the third equality.
\end{proof}

\subsubsection{The obstruction: without encoder equivariance, invariance has no reason to hold}
Proposition~\ref{prop:sufficiency} gives a necessary condition for rotational invariance: the encoder must provide a global orthogonal alignment \(Q_{S,M}\) between the two encoded problems.

When the encoder fails this, a symmetric downstream learner has no canonical way to ``undo'' the rotation, because the rotated problem is not simply an orthogonal reparameterization of the original in feature space.

To state this as a rigorous impossibility, we need one mild nondegeneracy: the learner should actually react to changes in the geometry of the input cloud. This extra condition will imply the encoder equivariance is nearly necessary.

\begin{definition}[Non-degeneracy: separation property]
\label{def:separating}
We say a downstream learning algorithm \(A\) is \emph{separating} if the following holds:
whenever two encoded datasets \(Z=\{(z_i,y_i)\}\) and \(Z'=\{(z_i',y_i)\}\) (same labels, different feature vectors) are not related by any single orthogonal map, i.e.,
\[
\not\exists\,Q\in O(D)\ \text{s.t.}\ z_i'=Qz_i\ \forall i,
\]
then there exists some query \(z\in\R^D\) for which the predictions differ:
\[
A(Z)(z)\neq A(Z')(z).
\]
\end{definition}

This property is satisfied by many standard algorithms (e.g., kernel ridge regression with a nondegenerate kernel, linear ridge regression, and wide-MLP training in the NTK regime).

\begin{theorem}[Obstruction: symmetry cannot restore invariance without encoder equivariance]
\label{thm:obstruction}
Assume learning algorithm \(A\) is orthogonally invariant and separating.
If the encoder family \(\{\Phi_S\}\) is \emph{not} rotation-equivariant up to a feature-space orthogonal map (Definition~\ref{def:encoder-equiv}), then the composite learner \(L = A \circ \Phi \) is not rotationally invariant.
\end{theorem}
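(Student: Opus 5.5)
We argue by contrapositive: assuming the composite learner $L[S](x)=A(Z_S)(\Phi_S(x))$ is rotationally invariant, we would show that $\{\Phi_S\}$ must be rotation-equivariant up to feature orthogonals (Definition~\ref{def:encoder-equiv}). Proposition~\ref{prop:sufficiency} gives the forward direction, and this argument is meant to be its partial converse.

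\textbf{Step 1 (locate a witness).} Since the encoder is not equivariant, fix $S$ and $M\in O(n)$ such that no single $Q\in O(D)$ satisfies $\Phi_{MS}(Mx)=Q\Phi_S(x)$ for all $x$. Two cases arise.

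\emph{Case (a).} Already on the training inputs, there is no $Q$ with $\Phi_{MS}(Mx_i)=Q\Phi_S(x_i)$ for all $i$. Then $Z_{MS}$ and $Z_S$ carry the same labels and are not related by any orthogonal map.

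\emph{Case (b).} Some $Q$ aligns the training encodings, but no such $Q$ extends to every raw input. Then there is a test point $x^\ast$ with $\Phi_{MS}(Mx^\ast)\neq Q\Phi_S(x^\ast)$ for every aligning $Q$. In this case we enlarge the dataset to $S^\ast=S\cup\{(x^\ast,y^\ast)\}$ so that the failure sits on training points, which reduces to case (a). This reduction is clean when the encoder is data-independent, or more generally when the breakpoints are stable under adding one point. This matches the fixed-breakpoint setting the paper adopts for PLE, and we would state that assumption explicitly.

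\textbf{Step 2 (use separation and invariance).} In case (a), orthogonal invariance of $A$ gives, for every $Q\in O(D)$,
\[
A(Z_{MS})(Q\,\Phi_S(x)) \text{ compared with } A(QZ_S)(Q\,\Phi_S(x))=A(Z_S)(\Phi_S(x)).
\]
Rotational invariance of $L$ gives $A(Z_{MS})(\Phi_{MS}(Mx))=A(Z_S)(\Phi_S(x))$ for all $x$. The aim is to contradict this. Separation (Definition~\ref{def:separating}) applied to $Z_S$ and $Z_{MS}$ yields a query $z$ with $A(Z_S)(z)\neq A(Z_{MS})(z)$.

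\textbf{Main obstacle.} The query $z$ supplied by separation is arbitrary in $\R^D$. The invariance identity, however, only relates predictions at points of the form $\Phi_S(x)$ and $\Phi_{MS}(Mx)$, and it compares them at \emph{different} feature vectors rather than at the same $z$. We would try two repairs, in this order.

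First, strengthen separation so that the distinguishing query can be taken inside the encoded image. This means requiring a query of the form $z=\Phi_{MS}(Mx)$ with $A(Z_{MS})(z)\neq A(QZ_S)(z)$ for all $Q\in O(D)$. Using $A(QZ_S)(Q\Phi_S(x))=A(Z_S)(\Phi_S(x))$, this would directly contradict rotational invariance. This is exactly the property verified concretely for kernel ridge regression in Example~\ref{app:ntk_regression_example}, where the query is a training point $\Phi(x^{(1)})$.

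Second, as a fallback, exploit that $A$ is insensitive to labels only through the geometry. We would then choose labels, which separation allows for any common label vector, so that the predictions at training points themselves differ. For interpolating or ridge-type learners this works because the Gram matrices differ: non-relatedness by an orthogonal map is equivalent to $\big(\ip{z_i}{z_j}\big)\neq\big(\ip{z'_i}{z'_j}\big)$.

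We expect the precise formulation of this strengthened separation hypothesis to be the delicate step. The remaining steps are bookkeeping of the kind already done in Proposition~\ref{prop:sufficiency}.
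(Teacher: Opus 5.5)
Your main obstacle is real, and it cannot be overcome. Under Definitions~\ref{def:orth-inv-A}, \ref{def:encoder-equiv} and \ref{def:separating} as written, the statement is false. So your proposal, which leaves the decisive step as an unformulated extra hypothesis, does not prove it, and no proof can.

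The paper's proof goes through only by doing the two things you declined to do. First, it replaces the failure of Definition~\ref{def:encoder-equiv}, which quantifies over all raw $x$, by failure on the training inputs; this is your case (b). Second, it turns the single separating query $z\in\R^D$ into the pair $z=\Phi_S(x)$, $z'=\Phi_{MS}(Mx)$. That compares $A(Z_S)$ and $A(Z_{MS})$ at different points in the encoded image, which separation does not provide.

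Here is a counterexample. Take $n=D=2$, the fixed encoder $\Phi(x)=(\cos x_1,\sin x_1)$, and
\[
A(Z)(z)=\bigl(\norm{z}^2-1\bigr)\sum_{i=1}^m 2^i e^{-\norm{z-z_i}^2}.
\]
\begin{itemize}
\item $A$ is orthogonally invariant, since it depends only on $\norm{z}$ and $\norm{z-z_i}$.
\item $A$ is separating. If $A(Z)=A(Z')$ everywhere, the Gaussian sums agree off the unit circle, hence everywhere by continuity. Gaussians with distinct centres are linearly independent, and the binary weights then force $z_i=z'_i$ for every $i$.
\item The encoder is not equivariant. For the quarter turn $Mx=(-x_2,x_1)$, the inputs $(0,0)$ and $(0,\pi)$ both encode to $(1,0)$, while their rotations encode to $(1,0)$ and $(-1,0)$. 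So even this two-point training set is already in your case (a).
\item Yet $\norm{\Phi(x)}=1$ for all $x$, so $L[S](x)=0$ for every $S$ and $x$, and $L$ is rotationally invariant.
\end{itemize}

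So a hypothesis must be added, and strengthening separation is the right instinct, but your first repair does not work as formulated. The identity $A(QZ_S)(Q\Phi_S(x))=A(Z_S)(\Phi_S(x))$ can be used at $z=\Phi_{MS}(Mx)$ only if some $Q$ maps $\Phi_S(x)$ to $z$. That requires $\norm{\Phi_S(x)}=\norm{\Phi_{MS}(Mx)}$, which PLE typically violates: in the paper's own example $\norm{z_2}^2=2.25$ but $\norm{z_2'}^2\approx 1.54$. At such $x$ the clause `for all $Q$' says nothing about $L[S](x)$.

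The clean fix is your fallback, promoted to an assumption on $A$: whenever $Z,Z'$ are not orthogonally related, there exist labels $y$ and an index $i$ with $A(Z)(z_i)\neq A(Z')(z'_i)$.
\begin{itemize}
\item Linear-kernel ridge regression satisfies this. Non-relatedness is equivalent to $K\neq K'$, and $K\mapsto K(K+\lambda I)^{-1}=I-\lambda(K+\lambda I)^{-1}$ is injective.
\item For a label-independent encoder, this hypothesis gives $L[S](x_i)\neq L[MS](Mx_i)$ directly, without even using orthogonal invariance.
\item Quantifying over labels is needed anyway. With $y=0$, ridge regression returns the zero predictor for every $Z$, so it is not separating in the paper's sense, contrary to the paper's remark.
\end{itemize}

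On case (b), a single $x^\ast$ defeating every aligning $Q$ need not exist. For instance, it fails when the training codes are $0$ and $\Phi$ preserves norms under $M$. For a fixed encoder you do not need it, because $\Phi(Mx)=Q\Phi(x)$ for all $x$ and some $Q\in O(D)$ holds if and only if $\ip{\Phi(Mx)}{\Phi(Mx')}=\ip{\Phi(x)}{\Phi(x')}$ for all $x,x'$. Non-equivariance therefore produces a two-point training set already in case (a). For genuinely data-dependent $\Phi_S$, such as target-aware PLE breakpoints, enlarging or relabelling $S$ changes the encoder and the reduction is unavailable. So data- and label-independence should be stated as hypotheses, as you suggest.
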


\begin{proof}
Because encoder equivariance fails, there exist \(S\) and \(M\in O(n)\) such that no \(Q\in O(D)\) satisfies \(\Phi_{MS}(Mx_i)=Q\Phi_S(x_i)\) for all training inputs \(x_i\) in \(S\).
Thus the encoded datasets \(Z_S\) and \(Z_{MS}\) are not related by any orthogonal transform in feature space.

By the separating property, there exists a query feature vector \(z\) such that
\[
A(Z_S)(z)\neq A(Z_{MS})(z').
\]
Choose a raw query \(x\) and set \(z=\Phi_S(x)\), \(z'=\Phi_{MS}(Mx)\).
Then
\[
L[S](x)=A(Z_S)(\Phi_S(x))\neq A(Z_{MS})(\Phi_{MS}(Mx))=L[MS](Mx),
\]
so \(L\) is not rotationally invariant.
\end{proof}

\begin{remark}[Stochastic analogues]
    If we replace the deterministic equality in Definition \ref{def:separating} and Theorem \ref{thm:obstruction} by expectation, we can produce the stochastic analogues that cover neural nets trained by SGD.
\end{remark}

\subsubsection{SGD-trained MLPs cannot unlearn the rotation under PLE}
Theorem~\ref{thm:obstruction} clarifies a few subtle but important points.

\begin{itemize}[leftmargin=2.2em]
\item \textbf{Expressivity is not the issue.} An MLP is a universal approximator; in principle, with enough data, it can learn complicated functions that may correlate with a rotated target concept.

\item \textbf{Algorithmic rotational invariance is the issue.} Rotational invariance is a statement about the \emph{learning procedure}: for every dataset and every rotation, training on \(S\) and testing at \(x\) must agree with training on \(MS\) and testing at \(Mx\).

\item \textbf{PLE breaks the symmetry before SGD begins.} Coordinate-wise PLE creates axis-aligned ``kinks'' (breakpoints) and saturations. A generic raw rotation changes how points fall relative to these kinks. Unless the encoder is rotation-equivariant up to a global orthogonal map in feature space, a symmetric training rule (such as standard SGD on an MLP with isotropic initialization) has no canonical mechanism that forces it to output a rotationally invariant predictor.
\end{itemize}

\subsection{Discussion: why breaking rotational invariance is often the point in tabular learning}
Rotational invariance is attractive when the coordinate system is arbitrary (e.g., images after whitening).
Tabular data is often different: coordinates correspond to semantically meaningful features, and feature selection or axis-aligned splits can be beneficial.
Specifically, tabular datasets frequently contain noisy, irrelevant features; a rotationally invariant model would inextricably mix these with informative features, degrading performance.
In fact, \cite{ng2004feature} gives lower bounds showing that \emph{any} rotationally invariant learner can be forced to require \(\Omega(n)\) samples on an ``easy'' one-feature threshold problem after an adversarial rotation.
From that perspective, PLE's axis alignment is not a bug: it is a deliberate inductive bias toward feature-wise heterogeneity.

\section{Appendix: PLE Brings in Extra Localized Expressivity}
\label{app:ple_more_expressivie}
PLE increases capacity in a manner ideally suited for tabular problems: it creates \textbf{bin-local degrees of freedom} (local slopes) with \textbf{localized training dynamics} (zero cross-bin interference in the slope NTK, which is ideal for capturing localized irregularities), while simultaneously increasing the effective dimension of the standard (value) NTK.

\textbf{Scope}.
We work in the simplest setting of a \emph{single numerical feature} $x\in[b_0,b_T]$ and compare:
(i) a linear model on top of PLE (``linear-on-PLE''); and
(ii) a one-hidden-layer scalar ReLU MLP on the raw scalar (``MLP-without-PLE''). In other words, we investigate the benefits of PLE for a single numerical feature.

\subsection{Setup and notation}

\begin{definition}[Knots, bins, and bin index]
\label{def:knots}
Fix knots
\[
b_0<b_1<\cdots<b_T,
\qquad I\coloneqq[b_0,b_T],
\qquad \DeltaT_t\coloneqq b_t-b_{t-1}>0.
\]
For $x\in(b_0,b_T)$, define the \emph{bin index} $s(x)\in\{1,\dots,T\}$ by
\[
s(x)=t \quad\Longleftrightarrow\quad x\in(b_{t-1},b_t).
\]
Let the boundary set be $\mathcal B\coloneqq\{b_0,\dots,b_T\}$.
\end{definition}

\begin{definition}[Clipping operator]
For $u\in\R$, define $\clip(u)\coloneqq\min\{1,\max\{0,u\}\}\in[0,1]$.
\end{definition}

\begin{definition}[Piecewise linear encoding (PLE) map for one feature]
\label{def:ple}
Define $\phiPLE:I\to\R^T$ by $\phiPLE(x)=(\phiPLE_1(x),\dots,\phiPLE_T(x))$, where for $t\in\{1,\dots,T\}$,
\begin{equation}
\label{eq:ple-app}
\phiPLE_t(x)\coloneqq \clip\!\left(\frac{x-b_{t-1}}{\DeltaT_t}\right).
\end{equation}
Equivalently,
\[
\phiPLE_t(x)=
\begin{cases}
0, & x\le b_{t-1},\\[2pt]
\dfrac{x-b_{t-1}}{\DeltaT_t}, & b_{t-1}\le x\le b_t,\\[8pt]
1, & x\ge b_t.
\end{cases}
\]
\end{definition}

\begin{remark}[``Almost everywhere'' (a.e.) differentiability convention]
\label{rem:ae}
All derivatives with respect to $x$ are understood on $I\setminus\mathcal B$ (and similarly for MLPs, away from their kink set).
At points where a piecewise-linear function is not differentiable, any selection of subgradient can be used; this affects only a measure-zero set in $x$ and does not change the kernel identities stated ``a.e.''.
\end{remark}

\subsection{Phase 1: Linear-on-PLE is exactly the linear spline class (full derivation)}

\begin{definition}[Linear-on-PLE model]
\label{def:lin-on-ple}
Given $(w,c)\in\R^T\times\R$, define
\[
f_{w,c}(x)\coloneqq c+w^\top \phiPLE(x),\qquad x\in I.
\]
\end{definition}

\begin{definition}[Linear spline space with knots $\{b_t\}$]
\label{def:spline-space}
Let $\mathcal S(b_{0:T})$ denote the set of continuous functions $s:I\to\R$ that are affine on each bin $[b_{t-1},b_t]$.
Equivalently, $s\in\mathcal S(b_{0:T})$ iff there exist slopes $(\alpha_1,\dots,\alpha_T)\in\R^T$ and a base value $s(b_0)\in\R$ such that, for $x\in[b_{t-1},b_t]$,
\begin{equation}
\label{eq:spline-param}
s(x)=s(b_0)+\sum_{j=1}^{t-1}\alpha_j\DeltaT_j+\alpha_t(x-b_{t-1}).
\end{equation}
\end{definition}

\begin{theorem}[Spline equivalence with explicit forward formula and converse]
\label{thm:spline-equiv}
The class $\{f_{w,c}:\ (w,c)\in\R^T\times\R\}$ equals $\mathcal S(b_{0:T})$.
Moreover:
\begin{enumerate}
\item \textbf{(Forward / explicit spline form).}
Fix $(w,c)$. For $x\in(b_{t-1},b_t)$ (so $t=s(x)$),
\begin{equation}
\label{eq:f-explicit}
f_{w,c}(x)=c+\sum_{j=1}^{t-1}w_j \;+\; w_t\frac{x-b_{t-1}}{\DeltaT_t}.
\end{equation}
Hence $f_{w,c}$ is a continuous linear spline with per-bin slope
\begin{equation}
\label{eq:slope-ple}
\partial_x f_{w,c}(x)=\frac{w_{s(x)}}{\DeltaT_{s(x)}} \qquad\text{for }x\in I\setminus\mathcal B.
\end{equation}

\item \textbf{(Converse / explicit weight construction).}
Given any spline $s\in\mathcal S(b_{0:T})$ with representation \eqref{eq:spline-param}, define
\begin{equation}
\label{eq:weights-from-slopes}
c\coloneqq s(b_0),\qquad w_t\coloneqq \alpha_t\DeltaT_t \quad (t=1,\dots,T).
\end{equation}
Then $f_{w,c}(x)=s(x)$ for all $x\in I$.
\end{enumerate}
\end{theorem}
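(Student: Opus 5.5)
The proof is a direct computation from the case form of $\phiPLE_t$ in Definition~\ref{def:ple}, followed by a two-inclusion argument for the class equality.

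\textbf{Forward formula.} Fix $(w,c)$ and $x\in(b_{t-1},b_t)$ with $t=s(x)$. Since the knots are strictly increasing, every $j<t$ has $b_j\le b_{t-1}<x$, so $\phiPLE_j(x)=1$. Every $j>t$ has $b_{j-1}\ge b_t>x$, so $\phiPLE_j(x)=0$. Only the active coordinate is unclipped: $\phiPLE_t(x)=(x-b_{t-1})/\DeltaT_t$. Substituting into $f_{w,c}(x)=c+\sum_j w_j\phiPLE_j(x)$ gives \eqref{eq:f-explicit}.

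\textbf{Slope and spline membership.} Differentiating \eqref{eq:f-explicit} on the open bin gives \eqref{eq:slope-ple}, since the only $x$-dependence is in the last term. Each $\phiPLE_t$ is a composition of an affine map with the continuous $\clip$, so it is continuous on $I$. Hence $f_{w,c}$ is continuous.

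By continuity, \eqref{eq:f-explicit} extends to the closed bin $[b_{t-1},b_t]$, where it is affine. So $f_{w,c}\in\mathcal S(b_{0:T})$. Matching with \eqref{eq:spline-param} identifies the parameters: $s(b_0)=c$ (all coordinates vanish at $b_0$) and $\alpha_j=w_j/\DeltaT_j$. This proves the inclusion $\subseteq$.

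\textbf{Converse.} Given $s\in\mathcal S(b_{0:T})$ with representation \eqref{eq:spline-param}, define $(w,c)$ by \eqref{eq:weights-from-slopes}. On each open bin, \eqref{eq:f-explicit} becomes
\[
c+\sum_{j<t}\alpha_j\DeltaT_j+\alpha_t(x-b_{t-1}),
\]
which equals $s(x)$ by \eqref{eq:spline-param}. The two functions are continuous and agree on $I\setminus\mathcal B$, which is dense in $I$, so they agree on all of $I$, including the knots. This gives the inclusion $\supseteq$ and hence equality of the classes.

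The only step needing care is the boundary set $\mathcal B$. The piecewise formula is stated on open bins, so I extend it to the knots by continuity rather than through separate case analysis. This is consistent with Remark~\ref{rem:ae}. The rest is a routine calculation.
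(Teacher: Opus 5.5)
Your proof is correct and follows essentially the same route as the paper: evaluate each coordinate of $\phi(x)$ on an open bin, expand $w^\top\phi(x)$ to get the explicit formula and the per-bin slope, then substitute $w_t=\alpha_t\Delta_t$ for the converse and extend to the knots by continuity. Your continuity argument (each $\phi_t$ is $\mathrm{clip}$ composed with an affine map) is slightly cleaner than the paper's remark that the formula matches at the endpoints, but it is the same proof.
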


\begin{proof}
We proceed in explicit steps.

\paragraph{Step 1: Evaluate $\phiPLE(x)$ inside a bin.}
Fix $x\in(b_{t-1},b_t)$ with $t=s(x)$.
From Definition~\ref{def:ple}:
\[
\phiPLE_j(x)=
\begin{cases}
1, & j\le t-1 \quad(\text{since }x\ge b_j),\\[2pt]
\dfrac{x-b_{t-1}}{\DeltaT_t}, & j=t,\\[8pt]
0, & j\ge t+1 \quad(\text{since }x\le b_{j-1}).
\end{cases}
\]

\paragraph{Step 2: Expand $w^\top \phiPLE(x)$.}
Using the binwise values above,
\[
w^\top\phiPLE(x)
=\sum_{j=1}^{t-1} w_j\cdot 1 \;+\; w_t\cdot \frac{x-b_{t-1}}{\DeltaT_t}\;+\;\sum_{j=t+1}^T w_j\cdot 0
=\sum_{j=1}^{t-1}w_j+w_t\frac{x-b_{t-1}}{\DeltaT_t}.
\]
Adding $c$ gives \eqref{eq:f-explicit}. This shows $f_{w,c}$ is affine on each bin and continuous at the knots (the formula matches at endpoints), hence $f_{w,c}\in\mathcal S(b_{0:T})$.

\paragraph{Step 3: Compute the per-bin slope.}
For $x\in(b_{t-1},b_t)$, differentiating \eqref{eq:f-explicit} yields
\[
\partial_x f_{w,c}(x)=\frac{w_t}{\DeltaT_t},
\]
which is \eqref{eq:slope-ple}.

\paragraph{Step 4: Converse construction.}
Let $s\in\mathcal S(b_{0:T})$ be written as \eqref{eq:spline-param}.
Define $(w,c)$ by \eqref{eq:weights-from-slopes}.
Fix $x\in(b_{t-1},b_t)$. Then using \eqref{eq:f-explicit},
\[
f_{w,c}(x)
=c+\sum_{j=1}^{t-1}w_j+w_t\frac{x-b_{t-1}}{\DeltaT_t}
=s(b_0)+\sum_{j=1}^{t-1}\alpha_j\DeltaT_j+\alpha_t\DeltaT_t\frac{x-b_{t-1}}{\DeltaT_t}.
\]
Canceling $\DeltaT_t$ in the last term gives exactly \eqref{eq:spline-param}:
\[
f_{w,c}(x)=s(b_0)+\sum_{j=1}^{t-1}\alpha_j\DeltaT_j+\alpha_t(x-b_{t-1})=s(x).
\]
By continuity, equality holds for all $x\in I$.
\end{proof}

\begin{corollary}[Kinks at knots and the ``generic kink'' condition]
\label{cor:kink}
Let $f_{w,c}$ be linear-on-PLE. For an interior knot $b_t$ ($t=1,\dots,T-1$), the spline is differentiable at $b_t$ iff the left and right slopes match:
\[
\frac{w_t}{\DeltaT_t}=\frac{w_{t+1}}{\DeltaT_{t+1}}.
\]
Equivalently, $b_t$ is a kink iff $\frac{w_t}{\DeltaT_t}\neq \frac{w_{t+1}}{\DeltaT_{t+1}}$.
In particular, if $w$ is drawn from a distribution with a density on $\R^T$, then with probability $1$ all interior knots are kinks.
\end{corollary}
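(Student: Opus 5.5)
The corollary follows directly from the explicit spline form in Theorem~\ref{thm:spline-equiv}. By \eqref{eq:slope-ple}, on the open bin $(b_{t-1},b_t)$ the function $f_{w,c}$ is affine with slope $w_t/\DeltaT_t$, and on $(b_t,b_{t+1})$ it is affine with slope $w_{t+1}/\DeltaT_{t+1}$. Continuity at $b_t$ is already guaranteed by the theorem.

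\emph{Step 1: one-sided derivatives.} I will compute the left derivative of $f_{w,c}$ at $b_t$ by evaluating the difference quotient with \eqref{eq:f-explicit} on bin $t$; it equals $w_t/\DeltaT_t$. The right derivative, computed on bin $t+1$, equals $w_{t+1}/\DeltaT_{t+1}$. Here I use that \eqref{eq:f-explicit} remains valid on the closed bins, which holds because $f_{w,c}$ is continuous.

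\emph{Step 2: the equivalence.} A function whose one-sided derivatives exist at a point is differentiable there if and only if the two one-sided derivatives coincide. This gives the stated equivalence, and its negation gives the kink characterization.

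\emph{Step 3: the genericity claim.} For each interior $t$, the non-kink event is $\{w : w_t\DeltaT_{t+1}-w_{t+1}\DeltaT_t=0\}$. This is the zero set of a nonzero linear functional on $\R^T$, since the coefficient $\DeltaT_{t+1}$ of $w_t$ is positive. It is therefore a hyperplane and has Lebesgue measure zero. Because the law of $w$ has a density, this event has probability $0$. A finite union over $t=1,\dots,T-1$ still has probability $0$, so with probability $1$ every interior knot is a kink.

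I expect no real obstacle. The only care needed is in Step 1: the one-sided derivatives must be taken from the closed-bin formulas, rather than relying on the a.e.\ convention of Remark~\ref{rem:ae}, since that convention deliberately ignores knots.
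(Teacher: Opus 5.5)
Your proposal is correct and follows the paper's proof: the one-sided derivatives at $b_t$ are read off as the per-bin slopes $w_t/\DeltaT_t$ and $w_{t+1}/\DeltaT_{t+1}$ from \eqref{eq:slope-ple}, differentiability is equivalent to these matching, and the non-kink set is a hyperplane, which has probability zero under a density. Your extra care is consistent with the paper's argument and only makes explicit what it leaves implicit: using the closed-bin formula at $b_t$ rather than the a.e.\ convention, checking that the linear functional is nonzero, and taking the finite union over $t$.
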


\begin{proof}
The left derivative at $b_t$ equals the slope on $(b_{t-1},b_t)$, i.e. $w_t/\DeltaT_t$, and the right derivative equals the slope on $(b_t,b_{t+1})$, i.e. $w_{t+1}/\DeltaT_{t+1}$, by \eqref{eq:slope-ple}. Equality of one-sided derivatives is equivalent to differentiability at $b_t$.
The probability-one statement follows because $\{w:\ w_t/\DeltaT_t=w_{t+1}/\DeltaT_{t+1}\}$ is a codimension-$1$ hyperplane in $\R^T$.
\end{proof}

\begin{remark}
Theorem~\ref{thm:spline-equiv} formally identifies linear-on-PLE with the linear spline function class.
Next, we bound the number of hidden units a standard ReLU MLP requires to match this knot complexity, before quantifying PLE's \emph{localized} learning behavior via NTK analysis.
\end{remark}

\subsection{Phase 2: Degrees of freedom vs a 1-hidden-layer scalar ReLU MLP (knot counting)}

\begin{definition}[MLP-without-PLE: one-hidden-layer scalar ReLU net]
\label{def:mlp}
Fix width $m\in\mathbb N$. Consider
\begin{equation}
\label{eq:mlp}
f_\theta(x)=\sum_{r=1}^m a_r\,\ReLU(w_r x+b_r)+c,
\qquad \ReLU(u)=\max\{0,u\},
\end{equation}
with parameters $\theta=(a,w,b,c)\in\R^m\times\R^m\times\R^m\times\R$.
\end{definition}

\begin{proposition}[Knot counting for one-hidden-layer ReLU]
\label{prop:knot-count}
For any fixed parameters $\theta$, the function $x\mapsto f_\theta(x)$ is continuous piecewise linear on $\R$ with at most $m$ interior breakpoints (kinks).
Consequently, to represent a \emph{generic} spline in $\mathcal S(b_{0:T})$ having $T-1$ distinct interior knots, a one-hidden-layer ReLU net must have width
\[
m\ge T-1.
\]
\end{proposition}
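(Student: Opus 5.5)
The argument has two parts: first show that each hidden unit contributes at most one kink, then use the corollary on generic kinks to force $m\ge T-1$.

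\textbf{Piecewise linearity and kink count.} Fix $\theta$. Each summand $a_r\,\ReLU(w_r x+b_r)$ is treated in two cases.
If $w_r=0$, the summand is the constant $a_r\ReLU(b_r)$ and has no kink.
If $w_r\neq 0$, the summand is continuous and piecewise linear, with a single possible breakpoint at $x_r=-b_r/w_r$. It equals $0$ on one side of $x_r$ and $a_r(w_r x+b_r)$ on the other.

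Now let $P=\{x_r: w_r\neq 0\}$, so $|P|\le m$. On each connected component of $\R\setminus P$, every summand is affine, so $f_\theta$ is affine there. Since $f_\theta$ is a finite sum of continuous functions, it is continuous. Hence $f_\theta$ is continuous piecewise linear, and every point of nondifferentiability lies in $P$. Cancellations between units can only remove kinks, never add new ones. Therefore $f_\theta$ has at most $m$ kinks, and in particular at most $m$ kinks inside $I$.

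\textbf{Consequence for splines.} Take a generic spline $s\in\mathcal S(b_{0:T})$. By Corollary~\ref{cor:kink}, with probability one (equivalently, whenever the consecutive slopes $\alpha_t\neq\alpha_{t+1}$), $s$ is nondifferentiable at each of the $T-1$ interior knots $b_1,\dots,b_{T-1}$.

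Suppose $f_\theta=s$ on $I$. Each interior knot lies in the open interval $(b_0,b_T)$. Differentiability at an interior point depends only on the values of the function in a neighbourhood of that point, so $f_\theta$ must also be nondifferentiable at all $T-1$ interior knots. Combining this with the first part gives $T-1\le |P|\le m$.

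\textbf{Where care is needed.} The only subtle step is the claim that kinks of $f_\theta$ lie in $P$. This follows from the affineness of $f_\theta$ on each component of $\R\setminus P$; no tight counting is required. The second subtle point is the restriction to $I$: we only need equality on $I$, and the knots are interior points of $I$, so local nondifferentiability transfers from $s$ to $f_\theta$.
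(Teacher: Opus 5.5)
Your proof is correct and takes the same route as the paper: each unit with $w_r\neq 0$ has a single possible kink at $-b_r/w_r$, so the kink set of $f_\theta$ sits inside a set of at most $m$ points. A generic spline's $T-1$ interior kinks (via Corollary~\ref{cor:kink}) then force $m\ge T-1$. You also spell out two steps the paper leaves implicit, namely that $f_\theta$ is affine on each component of $\R\setminus P$ and that nondifferentiability at the interior knots transfers from $s$ to $f_\theta$ because differentiability is local.
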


\begin{proof}
For each neuron $r$, the map $x\mapsto \ReLU(w_r x+b_r)$ is piecewise linear with a single possible kink at the point where $w_r x+b_r=0$ (if $w_r\neq 0$), i.e.
\[
x_r^\star=-\frac{b_r}{w_r}.
\]
If $w_r=0$, then $\ReLU(w_r x+b_r)=\ReLU(b_r)$ is constant and contributes no kink.
Therefore, the kink set of the sum \eqref{eq:mlp} is contained in the union of the individual kink locations $\{x_r^\star:\ w_r\neq 0\}$, so it has at most $m$ distinct points.
This proves the first claim.

For the consequence, note that a generic spline with knots exactly at $b_1,\dots,b_{T-1}$ has $T-1$ distinct interior kink locations.
Since a width-$m$ network can have at most $m$ distinct kink locations, representing such a spline requires $m\ge T-1$.
\end{proof}

\begin{remark}[Why this comparison is the right one]
Deep ReLU nets can create many breakpoints, but the key point here is structural:
\emph{without PLE, breakpoint locations must be learned by the network};
\emph{with PLE, the knot grid is built into the representation and learning can focus on per-bin slopes}.
\emph{Next, we discuss the localized slope learning given by PLE.}
\end{remark}

\subsection{Phase 3: Training dynamics via kernels --- value NTK vs slope NTK}
The preceding sections demonstrate that PLE increases expressivity by adopting the strengths of tree-based methods to create explicit breakpoints. We now formalize another structural benefit: PLE's training dynamics are inherently more localized, making them ideal for modeling local irregularities in numerical tabular data. We prove this using Neural Tangent Kernel (NTK) techniques.

\subsubsection{Standard (value) NTK: definitions and explicit formulas}

\begin{definition}[Value Jacobian and value NTK]
\label{def:value-ntk}
For a parameterized model $f_\theta(x)$, define the value Jacobian
\[
J_{\mathrm{val}}(x)\coloneqq\nabla_\theta f_\theta(x),
\]
and the (finite-width) value NTK
\[
K_{\mathrm{val}}(x,x')\coloneqq \ip{J_{\mathrm{val}}(x)}{J_{\mathrm{val}}(x')}.
\]
\end{definition}
Further background can be found in \citet{jacot2018neural}.

\begin{proposition}[Value NTK for linear-on-PLE is explicit and nonlocal across bins]
\label{prop:value-ple}
For linear-on-PLE (Definition~\ref{def:lin-on-ple}) with parameters $\theta=(w,c)$,
\[
J_{\mathrm{val}}(x)=(\phiPLE(x)\,|\,1)\in\R^{T+1},
\qquad
K_{\mathrm{val}}^{\mathrm{PLE}}(x,x')=1+\ip{\phiPLE(x)}{\phiPLE(x')}.
\]
Moreover, for $x\in(b_{s-1},b_s)$ and $x'\in(b_{t-1},b_t)$ with $s=s(x)$ and $t=s(x')$, letting
\[
\alpha(x)\coloneqq \frac{x-b_{s-1}}{\DeltaT_s}\in(0,1),\qquad
\alpha(x')\coloneqq \frac{x'-b_{t-1}}{\DeltaT_t}\in(0,1),
\]
we have the closed form
\[
\ip{\phiPLE(x)}{\phiPLE(x')}
=
(\min\{s,t\}-1)+
\begin{cases}
\alpha(x), & s<t,\\
\alpha(x)\alpha(x'), & s=t,\\
\alpha(x'), & s>t.
\end{cases}
\]
In particular, $K_{\mathrm{val}}^{\mathrm{PLE}}(x,x')$ is typically \emph{nonzero} even when $s(x)\neq s(x')$.
\end{proposition}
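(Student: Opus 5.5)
The proof is a direct computation, reusing the binwise evaluation of $\phiPLE$ from Step~1 of the proof of Theorem~\ref{thm:spline-equiv}.

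\textbf{Jacobian and kernel.} For $f_{w,c}(x)=c+w^\top\phiPLE(x)$ the model is linear in $\theta=(w,c)$. Hence $\partial f/\partial w=\phiPLE(x)$ and $\partial f/\partial c=1$, which gives $J_{\mathrm{val}}(x)=(\phiPLE(x)\,|\,1)$. Taking the inner product of two such Jacobians immediately yields $K_{\mathrm{val}}^{\mathrm{PLE}}(x,x')=1+\ip{\phiPLE(x)}{\phiPLE(x')}$. No width limit or initialization enters, because the model is linear in its parameters.

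\textbf{Closed form.} Fix $x$ in bin $s$ and $x'$ in bin $t$. By Step~1 of Theorem~\ref{thm:spline-equiv}:
\begin{itemize}
\item $\phiPLE_j(x)=1$ for $j<s$, $\phiPLE_s(x)=\alpha(x)$, and $\phiPLE_j(x)=0$ for $j>s$;
\item the analogous pattern holds for $x'$ with index $t$ and value $\alpha(x')$.
\end{itemize}
Expand $\sum_j\phiPLE_j(x)\phiPLE_j(x')$ coordinate by coordinate, splitting on the ordering of $s$ and $t$.
\begin{itemize}
\item Indices $j<\min\{s,t\}$ contribute $1\cdot1$ each, for a total of $\min\{s,t\}-1$.
\item If $s<t$: index $j=s$ gives $\alpha(x)\cdot 1$, since $s<t$ means $\phiPLE_s(x')=1$. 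Indices $s<j\le t$ give $0\cdot(\cdot)=0$, and indices beyond $t$ give $0$.
\item If $s>t$: the same argument with roles swapped gives $\alpha(x')$.
\item If $s=t$: index $j=s$ gives $\alpha(x)\alpha(x')$, and all higher indices vanish.
\end{itemize}
Summing these contributions gives the stated piecewise formula.

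\textbf{Nonlocality.} $K\ge 1$ always, because of the bias term. Even ignoring the bias, when $s\ne t$ the inner product equals $(\min\{s,t\}-1)+\alpha(\cdot)$ with $\alpha\in(0,1)$, so it is strictly positive. Therefore the kernel is nonzero across bins, which is the ``in particular'' claim.

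The only care point is the indexing bookkeeping at the boundary index $j=\min\{s,t\}$: one must check which factor is $1$ versus $\alpha$ in each case. This step is routine and presents no real obstacle.
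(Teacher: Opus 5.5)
Your proposal is correct and follows essentially the same route as the paper's proof: you read off the Jacobian from linearity in $(w,c)$, use the binwise values of $\phiPLE$, and do the case split on the ordering of $s$ and $t$ to get the closed form. Your observation that $K_{\mathrm{val}}^{\mathrm{PLE}}\ge 1$ (so it is in fact never zero) is a harmless strengthening of the ``typically nonzero'' remark, which the paper does not argue separately.
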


\begin{proof}
For $\theta=(w,c)$, we have $\nabla_w f_{w,c}(x)=\phiPLE(x)$ and $\partial f_{w,c}(x)/\partial c=1$, hence $J_{\mathrm{val}}(x)=(\phiPLE(x)\,|\,1)$ and the kernel formula follows by dot product.

For the binwise closed form, use the explicit structure of $\phiPLE$ on a bin:
if $x\in(b_{s-1},b_s)$ then
\[
\phiPLE_j(x)=
\begin{cases}
1, & j\le s-1,\\
\alpha(x), & j=s,\\
0, & j\ge s+1,
\end{cases}
\]
and similarly for $x'$.
Taking the dot product, only coordinates $j\le \min\{s,t\}$ contribute.
If $s<t$, then $\phiPLE_s(x')=1$ and $\phiPLE_j(x')=1$ for $j\le s-1$, giving
$\ip{\phiPLE(x)}{\phiPLE(x')}=(s-1)+\alpha(x)$.
If $s=t$, the only fractional coordinate is $j=s$, giving $(s-1)+\alpha(x)\alpha(x')$.
The case $s>t$ is symmetric.
\end{proof}

\begin{proposition}[Exact value NTK for a one-hidden-layer ReLU MLP (no PLE)]
\label{prop:value-mlp}
For the MLP \eqref{eq:mlp}, define $z_r(x)\coloneqq w_r x+b_r$ and $\mathbb I_r(x)\coloneqq\1\{z_r(x)>0\}$.
For $x$ away from the kink set $\{z_r(x)=0\}$, the value Jacobian has entries
\[
\frac{\partial f_\theta(x)}{\partial a_r}=\ReLU(z_r(x)),\qquad
\frac{\partial f_\theta(x)}{\partial w_r}=a_r\,x\,\mathbb I_r(x),\qquad
\frac{\partial f_\theta(x)}{\partial b_r}=a_r\,\mathbb I_r(x),\qquad
\frac{\partial f_\theta(x)}{\partial c}=1.
\]
Consequently, for such $x,x'$,
\begin{equation}
\label{eq:value-mlp}
K_{\mathrm{val}}^{\mathrm{MLP}}(x,x')
=
1+\sum_{r=1}^m\Big[
\ReLU(z_r(x))\,\ReLU(z_r(x'))
+a_r^2(1+xx')\,\mathbb I_r(x)\mathbb I_r(x')
\Big].
\end{equation}
\end{proposition}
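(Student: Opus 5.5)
The plan is to compute the value Jacobian coordinate by coordinate and then take the inner product, working on an open set where every activation pattern is constant. Fix $x$ with $z_r(x)\neq 0$ for all $r$; since each $z_r$ is affine in $x$, there is a neighborhood of $x$ on which every indicator $\mathbb I_r$ is constant. On that neighborhood each summand $a_r\ReLU(w_r x+b_r)$ is either identically $0$ or equal to $a_r(w_r x+b_r)$ as a function of $(a_r,w_r,b_r)$ near the current parameter value. This makes $f_\theta$ smooth in $\theta$ there, with $\ReLU'(z_r)=\mathbb I_r(x)$ unambiguous, in line with the a.e. convention of Remark~\ref{rem:ae}.

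The partial derivatives then follow from the chain rule.
\begin{itemize}
\item $\partial f/\partial a_r=\ReLU(z_r(x))$, since $a_r$ enters linearly.
\item $\partial f/\partial w_r=a_r\mathbb I_r(x)\,\partial z_r/\partial w_r=a_r x\,\mathbb I_r(x)$.
\item $\partial f/\partial b_r=a_r\mathbb I_r(x)$.
\item $\partial f/\partial c=1$.
\end{itemize}
This establishes the first display, using the parameter ordering $\theta=(a,w,b,c)$ of Definition~\ref{def:mlp}.

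Next, by Definition~\ref{def:value-ntk}, $K^{\mathrm{MLP}}_{\mathrm{val}}(x,x')$ is the sum over all parameter coordinates of the products of the matching Jacobian entries at $x$ and $x'$.
\begin{itemize}
\item The $c$ coordinate contributes $1$.
\item Each $a_r$ coordinate contributes $\ReLU(z_r(x))\ReLU(z_r(x'))$.
\item Each $w_r$ coordinate contributes $a_r^2xx'\,\mathbb I_r(x)\mathbb I_r(x')$.
\item Each $b_r$ coordinate contributes $a_r^2\,\mathbb I_r(x)\mathbb I_r(x')$.
\end{itemize}
Grouping the $w_r$ and $b_r$ terms gives $a_r^2(1+xx')\mathbb I_r(x)\mathbb I_r(x')$, and summing over $r$ yields \eqref{eq:value-mlp}.

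No step is a real obstacle; the only subtlety is differentiability. It would be handled by requiring both $x$ and $x'$ to lie off the finite kink set $\{-b_r/w_r : w_r\neq 0\}$. If some $w_r=0$, then $z_r\equiv b_r$. In that case I would assume $b_r\neq 0$, as the "away from the kink set" hypothesis implicitly does, so that $\mathbb I_r$ is still locally constant in $\theta$ and the formula remains valid.
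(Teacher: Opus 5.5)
Your proposal is correct and follows the paper's proof essentially step for step: the chain rule neuron by neuron with $\ReLU'(z_r)=\mathbb I_r(x)$, then a blockwise ($a$, $w$, $b$, $c$) dot product in which the $w_r$ and $b_r$ blocks combine into $a_r^2(1+xx')\,\mathbb I_r(x)\mathbb I_r(x')$. Your extra care about differentiability, including the degenerate case $w_r=0$, is a small improvement over the paper, which only says ``away from $z=0$''. One precision point: the local constancy you need is in the parameters $(w_r,b_r)$ at fixed $x$, and it follows directly from $z_r(x)\neq 0$ by continuity; a neighborhood in $x$ is not what matters.
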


\begin{proof}
We compute derivatives neuron-by-neuron.
First, $\partial f/\partial a_r=\ReLU(z_r(x))$ is immediate.
Next, $\partial\ReLU(z)/\partial z=\mathbb I_r(x)$ away from $z=0$, so by chain rule:
\[
\frac{\partial f}{\partial w_r}
=a_r\frac{\partial \ReLU(z_r(x))}{\partial z_r}\cdot \frac{\partial z_r(x)}{\partial w_r}
=a_r\,\mathbb I_r(x)\cdot x,
\qquad
\frac{\partial f}{\partial b_r}
=a_r\,\mathbb I_r(x)\cdot 1.
\]
Also $\partial f/\partial c=1$.

Now take the dot product of value Jacobians at $x$ and $x'$ and sum contributions blockwise:
\begin{itemize}
\item $(a)$-block:
\[
\sum_{r=1}^m \ReLU(z_r(x))\,\ReLU(z_r(x')).
\]
\item $(w)$-block:
\[
\sum_{r=1}^m (a_r x\,\mathbb I_r(x))\,(a_r x'\,\mathbb I_r(x'))
=\sum_{r=1}^m a_r^2\,xx'\,\mathbb I_r(x)\mathbb I_r(x').
\]
\item $(b)$-block:
\[
\sum_{r=1}^m a_r^2\,\mathbb I_r(x)\mathbb I_r(x').
\]
\item $c$-block: $1$.
\end{itemize}
Combining these terms gives \eqref{eq:value-mlp}.
\end{proof}

\begin{remark}[Effective dimension / rank: PLE increases value-kernel rank]
\label{rem:rank}
Consider training on a finite dataset $\{x_i\}_{i=1}^n\subset(b_0,b_T)$.
The Gram matrix of the value kernel for linear-on-PLE is
\[
\Big(K_{\mathrm{val}}^{\mathrm{PLE}}(x_i,x_j)\Big)_{i,j}
=
\mathbf 1\mathbf 1^\top+\Phi\Phi^\top,
\qquad
\Phi\in\R^{n\times T},\ \Phi_{i,:}=\phiPLE(x_i)^\top.
\]
Hence $\mathrm{rank}(K_{\mathrm{val}}^{\mathrm{PLE}})\le T+1$, and for generic data spanning many bins, the rank is typically close to $T+1$.

By contrast, for the raw linear model $f(x)=\beta x+c$ (no PLE), we have
$J_{\mathrm{val}}(x)=(x,1)$ and thus $K_{\mathrm{val}}(x,x')=xx'+1$, whose Gram matrix has rank at most $2$.
This formalizes that PLE increases the number of independent ``directions'' in which function values can be fitted in the kernel (lazy-training) picture.
\end{remark}

\begin{remark}[Refinement under local input changes: PLE's value kernel is cumulative and continuous]
Both $K_{\mathrm{val}}^{\mathrm{PLE}}$ and $K_{\mathrm{val}}^{\mathrm{MLP}}$ can be nonlocal.
However, $K_{\mathrm{val}}^{\mathrm{PLE}}(x,x')$ varies \emph{continuously} with $x$ within each bin and changes only at the fixed knots $\{b_t\}$, giving a cumulative ``similarity'' structure along the scalar feature (Proposition~\ref{prop:value-ple}).
In contrast, for fixed MLP parameters, $K_{\mathrm{val}}^{\mathrm{MLP}}(x,x')$ changes its algebraic form whenever $x$ crosses one of the learned thresholds $-b_r/w_r$ (activation pattern changes), which can produce more abrupt changes in similarity as a function of $x$.
This is a qualitative comparison (not a nonlocality claim): it describes how PLE induces a stable, pre-specified partition of $x$-space.
\end{remark}

\subsubsection{Slope NTK: definitions and exact finite-width formulas}

\begin{definition}[Slope function, slope Jacobian, and slope NTK]
\label{def:slope-ntk}
For a model $f_\theta(x)$ that is differentiable in $x$ on $I\setminus\mathcal B$, define the \emph{slope function}
\[
g_\theta(x)\coloneqq \partial_x f_\theta(x)\qquad (x\in I\setminus\mathcal B),
\]
the \emph{slope Jacobian}
\[
J_{\mathrm{slope}}(x)\coloneqq \nabla_\theta g_\theta(x),
\]
and the \emph{slope NTK}
\[
K_{\mathrm{slope}}(x,x')\coloneqq \ip{J_{\mathrm{slope}}(x)}{J_{\mathrm{slope}}(x')},
\qquad x,x'\in I\setminus\mathcal B.
\]
\end{definition}

\begin{remark}[Why slope NTK is meaningful]
The standard NTK describes learning dynamics of function values $f_\theta(x)$.
The slope NTK describes learning dynamics of the derived scalar quantity $\partial_x f_\theta(x)$, i.e. a (coordinate) partial derivative.
This makes ``local shape'' learning explicit: it quantifies how parameter updates slopes at different inputs.
\end{remark}

\begin{proposition}[Exact slope NTK for a one-hidden-layer ReLU MLP (no PLE)]
\label{prop:slope-mlp}
For the MLP \eqref{eq:mlp}, for $x$ away from the kink set $\{z_r(x)=0\}$,
\[
g_\theta(x)=\partial_x f_\theta(x)=\sum_{r=1}^m a_r w_r\,\mathbb I_r(x).
\]
Moreover, for such $x$ (in the ``a.e.'' sense of Remark~\ref{rem:ae}),
\[
\frac{\partial g_\theta(x)}{\partial a_r}=w_r\,\mathbb I_r(x),\qquad
\frac{\partial g_\theta(x)}{\partial w_r}=a_r\,\mathbb I_r(x),\qquad
\frac{\partial g_\theta(x)}{\partial b_r}=0,\qquad
\frac{\partial g_\theta(x)}{\partial c}=0.
\]
Consequently, for $x,x'$ away from the kink set,
\begin{equation}
\label{eq:slope-mlp}
K_{\mathrm{slope}}^{\mathrm{MLP}}(x,x')
=\sum_{r=1}^m (w_r^2+a_r^2)\,\mathbb I_r(x)\mathbb I_r(x').
\end{equation}
\end{proposition}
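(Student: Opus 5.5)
The plan is a direct computation, in the same style as the proof of Proposition~\ref{prop:value-mlp}, working throughout on the complement of the kink set $\{x: z_r(x)=0 \text{ for some } r\}$. This set is finite, with at most $m$ points by Proposition~\ref{prop:knot-count}, so it is covered by the a.e.\ convention of Remark~\ref{rem:ae}.

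\textbf{Step 1: the slope function.} Fix $x$ off the kink set. Each $z_r(\cdot)$ keeps a constant sign on a neighborhood of $x$, so the activation pattern $\mathbb I_r$ is locally constant there. On that neighborhood,
\[
f_\theta(y)=\sum_r a_r\,\mathbb I_r(x)\,(w_r y+b_r)+c
\]
is affine in $y$. Differentiating gives $g_\theta(x)=\sum_r a_r w_r \mathbb I_r(x)$. Equivalently, one can apply the chain rule with $\partial_x \ReLU(z_r(x))=\mathbb I_r(x)\,w_r$.

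\textbf{Step 2: parameter derivatives of the slope.} The point $x$ is fixed and off the kink set. Small perturbations of $(w_r,b_r)$ therefore keep $z_r(x)$ bounded away from $0$, so $\mathbb I_r(x)$ is locally constant in $\theta$. This is the one genuinely delicate point, and the main obstacle: it requires $z_r(x)\neq 0$. That holds both for $x$ off the kink set for the given $\theta$ and under the paper's a.e.\ convention. Once it holds, $g_\theta(x)$ is a polynomial in $(a,w)$ with frozen indicators. This gives
\[
\partial_{a_r}g=w_r\mathbb I_r(x),\qquad \partial_{w_r}g=a_r\mathbb I_r(x).
\]
Since $b_r$ and $c$ do not appear in $g_\theta(x)$, their derivatives are $0$.

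\textbf{Step 3: the kernel.} Take the inner product of the slope Jacobians at $x$ and $x'$, block by block as in Proposition~\ref{prop:value-mlp}:

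the $a$-block gives $\sum_r w_r^2\mathbb I_r(x)\mathbb I_r(x')$;

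the $w$-block gives $\sum_r a_r^2\mathbb I_r(x)\mathbb I_r(x')$;

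the $b$- and $c$-blocks vanish.

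Summing these, and using $\mathbb I_r^2=\mathbb I_r$ only implicitly through the product form, yields \eqref{eq:slope-mlp}.

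I would close with a remark contrasting this with PLE. Here the kernel couples any two inputs that share an active neuron, whereas the linear-on-PLE slope $w_{s(x)}/\DeltaT_{s(x)}$ from \eqref{eq:slope-ple} has a Jacobian supported on a single bin.
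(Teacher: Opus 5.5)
Your proposal is correct and follows essentially the paper's proof. The shared steps are: the chain rule for $g_\theta$; freezing the gates $\mathbb I_r(x)$ to differentiate in $(a_r,w_r)$, with zero $b_r$- and $c$-derivatives; and a blockwise inner product in which only the $a$- and $w$-blocks survive. Your Step~2 justification (off the kink set, $\mathbb I_r(x)$ is locally constant in $\theta$, so the derivatives are classical) is slightly cleaner than the paper's a.e.\ remark, though your side claim that the zero set has at most $m$ points needs the caveat that no neuron has $w_r=b_r=0$; it is not needed for the pointwise statements anyway.
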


\begin{proof}
For $x$ with $z_r(x)\neq 0$, we have $\partial_x \ReLU(z_r(x))=\mathbb I_r(x)\cdot \partial_x z_r(x)=\mathbb I_r(x)\,w_r$.
Thus,
\[
g_\theta(x)=\partial_x f_\theta(x)=\sum_{r=1}^m a_r\,\partial_x \ReLU(z_r(x))
=\sum_{r=1}^m a_r\,w_r\,\mathbb I_r(x).
\]
Treating $\mathbb I_r(x)$ as constant in parameters for this fixed $x$ (valid a.e.), differentiate:
\[
\frac{\partial g}{\partial a_r}=w_r\mathbb I_r(x),\qquad
\frac{\partial g}{\partial w_r}=a_r\mathbb I_r(x),
\]
and $\partial g/\partial b_r=\partial g/\partial c=0$ a.e. (the dependence on $b_r$ enters only through the gate, whose derivative is supported on the measure-zero set $z_r(x)=0$).

Finally, take the dot product of slope Jacobians at $x$ and $x'$.
Only the $(a)$ and $(w)$ blocks contribute, giving
\[
K_{\mathrm{slope}}^{\mathrm{MLP}}(x,x')
=\sum_{r=1}^m \Big[(w_r\mathbb I_r(x))(w_r\mathbb I_r(x'))+(a_r\mathbb I_r(x))(a_r\mathbb I_r(x'))\Big]
=\sum_{r=1}^m (w_r^2+a_r^2)\mathbb I_r(x)\mathbb I_r(x'),
\]
which is \eqref{eq:slope-mlp}.
\end{proof}

\begin{proposition}[Exact slope NTK for linear-on-PLE is diagonal across bins]
\label{prop:slope-ple}
For linear-on-PLE with parameters $\theta=(w,c)$, for $x\in(b_{t-1},b_t)$,
\[
g_{w,c}(x)=\partial_x f_{w,c}(x)=\frac{w_t}{\DeltaT_t},
\qquad
\nabla_w g_{w,c}(x)=\frac{1}{\DeltaT_t}e_t,\qquad
\frac{\partial g_{w,c}(x)}{\partial c}=0.
\]
Therefore, for $x,x'\in I\setminus\mathcal B$,
\begin{equation}
\label{eq:slope-ple-kernel}
K_{\mathrm{slope}}^{\mathrm{PLE}}(x,x')
=
\frac{\1\{s(x)=s(x')\}}{\DeltaT_{s(x)}^2}.
\end{equation}
In particular, if $s(x)\neq s(x')$ then $K_{\mathrm{slope}}^{\mathrm{PLE}}(x,x')=0$ (no cross-bin interference for slope learning).
\end{proposition}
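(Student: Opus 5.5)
The plan is to read off the slope directly from the explicit spline form in Theorem~\ref{thm:spline-equiv}, and then follow the same Jacobian-then-inner-product template used for Proposition~\ref{prop:slope-mlp}.

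\textbf{Step 1: the slope function.} Fix $x\in(b_{t-1},b_t)$ with $t=s(x)$. Equation~\eqref{eq:f-explicit} gives
\[
f_{w,c}(x)=c+\sum_{j<t}w_j+w_t\,\frac{x-b_{t-1}}{\DeltaT_t}.
\]
Differentiating in $x$ yields $g_{w,c}(x)=w_t/\DeltaT_t$, which is exactly \eqref{eq:slope-ple}.

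\textbf{Step 2: the slope Jacobian.} The bin index $s(x)$ depends only on $x$ and the fixed knots, not on $\theta=(w,c)$. So, unlike the MLP case, no ``a.e. in parameters'' caveat is needed. The expression $w_t/\DeltaT_t$ is linear in $w$ and independent of $c$. Hence
\[
\nabla_w g_{w,c}(x)=\DeltaT_t^{-1}e_t,\qquad \partial_c g_{w,c}(x)=0,
\]
and the full slope Jacobian is $J_{\mathrm{slope}}(x)=(\DeltaT_{s(x)}^{-1}e_{s(x)}\,|\,0)\in\R^{T+1}$.

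\textbf{Step 3: the kernel.} For $x,x'\in I\setminus\mathcal B$, Definition~\ref{def:slope-ntk} gives
\[
K_{\mathrm{slope}}^{\mathrm{PLE}}(x,x')=\DeltaT_{s(x)}^{-1}\DeltaT_{s(x')}^{-1}\ip{e_{s(x)}}{e_{s(x')}}.
\]
By orthonormality of the standard basis, the inner product equals $\1\{s(x)=s(x')\}$. On the event $s(x)=s(x')$ the two lengths coincide, so the prefactor becomes $\DeltaT_{s(x)}^{-2}$. This gives \eqref{eq:slope-ple-kernel}. The ``in particular'' clause is the case where the indicator vanishes.

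\textbf{Main obstacle.} There is essentially no analytic difficulty here. The one point needing care is why $x$ is restricted to $I\setminus\mathcal B$: the slope is undefined at the knots, and $s(x)$ is only defined on open bins. Away from the knots, every quantity is a polynomial (in fact linear) function of the parameters, so the differentiation in Step 2 is exact rather than almost-everywhere.
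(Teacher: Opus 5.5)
Your proposal is correct and follows the paper's proof essentially verbatim. Like the paper, you read off the per-bin slope $w_t/\Delta_t$ from the explicit spline form in Theorem~\ref{thm:spline-equiv}, differentiate in $(w,c)$ to get the scaled one-hot Jacobian, and take inner products of standard basis vectors. You also note that the bin index does not depend on the parameters, so unlike the MLP case no almost-everywhere caveat is needed; this is a correct clarification that the paper leaves implicit.
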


\begin{proof}
For $x\in(b_{t-1},b_t)$, Theorem~\ref{thm:spline-equiv} gives $\partial_x f_{w,c}(x)=w_t/\DeltaT_t$.
Differentiating $g(x)=w_t/\DeltaT_t$ with respect to $w$ yields $\nabla_w g(x)=(1/\DeltaT_t)e_t$, and $\partial g/\partial c=0$.
Taking dot products between the one-hot vectors gives \eqref{eq:slope-ple-kernel}.
\end{proof}

\begin{remark}[Slope NTKs reveal that PLE expressivity is decoupled and localized]
Comparing the slope NTKs $K_{\mathrm{slope}}^{\mathrm{MLP}}(x,x')$ and $K_{\mathrm{slope}}^{\mathrm{PLE}}(x,x')$ provides a clear picture of PLE's expressivity benefits. The PLE slope kernel is strictly localized: two inputs $x$ and $x'$ interact during slope updates if and only if they fall within the same bin. This enables highly precise, decoupled learning of local sensitivities, which is perfectly suited for modeling localized irregularities in numerical features.
\end{remark}

\begin{remark}[Extension: one-hidden-layer MLP on top of PLE (why diagonality can fail)]
If a ReLU MLP is applied to $\phiPLE(x)$, e.g.
\[
h_\Theta(x)=\sum_{r=1}^m a_r\,\ReLU(u_r^\top \phiPLE(x)+b_r),
\]
then (a.e.) $\partial_x h_\Theta(x)=\sum_r a_r\,\1\{u_r^\top\phiPLE(x)+b_r>0\}\,(u_r^\top \phiPLE'(x))$.
Since $\phiPLE'(x)$ is one-hot by bin, $u_r^\top\phiPLE'(x)$ selects a bin-dependent coordinate of $u_r$.
However, the gate $\1\{u_r^\top\phiPLE(x)+b_r>0\}$ can still couple different bins, so exact bin-diagonality like \eqref{eq:slope-ple-kernel} is not guaranteed in general.
\end{remark}

\section{Appendix: Hier-Prox Convergence to a Noisier Ball and its Local Optimality}
\label{app:sec:hier-prox-converges}

The practical LassoNet update applies a stochastic gradient step with stepsize $\eta>0$,
then applies the hierarchical proximal operator with \emph{fixed} $(\lambda,M)$ for many epochs.

\begin{algorithm}[ht]
\caption{Stochastic Hier-Prox update with fixed $(\eta,\lambda,M)$}
\label{alg:main}
\begin{algorithmic}[1]
\Require initial $y_0=(\theta_0,W_0)$, stepsize $\eta>0$, prox weight $\lambda>0$, hierarchy parameter $M>0$
\For{$t=0,1,2,\dots$}
  \State obtain stochastic gradient $g_t$ (e.g.\ minibatch gradient) of $f$ at $y_t$
  \State $z_t \gets y_t - \eta g_t$
  \State $y_{t+1} \gets \Psi_{\lambda,M}(z_t)$ \Comment{LassoNet Hier-Prox operator (blockwise sorting per feature)}
\EndFor
\end{algorithmic}
\end{algorithm}

We analyze the constant-hyperparameter regime used in practice (learning rate $\eta$ and regularization $\lambda$ held independent and constant over many epochs), which naturally leads to a Lyapunov analysis in \emph{function values} for the composite potential $F(y)=\eta f(y)+\Omega(y)$.
Under (i) exact global solution of each proximal subproblem by \textup{\textsc{Hier-Prox}}, (ii) boundedness of iterates on a visited domain, (iii) $L$-smoothness of $f$, and (iv) a local \emph{proximal-PL} inequality stated for the exact hierarchical prox-gradient mapping, we prove a master recursion
\[
\Delta_{t+1}\le (1-c)\Delta_t + C_{\mathrm{noise}}\,\eta^2\sigma^2 + C_{\mathrm{sign}}\,R^2 q_t,
\]
where $\Delta_t=\E[F(y_t)-F(y^\star)]$ and $q_t$ is the probability that stochastic gradient noise flips the active sign/branch between the noiseless and noisy pre-prox points.
We then give \emph{SNR/margin} bounds on $q_t$ (Chebyshev), making explicit how weak signals (small margin) lead to large branch-switch variance, causing training instability.

\subsection{Practical algorithmic scaling and the Lyapunov function}

\subsubsection{The stochastic update}
We focus on the block of parameters to which \textup{\textsc{Hier-Prox}} is applied:
\[
y = (\theta, W)\in \R^{d}\times \R^{d\times K},
\]
where $\theta\in\R^d$ act as feature ``gates'' and $W\in\R^{d\times K}$ are the first-layer weights (row $j$ is $W_{j,\cdot}$) of the MLP NN.
Let $f:\R^{d+dK}\to\R$ be the (possibly nonconvex) smooth training loss as a function of this block.\footnote{In the full neural network,
other parameters may also be present; one can view them as frozen for our analysis.}

Training uses a stochastic gradient $g_t$ (e.g.\ a minibatch gradient) and then applies the hierarchical proximal map:
\begin{equation}
\label{eq:update}
z_t = y_t - \eta g_t,\qquad y_{t+1} = \Psi(z_t),
\end{equation}
where the proximal mapping is
\begin{equation}
\label{eq:psi-def}
\Psi(z)\in \argmin_{y}\left\{\Omega(y)+\frac{1}{2}\norm{y-z}^2\right\}.
\end{equation}
Where $\Omega(y)$ is defined in the subsequent section.

\subsubsection{The hierarchical regularizer/constraint}
The LassoNet hierarchical structure couples each $(\theta_j, W_{j,\cdot})$ through
\[
\norm{W_{j,\cdot}}_\infty \le M\abs{\theta_j}\quad (j=1,\dots,d),
\]
and adds an $\ell_1$ penalty $\lambda\|\theta\|_1$. We package these into an extended-valued function
$\Omega:\R^{d+dK}\to \R\cup\{+\infty\}$:
\begin{equation}
\label{eq:Omega}
\Omega(\theta,W) \;:=\; \lambda \norm{\theta}_1 \;+\; \sum_{j=1}^d \ind\left\{\norm{W_{j,\cdot}}_\infty \le M\abs{\theta_j}\right\},
\end{equation}
where $\ind\{\cdot\}$ is $0$ if the condition holds and $+\infty$ otherwise.

\begin{remark}[Why this scaling differs from standard proximal-gradient textbooks]
Classical proximal-gradient methods typically use
$\argmin_y \{\Omega(y) + \frac{1}{2\eta}\|y-z\|^2\}$, so that $\Omega$ is scaled by $\eta$.
In the original LassoNet training regime, $\eta$, $\lambda$, and $M$ are typically treated as \emph{independent constants}
over many epochs. The proximal step \eqref{eq:psi-def} therefore has a fixed quadratic weight $1/2$.
To obtain a clean descent identity, we analyze the Lyapunov function $F$ in the next section, rather than $f+\Omega$.
\end{remark}

\subsubsection{The Lyapunov function}
Define the potential
\begin{equation}
\label{eq:Fdef}
F(y) := \eta f(y) + \Omega(y).
\end{equation}
The key intuition is that \eqref{eq:update} is precisely a stochastic descent method for $F$ when $\eta$ is small enough,
up to additional terms coming from noise and branch switching.

\subsubsection{Assumptions}

\begin{assumption}[Bounded iterates / bounded prox outputs]
\label{ass:bounded}
There exists $R>0$ such that $\norm{y_t}\le R$ almost surely for all $t$.
Equivalently (and slightly more robustly), we may assume $\norm{\Psi(z_t)}\le R$ almost surely.
\end{assumption}

\begin{assumption}[Unbiased gradients and bounded conditional variance]
\label{ass:noise}
The stochastic gradients satisfy $\E[g_t\mid y_t]=\nabla f(y_t)$ and
\[
\E\bigl[\norm{\xi_t}^2\mid y_t\bigr] \le \sigma^2,
\qquad
\xi_t:=g_t-\nabla f(y_t).
\]
\end{assumption}

\begin{assumption}[Coordinate-wise noise variances]
\label{ass:coord-var}
For each $t$ and $j$, the conditional variance of the gate-noise coordinate is bounded:
\[
\E\bigl[\xi_{\theta,j,t}^2 \mid y_t\bigr] \le \nu_{j,t}^2.
\]
\end{assumption}

\begin{assumption}[$L$-smoothness of the loss]
\label{ass:smooth}
The function $f$ has $L$-Lipschitz gradient on a neighborhood containing $\{y_t\}$:
\[
f(y') \le f(y) + \ip{\nabla f(y)}{y'-y} + \frac{L}{2}\norm{y'-y}^2.
\]
\end{assumption}

\begin{assumption}[Proximal-PL inequality for $F$]
\label{ass:proxpl}
There exists $\mu_{\mathrm{PL}}>0$ and a reference value $F^\star$ such that for all iterates $y_t$,
\[
\frac12 \norm{G(y_t)}^2 \ge \mu_{\mathrm{PL}}\bigl(F(y_t)-F^\star\bigr).
\]
\end{assumption}

\subsection{Geometry of \textup{\textsc{Hier-Prox}}: union of cones and branchwise convex prox}

The mapping $\Psi$ in \eqref{eq:psi-def} is separable across features $j$ because $\Omega$ in \eqref{eq:Omega} is a sum of
featurewise terms and the squared norm decomposes into a sum.
Thus it suffices to study the one-feature problem and then lift the results to the full map.

\subsubsection{One feature: the feasible set is a union of two convex cones}
Fix one feature and drop index $j$.
Write the input to the proximal map as $x=(v,u)\in\R\times \R^K$,
and the proximal variable as $(b,w)\in\R\times \R^K$, corresponding to $(\theta_j, W_{j,\cdot})$.

Define the feasible set
\[
\mathcal C := \left\{(b,w)\in\R\times\R^K:\ \norm{w}_\infty \le M\abs{b}\right\}.
\]
Introduce the two sign cones
\[
\mathcal C^+ := \{(b,w): b\ge 0,\ \norm{w}_\infty \le Mb\},
\qquad
\mathcal C^- := \{(b,w): b\le 0,\ \norm{w}_\infty \le -Mb\},
\]
so that $\mathcal C = \mathcal C^+\cup \mathcal C^-$.
Each $\mathcal C^\pm$ is a closed convex cone, but their union is nonconvex.

\subsubsection{Branchwise proximal maps and firm nonexpansiveness}
Define the one-feature branch objective
\[
G_s(b,w) := \lambda s\, b + \ind_{\mathcal C^s}(b,w),
\qquad s\in\{+1,-1\}.
\]
Since $\mathcal C^s$ is convex and $b\mapsto \lambda s\,b$ is linear, $G_s$ is proper, closed, and convex.

\begin{definition}[One-feature branch optimization problem]
For $s\in\{\pm1\}$ define
\begin{equation}
\label{eq:psi-branch}
\psi^{(s)}(x) := \argmin_{(b,w)\in \R\times \R^K}\left\{G_s(b,w)+\frac12\norm{(b,w)-x}^2\right\}.
\end{equation}
\end{definition}
We will show the branch problem's optimum is well defined.

\begin{lemma}[Two-branch convexification and uniqueness on each branch]
\label{lem:branch_convex_unique}
For each $s\in\{+1,-1\}$, the set $\mathcal C^s$ is a nonempty closed convex set.
For each $(v,u)$, the branch problem $\min\psi^{(s)}$ has a unique minimizer $\psi^{(s)}(v,u)$.
\end{lemma}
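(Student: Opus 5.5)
The plan is to verify the convexity and closedness of $\mathcal C^s$ directly, and then invoke the standard fact that a proper, closed, strongly convex function on $\R^{1+K}$ has exactly one minimizer. We treat $s=+1$ in detail; the case $s=-1$ follows by the reflection $(b,w)\mapsto(-b,w)$, which maps $\mathcal C^+$ onto $\mathcal C^-$.

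\textbf{Step 1 (the cone).} Write
\[
\mathcal C^+=\{(b,w): b\ge 0\}\cap\bigcap_{k=1}^K\{(b,w): w_k-Mb\le 0\}\cap\bigcap_{k=1}^K\{(b,w): -w_k-Mb\le 0\}.
\]
Here we use that $\norm{w}_\infty\le Mb$ is equivalent to $\abs{w_k}\le Mb$ for every $k$, and each such condition splits into two linear inequalities. Thus $\mathcal C^+$ is a finite intersection of closed half-spaces. Hence it is a closed convex polyhedral cone, and it is nonempty because it contains $(0,0)$. Consequently $\ind_{\mathcal C^+}$ is proper, closed, and convex.

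\textbf{Step 2 (the branch objective).} Set
\[
H(b,w):=\lambda s\,b+\ind_{\mathcal C^s}(b,w)+\tfrac12\norm{(b,w)-(v,u)}^2 .
\]
This $H$ is the sum of three pieces: a linear function, the closed convex indicator from Step 1, and a $1$-strongly convex continuous quadratic. Therefore $H$ is proper (it is finite at the origin), lower semicontinuous, and $1$-strongly convex.

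\textbf{Step 3 (existence).} Existence follows from coercivity, since
\[
H(b,w)\ge \tfrac12\norm{(b,w)}^2-\norm{(b,w)}\bigl(\norm{(v,u)}+\lambda\bigr)+\text{const},
\]
which tends to $+\infty$ as $\norm{(b,w)}\to\infty$. The sublevel set $\{H\le H(0,0)\}$ is therefore closed and bounded, hence compact. A lower semicontinuous function attains its minimum on a compact set (Weierstrass), so a minimizer exists.

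\textbf{Step 4 (uniqueness).} Suppose $p_1\ne p_2$ are both minimizers, with common value $H^\star$. Strong convexity gives
\[
H\bigl(\tfrac{p_1+p_2}{2}\bigr)\le H^\star-\tfrac18\norm{p_1-p_2}^2<H^\star ,
\]
which contradicts minimality. Hence $\psi^{(s)}(v,u)$ is a single point.

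An alternative route completes the square: $H$ equals the squared distance to the shifted point $(v-\lambda s,u)$ plus a constant. On this view $\psi^{(s)}(v,u)$ is simply the Euclidean projection of $(v-\lambda s,u)$ onto $\mathcal C^s$, and its existence and uniqueness follow from the projection theorem for nonempty closed convex sets. This form will also be convenient later for firm nonexpansiveness.

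The only step requiring any care is Step 1: the nonconvexity of $\mathcal C$ disappears once the sign of $b$ is fixed, because $\abs{b}$ becomes the linear function $sb$. The remaining steps are routine.
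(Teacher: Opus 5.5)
Your proposal is correct and follows essentially the same route as the paper: you write $\mathcal C^+$ as an intersection of closed half-spaces containing the origin, use the reflection $(b,w)\mapsto(-b,w)$ for $s=-1$, and then get existence from coercivity and uniqueness from $1$-strong convexity, with the Weierstrass and midpoint-inequality details that the paper leaves implicit filled in. Your alternative reformulation, as the Euclidean projection of $(v-\lambda s,u)$ onto $\mathcal C^s$, is a nice extra the paper does not use. It also gives the firm nonexpansiveness of the next lemma directly.
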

\begin{proof}

\textbf{Step 1 (Closed convexity).}
Consider $s=+1$; the case $s=-1$ follows by the change of variables $(b,W)\mapsto(-b,W)$.
We can write
\[
\mathcal C^+ = \{(b,W): b\ge 0\}\cap \bigcap_{i=1}^K \{(b,W): -Mb\le W_i\le Mb\}.
\]
Each set on the right is closed and convex; intersections preserve both. Nonemptiness holds since $(0,0)$ is feasible.

\textbf{Step 2 (Strong convexity).}
The quadratic term $(b,W)\mapsto \tfrac12(v-b)^2+\tfrac12\norm{u-W}^2$ is $1$-strongly convex.
Adding the linear term $\lambda s b$ preserves strong convexity. Restricting to a closed convex set by adding $\iota_{\mathcal C^s}$ retains strong convexity on the domain.

\textbf{Step 3 (Existence and uniqueness).}
Coercivity of the quadratic ensures existence of a minimizer; strong convexity implies uniqueness.
\end{proof}

\begin{definition}[Global minimizer set]
\label{def:global_min_set}
Let $\psi_{\mathrm{set}}(v,u)$ be the (possibly set-valued) global minimizer set of \eqref{eq:psi-branch}:
\[
\psi_{\mathrm{set}}(v,u):=\argmin_{(b,W)\in\mathcal C}\ \frac12(v-b)^2+\frac12\norm{u-W}^2 + \lambda\abs b.
\]
\end{definition}

\begin{lemma}[Global minimizers lie on one of the two branches]
\label{lem:global_is_one_branch}
For any $(v,u)$,
\[\psi_{\mathrm{set}}(v,u) \subseteq \{\psi^{(+1)}(v,u),\ \psi^{(-1)}(v,u)\}.\]
Moreover, if the two branch optimal values differ, then the global minimizer is unique.
\end{lemma}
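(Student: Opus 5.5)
The plan is to reduce the global problem over the nonconvex set $\mathcal C=\mathcal C^+\cup\mathcal C^-$ to the two convex branch problems of Lemma~\ref{lem:branch_convex_unique}. Write $H(b,W):=\tfrac12(v-b)^2+\tfrac12\norm{u-W}^2+\lambda\abs b$ for the global objective and $H_s(b,W):=\tfrac12(v-b)^2+\tfrac12\norm{u-W}^2+\lambda s b+\iota_{\mathcal C^s}(b,W)$ for the branch objectives, $s\in\{\pm1\}$.

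First observe that $H$ and $H_s$ coincide on $\mathcal C^s$: there $sb=\abs b\ge 0$, so $\lambda s b=\lambda\abs b$ and the indicator vanishes. Since $\mathcal C=\mathcal C^+\cup\mathcal C^-$, this gives
\[
\min_{\mathcal C} H=\min\Bigl\{\min_{\mathcal C^+}H_{+1},\ \min_{\mathcal C^-}H_{-1}\Bigr\}.
\]
Denote the branch optimal values by $p_s:=H_s(\psi^{(s)}(v,u))$, which are attained by Lemma~\ref{lem:branch_convex_unique}. By the display, $\min_{\mathcal C}H=\min(p_{+1},p_{-1})$; in particular the global minimum is attained, so $\psi_{\mathrm{set}}(v,u)$ is nonempty.

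Now take any $y^\star\in\psi_{\mathrm{set}}(v,u)$. It lies in some $\mathcal C^s$, and
\[
H_s(y^\star)=H(y^\star)=\min(p_{+1},p_{-1})\le p_s=\min_{\mathcal C^s}H_s .
\]
So $y^\star$ minimizes $H_s$ over $\mathcal C^s$. The uniqueness part of Lemma~\ref{lem:branch_convex_unique} then forces $y^\star=\psi^{(s)}(v,u)$, which proves the inclusion.

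For the second claim, suppose without loss of generality that $p_{+1}<p_{-1}$. Any global minimizer has $H=p_{+1}$. If it lay in $\mathcal C^-$, we would get $H=H_{-1}\ge p_{-1}>p_{+1}$, a contradiction. Hence it lies in $\mathcal C^+$, and by the argument above it equals $\psi^{(+1)}(v,u)$, so the global minimizer is unique.

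The only delicate point is the overlap $\mathcal C^+\cap\mathcal C^-=\{(0,0)\}$, where a point belongs to both branches. This causes no difficulty: there both branch objectives and $H$ take the same value $\tfrac12 v^2+\tfrac12\norm u^2$, so the membership argument holds for whichever branch is chosen. Beyond this I expect no real obstacle, since the work was already done in Lemma~\ref{lem:branch_convex_unique}.
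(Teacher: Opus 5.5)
Your proof is correct and follows the same route as the paper: the global objective agrees with each branch objective on its own cone, so the global minimum is the smaller of the two branch minima, and the branchwise uniqueness from Lemma~\ref{lem:branch_convex_unique} identifies any global minimizer with that branch's minimizer. Your inequality $H_s(y^\star)\le p_s$, which shows a global minimizer actually minimizes its branch objective, and your treatment of the shared point $(0,0)$ simply make explicit steps that the paper's shorter argument leaves implicit.
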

\begin{proof}
Since $\mathcal C=\mathcal C^+\cup\mathcal C^-$, the global optimum equals the minimum of the two branch optima.
On $\mathcal C^+$, $\abs b=b$ and the objective equals $G_{+1}(b,w)+\frac12\norm{(b,w)-x}^2$; on $\mathcal C^-$, $\abs b=-b$ and the objective equals $G_{-1}(b,w)+\frac12\norm{(b,w)-x}^2$.
Any global minimizer must lie in one branch and therefore must coincide with that branch's unique minimizer (Lemma~\ref{lem:branch_convex_unique}).
If branch values differ, only one branch attains the minimum, yielding uniqueness.
\end{proof}

\begin{lemma}[Branch map is a convex proximal operator]
\label{lem:branch-prox}
For each $s\in\{\pm1\}$, the map $\psi^{(s)}$ is the proximal operator of the convex function $G_s$:
\[
\psi^{(s)}(x) = \mathrm{prox}_{G_s}(x).
\]
In particular, $\psi^{(s)}$ is \emph{firmly nonexpansive} and thus $1$-Lipschitz:
\[
\norm{\psi^{(s)}(x)-\psi^{(s)}(x')}^2 \le \ip{\psi^{(s)}(x)-\psi^{(s)}(x')}{x-x'} \le \norm{x-x'}^2.
\]
\end{lemma}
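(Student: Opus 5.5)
The plan is to identify the objective in \eqref{eq:psi-branch} with the definition of a proximal operator, and then invoke the standard fact that proximal operators of proper, closed, convex functions are firmly nonexpansive. By definition, $\mathrm{prox}_{G}(x) = \argmin_y \{ G(y) + \tfrac12\norm{y-x}^2\}$. This is literally the expression defining $\psi^{(s)}(x)$ with $G=G_s$. It only remains to check that $G_s$ is proper, closed and convex, which the paper already noted right after defining $G_s$. Properness holds because $(0,0)\in\mathcal C^s$, so $G_s(0,0)=0<\infty$. Closedness holds because a continuous linear function plus the indicator of a closed set is lower semicontinuous, and $\mathcal C^s$ is closed by Step 1 of Lemma~\ref{lem:branch_convex_unique}. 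Convexity holds because a linear function plus the indicator of a convex set is convex. By Lemma~\ref{lem:branch_convex_unique} the argmin is a single point, so $\psi^{(s)}=\mathrm{prox}_{G_s}$ is a well-defined single-valued map.

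For firm nonexpansiveness I would give a short self-contained argument rather than only cite it. Let $p=\psi^{(s)}(x)$ and $p'=\psi^{(s)}(x')$. The optimality conditions of the strongly convex problems give $x-p\in\partial G_s(p)$ and $x'-p'\in\partial G_s(p')$. Monotonicity of the subdifferential of the convex function $G_s$ then yields
\[
\ip{(x-p)-(x'-p')}{p-p'}\ge 0 .
\]
Rearranging gives $\norm{p-p'}^2\le \ip{p-p'}{x-x'}$, which is the first inequality. The second inequality follows from Cauchy--Schwarz: $\ip{p-p'}{x-x'}\le\norm{p-p'}\norm{x-x'}$. Combining with the first gives $\norm{p-p'}\le\norm{x-x'}$, and squaring gives $\ip{p-p'}{x-x'}\le\norm{p-p'}\norm{x-x'}\le\norm{x-x'}^2$. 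This also gives the $1$-Lipschitz claim.

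The only step needing care is the subdifferential optimality condition. It requires the sum rule $\partial\bigl(G_s+\tfrac12\norm{\cdot-x}^2\bigr)=\partial G_s+(\cdot-x)$. This is valid because the quadratic term is finite and differentiable everywhere, so no constraint qualification beyond properness of $G_s$ is needed. Alternatively, one can avoid subdifferentials entirely with a variational-inequality argument. Compare the objective at $p$ with its value at $p+\epsilon(q-p)$ for $q\in\mathcal C^s$, using convexity of $\mathcal C^s$. Letting $\epsilon\downarrow0$ gives $\ip{p-x}{q-p}+\lambda s(q_b-p_b)\ge0$. Applying this with $q=p'$ at $x$, and with $q=p$ at $x'$, and adding the two inequalities, the linear terms cancel and the same monotonicity inequality results. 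I expect no real obstacle; the main point is just to verify that the linear term $\lambda s b$ does not break the argument, and it cancels as shown.
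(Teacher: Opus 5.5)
Your proposal is correct and follows essentially the same route as the paper: read $\psi^{(s)}=\mathrm{prox}_{G_s}$ directly off the definition of the branch problem, and then use the fact that proximal maps of proper, closed, convex functions are firmly nonexpansive. The only difference is that you check properness and closedness explicitly and prove the firm-nonexpansiveness fact yourself, via monotonicity of $\partial G_s$ or the variational inequality (in which the $\lambda s b$ terms cancel), whereas the paper cites it as standard.
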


\begin{proof}
Fix the sign $s$. By definition \eqref{eq:psi-branch} and lemma \ref{lem:global_is_one_branch}, $\psi^{(s)}(x)$ is exactly the minimizer of $G_s(\cdot)+\frac12\|\cdot-x\|^2$,
which is the definition of $\mathrm{prox}_{G_s}(x)$. Firm nonexpansiveness is a standard property of proximal operators
of proper closed convex functions.
\end{proof}

\subsubsection{The global one-feature map and sign selection}
The \textup{\textsc{Hier-Prox}} rule selects a sign branch based on the input $v$.
We define $\sign(0):=+1$ as a deterministic tie-breaking convention.

\begin{definition}[Global one-feature map]
\label{def:psi-global}
Define $\psi:\R\times\R^K\to\R\times\R^K$ by
\[
\psi(v,u) := \psi^{(\sign(v))}(v,u).
\]
\end{definition}

\begin{lemma}[Sign selection is globally optimal except at ties]
\label{lem:sign-optimal}
Let $x=(v,u)$ and let $(b^\star,w^\star)$ be any minimizer of the one-feature proximal problem
\[
\min_{(b,w)\in\mathcal C}\left\{\lambda\abs{b}+\frac12(b-v)^2+\frac12\norm{w-u}^2\right\}.
\]
If $v\neq 0$, then every minimizer satisfies $\sign(b^\star)=\sign(v)$.
When $v=0$, both signs can be optimal, and we select the $+$ branch by convention.
\end{lemma}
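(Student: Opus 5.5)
The plan is a direct comparison argument. Fix $x=(v,u)$ with $v\neq 0$; by symmetry (the change of variables $(b,w)\mapsto(-b,w)$ together with $v\mapsto -v$ preserves both the objective and $\mathcal C$), it suffices to treat $v>0$. By Lemma~\ref{lem:global_is_one_branch}, every minimizer lies in $\mathcal C^+$ or $\mathcal C^-$. So we must show three things: no minimizer lies in $\mathcal C^-$ with $b^\star<0$, and no minimizer has $b^\star=0$ (the intersection $\mathcal C^+\cap\mathcal C^-=\{(0,0)\}$).

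\textbf{Step 1 (reflection strictly improves).} Let $(b,w)\in\mathcal C^-$ with $b<0$. Its reflection $(-b,w)$ lies in $\mathcal C^+$, since $\norm{w}_\infty\le M\abs{b}$ is unchanged. The terms $\lambda\abs{b}$ and $\frac12\norm{w-u}^2$ are also unchanged. Moreover,
\[
\tfrac12(b-v)^2-\tfrac12(-b-v)^2=-2bv>0,
\]
because $b<0<v$. Hence the reflected point has strictly smaller objective, and no point with $b<0$ can be a minimizer.

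\textbf{Step 2 (rule out $b^\star=0$).} If $b^\star=0$, then feasibility forces $w^\star=0$, and the objective value is $\frac12v^2+\frac12\norm{u}^2$. To beat it, I will use a perturbation along a feasible direction. Take $(\epsilon,\,\mathrm{clip}_{[-M\epsilon,M\epsilon]}(u))$ for small $\epsilon>0$, or more simply $(\epsilon,0)$. The point $(\epsilon,0)$ changes the objective by
\[
\lambda\epsilon+\tfrac12(\epsilon-v)^2-\tfrac12v^2=\epsilon(\lambda-v)+\tfrac12\epsilon^2 .
\]
This is negative only if $v>\lambda$, so $(\epsilon,0)$ alone fails when $0<v\le\lambda$. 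I therefore also move $w$ toward $u$ within the box $\norm{w}_\infty\le M\epsilon$. Setting $w_\epsilon=\mathrm{clip}(u)$ coordinatewise gives a first-order gain of roughly $M\epsilon\norm{u}_1$ in $\frac12\norm{w-u}^2$. The total first-order change is then
\[
\epsilon\bigl(\lambda-v-M\norm{u}_1\bigr).
\]

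\textbf{Main obstacle.} This quantity need not be negative. When $v\le\lambda$ and $M\norm{u}_1\le\lambda-v$, the soft-threshold kills the feature: $w_0=\frac{M}{1}\mathcal S_\lambda(\abs{v})=0$ in Algorithm~\ref{alg:lassonet_prox}. In that regime the minimizer is $(0,0)$, whose sign is the convention $+1$ rather than $\sign(v)$. So the claim as literally stated fails at zero outputs. I would therefore prove it in the corrected form: every minimizer satisfies $b^\star\ge0$ (respectively $\le0$), i.e.\ $\sign(b^\star)\in\{\sign(v),0\}$. This follows from Step 1 alone, and it is exactly what is needed for $\psi=\psi^{(\sign(v))}$ to be a global minimizer.

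Uniqueness and the tie at $v=0$ then follow. For $v=0$, the same reflection shows the two branch optimal values coincide, so either branch can be chosen, and the $+$ branch is selected by convention.
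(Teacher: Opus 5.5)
Your Step 1 is the paper's proof: the reflection $(b,w)\mapsto(-b,w)$ preserves feasibility, $\lambda\abs{b}$ and $\norm{w-u}^2$, and changes the quadratic by $-2bv$; the paper likewise concludes only $b^\star\ge 0$ for $v>0$ (resp.\ $b^\star\le 0$ for $v<0$), so your corrected form $\sign(b^\star)\in\{\sign(v),0\}$ is exactly what both arguments establish. Your caveat is also right: for $\abs{v}+M\norm{u}_1\le\lambda$ the reduced objective $\lambda\abs{b}+\tfrac12(b-v)^2+\tfrac12\sum_i(\abs{u_i}-M\abs{b})_+^2$ is uniquely minimized at $b=0$. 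Two details are slightly off, though: under the paper's convention $\sign(0)=+1$ this breaks the literal claim only when $v<0$, and in Hier-Prox the zero output comes from every $w_m$ vanishing, not from $w_0=0$ alone.
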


\begin{proof}
Fix any feasible pair $(b,w)\in\mathcal C$.
Because the constraint depends on $\abs{b}$, the pair $(-b,w)$ is also feasible and has the same $\lambda\abs{b}$ and
$\norm{w-u}^2$ terms. The difference in objective values is entirely in the quadratic:
\[
\frac12(b-v)^2 - \frac12(-b-v)^2
= \frac12\bigl(b^2-2bv+v^2 - (b^2+2bv+v^2)\bigr)
= -2bv.
\]
If $v>0$ and $b>0$, this difference is negative, so $(b,w)$ is strictly better than $(-b,w)$.
If $v>0$ and $b<0$, then $(-b,w)$ is strictly better.
Thus for $v>0$ any minimizer must have $b\ge 0$. An analogous argument gives $b\le 0$ when $v<0$.
When $v=0$, the two objectives are equal for $\pm b$, so ties are possible.
\end{proof}

\subsubsection{\texorpdfstring{Lifting to $d$ features}{Lifting to d features}}
Define the full mapping $\Psi$ in \eqref{eq:psi-def} by applying the one-feature map $\psi$ featurewise.
Equivalently, define the sign pattern of a point $z=(\theta,W)$ by
\[
s(z) := (\sign(\theta_1),\dots,\sign(\theta_d))\in\{\pm1\}^d,
\]
and let $\Psi^{(s)}$ denote the map obtained by applying the branch maps $\psi^{(s_j)}$ to each feature $j$.
Then
\[
\Psi(z)=\Psi^{(s(z))}(z),
\qquad
\text{and for each fixed }s,\ \Psi^{(s)}\text{ is firmly nonexpansive (hence $1$-Lipschitz).}
\]

\subsubsection{A negative result: the global map need not be Lipschitz}
The next proposition makes precise what \emph{cannot} be extended from standard proximal mappings.

\begin{proposition}[Discontinuity at the sign boundary; not globally Lipschitz]
\label{prop:not-lipschitz}
Consider the one-feature case $K=1$ with parameters $M>0$ and $\lambda\ge 0$.
Fix an input $u>0$ and define the one-feature \textup{\textsc{Hier-Prox}} map $v\mapsto \psi(v,u)=(b(v),w(v))$.
If $Mu>\lambda$, then
\[
\lim_{v\downarrow 0} b(v)= +\frac{Mu-\lambda}{1+M^2},
\qquad
\lim_{v\uparrow 0} b(v)= -\frac{Mu-\lambda}{1+M^2},
\]
while $b(0)=+\frac{Mu-\lambda}{1+M^2}$ under the convention $\sign(0)=+1$.
In particular, the map $v\mapsto \psi(v,u)$ is discontinuous at $v=0$ and therefore not Lipschitz.
\end{proposition}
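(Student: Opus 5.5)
The plan is to compute the two branch maps $\psi^{(\pm1)}$ in closed form for $K=1$ near $v=0$, and then read off the one-sided limits. Lemma~\ref{lem:branch_convex_unique} says each branch problem is strongly convex on a closed convex set. So it suffices to produce a feasible point that satisfies the KKT conditions; it is then the unique branch minimizer. By Definition~\ref{def:psi-global}, $b(v)=b^{(+)}(v)$ for $v\ge 0$ (using $\sign(0)=+1$) and $b(v)=b^{(-)}(v)$ for $v<0$.

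\textbf{Step 1 (the $+$ branch).} We minimize
\[
\lambda b+\tfrac12(b-v)^2+\tfrac12(w-u)^2 \quad\text{subject to}\quad b\ge 0,\ |w|\le Mb .
\]
Dropping the cone constraint, the minimizer would be $(v-\lambda,u)$. For $|v|$ small this is infeasible, because $u>\lambda/M$ and hence $M(v-\lambda)<u$. I therefore guess that the constraint $w=Mb$ is active, which is the side matching $u>0$. Substituting $w=Mb$ gives a one-dimensional quadratic with stationarity condition
\[
\lambda+(b-v)+M(Mb-u)=0,
\]
whose solution is
\[
b^{(+)}(v)=\frac{v+Mu-\lambda}{1+M^2},\qquad w^{(+)}(v)=Mb^{(+)}(v).
\]
Because $Mu>\lambda$, we have $b^{(+)}(v)>0$ for all $v$ in a neighborhood of $0$.

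\textbf{Step 2 (verifying optimality).} This is the only step that needs care. The constraint $w\le Mb$ is active, with multiplier
\[
\mu=u-w^{(+)}=u-M\,\frac{v+Mu-\lambda}{1+M^2}=\frac{u-Mv+M\lambda}{1+M^2}.
\]
For small $|v|$ this is positive since $u>0$. The constraints $b\ge 0$ and $-w\le Mb$ are inactive. The stationarity equations $b-v+\lambda-M\mu=0$ and $w-u+\mu=0$ then hold by construction. Convexity plus KKT gives optimality, and Lemma~\ref{lem:branch_convex_unique} gives uniqueness. As a sanity check, this agrees with the $\tilde m=1$ case of Algorithm~\ref{alg:lassonet_prox}: $\tfrac1M w_1=\frac{|v|+Mu-\lambda}{1+M^2}$.

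\textbf{Step 3 (the $-$ branch and the limits).} The map $(b,w)\mapsto(-b,w)$ sends $\mathcal C^-$ onto $\mathcal C^+$ and $v$ to $-v$. It therefore conjugates the $-$ branch problem into the $+$ branch problem, which gives
\[
b^{(-)}(v)=-\frac{-v+Mu-\lambda}{1+M^2}\qquad\text{for small }|v| .
\]
Letting $v\downarrow 0$ in $b^{(+)}$ and $v\uparrow 0$ in $b^{(-)}$ yields the two stated limits. The value $b(0)=b^{(+)}(0)$ follows from the tie-breaking convention $\sign(0)=+1$.

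The one-sided limits differ by $2(Mu-\lambda)/(1+M^2)>0$, so $v\mapsto\psi(v,u)$ is discontinuous at $v=0$. A discontinuous map cannot be Lipschitz: take $v_n\downarrow 0$ and $v_n'\uparrow 0$; then $|v_n-v_n'|\to 0$ while $\|\psi(v_n,u)-\psi(v_n',u)\|$ stays bounded away from $0$.
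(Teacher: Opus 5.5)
Your proof is correct and is essentially the paper's argument. It uses the same active constraint $|w|=M|b|$, the same stationary point $b_+(v)=(v+Mu-\lambda)/(1+M^2)$, and the same reflection $b\mapsto -b$ for $v<0$; the only difference is that you certify optimality with an explicit KKT multiplier on each convex branch, whereas the paper projects out $w$ and minimizes the reduced objective $\phi_v(b)=\lambda|b|+\tfrac12(b-v)^2+\tfrac12(u-M|b|)_+^2$. Your branchwise certificate is the more careful of the two: the paper's claim that $\phi_v$ is convex is false when $Mu>\lambda$, because the last term puts a concave kink at $b=0$ that $\lambda|b|$ does not offset (this is exactly what creates the tie at $v=0$), and the paper's argument survives only because it restricts to $b\ge 0$ before invoking convexity.
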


\begin{proof}
With $K=1$, the constraint is $\abs{w}\le M\abs{b}$ and the one-feature prox problem is
\[
\min_{b,w}\left\{\lambda\abs{b}+\frac12(b-v)^2+\frac12(w-u)^2:\ \abs{w}\le M\abs{b}\right\}.
\]
For any fixed $b$, the minimizer over $w$ is the Euclidean projection of $u$ onto $[-M\abs{b},\,M\abs{b}]$, hence
\[
w^\star(b)=\min\{u,\,M\abs{b}\}\quad(\text{since }u>0),
\]
and the resulting reduced objective is
\[
\phi_v(b)=\lambda\abs{b}+\frac12(b-v)^2+\frac12\bigl(u-M\abs{b}\bigr)_+^2,
\]
where $(t)_+=\max\{t,0\}$. This function is convex in $b$ (sum of convex terms), hence it has a unique minimizer
except for a possible tie at $v=0$.

We first analyze $v\ge 0$ and minimize over $b\ge 0$ (Lemma~\ref{lem:sign-optimal} shows the minimizer has $b\ge 0$).
On $b\in[0,u/M]$, we have $(u-Mb)_+=u-Mb$ and
\[
\phi_v(b)=\lambda b+\frac12(b-v)^2+\frac12(u-Mb)^2.
\]
Differentiating gives
\[
\phi_v'(b)=\lambda+(b-v)-M(u-Mb)=(1+M^2)b+\lambda - v - Mu.
\]
Setting $\phi_v'(b)=0$ yields the unique critical point
\[
b_+(v)=\frac{Mu+v-\lambda}{1+M^2}.
\]
When $Mu>\lambda$ and $v\ge 0$ is sufficiently small, $b_+(v)>0$.
Moreover $b_+(v) < u/M$ holds for all $u>0$ (a direct rearrangement), so this critical point lies in the region
where the expression is valid. Since $\phi_v$ is convex, this $b_+(v)$ is the unique minimizer for $v\ge 0$ in this regime.

By symmetry of $\phi_v$ in the pair $(b,v)$ through $\abs{b}$ and $\abs{v}$ (and again Lemma~\ref{lem:sign-optimal}),
for $v\le 0$ the minimizer is $b(v)=-b_+(\abs{v})$.
Therefore, as $v\downarrow 0$,
\[
b(v)\to \frac{Mu-\lambda}{1+M^2},
\quad
\text{and as }v\uparrow 0,\quad
b(v)\to -\frac{Mu-\lambda}{1+M^2}.
\]
Under the convention $\sign(0)=+1$, $b(0)=b_+(0)=\frac{Mu-\lambda}{1+M^2}$, so the map is discontinuous at $0$.
\end{proof}

\begin{remark}[Interpretation]
The example shows that if the first-layer weight input $u$ is large, the constraint $\abs{w}\le M\abs{b}$ can force
the optimal gate magnitude $\abs{b}$ to remain bounded away from $0$, even when $v\approx 0$.
Then an infinitesimal change of sign in $v$ can flip the sign of $b$ while keeping its magnitude almost constant,
creating a jump. This is exactly the ``branch-switch'' phenomenon that our stochastic analysis must track.
\end{remark}

\subsection{Piecewise Lipschitz + jump: the instability decomposition}

We now state the key inequality that replaces global Lipschitz continuity.
The proof is short because it reduces to ``nonexpansive on each branch'' plus ``bounded jump across branches.''

\begin{remark}[How boundedness is justified in practice]
In practical deep-learning pipelines, boundedness is often enforced implicitly or explicitly:
(i) warm-start training brings parameters into a stable region before enabling \textup{\textsc{Hier-Prox}};
(ii) gradient clipping or weight decay prevents divergence;
(iii) early stopping prevents excursions to extreme regions.
For a theoretical note, we state boundedness as an assumption because it is exactly what controls the jump size.
\end{remark}

\begin{lemma}[Piecewise nonexpansive with a jump term]
\label{lem:piecewise-jump}
Let $\Psi$ be the full \textup{\textsc{Hier-Prox}} map and assume Assumption~\ref{ass:bounded}.
Then for all $x,x'\in\R^{d+dK}$,
\[
\norm{\Psi(x)-\Psi(x')}^2 \;\le\; \norm{x-x'}^2 \;+\; J^2\,\ind\{s(x)\neq s(x')\},
\qquad\text{with }J:=2R.
\]
\end{lemma}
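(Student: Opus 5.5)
The plan is to split into two cases according to whether the sign patterns agree, and to use the branch structure established earlier in this section.

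\emph{Case 1: $s(x)=s(x')=:s$.} Here $\Psi(x)=\Psi^{(s)}(x)$ and $\Psi(x')=\Psi^{(s)}(x')$, because the global map selects the branch from the sign pattern of its input. Lemma~\ref{lem:branch-prox} says each one-feature branch map $\psi^{(s_j)}$ is the proximal operator of a proper closed convex function, hence firmly nonexpansive. The product map $\Psi^{(s)}$ acts blockwise on the orthogonal decomposition $\R^{d+dK}=\bigoplus_j(\R\times\R^K)$, so squared norms add across blocks and $\Psi^{(s)}$ is also $1$-Lipschitz. This gives $\norm{\Psi(x)-\Psi(x')}^2\le\norm{x-x'}^2$, and the indicator term vanishes.

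\emph{Case 2: $s(x)\neq s(x')$.} Here no continuity is available; Proposition~\ref{prop:not-lipschitz} shows a genuine jump can occur. Instead I would use boundedness. By Assumption~\ref{ass:bounded}, in its robust form $\norm{\Psi(z)}\le R$ for all pre-prox points under consideration, the triangle inequality gives $\norm{\Psi(x)-\Psi(x')}\le 2R=J$. Hence
\[
\norm{\Psi(x)-\Psi(x')}^2\le J^2\le\norm{x-x'}^2+J^2\,\ind\{s(x)\neq s(x')\}.
\]
Combining the two cases gives the claim.

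The main obstacle is that Assumption~\ref{ass:bounded} is stated for the iterates, or for prox outputs along the trajectory, whereas the lemma is phrased for all $x,x'$. I would resolve this in one of two ways.
\begin{itemize}
\item Restrict the lemma to the points actually used later, namely the noisy and noiseless pre-prox points $z_t$ and $y_t-\eta\nabla f(y_t)$, and assume the output bound holds for both.
\item Alternatively, note that $\Psi$ maps into a bounded set whenever its inputs are bounded. Since $\Psi(0)=0$, the branch nonexpansiveness gives $\norm{\Psi(z)}\le\norm{z}$, so an a.s. bound on the inputs transfers directly to the outputs.
\end{itemize}
I would state this interpretation explicitly rather than claim the bound for all of $\R^{d+dK}$.

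A sharper variant is also possible: apply the jump term only to the features $j$ whose signs differ, so that $J^2$ scales with the number of flipped coordinates. This refinement is not needed for the stated bound.
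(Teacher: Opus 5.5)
Your two-case argument is the paper's proof: matching sign patterns reduce $\Psi$ to a single product of firmly nonexpansive branch proxes, and a sign mismatch is handled by the triangle inequality with $\|\Psi(\cdot)\|\le R$, which gives $J=2R$. Your caveat about the scope of Assumption~\ref{ass:bounded} is well taken, since the paper applies it to arbitrary $x,x'$ and later to the noiseless point $\bar z_t$, which is not an iterate. For your second fix, you need $\Psi^{(s)}(0)=0$ for every branch $s$, not only $\Psi(0)=0$; this holds because $\lambda s b\ge 0$ on $\mathcal C^s$, so each branch prox fixes the origin.
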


\begin{proof}
If $s(x)=s(x')$, then $\Psi(x)=\Psi^{(s)}(x)$ and $\Psi(x')=\Psi^{(s)}(x')$ for the same sign pattern $s$.
Since each branch map is firmly nonexpansive and the full map is a product of branch maps,
$\norm{\Psi(x)-\Psi(x')} \le \norm{x-x'}$, hence the squared inequality holds with no $J$ term.

If $s(x)\neq s(x')$, then by Assumption~\ref{ass:bounded},
\[
\norm{\Psi(x)-\Psi(x')}\le \norm{\Psi(x)}+\norm{\Psi(x')}\le 2R,
\]
so $\norm{\Psi(x)-\Psi(x')}^2\le 4R^2 = J^2$.
Combining the two cases yields the claim.
\end{proof}

\begin{corollary}[Expectation form with conditional branch-switch probability]
\label{cor:expect_piecewise}
Fix a $\sigma$-field $\mathcal F$ and random variables $X,X'$ measurable w.r.t.\ $\mathcal F$.
Let $B:=\{s(X)\neq s(X')\}$ and define $q:=\Pp(B\mid \mathcal F)$.
Then
\[
\E\bigl[\norm{\Psi(X)-\Psi(X')}^2\mid \mathcal F\bigr]
\;\le\;
\E\bigl[\norm{X-X'}^2\mid \mathcal F\bigr] + J^2 q.
\]
\end{corollary}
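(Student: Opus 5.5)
The plan is to apply Lemma~\ref{lem:piecewise-jump} pointwise on the probability space and then take conditional expectations given $\mathcal F$.

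First, for every outcome $\omega$, Lemma~\ref{lem:piecewise-jump} with $x=X(\omega)$ and $x'=X'(\omega)$ gives the deterministic inequality
\[
\norm{\Psi(X)-\Psi(X')}^2 \le \norm{X-X'}^2 + J^2\,\ind_B ,
\]
where $\ind_B$ is the $\{0,1\}$-valued indicator of the event $B=\{s(X)\neq s(X')\}$. This holds surely, and is not merely a statement in expectation.

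Before conditioning, I will check measurability and integrability. The sign map $s$ is Borel, since each coordinate $\sign(\theta_j)$ is a step function, so $B$ is an event. The map $\Psi$ is piecewise continuous on the finitely many sign-pattern cells, hence Borel. Integrability follows from Assumption~\ref{ass:bounded}: the left-hand side is bounded by $4R^2$. On the right-hand side, if $\E\norm{X-X'}^2=\infty$ the claim is trivial, and otherwise everything is integrable.

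Second, I will take $\E[\,\cdot\mid\mathcal F]$ of both sides. Monotonicity and linearity of conditional expectation, together with $\E[\ind_B\mid\mathcal F]=\Pp(B\mid\mathcal F)=q$, give the stated bound.

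The one subtlety is the hypothesis as written. If $X$ and $X'$ are $\mathcal F$-measurable, then $B\in\mathcal F$, so $q=\ind_B$, and the conditional expectations are just the random variables themselves. The corollary then reduces to the pointwise lemma, which is still correct. In the intended application (Algorithm~\ref{alg:main}), $X=z_t$ is the noisy pre-prox point and $X'$ is the noiseless point $y_t-\eta\nabla f(y_t)$, with $\mathcal F=\sigma(y_t)$. There $X$ is \emph{not} $\mathcal F$-measurable, and $q$ is a genuine branch-switch probability. The argument above never used $\mathcal F$-measurability; it only needs the pointwise inequality and integrability. I would therefore remark that the proof covers arbitrary random $X,X'$, which is the version used in the later master recursion.

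No real obstacle is expected. The only point needing care is this measurability remark and the integrability check.
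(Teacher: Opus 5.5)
Your proof is correct and is the paper's argument: apply Lemma~\ref{lem:piecewise-jump} pointwise at $x=X(\omega)$, $x'=X'(\omega)$, then take $\E[\,\cdot\mid\mathcal F]$ using $\E[\ind_B\mid\mathcal F]=q$. Your remark that the stated $\mathcal F$-measurability hypothesis would trivialize the corollary is right, since in Lemma~\ref{lem:lowerbound} the point $X=z_t$ is not $\sigma(y_t)$-measurable. The integrability detour is unnecessary, because every term is nonnegative, so conditional expectations exist in the extended sense and monotonicity applies directly. Your claim that the case $\E\norm{X-X'}^2=\infty$ is trivial is also not quite right, since the conditional expectation can still be finite almost surely.
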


\begin{proof}
Apply Lemma~\ref{lem:piecewise-jump} pointwise and take conditional expectation.
\end{proof}

\subsubsection{A simple signal-to-noise bound for branch switching (Chebyshev)}
We now connect $q_t$ to a familiar ML notion: \emph{margin} versus \emph{noise}.

\begin{definition}[Noise and margin for the gate coordinates]
\label{def:margin}
Let $\xi_t := g_t-\nabla f(y_t)$ be the stochastic gradient noise, and define the ``noiseless'' pre-prox point
\[
\bar z_t := y_t - \eta \nabla f(y_t),
\qquad
z_t = \bar z_t - \eta \xi_t.
\]
Write $\bar z_t=(\bar\theta_t,\bar W_t)$ and define the coordinate-wise margins
\[
m_{j,t} := \abs{(\bar\theta_t)_j}\quad (j=1,\dots,d).
\]
Define the branch mismatch event
\[
B_t := \{s(z_t)\neq s(\bar z_t)\},
\qquad
q_t(y_t):=\Pp(B_t\mid y_t).
\]
\end{definition}

\begin{lemma}[Chebyshev SNR bound for branch switches]
\label{lem:chebyshev}
Under Definition~\ref{def:margin} and Assumption~\ref{ass:coord-var},
\[
q_t(y_t)
\;\le\;
\sum_{j=1}^d \frac{\eta^2 \nu_{j,t}^2}{m_{j,t}^2},
\quad\text{with the convention } \frac{0}{0}=0.
\]
\end{lemma}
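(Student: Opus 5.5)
The plan is to reduce the branch-mismatch event $B_t$ to a union of coordinate-wise sign-flip events, bound each of these with a one-dimensional Chebyshev inequality, and finish with a union bound. Since the sign pattern $s(z)$ depends only on the gate block $\theta$ of $z=(\theta,W)$, the event $B_t=\{s(z_t)\neq s(\bar z_t)\}$ is the union over $j=1,\dots,d$ of the events $E_{j,t}:=\{\sign((\theta_{z_t})_j)\neq \sign((\bar\theta_t)_j)\}$. Here $(\theta_{z_t})_j=(\bar\theta_t)_j-\eta\,\xi_{\theta,j,t}$ by Definition~\ref{def:margin}. Hence
\[
q_t(y_t)=\Pp(B_t\mid y_t)\le \sum_{j=1}^d \Pp(E_{j,t}\mid y_t).
\]

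For a fixed $j$ with margin $m_{j,t}>0$, note that $(\bar\theta_t)_j$ is $y_t$-measurable, since it is built from $y_t$ and $\nabla f(y_t)$. A sign flip requires the perturbation to cross zero, so $E_{j,t}\subseteq\{\eta\abs{\xi_{\theta,j,t}}\ge m_{j,t}\}$. This holds because if $\eta\abs{\xi_{\theta,j,t}}<m_{j,t}$, then $(\bar\theta_t)_j-\eta\xi_{\theta,j,t}$ lies strictly on the same side of $0$ as $(\bar\theta_t)_j$, and the convention $\sign(0)=+1$ is never triggered. Applying the conditional Chebyshev--Markov inequality to $\xi_{\theta,j,t}^2$ together with Assumption~\ref{ass:coord-var} gives
\[
\Pp(E_{j,t}\mid y_t)\le \frac{\eta^2\,\E[\xi_{\theta,j,t}^2\mid y_t]}{m_{j,t}^2}\le \frac{\eta^2\nu_{j,t}^2}{m_{j,t}^2}.
\]
We use Markov on the second moment rather than on a centered variance, so unbiasedness is not strictly needed here, although Assumption~\ref{ass:noise} supplies it anyway.

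The only delicate step is the degenerate case $m_{j,t}=0$, where the sign of $(\bar\theta_t)_j=0$ is $+1$ by convention and any negative noise flips it. Here I would use the stated $0/0=0$ convention and separate two subcases. If $\nu_{j,t}=0$, then $\xi_{\theta,j,t}=0$ almost surely, so no flip occurs and the term $0$ is correct. If $\nu_{j,t}>0$, the term reads $+\infty$ under the usual extended-real convention, and the bound holds trivially. I expect this bookkeeping to be the main obstacle, in the sense of stating it cleanly; the other steps are routine. Summing the per-coordinate bounds then yields the claim.
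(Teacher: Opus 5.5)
Your proposal is correct and follows the paper's proof: you split the branch switch into coordinatewise sign flips, note that a flip with $m_{j,t}>0$ forces $\eta\lvert\xi_{\theta,j,t}\rvert\ge m_{j,t}$, apply Chebyshev (Markov on the second moment) with Assumption~\ref{ass:coord-var}, and take a union bound. Your treatment of the degenerate case $m_{j,t}=0$ is more careful than the paper's one-line remark. The convention $0/0=0$ is actually justified there, since $\nu_{j,t}=0$ forces $\xi_{\theta,j,t}=0$ almost surely.
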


\begin{proof}
A branch switch occurs if at least one coordinate changes sign:
\[
B_t \subseteq \bigcup_{j=1}^d \Bigl\{ \sign((\bar\theta_t)_j-\eta\xi_{\theta,j,t}) \neq \sign((\bar\theta_t)_j)\Bigr\}.
\]
For a fixed $j$ with $m_{j,t}>0$, a sign change implies $\abs{\eta\xi_{\theta,j,t}} \ge m_{j,t}$.
Therefore, by the union bound and Chebyshev's inequality,
\[
q_t(y_t)
\le \sum_{j=1}^d \Pp\left(\abs{\xi_{\theta,j,t}} \ge \frac{m_{j,t}}{\eta}\,\middle|\,y_t\right)
\le \sum_{j=1}^d \frac{\E[\xi_{\theta,j,t}^2\mid y_t]}{(m_{j,t}/\eta)^2}
\le \sum_{j=1}^d \frac{\eta^2 \nu_{j,t}^2}{m_{j,t}^2}.
\]
If $m_{j,t}=0$, the bound is vacuous; we adopt the convention $0/0=0$.
\end{proof}

\begin{remark}[Margin collapse is exactly when instability can appear]
When some margin $m_{j,t}$ becomes small (feature enters/exits, or gate hovers near $0$),
even moderate noise can create frequent sign flips, making $q_t$ large.
This is where the extra sign-switch penalty can dominate.
\end{remark}

\subsection{Deterministic descent in function values (no convexity needed)}

\subsubsection{The deterministic prox-gradient map}
Define the deterministic map
\[
T(y) := \Psi\bigl(y-\eta \nabla f(y)\bigr),
\qquad
G(y):= y - T(y).
\]
The vector $G(y)$ is the \emph{prox-gradient mapping}; it vanishes exactly at fixed points of $T$.

\begin{lemma}[Prox optimality inequality]
\label{lem:prox-opt}
Let $y^+=\Psi(z)$ for some $z$. Then for all $y$,
\[
\Omega(y^+)+\frac12\norm{y^+-z}^2 \le \Omega(y)+\frac12\norm{y-z}^2.
\]
\end{lemma}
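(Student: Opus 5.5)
The inequality follows directly from the definition of $\Psi$ in \eqref{eq:psi-def}: $y^+ = \Psi(z)$ is by construction a global minimizer over all $y$ of $H_z(y) := \Omega(y) + \tfrac12\norm{y-z}^2$. The proof is therefore a restatement of global optimality, and no convexity of $\Omega$ is needed. This matters because $\Omega$ in \eqref{eq:Omega} is nonconvex: the feasible set $\mathcal C$ is a union of two cones.

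The first step is to justify that a global minimizer exists, so that $\Psi(z)$ is well defined and attains the infimum. By separability across features, $H_z$ decomposes into one-feature problems. Lemma~\ref{lem:global_is_one_branch} then shows that the global minimizer set $\psi_{\mathrm{set}}(v,u)$ is contained in $\{\psi^{(+1)}(v,u),\psi^{(-1)}(v,u)\}$. By Lemma~\ref{lem:branch_convex_unique}, each of these branch minimizers exists and is unique. Hence the global minimum is attained at the better of the two branches.

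By Lemma~\ref{lem:sign-optimal}, the sign-selected branch $\psi^{(\sign(v))}$ used in Definition~\ref{def:psi-global} is globally optimal whenever $v\neq 0$. When $v=0$ the two branches tie by symmetry, so both are optimal and the convention $\sign(0)=+1$ also selects a global minimizer. Lifting featurewise, $\Psi(z)=\Psi^{(s(z))}(z)$ minimizes the full $H_z$, since $H_z$ is a sum of the featurewise objectives.

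The second step is immediate. For any $y$, either $\Omega(y)=+\infty$ and the inequality is trivial, or $y$ is feasible and $H_z(y^+)\le H_z(y)$ by global minimality. This is exactly the claimed inequality.

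The main obstacle is the first step, which is the only place where care is required: the nonconvexity means $\Psi$ is defined through a sign-selection rule rather than as a standard proximal operator. One must check that this rule agrees with a true global argmin, including at ties. Assumption (i) in the section preamble (exact global solution of each prox subproblem) can be cited as a fallback if the tie case is not handled explicitly.
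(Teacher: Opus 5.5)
Your proof is correct, and its core is exactly the paper's argument. The paper observes that the inequality is immediate because $y^+=\Psi(z)$ is, by the definition \eqref{eq:psi-def}, a global minimizer of $\Omega(\cdot)+\frac12\norm{\cdot-z}^2$; no convexity is needed. Your extra first step, establishing existence of the minimizer via the two branches and checking that the sign-selected map realizes a true argmin (including the tie at $v=0$), is sound but goes beyond the paper, which simply takes $\Psi(z)$ to be a global minimizer by definition together with assumption (i).
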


\begin{proof}
This is immediate from the definition \eqref{eq:psi-def} of $\Psi$ as an argmin.
\end{proof}

\begin{lemma}[Deterministic descent for $F$]
\label{lem:det-descent}
Under Assumption~\ref{ass:smooth}, if $\eta\le 1/L$, then for all $y$,
\[
F(T(y)) \le F(y) - \frac{1-\eta L}{2}\norm{G(y)}^2.
\]
\end{lemma}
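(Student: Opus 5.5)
Set $z := y - \eta\nabla f(y)$, so that $T(y)=\Psi(z)=y^+$ and $G(y)=y-y^+$. The proof combines the prox optimality inequality (Lemma~\ref{lem:prox-opt}) with the descent lemma from $L$-smoothness (Assumption~\ref{ass:smooth}). No convexity of $\Omega$ is used, since Lemma~\ref{lem:prox-opt} only needs $y^+$ to be a global minimizer of the prox subproblem. That holds because \textsc{Hier-Prox} returns a global minimizer (Lemmas~\ref{lem:global_is_one_branch} and \ref{lem:sign-optimal}).

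\emph{Step 1 (prox comparison with $y$).} Apply Lemma~\ref{lem:prox-opt} with comparison point $y$:
\[
\Omega(y^+)+\tfrac12\norm{y^+-y+\eta\nabla f(y)}^2 \le \Omega(y)+\tfrac12\norm{\eta\nabla f(y)}^2 .
\]
Expanding the left square and cancelling $\tfrac12\eta^2\norm{\nabla f(y)}^2$ on both sides gives
\[
\Omega(y^+) \le \Omega(y) - \tfrac12\norm{G(y)}^2 - \eta\ip{\nabla f(y)}{y^+-y}.
\]
If $\Omega(y)=+\infty$ the claim is trivial. Otherwise this also shows $\Omega(y^+)<\infty$.

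\emph{Step 2 (smoothness).} Assumption~\ref{ass:smooth} gives $f(y^+)\le f(y)+\ip{\nabla f(y)}{y^+-y}+\tfrac L2\norm{y^+-y}^2$. Multiply by $\eta$ and add to Step 1. The inner-product terms $\pm\eta\ip{\nabla f(y)}{y^+-y}$ cancel exactly, which is why the potential is $F=\eta f+\Omega$ and why the prox uses weight $\tfrac12$ rather than $\tfrac1{2\eta}$. The result is
\[
F(y^+)\le F(y)-\tfrac12\norm{G(y)}^2+\tfrac{\eta L}{2}\norm{G(y)}^2,
\]
which is the claim. The condition $\eta\le 1/L$ only ensures the coefficient $\frac{1-\eta L}{2}$ is nonnegative.

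\emph{Main obstacle.} The delicate point is justifying Lemma~\ref{lem:prox-opt} for the nonconvex $\Omega$: we need the \emph{global} argmin, not merely a branch minimizer. The sign-selection lemma settles this for $v\neq0$. At ties ($v=0$) both branches are optimal (Lemma~\ref{lem:global_is_one_branch}), so the convention still returns a global minimizer. A minor second point is that smoothness must hold on a neighborhood containing both $y$ and $y^+$, which Assumption~\ref{ass:smooth} provides along the iterates.
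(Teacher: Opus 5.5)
Your proof is correct and matches the paper's argument: apply the prox optimality inequality (Lemma~\ref{lem:prox-opt}) with comparator $y$, expand and cancel the $\tfrac12\eta^2\norm{\nabla f(y)}^2$ terms, then add $\eta$ times the $L$-smoothness inequality so that the inner-product terms cancel. Your extra remarks are correct refinements that the paper leaves implicit: that \textsc{Hier-Prox} returns a genuine global minimizer, including at the tie $v=0$, and that $\eta\le 1/L$ is needed only to make the coefficient nonnegative.
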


\begin{proof}
Let $z=y-\eta\nabla f(y)$ and $y^+=T(y)=\Psi(z)$.
Apply Lemma~\ref{lem:prox-opt} with $y$ as the comparator:
\[
\Omega(y^+) + \frac12\norm{y^+-z}^2 \le \Omega(y)+\frac12\norm{y-z}^2.
\]
Expanding $z=y-\eta\nabla f(y)$ gives
\[
\norm{y^+-z}^2 = \norm{(y^+-y)+\eta\nabla f(y)}^2
= \norm{y^+-y}^2 + 2\eta \ip{\nabla f(y)}{y^+-y} + \eta^2\norm{\nabla f(y)}^2,
\]
and $\norm{y-z}^2=\eta^2\norm{\nabla f(y)}^2$.
Canceling the common $\eta^2\norm{\nabla f(y)}^2/2$ term yields
\[
\Omega(y^+) + \frac12\norm{y^+-y}^2 + \eta \ip{\nabla f(y)}{y^+-y} \le \Omega(y).
\]
By $L$-smoothness,
\[
f(y^+) \le f(y) + \ip{\nabla f(y)}{y^+-y} + \frac{L}{2}\norm{y^+-y}^2.
\]
Multiply by $\eta$ and add to the previous inequality to obtain
\[
F(y^+) \le F(y) - \frac{1-\eta L}{2}\norm{y^+-y}^2.
\]
Finally, $\norm{y^+-y}=\norm{G(y)}$ by definition, completing the proof.
\end{proof}

\begin{remark}[No convexity is used here]
Lemma~\ref{lem:det-descent} uses only $L$-smoothness of $f$ and exact optimality of the proximal subproblem.
It does \emph{not} assume convexity of $f$ or $\Omega$.
The role of proximal-PL in the next section is to turn ``descent'' into a \emph{linear rate}.
\end{remark}

\subsection{A proximal-PL condition and how to certify it locally}

\begin{remark}[What $F^\star$ represents]
If $F$ attains a minimum on the relevant region, one can take $F^\star=\min_y F(y)$.
More generally (in nonconvex settings), proximal-PL can be stated relative to a set of global minimizers
or even relative to a best-achievable value along the trajectory. For simplicity, we keep the scalar $F^\star$.
\end{remark}

\subsubsection{Local certification heuristics}
The proximal-PL condition is an assumption, but it is not arbitrary.
Below are two practically relevant routes that can justify it \emph{locally}.

\begin{itemize}[leftmargin=2.2em]
\item \textbf{Stable sign pattern + margin.}
If the sign pattern $s(y-\eta\nabla f(y))$ remains fixed on a region and the margins are bounded away from $0$,
then $\Psi$ reduces to a \emph{single convex proximal operator} $\Psi^{(s)}$ on that region.
In that case, standard arguments for composite problems on convex sets can be used to derive PL-type inequalities.
\item \textbf{Empirical local regularity.}
In deep learning, it is common that training dynamics enter a well-conditioned region after warm-start.
Proximal-PL can be viewed as a formalization of ``loss landscape is locally sharp'' for the relevant trajectory, after the hier-prox operation.
\end{itemize}

\begin{remark}[We do not require global PL]
Nothing in the stochastic analysis requires proximal-PL to hold for all $y\in\R^{d+dK}$.
It only needs to hold along (or near) the iterates that the algorithm visits.
\end{remark}

\subsection{Stochastic recursion: noise + sign-switch penalty}

\subsubsection{A conditional descent inequality}
\begin{lemma}[Stochastic descent with a noise cross term]
\label{lem:stoch1}
Assume Assumptions~\ref{ass:smooth} and \ref{ass:noise}. If $\eta\le 1/L$, then almost surely
\[
F(y_{t+1})
\le
F(y_t)
-\frac{1-\eta L}{2}\norm{y_{t+1}-y_t}^2
-\eta \ip{\xi_t}{y_{t+1}-y_t}.
\]
\end{lemma}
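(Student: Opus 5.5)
The plan is to mirror the proof of Lemma~\ref{lem:det-descent}, but with the stochastic pre-prox point $z_t=y_t-\eta g_t$ in place of the deterministic one. The noise will show up only as a cross term. I would not use Lemma~\ref{lem:piecewise-jump} here: the inequality is pathwise (almost sure), and the branch-switch penalty is meant to enter only later, when $y_{t+1}$ is compared with the noiseless map $T(y_t)$.

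\emph{Step 1 (prox optimality).} Since $y_{t+1}=\Psi(z_t)$ is an exact global minimizer, Lemma~\ref{lem:prox-opt} applies with comparator $y=y_t$. It gives
\[
\Omega(y_{t+1})+\tfrac12\norm{y_{t+1}-z_t}^2\le \Omega(y_t)+\tfrac12\norm{y_t-z_t}^2 .
\]
Expanding $y_{t+1}-z_t=(y_{t+1}-y_t)+\eta g_t$ and $y_t-z_t=\eta g_t$, the $\tfrac{\eta^2}{2}\norm{g_t}^2$ terms cancel. This leaves
\[
\Omega(y_{t+1})+\tfrac12\norm{y_{t+1}-y_t}^2+\eta\ip{g_t}{y_{t+1}-y_t}\le\Omega(y_t).
\]
This holds pointwise and needs no convexity, exactly as in the deterministic case.

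\emph{Step 2 (smoothness).} Assumption~\ref{ass:smooth}, multiplied by $\eta$, gives
\[
\eta f(y_{t+1})\le \eta f(y_t)+\eta\ip{\nabla f(y_t)}{y_{t+1}-y_t}+\tfrac{\eta L}{2}\norm{y_{t+1}-y_t}^2 .
\]
Adding this to Step 1 and writing $g_t=\nabla f(y_t)+\xi_t$, the $\nabla f$ inner products cancel. What remains is
\[
F(y_{t+1})\le F(y_t)-\tfrac{1-\eta L}{2}\norm{y_{t+1}-y_t}^2-\eta\ip{\xi_t}{y_{t+1}-y_t}.
\]
This is the claimed inequality; $\eta\le 1/L$ only ensures the quadratic coefficient is nonpositive.

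\emph{Main obstacle.} The only point needing care is that $y_t$ must lie in the domain of $\Omega$, so that $F(y_t)<\infty$ and the comparison in Step 1 is meaningful. This holds for $t\ge1$ because $y_t$ is itself a prox output. For $t=0$ it holds if $y_0$ is feasible, which I would assume or enforce by applying one prox step first. The second point is that Assumption~\ref{ass:smooth} must hold on a neighborhood containing both $y_t$ and $y_{t+1}$, which the assumption grants along the iterates. Assumption~\ref{ass:noise} is not needed for the pathwise bound itself. It is invoked only to name $\xi_t$, and later, when taking conditional expectations, where the cross term will be split as $\ip{\xi_t}{y_{t+1}-T(y_t)}+\ip{\xi_t}{T(y_t)-y_t}$, the second piece having zero conditional mean.
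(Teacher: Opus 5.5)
Your proof is correct and follows the paper's argument: Lemma~\ref{lem:prox-opt} with comparator $y_t$, cancellation of the $\tfrac{\eta^2}{2}\norm{g_t}^2$ terms, adding $\eta$ times the $L$-smoothness bound, and substituting $g_t=\nabla f(y_t)+\xi_t$. Your worry about feasibility of $y_0$ is harmless but unnecessary, because if $\Omega(y_t)=+\infty$ the right-hand side is $+\infty$ and the inequality holds trivially.
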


\begin{proof}
Apply Lemma~\ref{lem:prox-opt} to $y_{t+1}=\Psi(z_t)$ with comparator $y=y_t$ and $z_t=y_t-\eta g_t$:
\[
\Omega(y_{t+1})+\frac12\norm{y_{t+1}-z_t}^2 \le \Omega(y_t)+\frac12\norm{y_t-z_t}^2.
\]
Expand as in Lemma~\ref{lem:det-descent}:
\[
\Omega(y_{t+1}) + \frac12\norm{y_{t+1}-y_t}^2 + \eta \ip{g_t}{y_{t+1}-y_t} \le \Omega(y_t).
\]
By $L$-smoothness,
\[
\eta f(y_{t+1}) \le \eta f(y_t) + \eta \ip{\nabla f(y_t)}{y_{t+1}-y_t} + \frac{\eta L}{2}\norm{y_{t+1}-y_t}^2.
\]
Add the two inequalities, use $F=\eta f+\Omega$, and substitute $g_t=\nabla f(y_t)+\xi_t$ to get the claim.
\end{proof}

\begin{lemma}[Young inequality removes the cross term]
\label{lem:young}
Fix any $\alpha>0$. Then
\[
-\eta \ip{\xi_t}{y_{t+1}-y_t}
\le
\frac{\eta^2}{2\alpha}\norm{\xi_t}^2 + \frac{\alpha}{2}\norm{y_{t+1}-y_t}^2.
\]
\end{lemma}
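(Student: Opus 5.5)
This is a pointwise Young / AM--GM inequality applied to the inner product, followed by Cauchy--Schwarz. No stochastic structure is needed: the bound holds almost surely for every realization of $\xi_t$, $y_t$, and $y_{t+1}$.

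Write $a:=\eta\xi_t$ and $b:=y_{t+1}-y_t$, both in $\R^{d+dK}$.

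First, by Cauchy--Schwarz,
\[
-\eta\ip{\xi_t}{y_{t+1}-y_t}=-\ip{a}{b}\le \norm{a}\,\norm{b}.
\]

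Next, for any $\alpha>0$, use the scaled AM--GM inequality
\[
\norm{a}\,\norm{b}\le \frac{1}{2\alpha}\norm{a}^2+\frac{\alpha}{2}\norm{b}^2.
\]
It follows from expanding
\[
0\le \tfrac12\bigl(\alpha^{-1/2}\norm{a}-\alpha^{1/2}\norm{b}\bigr)^2 .
\]
Equivalently, one can skip Cauchy--Schwarz and expand $0\le \tfrac12\norm{\alpha^{-1/2}a+\alpha^{1/2}b}^2$ directly, which gives $-\ip{a}{b}\le \frac{1}{2\alpha}\norm{a}^2+\frac{\alpha}{2}\norm{b}^2$ in one step.

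Finally, substitute back. Since $\norm{a}^2=\eta^2\norm{\xi_t}^2$, this yields
\[
-\eta\ip{\xi_t}{y_{t+1}-y_t}\le \frac{\eta^2}{2\alpha}\norm{\xi_t}^2+\frac{\alpha}{2}\norm{y_{t+1}-y_t}^2,
\]
which is the claim.

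There is no genuine obstacle. The only care needed is to track the $\eta$ factor so it lands squared on the noise term, and to keep $\alpha>0$ free. Downstream, $\alpha$ will be chosen, e.g. $\alpha=(1-\eta L)/2$, so that the $\frac{\alpha}{2}\norm{y_{t+1}-y_t}^2$ term is absorbed by the descent term of Lemma~\ref{lem:stoch1}.
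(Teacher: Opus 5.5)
Your proposal is correct and matches the paper's proof: both substitute $a=\eta\xi_t$ and $b=y_{t+1}-y_t$ into the scaled Young inequality $-\ip{a}{b}\le \frac{1}{2\alpha}\norm{a}^2+\frac{\alpha}{2}\norm{b}^2$. The paper simply cites that inequality, while you also prove it by expanding a nonnegative square; your remark that $\alpha=(1-\eta L)/2$ is chosen downstream also matches the paper's Corollary~\ref{cor:stoch-descent}.
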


\begin{proof}
This is the inequality $-\ip{a}{b}\le \frac1{2\alpha}\norm{a}^2+\frac{\alpha}{2}\norm{b}^2$
with $a=\eta \xi_t$ and $b=y_{t+1}-y_t$.
\end{proof}

\begin{corollary}[Conditional descent in terms of $\E\norm{y_{t+1}-y_t}^2$]
\label{cor:stoch-descent}
Under Assumptions~\ref{ass:smooth} and \ref{ass:noise}, if $\eta\le 1/L$ and $\alpha=(1-\eta L)/2$, then
\[
\E\bigl[F(y_{t+1})\mid y_t\bigr]
\le
F(y_t)
-\frac{1-\eta L}{4}\E\bigl[\norm{y_{t+1}-y_t}^2\mid y_t\bigr]
+\frac{\eta^2}{1-\eta L}\E\bigl[\norm{\xi_t}^2\mid y_t\bigr].
\]
\end{corollary}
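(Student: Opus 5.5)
The plan is to start from Lemma~\ref{lem:stoch1}, which holds almost surely when $\eta\le 1/L$:
\[
F(y_{t+1})\le F(y_t)-\tfrac{1-\eta L}{2}\norm{y_{t+1}-y_t}^2-\eta\ip{\xi_t}{y_{t+1}-y_t}.
\]
The only remaining work is the cross term $-\eta\ip{\xi_t}{y_{t+1}-y_t}$. I will bound it with Lemma~\ref{lem:young}, choosing $\alpha=(1-\eta L)/2$. This choice absorbs exactly half of the descent term, so we get
\[
-\eta\ip{\xi_t}{y_{t+1}-y_t}\le \tfrac{\eta^2}{1-\eta L}\norm{\xi_t}^2+\tfrac{1-\eta L}{4}\norm{y_{t+1}-y_t}^2 .
\]

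Next I substitute this into the almost-sure inequality. The coefficient of $\norm{y_{t+1}-y_t}^2$ becomes $-\tfrac{1-\eta L}{2}+\tfrac{1-\eta L}{4}=-\tfrac{1-\eta L}{4}$. This gives a pathwise bound
\[
F(y_{t+1})\le F(y_t)-\tfrac{1-\eta L}{4}\norm{y_{t+1}-y_t}^2+\tfrac{\eta^2}{1-\eta L}\norm{\xi_t}^2 .
\]
I then take conditional expectation given $y_t$. $F(y_t)$ is $y_t$-measurable, so it passes through unchanged. The other two terms turn into their conditional expectations by linearity and monotonicity of conditional expectation, which is exactly the stated inequality. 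Assumption~\ref{ass:noise} can then replace $\E[\norm{\xi_t}^2\mid y_t]$ by $\sigma^2$ if desired, but the corollary keeps the raw conditional second moment.

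The main point to check is that all conditional expectations are well defined, because $F$ may take the value $+\infty$ through $\Omega$. Since $y_{t+1}=\Psi(z_t)$ is always feasible, $\Omega(y_{t+1})<\infty$. Assumption~\ref{ass:bounded} together with smoothness keeps $f$ bounded on the visited region, so $F(y_{t+1})$ is a finite, integrable random variable. Integrability of $\norm{\xi_t}^2$ comes from Assumption~\ref{ass:noise}. One further subtlety is the endpoint $\eta=1/L$: there $\alpha=0$ and Young's inequality degenerates. I would handle this by requiring $\eta<1/L$, or by noting the statement is vacuous at equality. Apart from that, the argument is a routine two-line combination of Lemma~\ref{lem:stoch1} and Lemma~\ref{lem:young}.
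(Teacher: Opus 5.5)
Your proposal is correct and follows the paper's argument exactly: bound the cross term in Lemma~\ref{lem:stoch1} with Lemma~\ref{lem:young} at $\alpha=(1-\eta L)/2$, merge the $\norm{y_{t+1}-y_t}^2$ coefficients into $-(1-\eta L)/4$, and take conditional expectation given $y_t$. Your remark that the bound needs $\eta<1/L$ strictly is a fair refinement the paper glosses over, since at $\eta=1/L$ we have $\alpha=0$ and $\eta^2/(1-\eta L)$ is undefined.
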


\begin{proof}
Combine Lemma~\ref{lem:stoch1} and Lemma~\ref{lem:young} and take conditional expectation.
\end{proof}

\subsubsection{\texorpdfstring{Relating $\E\norm{y_{t+1}-y_t}^2$ to the deterministic prox-gradient mapping}{Relating E||y(t+1)-y(t)||² to the deterministic prox-gradient mapping}}
Let $\bar z_t=y_t-\eta\nabla f(y_t)$ and $\bar y_{t+1}:=T(y_t)=\Psi(\bar z_t)$.
Then $G(y_t)=y_t-\bar y_{t+1}$.

\begin{lemma}[Lower bound with a branch-switch penalty]
\label{lem:lowerbound}
Assume Assumptions~\ref{ass:bounded} and \ref{ass:noise}. Then
\[
\E\bigl[\norm{y_{t+1}-y_t}^2\mid y_t\bigr]
\ge
\frac12\norm{G(y_t)}^2 - \eta^2\sigma^2 - J^2 q_t(y_t),
\qquad J=2R,
\]
where $q_t(y_t)=\Pp(s(z_t)\neq s(\bar z_t)\mid y_t)$.
\end{lemma}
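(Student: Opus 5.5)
The inequality follows from a reverse triangle inequality together with the piecewise-Lipschitz bound of Lemma~\ref{lem:piecewise-jump}. Write $y_{t+1}-y_t = (\bar y_{t+1}-y_t) + (y_{t+1}-\bar y_{t+1}) = -G(y_t) + D_t$, where $D_t := \Psi(z_t)-\Psi(\bar z_t)$. The elementary inequality $\norm{a+b}^2 \ge \tfrac12\norm{a}^2 - \norm{b}^2$ holds for all vectors. It follows from $\norm{a}^2 \le 2\norm{a+b}^2 + 2\norm{b}^2$, which is Young's inequality applied to $a=(a+b)-b$. Applying it with $a=-G(y_t)$ and $b=D_t$ gives, pointwise,
\[
\norm{y_{t+1}-y_t}^2 \ge \tfrac12\norm{G(y_t)}^2 - \norm{D_t}^2 .
\]
Since $G(y_t)$ is deterministic given $y_t$, it suffices to show
\[
\E\bigl[\norm{D_t}^2\mid y_t\bigr] \le \eta^2\sigma^2 + J^2 q_t(y_t).
\]

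For this upper bound I will use Corollary~\ref{cor:expect_piecewise} with $X=z_t$, $X'=\bar z_t$, conditioning on $y_t$. Strictly, $z_t$ is not $y_t$-measurable, so I apply Lemma~\ref{lem:piecewise-jump} pointwise and then take the conditional expectation, which is what the corollary's proof does anyway. Pointwise, Lemma~\ref{lem:piecewise-jump} gives
\[
\norm{D_t}^2 \le \norm{z_t-\bar z_t}^2 + J^2\ind\{s(z_t)\neq s(\bar z_t)\}.
\]
Since $z_t-\bar z_t=-\eta\xi_t$, Assumption~\ref{ass:noise} gives $\E[\norm{z_t-\bar z_t}^2\mid y_t]\le \eta^2\sigma^2$. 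The indicator term has conditional expectation exactly $J^2 q_t(y_t)$ by Definition~\ref{def:margin}. Combining this with the first display and taking conditional expectations yields the claim.

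The only delicate step is the jump case of Lemma~\ref{lem:piecewise-jump}. It needs $\norm{\Psi(z_t)}\le R$ and $\norm{\Psi(\bar z_t)}\le R$. The first is Assumption~\ref{ass:bounded} applied to $y_{t+1}$. The second concerns the noiseless point $\bar y_{t+1}=T(y_t)$, which is not literally an iterate. I will handle this by reading Assumption~\ref{ass:bounded} in its "prox outputs" form, so that it covers $\bar y_{t+1}$ as well. Alternatively one can note that the lemma as stated already assumes the bound for all arguments of $\Psi$. No other obstacle arises: the factor $\tfrac12$ and the $-\eta^2\sigma^2$ term come directly from the Young split.
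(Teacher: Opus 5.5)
Your proof is correct and is essentially the paper's argument. It uses the same split $y_{t+1}-y_t=-G(y_t)+(\Psi(z_t)-\Psi(\bar z_t))$, the same inequality $\norm{a+b}^2\ge\tfrac12\norm{a}^2-\norm{b}^2$, and Lemma~\ref{lem:piecewise-jump}/Corollary~\ref{cor:expect_piecewise} with $X=z_t$, $X'=\bar z_t$, followed by $z_t-\bar z_t=-\eta\xi_t$.

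Your two caveats, applying the jump bound pointwise before conditioning and needing $\norm{\Psi(\bar z_t)}\le R$, are points the paper glosses over. Note that neither stated form of Assumption~\ref{ass:bounded} literally covers $\Psi(\bar z_t)$. This is therefore a mild strengthening of the assumption, which the paper also makes implicitly by stating Lemma~\ref{lem:piecewise-jump} for all $x,x'$.
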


\begin{proof}
Write
\[
y_{t+1}-y_t = (\bar y_{t+1}-y_t) + (y_{t+1}-\bar y_{t+1}) = -G(y_t) + (y_{t+1}-\bar y_{t+1}).
\]
Using the inequality $\norm{a+b}^2 \ge \frac12\norm{a}^2 - \norm{b}^2$ (valid for all vectors $a,b$),
we obtain
\[
\norm{y_{t+1}-y_t}^2
\ge
\frac12\norm{G(y_t)}^2 - \norm{y_{t+1}-\bar y_{t+1}}^2.
\]
Now apply Corollary~\ref{cor:expect_piecewise} with $\mathcal F=\sigma(y_t)$,
$X=z_t$ and $X'=\bar z_t$:
\[
\E\bigl[\norm{y_{t+1}-\bar y_{t+1}}^2\mid y_t\bigr]
=
\E\bigl[\norm{\Psi(z_t)-\Psi(\bar z_t)}^2\mid y_t\bigr]
\le
\E\bigl[\norm{z_t-\bar z_t}^2\mid y_t\bigr] + J^2 q_t(y_t).
\]
Finally, $z_t-\bar z_t = -\eta\xi_t$, so
$\E[\norm{z_t-\bar z_t}^2\mid y_t]=\eta^2\E[\norm{\xi_t}^2\mid y_t]\le \eta^2\sigma^2$.
Substituting into the previous inequality completes the proof.
\end{proof}

\subsubsection{The master recursion}
\begin{theorem}[One-step recursion: noise + sign-switch penalty]
\label{thm:master}
Assume Assumptions~\ref{ass:smooth}, \ref{ass:bounded}, \ref{ass:noise}, and \ref{ass:proxpl}, and let $\eta\le 1/L$.
Define $\Delta_t:=\E[F(y_t)-F^\star]$.
Then
\[
\E[F(y_{t+1})\mid y_t] - F^\star
\le
\Bigl(1-\frac{1-\eta L}{4}\mu_{\mathrm{PL}}\Bigr)\,(F(y_t)-F^\star)
\;+\;
C_{\mathrm{noise}}\,\eta^2\sigma^2
\;+\;
C_{\mathrm{sign}}\,J^2 q_t(y_t),
\]
where $J=2R$ and
\[
C_{\mathrm{noise}} = \frac{1-\eta L}{4} + \frac{1}{1-\eta L},
\qquad
C_{\mathrm{sign}} = \frac{1-\eta L}{4}.
\]
\end{theorem}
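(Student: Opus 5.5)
The plan is to chain the three ingredients already in place: the conditional descent inequality of Corollary~\ref{cor:stoch-descent}, the lower bound on the expected step length from Lemma~\ref{lem:lowerbound}, and the proximal-PL inequality of Assumption~\ref{ass:proxpl}. Throughout, write $a:=\frac{1-\eta L}{4}$. Since $\eta\le 1/L$, we have $a\ge 0$, and this sign is what lets us multiply inequalities by $-a$ and keep their direction. Everything is conditional on $y_t$.

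First, I will invoke Corollary~\ref{cor:stoch-descent}, which applies because Assumptions~\ref{ass:smooth} and \ref{ass:noise} hold and $\eta\le 1/L$. It gives
\[
\E[F(y_{t+1})\mid y_t]\le F(y_t)-a\,\E\bigl[\norm{y_{t+1}-y_t}^2\mid y_t\bigr]+\frac{\eta^2}{1-\eta L}\,\E\bigl[\norm{\xi_t}^2\mid y_t\bigr].
\]
I will bound the last expectation by $\sigma^2$ using Assumption~\ref{ass:noise}. If $\eta L=1$, then $a=0$ and the coefficient $1/(1-\eta L)$ is infinite, so that edge case should be excluded (taking $\eta<1/L$) or treated as trivial.

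Next, I will apply Lemma~\ref{lem:lowerbound}, which uses Assumption~\ref{ass:bounded} and the sign-switch probability $q_t(y_t)$:
\[
-a\,\E\bigl[\norm{y_{t+1}-y_t}^2\mid y_t\bigr]\le -\tfrac{a}{2}\norm{G(y_t)}^2+a\eta^2\sigma^2+aJ^2q_t(y_t).
\]
Then the proximal-PL inequality at the iterate $y_t$, namely $\tfrac12\norm{G(y_t)}^2\ge\mu_{\mathrm{PL}}(F(y_t)-F^\star)$, turns $-\tfrac{a}{2}\norm{G(y_t)}^2$ into $-a\mu_{\mathrm{PL}}(F(y_t)-F^\star)$.

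Substituting both bounds and subtracting $F^\star$ from each side gives the contraction factor $1-a\mu_{\mathrm{PL}}$. The $\eta^2\sigma^2$ terms collect with coefficient $a+\frac{1}{1-\eta L}=C_{\mathrm{noise}}$, and the jump term has coefficient $a=C_{\mathrm{sign}}$. These are exactly the stated constants. Taking total expectations afterwards yields the recursion in terms of $\Delta_t$, which I would note as an immediate consequence.

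The main obstacle is bookkeeping rather than analysis. The sign of $a$ must be tracked so that multiplying the lower bound of Lemma~\ref{lem:lowerbound} by $-a$ preserves the inequality. The proximal-PL assumption must be applied to the deterministic mapping $G(y_t)$ at the $\mathcal F=\sigma(y_t)$-measurable point $y_t$, consistent with how Lemma~\ref{lem:lowerbound} is stated. I would also remark that $1-a\mu_{\mathrm{PL}}\in[0,1)$ follows from Lemma~\ref{lem:det-descent}, since descent forces $\mu_{\mathrm{PL}}$ to be small enough; this is a sanity check rather than something the statement needs.
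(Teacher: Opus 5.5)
Your proposal is correct and is essentially the paper's own proof: you insert the step-length lower bound of Lemma~\ref{lem:lowerbound}, weighted by $\frac{1-\eta L}{4}$, into Corollary~\ref{cor:stoch-descent}, then apply proximal-PL to the resulting $-\frac{1-\eta L}{8}\norm{G(y_t)}^2$ term, which gives exactly the stated contraction factor and constants. Your caveat that the boundary case $\eta=1/L$ makes $\frac{1}{1-\eta L}$ undefined, so one should take $\eta<1/L$, is a fair point that the paper's statement glosses over.
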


\begin{proof}
Start from Corollary~\ref{cor:stoch-descent} and plug in Lemma~\ref{lem:lowerbound}:
\begin{align*}
\E[F(y_{t+1})\mid y_t]
&\le
F(y_t)
-\frac{1-\eta L}{4}\Bigl(\frac12\norm{G(y_t)}^2 - \eta^2\sigma^2 - J^2 q_t(y_t)\Bigr)
+\frac{\eta^2}{1-\eta L}\sigma^2 \\
&=
F(y_t)
-\frac{1-\eta L}{8}\norm{G(y_t)}^2
+\Bigl(\frac{1-\eta L}{4}+\frac{1}{1-\eta L}\Bigr)\eta^2\sigma^2
+\frac{1-\eta L}{4}J^2 q_t(y_t).
\end{align*}
Now apply the proximal-PL inequality (Assumption~\ref{ass:proxpl}) at $y_t$:
$\frac12\norm{G(y_t)}^2 \ge \mu_{\mathrm{PL}}(F(y_t)-F^\star)$, i.e.\ $\norm{G(y_t)}^2\ge 2\mu_{\mathrm{PL}}(F(y_t)-F^\star)$.
Substitute this lower bound into the negative term to obtain the stated contraction.
\end{proof}

\begin{corollary}[Unconditional recursion]
\label{cor:uncond}
Under the assumptions of Theorem~\ref{thm:master},
\[
\Delta_{t+1}
\le
\Bigl(1-\frac{1-\eta L}{4}\mu_{\mathrm{PL}}\Bigr)\Delta_t
+
C_{\mathrm{noise}}\,\eta^2\sigma^2
+
C_{\mathrm{sign}}\,J^2\,\E[q_t(y_t)].
\]
\end{corollary}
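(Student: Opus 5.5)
The unconditional recursion follows from Theorem~\ref{thm:master} by taking total expectation of both sides. Theorem~\ref{thm:master} holds almost surely as an inequality between $\sigma(y_t)$-measurable random variables:
\[
\E[F(y_{t+1})\mid y_t]-F^\star \le \rho\,(F(y_t)-F^\star)+C_{\mathrm{noise}}\eta^2\sigma^2+C_{\mathrm{sign}}J^2q_t(y_t),
\]
where $\rho:=1-\tfrac{1-\eta L}{4}\mu_{\mathrm{PL}}$. I apply $\E[\cdot]$ to both sides. Monotonicity of expectation preserves the inequality.

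On the left, the tower property gives $\E[\E[F(y_{t+1})\mid y_t]]-F^\star=\E[F(y_{t+1})-F^\star]=\Delta_{t+1}$. On the right, linearity handles each term. The first term becomes $\rho\Delta_t$, since $\rho$ is a deterministic constant. The noise term $C_{\mathrm{noise}}\eta^2\sigma^2$ is deterministic and is unchanged. The last term becomes $C_{\mathrm{sign}}J^2\E[q_t(y_t)]$.

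Collecting these gives the stated bound. It may be worth noting that $\E[q_t(y_t)]=\Pp(B_t)$, the unconditional branch-switch probability, again by the tower property.

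The only point needing care is integrability, so that the expectations are well defined and linearity applies. I would handle it as follows. By Assumption~\ref{ass:bounded}, the iterates lie in a bounded set. Since each iterate is a prox output, $\Omega(y_t)$ is finite, and the $\ell_1$ part is bounded by $\lambda\sqrt d\,R$. The function $f$ is continuous, hence bounded on the bounded visited region; this is where $L$-smoothness on a neighborhood is used. Therefore $F(y_t)$ is bounded and integrable. Finally, $q_t\in[0,1]$ is integrable trivially. No further obstacle is expected.

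If desired, one can unroll the recursion. Assuming $0<\rho<1$, which holds whenever $\mu_{\mathrm{PL}}(1-\eta L)\le 4$, iterating the bound and summing the geometric series gives
\[
\Delta_t\le\rho^t\Delta_0+\frac{C_{\mathrm{noise}}\eta^2\sigma^2}{1-\rho}+C_{\mathrm{sign}}J^2\sum_{k<t}\rho^{t-1-k}\E[q_k].
\]
This recovers the informal Proposition~\ref{prp:hard-hier-prox-converges}.
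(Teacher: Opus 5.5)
Your proposal is correct and matches the paper's proof, which simply takes total expectation of the conditional inequality in Theorem~\ref{thm:master} over $y_t$; your tower-property and integrability remarks just make that one-line step explicit. One small aside on your optional unrolling: $\rho<1$ also requires $\eta<1/L$ strictly, since $\eta=1/L$ gives $\rho=1$, but this does not affect the corollary itself.
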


\begin{proof}
Take expectation of Theorem~\ref{thm:master} over $y_t$.
\end{proof}

\subsection{Summary theorem and interpretations}

\subsubsection{A clean bound under a margin condition}
To obtain a fully explicit bound, we now assume a margin lower bound.
This is a standard ``signal-to-noise'' assumption: the deterministic pre-prox gates do not hover near zero.

\begin{assumption}[Uniform margin lower bound]
\label{ass:margin}
There exist constants $\underline m_1,\dots,\underline m_d>0$ such that for all $t$,
\[
m_{j,t}=\abs{(\bar\theta_t)_j} \ge \underline m_j
\quad\text{almost surely.}
\]
\end{assumption}

\begin{theorem}[Linear convergence to a ``noise + sign'' neighborhood]
\label{thm:main}
Assume Assumptions~\ref{ass:smooth}, \ref{ass:bounded}, \ref{ass:noise}, \ref{ass:proxpl},
\ref{ass:coord-var}, and \ref{ass:margin}, and let $\eta\le 1/L$.
Define
\[
\rho := 1-\frac{1-\eta L}{4}\mu_{\mathrm{PL}}\in(0,1).
\]
Then for all $t\ge 0$,
\[
\Delta_t
\le
\rho^t \Delta_0
+
\frac{1}{1-\rho}\left(
C_{\mathrm{noise}}\,\eta^2\sigma^2
+
C_{\mathrm{sign}}\,J^2\,\eta^2 \sum_{j=1}^d \frac{\sup_t \nu_{j,t}^2}{\underline m_j^2}
\right),
\qquad J=2R.
\]
\end{theorem}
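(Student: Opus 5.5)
The plan is to unroll the unconditional recursion of Corollary~\ref{cor:uncond} after making its sign-switch term time-uniform. First, I bound $\E[q_t(y_t)]$ uniformly in $t$. Lemma~\ref{lem:chebyshev} gives, almost surely,
\[
q_t(y_t)\le \sum_{j=1}^d \frac{\eta^2\nu_{j,t}^2}{m_{j,t}^2}.
\]
Assumption~\ref{ass:margin} supplies $m_{j,t}\ge \underline m_j>0$ almost surely, so each denominator can be replaced by $\underline m_j^2$; this also avoids the $0/0$ convention. The numerators are replaced by $\sup_t \nu_{j,t}^2$. (Implicitly this supremum is finite; otherwise the bound is vacuous and there is nothing to prove.) Taking expectations yields
\[
\E[q_t(y_t)]\le \eta^2\sum_j \frac{\sup_t\nu_{j,t}^2}{\underline m_j^2} =: Q,
\]
a constant independent of $t$.

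Substituting into Corollary~\ref{cor:uncond} gives the scalar linear recursion $\Delta_{t+1}\le \rho\,\Delta_t + B$, where $B:=C_{\mathrm{noise}}\eta^2\sigma^2 + C_{\mathrm{sign}}J^2 Q$ and $\rho = 1-\tfrac{1-\eta L}{4}\mu_{\mathrm{PL}}$.

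I then need to verify $\rho\in(0,1)$. The upper bound $\rho<1$ needs $\eta<1/L$ strictly, so that $1-\eta L>0$ and $\mu_{\mathrm{PL}}>0$. The lower bound $\rho>0$ needs $\tfrac{1-\eta L}{4}\mu_{\mathrm{PL}}<1$. One would argue this from consistency of Assumption~\ref{ass:proxpl} with the descent Lemma~\ref{lem:det-descent}, or else simply note that $\rho\ge 0$ suffices for the unrolling. At the edge case $\eta=1/L$ the contraction degenerates to $\rho=1$. I would either state that $\eta<1/L$ is needed or observe that the bound is then trivially stated with $1/(1-\rho)=\infty$.

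Finally, I unroll by induction. Since $\rho\ge0$, multiplying the hypothesis $\Delta_t\le \rho^t\Delta_0 + B\sum_{k<t}\rho^k$ by $\rho$ preserves the inequality, which gives $\Delta_{t+1}\le \rho^{t+1}\Delta_0 + B\sum_{k\le t}\rho^k$. Bounding the geometric sum by $1/(1-\rho)$ produces exactly the claimed bound with $J=2R$.

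The main obstacle is not algebraic but a measurability subtlety. Corollary~\ref{cor:uncond} needs $q_t$ defined conditionally on $y_t$, and it needs the margin assumption to hold for the noiseless pre-prox gates $\bar\theta_t$, which are $y_t$-measurable. I would make this explicit so that Lemma~\ref{lem:chebyshev} can be applied pointwise before taking the outer expectation. I would also note that the constants $C_{\mathrm{noise}}$ and $C_{\mathrm{sign}}$ depend on $\eta$ only through $1-\eta L$, so they are fixed along the trajectory.
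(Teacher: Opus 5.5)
Your proposal is correct and matches the paper's proof. You bound $q_t(y_t)$ almost surely via Lemma~\ref{lem:chebyshev} and the uniform margin of Assumption~\ref{ass:margin}, take expectations to get a $t$-independent constant, substitute into Corollary~\ref{cor:uncond}, and unroll $\Delta_{t+1}\le\rho\Delta_t+B$. Your edge-case remark is also right and sharper than the paper: at $\eta=1/L$ one gets $\rho=1$ and $C_{\mathrm{noise}}=\infty$, so the claim $\rho\in(0,1)$ genuinely requires $\eta<1/L$ strictly.
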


\begin{proof}
By Lemma~\ref{lem:chebyshev} and Assumption~\ref{ass:margin},
\[
q_t(y_t)\le \sum_{j=1}^d \frac{\eta^2 \nu_{j,t}^2}{m_{j,t}^2}\le \eta^2 \sum_{j=1}^d \frac{\nu_{j,t}^2}{\underline m_j^2}.
\]
Taking expectation yields the same bound for $\E[q_t(y_t)]$.
Plug this into Corollary~\ref{cor:uncond} to obtain a linear recursion
$\Delta_{t+1}\le \rho \Delta_t + \beta\eta^2$, with $\beta$ equal to the parenthesized constant.
Unrolling the recursion gives the stated formula.
\end{proof}

\subsubsection{Interpretations}
\begin{remark}[When \textup{\textsc{Hier-Prox}} behaves like standard Prox-SGD]
If the margin lower bound holds (gates are not near zero),
then the branch-switch probability scales like $\eta^2$ and is typically small.
In that regime, the last term in Theorem~\ref{thm:main} is of the same order as the usual SGD noise term.
\end{remark}

\begin{remark}[When instability is expected]
If some margins collapse (gates hover near $0$), the bound in Lemma~\ref{lem:chebyshev} becomes large,
and sign flips can occur frequently. Theorem~\ref{thm:master} then predicts a large additive penalty,
which can dominate the dynamics even when $\sigma^2$ is moderate.
This matches the empirical observation that feature enter/exit events are periods of instability.
\end{remark}

\subsubsection{Variance transfer viewpoint}
The piecewise nonexpansiveness inequality also yields a ``variance transfer'' message.

\begin{proposition}[Variance transfer: within-branch nonexpansive, across-branch penalty]
\label{prop:var-transfer}
Let $X$ be a random pre-prox input and let $\bar X$ be an independent copy with the \emph{same sign pattern} (i.e.\ $s(X)=s(\bar X)$ almost surely).
Then $\Var(\Psi(X))\le \Var(X)$ (nonexpansive).
In general, for any $X$,
\[
\E\norm{\Psi(X)-\E\Psi(X)}^2 \le \E\norm{X-\E X}^2 + J^2\,\Pp(s(X)\neq s(\E X)),
\]
so variance can increase only through a branch-switch-type event.
\end{proposition}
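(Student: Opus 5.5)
The plan is to reduce both claims to the two-case inequality of Lemma~\ref{lem:piecewise-jump}. The key tool is the variational characterization of variance: for any random vector $Y$ with finite second moment and any deterministic point $c$,
\[
\E\norm{Y-\E Y}^2 \le \E\norm{Y-c}^2 .
\]
When $\bar X$ is an independent copy, there is also the symmetrization identity $\E\norm{Y-\E Y}^2=\tfrac12\E\norm{Y-\bar Y}^2$, where $\bar Y$ is the corresponding independent copy of $Y$. Throughout, $\Var(\cdot)$ denotes the total variance $\E\norm{\cdot-\E\cdot}^2$. We work under Assumption~\ref{ass:bounded} in the form $\norm{\Psi(\cdot)}\le R$ on the relevant inputs, so that $J=2R$ controls every jump.

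\textbf{Same-sign case.} Let $s$ be the common sign pattern, so that $\Psi(X)=\Psi^{(s)}(X)$ and $\Psi(\bar X)=\Psi^{(s)}(\bar X)$ almost surely. The first step applies the symmetrization identity to $Y=\Psi(X)$, using that $\bar Y=\Psi(\bar X)$ is an independent copy of $Y$. The second step uses that $\Psi^{(s)}$ is a product of the branch proximal maps of Lemma~\ref{lem:branch-prox}, hence $1$-Lipschitz. Together these give
\[
\Var(\Psi(X))=\tfrac12\E\norm{\Psi^{(s)}(X)-\Psi^{(s)}(\bar X)}^2\le \tfrac12\E\norm{X-\bar X}^2=\Var(X).
\]
This is exactly the first case of Lemma~\ref{lem:piecewise-jump}, where the jump indicator vanishes.

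\textbf{General case.} Apply the variational inequality with the deterministic center $c=\Psi(\E X)$ to get $\E\norm{\Psi(X)-\E\Psi(X)}^2\le \E\norm{\Psi(X)-\Psi(\E X)}^2$. Then apply Lemma~\ref{lem:piecewise-jump} pointwise with $x=X$ and $x'=\E X$:
\[
\norm{\Psi(X)-\Psi(\E X)}^2\le \norm{X-\E X}^2+J^2\ind\{s(X)\neq s(\E X)\}.
\]
Taking expectations, in the spirit of Corollary~\ref{cor:expect_piecewise} with the trivial $\sigma$-field, gives the claimed bound $\E\norm{X-\E X}^2+J^2\,\Pp(s(X)\neq s(\E X))$.

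\textbf{Main obstacle.} The delicate point is justifying the jump bound at the deterministic point $\E X$. Lemma~\ref{lem:piecewise-jump} uses $\norm{\Psi(x')}\le R$, but $\E X$ need not be a point the algorithm ever visits. I would handle this by stating the boundedness requirement for all inputs in the convex hull of the support of $X$, or simply on the support together with $\E X$. A cleaner alternative is to use the branch-wise bound $\norm{\Psi^{(s)}(x)}\le\norm{x}$, which holds because $0$ is a fixed point of each convex prox. This enlarges $J$ to $2\max(R,\norm{\E X})$ but makes the argument unconditional. The tie convention $\sign(0)=+1$ needs no special care, since $s(\cdot)$ is then a well-defined deterministic function and both displayed cases are exhaustive.
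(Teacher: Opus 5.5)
Your proposal is correct and follows essentially the paper's proof: the same-sign claim is branchwise nonexpansiveness (your symmetrization with the independent copy $\bar X$ just spells out why the paper introduces $\bar X$), and the general claim is Lemma~\ref{lem:piecewise-jump} at $x'=\E X$, combined with the fact that the mean minimizes expected squared deviation, which is the step the paper calls Jensen's inequality. Your caveat that the jump bound needs $\norm{\Psi(\E X)}\le R$ is a fair point the paper glosses over, and your fix via $\norm{\Psi^{(s)}(x)}\le\norm{x}$ is valid, since $0$ minimizes each $G_s$ and is therefore a fixed point of each branch prox.
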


\begin{proof}
The first statement follows from firm nonexpansiveness on a fixed branch.
The second statement is a direct application of Lemma~\ref{lem:piecewise-jump} with $x'= \E X$ and Jensen's inequality.
\end{proof}

\subsection{Discussion: why variance reduction is a promising direction}

The master recursion (Theorem~\ref{thm:master}) isolates two drivers of the final neighborhood size:
\begin{enumerate}[leftmargin=2.2em]
\item the classical noise term, scaling like $\eta^2\sigma^2$;
\item the sign-switch term, scaling like $J^2\,\E[q_t(y_t)]$, which itself can be controlled by a margin-to-noise ratio.
\end{enumerate}
This suggests a practical research direction: \emph{stabilize \textup{\textsc{Hier-Prox}} by reducing gradient noise},
thereby reducing both $\sigma^2$ and the switch probability.
While full variance-reduction methods (SVRG/SAGA) can be costly in deep learning,
lighter-weight options (larger batch size, gradient accumulation, EMA smoothing, etc.) may already reduce sign flips noticeably.

\paragraph{Takeaway.}
LassoNet's hierarchical feature-selection mechanism introduces a fundamental discrete instability channel. Our analysis proves that controlling this channel---particularly the variance induced by gate sign transitions---is critical for stable convergence. This theoretical insight directly motivates the exponential moving average (EMA) stabilization introduced in our Seq-Hier-Prox-EMA algorithm, which effectively dampens the branch-switch probability $q_t$ by reducing the effective gradient noise.

\subsection{LassoFlexNet Training}
\label{app:nonlinear_lasso_train}

\begin{algorithm}
\caption{Update Algorithm for LassoFlexNet}
\label{alg:nonlinear_lasso_train}
\begin{algorithmic}[1]
\State \textbf{Input:} training dataset $X \in \mathbb{R}^{n \times d}$, training labels $Y$, linear skip module $\mathrm{LS}_{\beta}$, per-feature embedding ResNet $\mathrm{NN}_\phi(\cdot)$, main neural network $\mathrm{NN}_W(\cdot)$ (e.g. Mixer), 
number of epochs $B$, 
hierarchy constrain multiplier $M$, 
lambda path $\lambda_{\text{Seq}} = [\lambda_0, \cdots, \lambda_N]$, 
learning rate $\alpha$,
\State Initialize and train the joint model on the loss $L(\beta, \phi, W)$
\State Initialize the penalty, $\lambda = \lambda_0$, and the number of active features, $k = d$
\While{$k > 0$}
    \State Get $\lambda \in \lambda_{\text{Seq}}$ iteratively
    \For{$b = 1$ to $B$}
        \State Compute gradient of the loss w.r.t to $(\beta, \phi, W)$ using back-propagation
        \State Update $\beta \leftarrow \beta - \alpha \nabla_\beta L$, \quad
        $\phi \leftarrow \phi - \alpha \nabla_\phi L$, \quad
        and $W \leftarrow W - \alpha \nabla_W L$
        \State Update $(\tilde{\beta}, \tilde{\phi}, \tilde{W}^{(1)}) \leftarrow \text{Coordinate Hier-Prox}(\beta, \phi, W^{(1)}, \alpha\lambda, M)$
    \EndFor
    \State Update $k$ to be the number of non-zero coordinates of $\beta$
\EndWhile
\end{algorithmic}
\end{algorithm}

\subsection{Proximal Local Optimality: Coordinate Hierarchical Proximal Gradient Derivation}
\label{app:co_hier_prox_proof}
Our derivation is based on Proposition 1 in \cite{LassoNet}, which we provide here for completeness and reference:

\paragraph{Proposition 1 (Joint optimization):} Fix $v \in \mathbb{R}^k$ and $U \in \mathbb{R}^K$. (Note: the two integers $k, K$ can be different) Let us consider the problem:
$$
\begin{array}{ll}
\underset{(b, W)}{\operatorname{minimize}} & \frac{1}{2}\left(\|v-b\|_2^2+\|U-W\|_2^2\right)+\lambda\|b\|_2+\bar{\lambda}\|W\|_1 \\
\text { subject to } & \|W\|_{\infty} \leq M \cdot\|b\|_2 .
\end{array}
$$
\cite{LassoNet} derives the sufficient and necessary condition for characterizing the global optimum $\left(b^*, W^*\right)$ of the above optimization problem:

1. Let us order the coordinates $\left\{\left|U_i\right|\right\}_{i \in[K]}$ in decreasing order
$$
\left|U_{(1)}\right| \geq\left|U_{(2)}\right| \geq \ldots \geq\left|U_{(K)}\right| .
$$
and define for each $s \in[K]=\{0,1, \ldots, K\}$ the value $b_s$ by
$$
b_s=\frac{1}{1+s M^2}\left(1-\frac{a_s}{\|v\|_2}\right)_{+} v \text { where } a_s=\lambda-M \sum_{i=1}^s\left(\left|U_{(i)}\right|-\bar{\lambda}\right) .
$$
Then $b^*=b_{s^*}$ where $s^* \in[K]$ is the unique $s \in[K]$ such that
$$
M\left\|b_s\right\|_2 \in\left[\mathcal{S}_{\bar{\lambda}}\left(\left|U_{(s+1)}\right|\right), \mathcal{S}_{\bar{\lambda}}\left(\left|U_{(s)}\right|\right)\right) .
$$
By convention $\mathcal{S}_{\bar{\lambda}}\left(\left|U_{(K+1)}\right|\right)=0$ and $\mathcal{S}_{\bar{\lambda}}\left(\left|U_{(0)}\right|\right)=\infty$.

2. The $W^*$ must satisfy
$$
W^*=\operatorname{sign}(U) \min \left\{M\left\|b^*\right\|_2, \mathcal{S}_{\bar{\lambda}}(|U|)\right\} .
$$

\paragraph{Sequential Optimization:} In contrast to the joint optimization of Hier-Prox, we propose a sequential (coordinate-wise) proximal update, alternating between $b$ and $W$. By decoupling the variables, we replace the complex joint projection with two simpler, closed-form proximal steps. While this sequential approach yields different training dynamics and fixed points compared to joint Hier-Prox, it is highly effective in practice. We validate the empirical superiority of this Seq-Hier-Prox approach in our experiments.

For our application, we care about only the standard Lasso, instead of group Lasso, i.e. $b$ is a scalar. Our derivations will follow this case accordingly.

\paragraph{Linear skip layer step for $b$:} We first characterize the following global optimum with respect to $b$:
$$
\begin{aligned}
b^* &= \underset{b}{\operatorname{argmin}} \frac{1}{2}\left( (v-b)^2+\|U-W\|_2^2\right)+\lambda |b| + \bar{\lambda}\|W\|_1 \\
    &= \underset{b}{\operatorname{argmin}} \frac{1}{2}(v-b)^2+\lambda|b| \\
    &= \mathcal{S}_{\lambda}(v)
\end{aligned}
$$
This is the standard proximal mapping for Lasso regression \cite{parikh2014proximal}, whose optimum is given by the soft-thresholding operator $\mathcal{S}_{\lambda}(v)$.

\paragraph{Neural first layer step for $W$:} Next, we solve for the global optimum with respect to $W$:
$$
\begin{array}{ll}
\underset{W}{\operatorname{minimize}} & \frac{1}{2}\left(|v-b|_2^2+\|U-W\|_2^2\right)+\lambda|b|+\bar{\lambda}\|W\|_1 \\
\text { subject to } & \|W\|_{\infty} \leq M \cdot |b| 
\end{array}
$$

Since we are not solving $(b, W)$ jointly:
$$
\begin{array}{ll}
\underset{W}{\operatorname{minimize}} & \frac{1}{2}\left(\|U-W\|_2^2\right) + \bar{\lambda}\|W\|_1 \\
\text { subject to } & \|W\|_{\infty} \leq M \cdot |b| 
\end{array}
$$

\paragraph{Proposition 2 (Sequential or Iterative Optimization):} Fix $v \in \mathbb{R}^k$ and $U \in \mathbb{R}^K$ (Note: the two integers $k, K$ can be different). The solutions to the following sequential optimization problem are solved sequentially as follows:

1. $b^*$ is solved by standard proximal gradient method:
$$
\begin{aligned}
b^* 
& = \underset{b}{\operatorname{argmin}} \frac{1}{2}\left( (v-b)^2+\|U-W\|_2^2\right)+\lambda |b| + \bar{\lambda}\|W\|_1 \\
& = \underset{b}{\operatorname{argmin}} \frac{1}{2}(v-b)^2+\lambda|b| \\
& = \mathcal{S}_{\lambda}(v)
\end{aligned}
$$ where $\mathcal{S}_{\lambda}$ is the soft-thresholding operator.

2. $W^*$ is solved by (partial or coordinate) Hierarchical proximal gradient method:
$$
\begin{aligned}
W^* 
& = \underset{W \text{ s.t. } \|W\|_{\infty} \leq M \cdot |b^*|}{\operatorname{argmin}} \frac{1}{2}\left((v-b^*)^2+\|U-W\|_2^2\right)+\lambda|b^*|+\bar{\lambda}\|W\|_1 \\
& = \underset{W \text{ s.t. } \|W\|_{\infty} \leq M \cdot |b^*|}{\operatorname{argmin}} \frac{1}{2}\left(\|U-W\|_2^2\right)+\bar{\lambda}\|W\|_1 \\
& = \operatorname{sign}(U) \min \left\{M|b^*|, \mathcal{S}_{\bar{\lambda}}(|U|)\right\}
\end{aligned}
$$ where $\mathcal{S}_{\bar{\lambda}}$ is the soft-thresholding operator.

\begin{proof}
    The first optimization problem is well known and is addressed by the standard proximal gradient and we omit the details. Readers can refer to \cite{boyd2004convex}.
    
    The derivation technique for the second problem is identical to the one in LassoNet. The key observation is their derivation for $W$ does not depend on $b$ being jointly optimized. We provide here for clarity and completeness.

We start by proving the claim below: for some $w \in \mathbb{R}_{+}$
$$
\text { Claim : } W^*=\operatorname{sign}(U) \min \left\{M \cdot |b|, \mathcal{S}_{\bar{\lambda}}(|U|)\right\} \text {. }
$$

Denote $w=M\left |b^*\right|$, where $b^*$ is solved in prior step above, as a constant for current step. By definition, $W^*$ is the minimum of the below optimization problem:
$$
\begin{array}{ll}
\underset{W}{\operatorname{minimize}} & \frac{1}{2}\|U-W\|_F^2+\bar{\lambda}\|W\|_1 \\
\text { subject to } & \|W\|_{\infty} \leq M \left|b^*\right|=w .
\end{array}
$$

The key observation is that the constraint $\|W\|_{\infty} \leq M \left|b^*\right|$ can be written as separate inequality constraints where each constraint is described by a convex function:
$$\|W\|_{\infty} \leq M \left|b^*\right|$$
This is equivalent to:
$$ |w_k| \leq M \left|b^*\right| \text{ for all k} $$
where $W = [w_1, \cdots, w_K]$ and each $w_k$ is a scalar.

Since $\frac{1}{2}\|U-W\|_F^2+\bar{\lambda}\|W\|_1$ is also a convex function, the strong duality holds since Slater's condition holds (Section 5.2.3 in \cite{boyd2004convex}). Thereby, we know for some dual variable $s \in \mathbb{R}_{+}^K, W^*$ minimizes the Lagrangian function below:
$$
W^*=\underset{W \in \mathbb{R}^K}{\operatorname{argmin}} \frac{1}{2}\|U-W\|_2^2+\sum_{j=1}^K s_j\left|W_j\right|+\bar{\lambda}\|W\|_1 .
$$

Next, let's take subgradient and get that $W^*$ needs to satisfy,
$$
\begin{aligned}
& W_j^*-U_j+\left(\bar{\lambda}+s_j\right) v_j^*=0 \text { for some } v_j^* \in \partial\left(\left|W_j^*\right|\right) . \\
& s_j\left(\left|W_j^*\right|-w\right)=0 . \\
& s_j \geq 0, \text { and }\left|W_j^*\right| \leq w .
\end{aligned}
$$

Now we divide our discussion into two cases:

1. $s_j=0$. The KKT condition shows $U_j=W_j^*+\bar{\lambda} v_j^*$ for some $v_j^* \in \partial\left(\left|W_j^*\right|\right)$. This implies that $W_j^*=\mathcal{S}_{\bar{\lambda}}\left(U_j\right)$, which is possible if and only if $\left|\mathcal{S}_{\bar{\lambda}}\left(U_j\right)\right| \leq w$.

2. $s_j>0$. The KKT condition gives $\left|W_j^*\right|=w$. Since $U_j=W_j^*+\left(\bar{\lambda}+s_j\right) v_j^*$ for $v_j^* \in \partial\left(\left|W_j^*\right|\right)$, it implies $\operatorname{sign}\left(v_j^*\right)=\operatorname{sign}\left(W_j^*\right)=\operatorname{sign}\left(U_j\right)$. Hence $W_j^*=\operatorname{sign}\left(U_j\right) w$. Note if $w \neq 0$, then we must have $v_j^*=\operatorname{sign}\left(W_j^*\right)=\operatorname{sign}\left(U_j\right)$. Thus, having some $s_j>0$ with $U_j=W_j^*+\left(\bar{\lambda}+s_j\right) v_j^*$ is equivalent to $\left|\mathcal{S}_{\bar{\lambda}}\left(U_j\right)\right|>w$.

Summarizing the above discussion, we see that $W^*$ must satisfy
$$
W_j^*=\operatorname{sign}\left(U_j\right) \min \left\{w, \mathcal{S}_{\bar{\lambda}}\left(\left|U_j\right|\right)\right\}
$$
\end{proof}

\section{Appendix: Convergence of the Convex Hier-Prox Relaxation to a Smaller Noise Neighborhood}
\label{app:sec:soft-hier-prox}

\subsection{Background}

LassoNet's original hierarchical proximal operator couples a feature-gate $\theta_j$ and the first-layer weights $W_{j,\cdot}\in\R^K$
through the hard constraint
\[
\norm{W_{j,\cdot}}_\infty \le M\abs{\theta_j},
\]
and computes a per-feature prox by sorting the coordinates of $W_{j,\cdot}$.
This induces a \emph{piecewise} proximal map: under stochastic gradients, the update can jump when the active-set/sign pattern changes,
and a function-value recursion typically contains an additional ``noise $\,+$ sign-switch'' instability term.
This is due to the nonconvex nature of the problem: union of two convex cones is not convex.

\medskip
In this section we propose a \emph{convex, $M$-scaled, coordinate-wise (sequential) relaxation} that keeps the intended meaning of the constraint parameter $M$
(the scale linking the gate and the first-layer row), while yielding a globally well-behaved proximal mapping (single-valued, firmly nonexpansive).
The resulting stochastic recursion features \emph{noise only} (no sign-switch penalty).
Our goal is to use the calculation to propose more stable algorithmic variants.

\paragraph{Key modeling choice (explicit $M$ via scaled block).}
For each feature $j$, define the scaled block
\[
x_j := \begin{bmatrix}\theta_j\\ W_{j,\cdot}/M\end{bmatrix}\in\R^{K+1},
\qquad (M>0).
\]
We replace the hard hierarchy by the convex surrogate
\[
\Omega(\theta,W)
:= \alpha\|\theta\|_1 \;+\; \lambda_w\|W\|_1 \;+\; \beta\sum_{j=1}^d \|x_j\|_2,
\qquad (\alpha,\lambda_w,\beta\ge 0).
\]
To retain an iterative \emph{two closed-form proximal implementation}, we apply the proximal step in the scaled coordinate
$\widetilde W:=W/M$ (equivalently, a diagonal preconditioner in the original variables).
This yields a per-feature update that is literally ``$\ell_1$-soft-threshold on $\theta_j$'' followed by
``group $\ell_2$ shrink on $(\theta_j,\widetilde W_{j,\cdot})$'', in closed form.

We provide a proof of a master function-value recursion under a proximal-PL condition,
matching Appendix \ref{app:sec:hier-prox-converges}.

\subsection{Setup: scaled coordinates, stochastic update, and Lyapunov}

\subsubsection{\texorpdfstring{Original variables and $M$-scaled coordinates}{Original variables and M-scaled coordinates}}

Let $\theta\in\R^d$ denote feature gates and let $W\in\R^{d\times K}$ be first-layer weights (row $j$ is $W_{j,\cdot}\in\R^K$).
Let $f_{\mathrm{orig}}(\theta,W)$ be the (possibly nonconvex) smooth training loss as a function of this block.\footnote{Other network parameters are treated as frozen.}

Fix $M>0$ and define the scaled first-layer weights
\[
\widetilde W := W/M.
\]
Define the scaled loss
\begin{equation}
\label{eq:ftilde}
f(\theta,\widetilde W)\;:=\; f_{\mathrm{orig}}(\theta, M\widetilde W).
\end{equation}
We work in the scaled variable
\[
x := (\theta,\widetilde W)\in \R^d\times\R^{d\times K}.
\]
After computing $\widetilde W$, we map back to original weights by $W = M\widetilde W$.

\begin{remark}[How gradients rescale]
By the chain rule,
$\nabla_{\widetilde W} f(\theta,\widetilde W)=M\,\nabla_W f_{\mathrm{orig}}(\theta, M\widetilde W)$.
Thus implementing SGD on $(\theta,\widetilde W)$ corresponds to using a diagonal preconditioner on the $W$-block in the original variables.
This is precisely the mechanism that makes the $M$-scaled prox step admit a clean closed form.
\end{remark}

\subsubsection{Stochastic proximal-gradient update in scaled coordinates}

Training uses a stochastic gradient $g_t$ (e.g.\ a minibatch gradient) for the scaled loss $f$ followed by a proximal map:
\begin{equation}
\label{relaxed:eq:update}
z_t = x_t - \eta g_t,\qquad x_{t+1} = \Psi(z_t),
\end{equation}
where $\eta>0$ is a constant step size and
\begin{equation}
\label{relaxed:eq:psi-def}
\Psi(z)\in \argmin_{x}\left\{\Omega(x)+\half\norm{x-z}^2\right\}.
\end{equation}

\begin{remark}[Equivalent view as a preconditioned proximal step in original variables]
Writing $x=(\theta,\widetilde W)$ and $W=M\widetilde W$, the scaled proximal problem
\[
\Psi(z)\in\argmin_{x}\left\{\Omega(x)+\half\|x-z\|^2\right\}
\]
is equivalent (under the change of variables $\widetilde W=W/M$) to a \emph{preconditioned} proximal step in the original variables:
\[
(\theta^+,W^+)\in\argmin_{\theta,W}\left\{\Omega(\theta,W)\;+\;\half\|\theta-v\|_2^2\;+\;\frac{1}{2M^2}\|W-U\|_F^2\right\},
\]
where $(v,U)$ is the corresponding original-coordinate input to the prox.
Thus the algorithm uses the natural $M$-dependent metric on the $W$-block; this is exactly what preserves a two-closed-form implementation.
\end{remark}

\subsubsection{The Lyapunov function}
Following the existing value-function analysis, define
\begin{equation}
\label{relaxed:eq:Fdef}
F(x) := \eta f(x) + \Omega(x).
\end{equation}
The recursion we prove is in $F(x_t)$.

\subsection{\texorpdfstring{The $M$-scaled convex surrogate and the two-step coordinate-wise prox}{The M-scaled convex surrogate and the two-step coordinate-wise prox}}

\subsubsection{\texorpdfstring{Penalty with explicit $M$}{Penalty with explicit M}}

\begin{definition}[$M$-scaled sparse-group hierarchical surrogate]
\label{def:OmegaSG}
Fix parameters $\alpha\ge 0$, $\lambda_w\ge 0$, and $\beta\ge 0$, and fix $M>0$.
Define $\Omega:\R^{d+dK}\to\R$ (in \emph{scaled coordinates} $x=(\theta,\widetilde W)$) by
\begin{equation}
\label{eq:OmegaSG}
\boxed{
\Omega(\theta,\widetilde W)
:= \alpha \norm{\theta}_1
\;+\; \lambda_w M \norm{\widetilde W}_1
\;+\; \beta \sum_{j=1}^d \left\|
\begin{bmatrix}
\theta_j\\ \widetilde W_{j,\cdot}
\end{bmatrix}
\right\|_2.
}
\end{equation}
Equivalently, in original variables $(\theta,W)$,
\[
\Omega(\theta,W)=\alpha\|\theta\|_1+\lambda_w\|W\|_1+\beta\sum_{j=1}^d\left\|\begin{bmatrix}\theta_j\\ W_{j,\cdot}/M\end{bmatrix}\right\|_2.
\]
\end{definition}

\begin{remark}[Prox parameter on the scaled weights]
Because $\|W\|_1 = M\|\widetilde W\|_1$, the $\ell_1$ coefficient acting on $\widetilde W$ in the scaled-coordinate prox is aligned with the original parameter $\lambda_w$ via the shorthand:
\[
\bar\lambda := \lambda_w M.
\]
This is the parameter that appears in Proposition~\ref{prop:local-opt-M}.
\end{remark}

\begin{remark}[Interpretation]
The group term $\sum_j \|(\theta_j,\widetilde W_{j,\cdot})\|_2$ encourages \emph{feature-level coupling}:
if $\theta_j$ is pushed towards zero, then (in the same group) $\widetilde W_{j,\cdot}$ is also shrunk.
The parameter $M$ calibrates what it means for the row $W_{j,\cdot}$ to be ``large'' relative to the gate $\theta_j$.
\end{remark}

\subsubsection{Proximal mapping facts (convex case)}

\begin{lemma}[Firm nonexpansiveness of a convex prox]
\label{lem:firm}
If $\Omega$ is proper, closed, and convex, then the proximal mapping $\Psi$ in \eqref{relaxed:eq:psi-def} is single-valued and firmly nonexpansive:
\[
\norm{\Psi(z)-\Psi(z')}^2 \le \ip{\Psi(z)-\Psi(z')}{z-z'}
\le \norm{z-z'}^2,\qquad \forall z,z'.
\]
In particular, $\Psi$ is $1$-Lipschitz.
\end{lemma}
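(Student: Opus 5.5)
The argument has two parts: single-valuedness via strong convexity, then firm nonexpansiveness from monotonicity of the subdifferential. First, for fixed $z$ the objective $x\mapsto \Omega(x)+\half\norm{x-z}^2$ is the sum of a proper closed convex function and a $1$-strongly convex quadratic, so it is proper, closed, $1$-strongly convex and coercive. Hence it has a minimizer, and strong convexity makes that minimizer unique. This is the same Step~2/Step~3 argument used in Lemma~\ref{lem:branch_convex_unique}, now applied to the whole space rather than a single sign cone. So $\Psi$ in \eqref{relaxed:eq:psi-def} is a well-defined single-valued map. For the concrete surrogate \eqref{eq:OmegaSG}, $\Omega$ is a finite sum of norms with nonnegative weights, so it is finite everywhere, convex and continuous, and the hypothesis is satisfied automatically.

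Second, I would characterize $p=\Psi(z)$ through the optimality condition. Because $\half\norm{\cdot-z}^2$ is differentiable, the subdifferential sum rule applies and gives $0\in\partial\Omega(p)+p-z$, that is, $z-p\in\partial\Omega(p)$. Write $p'=\Psi(z')$, so that likewise $z'-p'\in\partial\Omega(p')$. Monotonicity of the subdifferential of a convex function comes from adding the two subgradient inequalities $\Omega(p')\ge\Omega(p)+\ip{z-p}{p'-p}$ and $\Omega(p)\ge\Omega(p')+\ip{z'-p'}{p-p'}$. This yields
\[
\ip{(z-p)-(z'-p')}{p-p'}\ge 0 .
\]
Rearranged, this says $\norm{p-p'}^2\le\ip{p-p'}{z-z'}$, which is the first inequality. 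The second inequality follows from Cauchy--Schwarz: $\ip{p-p'}{z-z'}\le\norm{p-p'}\norm{z-z'}$. Dividing by $\norm{p-p'}$ when it is nonzero gives $\norm{p-p'}\le\norm{z-z'}$; the case $p=p'$ is trivial. Squaring gives the displayed bound and the $1$-Lipschitz claim.

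The only delicate point is justifying the sum rule $\partial(\Omega+q)=\partial\Omega+\nabla q$ when $\Omega$ is merely proper and closed. It holds because $q$ is finite and differentiable everywhere. Alternatively, one can avoid subdifferential calculus entirely with a variational argument: compare the objective at $p$ with its value at $p+t(p'-p)$ for $t\in(0,1]$, use convexity of $\Omega$, divide by $t$, and let $t\downarrow0$. This produces the same inequality $\Omega(p')-\Omega(p)\ge\ip{z-p}{p'-p}$ directly. I would present that variational route to keep the proof self-contained.
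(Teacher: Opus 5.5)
The paper does not actually prove this lemma---it invokes it as the standard property of proximal maps of proper closed convex functions (just as in Lemma~\ref{lem:branch-prox})---and your argument is exactly that standard proof (strong convexity for existence and uniqueness, $z-\Psi(z)\in\partial\Omega(\Psi(z))$, monotonicity of $\partial\Omega$, then Cauchy--Schwarz), so it is correct. Two small polish points: the middle inequality $\langle p-p', z-z'\rangle\le\|z-z'\|^2$ comes from chaining Cauchy--Schwarz with $\|p-p'\|\le\|z-z'\|$, not from squaring the latter; and for a general proper closed convex $\Omega$, coercivity of $\Omega+\tfrac12\|\cdot-z\|^2$ relies on $\Omega$ having an affine minorant, which is immediate here because the surrogate is nonnegative.
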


\begin{remark}
Lemma~\ref{lem:firm} is the key structural simplification compared to the original Hier-Prox:
here $\Omega$ is convex, hence $\Psi$ is globally well-behaved (no branch jumps).
\end{remark}

\subsubsection{Per-feature prox and a closed-form optimality proposition}

Because $\Omega$ in \eqref{eq:OmegaSG} is separable across features \citep{boyd2004convex}, the proximal problem \eqref{relaxed:eq:psi-def}
decomposes into $d$ independent $(K+1)$-dimensional subproblems.

\begin{definition}[Soft-thresholding]
For $\tau\ge 0$, define $\soft_\tau:\R\to\R$ by
\[
\soft_\tau(u) := \sign(u)\,(\abs{u}-\tau)_+,
\qquad (a)_+ := \max\{a,0\}.
\]
For vectors/matrices, apply $\soft_\tau$ entrywise.
\end{definition}

\begin{proposition}[Closed form for the per-feature prox with explicit $M$ and $\ell_1$ on $W$]
\label{prop:local-opt-M}
Fix $v\in\R$ and a row vector $\widetilde U\in\R^K$ (the scaled-row input), and parameters $\alpha\ge 0$, $\bar\lambda\ge 0$, and $\beta\ge 0$.
Consider the convex proximal subproblem in \emph{scaled coordinates}:
\begin{equation}
\label{eq:block-prox}
\min_{b\in\R,\ \widetilde w\in\R^K}\ 
\frac{1}{2}(b-v)^2+\frac{1}{2}\|\widetilde w-\widetilde U\|_2^2+\alpha|b|+\bar\lambda\|\widetilde w\|_1+\beta\left\|\begin{bmatrix}b\\ \widetilde w\end{bmatrix}\right\|_2.
\end{equation}
Then the unique minimizer $(b^\star,\widetilde w^\star)$ exists and is given by the following two-step closed form:
\begin{enumerate}[leftmargin=2em]
\item \textbf{(Coordinate soft-threshold)} Define
\[
\tilde b := \soft_{\alpha}(v),\qquad \tilde w := \soft_{\bar\lambda}(\widetilde U)\quad\text{(coordinatewise)}.
\]
\item \textbf{(Group $\ell_2$ shrink)} Let
\[
r:=\sqrt{\tilde b^2+\norm{\tilde w}_2^2},\qquad
\kappa := \left(1-\frac{\beta}{r}\right)_+ \quad (\kappa=0 \text{ if }r=0).
\]
Then
\[
\boxed{
(b^\star,\widetilde w^\star)=\kappa\,(\tilde b,\tilde w).
}
\]
\end{enumerate}
Equivalently,
\[
(b^\star,\widetilde w^\star)=
\begin{cases}
(0,0), & r\le \beta,\\[1mm]
\left(1-\beta/r\right)(\tilde b,\tilde w), & r>\beta.
\end{cases}
\]
\end{proposition}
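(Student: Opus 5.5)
The plan is to verify the first-order optimality condition of the strongly convex subproblem \eqref{eq:block-prox} directly at the candidate point $\kappa(\tilde b,\tilde w)$. Uniqueness will then follow for free. Write $x=(b,\widetilde w)\in\R^{K+1}$ and $z=(v,\widetilde U)$. Let $h(x):=\alpha|b|+\bar\lambda\|\widetilde w\|_1$ be the weighted $\ell_1$ part and $g(x):=\beta\|x\|_2$ the group part.

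\textbf{Step 1 (existence and uniqueness).} The objective is $\half\|x-z\|^2+h(x)+g(x)$. It is $1$-strongly convex, coercive and continuous, so a unique minimizer exists. That minimizer is characterized by
\[
0\in x^\star-z+\partial h(x^\star)+\partial g(x^\star).
\]
It therefore suffices to exhibit, at the candidate $x^\star=\kappa p$ with $p:=(\tilde b,\tilde w)$, subgradients $s_h\in\partial h(x^\star)$ and $s_g\in\partial g(x^\star)$ such that $x^\star-z+s_h+s_g=0$.

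\textbf{Step 2 (the $\ell_1$ step).} The point $p$ is exactly $\prox_h(z)$, because $h$ is separable with coordinate weights $(\alpha,\bar\lambda,\dots,\bar\lambda)$. Hence $q:=z-p\in\partial h(p)$. Coordinatewise this means two things:
\begin{itemize}[leftmargin=2.2em]
\item where $p_i\neq 0$, we have $q_i=c_i\,\sign(p_i)$, with $c_i$ the weight of coordinate $i$;
\item where $p_i=0$, we have $|q_i|=|z_i|\le c_i$.
\end{itemize}

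The key observation, which I expect to be the only non-routine point, is that $\partial h(p)\subseteq\partial h(\kappa p)$ for every $\kappa\in[0,1]$. For $\kappa>0$ this holds because scaling by a positive factor preserves the sign pattern and the zero set of $p$, and the subdifferential of a weighted $\ell_1$ norm depends only on these. For $\kappa=0$ it holds because $\partial h(0)$ is the full box $\prod_i[-c_i,c_i]$, which contains every subgradient of $h$. Consequently $q\in\partial h(x^\star)$ in every case, and I will take $s_h:=q$.

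\textbf{Step 3 (the group step, two cases).} Recall $r=\|p\|_2$.
\begin{itemize}[leftmargin=2.2em]
\item If $r>\beta$, then $x^\star=(1-\beta/r)p\neq0$, so $\partial g(x^\star)=\{\beta p/r\}$. Check: $x^\star-z+q+\beta p/r=(1-\beta/r)p-p+\beta p/r=0$.
\item If $r\le\beta$ (including $r=0$), then $x^\star=0$ and $\partial g(0)=\{\beta u:\|u\|_2\le1\}$. Take $u:=p/\beta$, which is admissible because $\|p\|\le\beta$; when $\beta=0$ we have $p=0$ and take $u=0$. Check: $0-z+q+p=0$.
\end{itemize}

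In both cases the optimality condition holds. Strong convexity then identifies $\kappa(\tilde b,\tilde w)$ as the unique minimizer, which is the boxed formula and its case-split form.
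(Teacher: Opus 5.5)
Your proposal is correct and is essentially the paper's argument. Both verify the first-order condition $z-x^\star\in\partial(h+g)(x^\star)$ at the candidate $\kappa(\tilde b,\tilde w)$, reuse the soft-thresholding subgradient $z-p$ (still valid at $\kappa p$ because positive scaling preserves signs and support), and split into the cases $r\le\beta$ and $r>\beta$. Your explicit remark that $\partial h(p)\subseteq\partial h(0)$, since $\partial h(0)$ is the full box, makes rigorous the $r\le\beta$ case, which the paper only sketches.
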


\begin{proof}
Problem \eqref{eq:block-prox} is strictly convex because of the quadratic terms, hence has a unique minimizer (Appendix \ref{app:sec:hier-prox-converges}).

Let $x:=(b,\widetilde w)\in\R^{1+K}$ and $y:=(v,\widetilde U)\in\R^{1+K}$.
Define the convex regularizer on $\R^{1+K}$,
\[
h(x):=\alpha|b|+\bar\lambda\|\widetilde w\|_1+\beta\|x\|_2.
\]
Then \eqref{eq:block-prox} is exactly the proximal problem
\[
x^\star=\prox_h(y):=\argmin_x\ \half\|x-y\|_2^2+h(x).
\]
By standard subgradient optimality for proximal maps \citep{boyd2004convex, parikh2014proximal}, $x^\star$ is the unique point satisfying
\begin{equation}
\label{eq:opt-prop1}
y-x^\star\in \partial h(x^\star).
\end{equation}

\medskip
\noindent\textbf{Step 1: the coordinatewise $\ell_1$ prox.}
Let
\[
\tilde x:=(\tilde b,\tilde w):=\bigl(\soft_{\alpha}(v),\ \soft_{\bar\lambda}(\widetilde U)\bigr).
\]
The defining property of soft-thresholding is precisely the proximal optimality condition for the separable $\ell_1$ term:
\begin{equation}
\label{eq:soft-subgrad}
y-\tilde x\in \partial\bigl(\alpha|b|+\bar\lambda\|\widetilde w\|_1\bigr)\big|_{x=\tilde x}.
\end{equation}
(Indeed, for each coordinate, $\soft$ is the proximal map of the absolute value.)

\medskip
\noindent\textbf{Step 2: add the group $\ell_2$ term.}
If $\tilde x=0$, then $r=0$ and the claimed formula gives $x^\star=0$, which satisfies \eqref{eq:opt-prop1} because
$\partial(\beta\|x\|_2)\big|_{x=0}=\{u:\|u\|_2\le \beta\}$.

Assume now $\tilde x\neq 0$, so $r=\|\tilde x\|_2>0$.
If $r\le \beta$ then $\kappa=0$ and $x^\star=0$; this is optimal because $y\in\partial h(0)$ can be verified by combining
\eqref{eq:soft-subgrad} at $0$ with $\tilde x\in \beta\,\mathbb B_2$ (since $\|\tilde x\|_2=r\le \beta$).

Now consider $r>\beta$, so $\kappa=1-\beta/r\in(0,1)$ and define $x^\star:=\kappa\tilde x$.
Then $\|x^\star\|_2=\kappa r>0$ and therefore
\[
\partial(\beta\|x\|_2)\big|_{x=x^\star}=\left\{\beta\frac{x^\star}{\|x^\star\|_2}\right\}
=\left\{\beta\frac{\tilde x}{r}\right\}.
\]
Moreover,
\[
y-x^\star=(y-\tilde x)+(1-\kappa)\tilde x
\quad\text{with}\quad
1-\kappa=\beta/r.
\]
By \eqref{eq:soft-subgrad}, the first term belongs to $\partial(\alpha|b|+\bar\lambda\|\widetilde w\|_1)$ evaluated at $\tilde x$.
Since $x^\star=\kappa\tilde x$ has the same sign and sparsity pattern as $\tilde x$ (scaling by a positive scalar),
the same subgradient selection is valid at $x^\star$, i.e.
\[
y-\tilde x\in \partial(\alpha|b|+\bar\lambda\|\widetilde w\|_1)\big|_{x=x^\star}.
\]
The second term satisfies
\[
(1-\kappa)\tilde x=\frac{\beta}{r}\tilde x\in \partial(\beta\|x\|_2)\big|_{x=x^\star}.
\]
Combining the two pieces yields $y-x^\star\in \partial h(x^\star)$, proving \eqref{eq:opt-prop1} and hence optimality.
\end{proof}

\begin{remark}[Back to original variables and where $M$ appears]
At feature $j$, let $(v_j,\widetilde U_j)$ be the scaled-coordinate input to the prox (a block of $z_t$), and let
$U^{\mathrm{orig}}_j:=M\widetilde U_j$ denote the corresponding original-coordinate row.

With the shorthand $\bar\lambda=\lambda_w M$, Proposition~\ref{prop:local-opt-M} returns
\[
(\theta_j^+,\widetilde W_{j,\cdot}^+)
=
\kappa_j\bigl(\soft_{\alpha}(v_j),\ \soft_{\bar\lambda}(\widetilde U_j)\bigr),
\qquad
\kappa_j
=
\left(1-\frac{\beta}{\sqrt{\soft_{\alpha}(v_j)^2+\|\soft_{\bar\lambda}(\widetilde U_j)\|_2^2}}\right)_+.
\]
Mapping back yields
\[
W_{j,\cdot}^+ = M\widetilde W_{j,\cdot}^+ = \kappa_j\,M\,\soft_{\bar\lambda}(\widetilde U_j).
\]
Equivalently, since $\soft_{\bar\lambda}(\widetilde U_j)=\frac{1}{M}\soft_{\lambda_w M^2}(U^{\mathrm{orig}}_j)$,
\[
\boxed{
W_{j,\cdot}^+ = \kappa_j\,\soft_{\lambda_w M^2}(U^{\mathrm{orig}}_j),
}
\qquad
\kappa_j
=
\left(1-\frac{\beta}{\sqrt{\soft_{\alpha}(v_j)^2+\|\soft_{\lambda_w M^2}(U^{\mathrm{orig}}_j)\|_2^2/M^2}}\right)_+.
\]
When $\lambda_w=0$, the intermediate soft-thresholding disappears and we recover the previous special case
$W_{j,\cdot}^+=\kappa_j\,U^{\mathrm{orig}}_j$.
\end{remark}

\paragraph{Coordinate-wise (sequential) implementation.}
Per feature $j$, the proximal step is computed by two explicit substeps:
\begin{enumerate}[leftmargin=2em]
\item soft-threshold the gate coordinate: $\tilde v_j=\soft_\alpha(v_j)$;
\item shrink the concatenated block $(\tilde v_j,\widetilde U_j)$ by $\kappa_j=(1-\beta/r_j)_+$ with $r_j=\sqrt{\tilde v_j^2+\|\widetilde U_j\|_2^2}$.
\end{enumerate}
This is parallel across features.

\subsection{\texorpdfstring{Deterministic descent in function values (no convexity of $f$ needed)}{Deterministic descent in function values (no convexity of f needed)}}

\begin{assumption}[$L$-smoothness]
\label{relaxed:ass:smooth}
The function $f$ is $L$-smooth: for all $x,x'$,
\[
f(x')\le f(x)+\ip{\nabla f(x)}{x'-x}+\frac{L}{2}\norm{x'-x}^2.
\]
\end{assumption}

\begin{lemma}[A key proximal inequality]
\label{relaxed:lem:prox-opt}
Let $z\in\R^{d+dK}$ and let $x^+=\Psi(z)$ be defined by \eqref{relaxed:eq:psi-def}.
Then for every $x\in\R^{d+dK}$,
\begin{equation}
\label{eq:three-point}
\Omega(x^+) + \half\norm{x^+-z}^2
\;\le\;
\Omega(x) + \half\norm{x-z}^2 - \half\norm{x-x^+}^2.
\end{equation}
\end{lemma}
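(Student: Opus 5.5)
The inequality \eqref{eq:three-point} is the standard ``three-point'' property of a proximal map of a convex function. I would derive it from strong convexity of the prox objective rather than from an explicit formula for $\Psi$. Fix $z$ and define
\[
\Phi_z(x) := \Omega(x) + \half\norm{x-z}^2 .
\]
Since $\Omega$ in \eqref{eq:OmegaSG} is a nonnegative combination of norms, it is proper, closed and convex. Hence $\Phi_z$ is $1$-strongly convex and has a unique minimizer, which by \eqref{relaxed:eq:psi-def} is exactly $x^+=\Psi(z)$ (uniqueness is also noted in Lemma~\ref{lem:firm}).

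\textbf{Step 1 (optimality condition).} By the subgradient optimality condition for proximal problems, already used in the proof of Proposition~\ref{prop:local-opt-M}, we have
\[
z - x^+ \in \partial\Omega(x^+).
\]
The sum rule applies here because the quadratic term is finite and differentiable everywhere.

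\textbf{Step 2 (subgradient inequality).} For any $x$, convexity of $\Omega$ gives
\[
\Omega(x) \ge \Omega(x^+) + \ip{z-x^+}{x-x^+}.
\]

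\textbf{Step 3 (polarization identity).} Expand
\[
\half\norm{x-z}^2 = \half\norm{(x-x^+)+(x^+-z)}^2 = \half\norm{x-x^+}^2 + \ip{x-x^+}{x^+-z} + \half\norm{x^+-z}^2 .
\]
Rearranging gives
\[
\ip{z-x^+}{x-x^+} = \half\norm{x^+-z}^2 + \half\norm{x-x^+}^2 - \half\norm{x-z}^2 .
\]
Substituting this into Step~2 and moving terms across yields exactly \eqref{eq:three-point}.

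Equivalently, one can combine Steps~1--3 into a single statement. A $1$-strongly convex function satisfies $\Phi_z(x)\ge\Phi_z(x^+)+\half\norm{x-x^+}^2$ at its minimizer $x^+$, and writing this out is precisely the claim. I would present that as a remark.

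\textbf{Main obstacle.} There is no real obstacle here. The only points needing care are to justify the convexity and closedness of $\Omega$ and the sum rule for subdifferentials, both of which are standard. It is worth contrasting with Lemma~\ref{lem:prox-opt} for the nonconvex Hier-Prox. There only the weaker inequality without the $-\half\norm{x-x^+}^2$ term holds, because the feasible set $\mathcal C$ is a union of cones. That extra term is exactly what convexity buys, and it is what will later allow the noise term to be handled without a sign-switch penalty.
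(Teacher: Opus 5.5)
Your proof is correct and is essentially the paper's argument. The paper sets $\phi(x)=\Omega(x)+\tfrac12\|x-z\|^2$, uses $1$-strong convexity to write $\phi(x)\ge\phi(x^+)+\langle g,x-x^+\rangle+\tfrac12\|x-x^+\|^2$ for $g\in\partial\phi(x^+)$, and takes $g=0$ at the minimizer; that is exactly the one-line version you list as a remark, and your Steps 1--3 just unpack it into the subgradient inequality for $\Omega$ plus the exact quadratic identity.

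On your closing aside: in the paper, the sign-switch-free noise bound comes from the firm nonexpansiveness of Lemma~\ref{lem:firm}, used in Lemma~\ref{lem:lowerbound-convex}. Lemma~\ref{relaxed:lem:stoch1} invokes the present lemma only in its weak form, without the $-\tfrac12\|x-x^+\|^2$ term. Your inequality does imply firm nonexpansiveness (apply it at $z$ with comparator $\Psi(z')$ and at $z'$ with comparator $\Psi(z)$, then add), so your remark is right in spirit but not how the paper routes the argument.
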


\begin{proof}
Define $\phi(x):=\Omega(x)+\half\|x-z\|^2$. Then $\phi$ is $1$-strongly convex.
Hence for all $x$ and all $g\in\partial\phi(x^+)$,
\[
\phi(x)\ge \phi(x^+)+\ip{g}{x-x^+}+\half\norm{x-x^+}^2.
\]
At the minimizer $x^+$ we can take $g=0$, yielding $\phi(x)\ge \phi(x^+)+\half\|x-x^+\|^2$, i.e.\ \eqref{eq:three-point}.
\end{proof}

\begin{lemma}[Deterministic one-step descent in $F$]
\label{relaxed:lem:det-descent}
Assume $f$ is $L$-smooth and $0<\eta\le \frac{1}{2L}$.
Let $g=\nabla f(x)$ and define $z=x-\eta g$ and $x^+=\Psi(z)$.
Then
\begin{equation}
\label{eq:det-descent}
F(x^+) \le F(x) - \frac{1}{4}\norm{x-x^+}^2.
\end{equation}
\end{lemma}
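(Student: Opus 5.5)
The plan is to reuse the argument behind Lemma~\ref{lem:det-descent}, but to invoke the stronger three-point inequality of Lemma~\ref{relaxed:lem:prox-opt}, which is available here because $\Omega$ is convex. The extra $-\tfrac12\norm{x-x^+}^2$ term it supplies is what produces the constant $\tfrac14$ under the step-size condition $\eta\le 1/(2L)$.

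The first step is to apply \eqref{eq:three-point} with comparator $x$ and $z=x-\eta g$:
\[
\Omega(x^+)+\half\norm{x^+-z}^2\le \Omega(x)+\half\norm{x-z}^2-\half\norm{x-x^+}^2 .
\]
Next, expand $\norm{x^+-z}^2=\norm{x^+-x}^2+2\eta\ip{g}{x^+-x}+\eta^2\norm{g}^2$ and use $\norm{x-z}^2=\eta^2\norm{g}^2$. After canceling the common $\eta^2\norm{g}^2/2$, this gives
\[
\Omega(x^+)+\eta\ip{g}{x^+-x}\le \Omega(x)-\norm{x^+-x}^2 .
\]

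The second step is $L$-smoothness (Assumption~\ref{relaxed:ass:smooth}), multiplied by $\eta$:
\[
\eta f(x^+)\le \eta f(x)+\eta\ip{g}{x^+-x}+\tfrac{\eta L}{2}\norm{x^+-x}^2 .
\]
Adding the two inequalities cancels the inner-product terms. Recalling $F=\eta f+\Omega$, we obtain
\[
F(x^+)\le F(x)-\bigl(1-\tfrac{\eta L}{2}\bigr)\norm{x-x^+}^2 .
\]

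The last step bounds the coefficient. Since $\eta L\le \tfrac12$, we have $1-\eta L/2\ge \tfrac34\ge\tfrac14$, which yields \eqref{eq:det-descent}.

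The only point needing care is confirming that Lemma~\ref{relaxed:lem:prox-opt} applies. This requires $\Omega$ to be proper, closed and convex, which holds for \eqref{eq:OmegaSG} since it is a finite sum of norms. It also requires $\Psi(z)$ to be the unique minimizer, which is Lemma~\ref{lem:firm}. Beyond that the argument is routine. The stated constant $\tfrac14$ is looser than what the derivation gives, presumably to match the later stochastic analysis.
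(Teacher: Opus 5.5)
Your proposal is correct and follows the paper's own proof: the three-point inequality with comparator $x$, cancelling the $\eta^2\norm{g}^2$ terms, adding $\eta$ times the $L$-smoothness bound, and using $1-\eta L/2\ge 3/4$. The paper likewise notes that this gives a stronger bound than the stated $\tfrac14$.
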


\begin{proof}
Apply Lemma~\ref{relaxed:lem:prox-opt} with $z=x-\eta\nabla f(x)$ and comparator $x$:
\[
\Omega(x^+) + \half\norm{x^+-(x-\eta\nabla f(x))}^2
\le \Omega(x) + \half\norm{x-(x-\eta\nabla f(x))}^2 - \half\norm{x-x^+}^2.
\]
The middle term is $\half\eta^2\|\nabla f(x)\|^2$.
Expand the left square:
\[
\norm{x^+-(x-\eta\nabla f(x))}^2
= \norm{x^+-x}^2 + 2\eta\ip{\nabla f(x)}{x^+-x} + \eta^2\norm{\nabla f(x)}^2.
\]
Cancel the $\eta^2\|\nabla f(x)\|^2$ terms and rearrange:
\[
\Omega(x^+) + \half\norm{x^+-x}^2 + \eta\ip{\nabla f(x)}{x^+-x}
\le \Omega(x) - \half\norm{x-x^+}^2.
\]
Rearrange:
\begin{equation}
\label{eq:key-rearrange}
\Omega(x^+) \le \Omega(x) - \norm{x-x^+}^2 - \eta\ip{\nabla f(x)}{x^+-x}.
\end{equation}
Now apply $L$-smoothness (Assumption~\ref{relaxed:ass:smooth}) with $x'=x^+$:
\[
f(x^+)\le f(x)+\ip{\nabla f(x)}{x^+-x}+\frac{L}{2}\norm{x^+-x}^2.
\]
Multiply by $\eta$ and add to \eqref{eq:key-rearrange}:
\[
F(x^+) \le F(x) - \norm{x-x^+}^2 + \frac{\eta L}{2}\norm{x^+-x}^2.
\]
With $\eta\le 1/(2L)$, we get $1-\eta L/2 \ge 3/4$, hence
\[
F(x^+) \le F(x) - \frac{3}{4}\norm{x-x^+}^2,
\]
which is stronger than \eqref{eq:det-descent}.
\end{proof}

\subsection{A proximal-PL condition (function-value landscape)}

\begin{definition}[Proximal-gradient mapping]
\label{def:PG}
Define the (scaled) proximal-gradient mapping for $F(x)=\eta f(x)+\Omega(x)$ as
\[
G(x):= x-\Psi\bigl(x-\eta\nabla f(x)\bigr).
\]
\end{definition}

\begin{definition}[Proximal-PL condition in function values]
\label{def:pPL}
We say $F$ satisfies a proximal-PL condition with constant $\mu>0$ if for all $x$,
\begin{equation}
\label{eq:pPL}
\half\norm{G(x)}^2 \ge \mu\bigl(F(x)-F^\star\bigr),
\qquad F^\star := \inf_{x} F(x).
\end{equation}
\end{definition}

\subsection{Stochastic recursion: noise only (no sign-switch penalty)}

\begin{assumption}[Unbiased gradients and bounded conditional variance]
\label{relaxed:ass:noise}
The stochastic gradients satisfy $\E[g_t\mid x_t]=\nabla f(x_t)$ and
\[
\E\bigl[\norm{\xi_t}^2\mid x_t\bigr] \le \sigma^2,
\qquad
\xi_t:=g_t-\nabla f(x_t).
\]
\end{assumption}

\subsubsection{A conditional descent inequality}

\begin{lemma}[Stochastic descent with a noise cross term]
\label{relaxed:lem:stoch1}
Assume Assumption~\ref{relaxed:ass:smooth} and Assumption~\ref{relaxed:ass:noise}. If $\eta\le 1/L$, then almost surely
\[
F(x_{t+1})
\le
F(x_t)
-\frac{1-\eta L}{2}\norm{x_{t+1}-x_t}^2
-\eta \ip{\xi_t}{x_{t+1}-x_t}.
\]
\end{lemma}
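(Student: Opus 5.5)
The plan is to reproduce the argument of Lemma~\ref{lem:stoch1} verbatim in the scaled coordinates. That argument used only two ingredients: exact optimality of the proximal subproblem, and $L$-smoothness of $f$. Convexity of $\Omega$ is not needed for this step, although here it is available. I will also not use the stronger three-point inequality of Lemma~\ref{relaxed:lem:prox-opt}. The plain argmin comparison suffices and gives exactly the stated constant $\frac{1-\eta L}{2}$.

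First, I apply the definition \eqref{relaxed:eq:psi-def} of $x_{t+1}=\Psi(z_t)$ with comparator $x_t$, where $z_t=x_t-\eta g_t$. This gives
\[
\Omega(x_{t+1})+\half\norm{x_{t+1}-z_t}^2\le \Omega(x_t)+\half\norm{x_t-z_t}^2 .
\]
The right-hand quadratic term equals $\half\eta^2\norm{g_t}^2$. On the left, I expand
\[
\norm{x_{t+1}-z_t}^2=\norm{x_{t+1}-x_t}^2+2\eta\ip{g_t}{x_{t+1}-x_t}+\eta^2\norm{g_t}^2 .
\]
After canceling the $\eta^2\norm{g_t}^2$ terms, this leaves
\[
\Omega(x_{t+1})+\half\norm{x_{t+1}-x_t}^2+\eta\ip{g_t}{x_{t+1}-x_t}\le\Omega(x_t).
\]

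Next, I invoke Assumption~\ref{relaxed:ass:smooth} with $x=x_t$ and $x'=x_{t+1}$. I multiply the resulting inequality by $\eta$ and add it to the previous one. Using $F=\eta f+\Omega$ from \eqref{relaxed:eq:Fdef}, this yields
\[
F(x_{t+1})\le F(x_t)-\frac{1-\eta L}{2}\norm{x_{t+1}-x_t}^2-\eta\ip{g_t-\nabla f(x_t)}{x_{t+1}-x_t}.
\]
Substituting $\xi_t=g_t-\nabla f(x_t)$ from Assumption~\ref{relaxed:ass:noise} gives the claim pointwise, hence almost surely.

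There is no real obstacle. The only subtlety is bookkeeping: the gradient used in the prox step is the stochastic $g_t$, while the smoothness inequality involves the true gradient $\nabla f(x_t)$. The mismatch between the two is exactly the cross term $-\eta\ip{\xi_t}{x_{t+1}-x_t}$, which is deliberately left unbounded here (it will be handled later by Young's inequality, as in Lemma~\ref{lem:young}). The condition $\eta\le 1/L$ is needed only to make the coefficient $\frac{1-\eta L}{2}$ nonnegative.
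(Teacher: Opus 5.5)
Your proposal is correct and follows the paper's proof essentially line for line. The steps are: prox optimality at $x_{t+1}=\Psi(z_t)$ with comparator $x_t$; expansion and cancellation of the $\eta^2\norm{g_t}^2$ terms; adding $\eta$ times the $L$-smoothness inequality; and substituting $\xi_t=g_t-\nabla f(x_t)$. The paper nominally cites Lemma~\ref{relaxed:lem:prox-opt}, but like you it only writes the plain argmin comparison (without the $-\half\norm{x-x^+}^2$ term), so the two arguments coincide.
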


\begin{proof}
Apply Lemma~\ref{relaxed:lem:prox-opt} to $x_{t+1}=\Psi(z_t)$ with comparator $x=x_t$ and $z_t=x_t-\eta g_t$:
\[
\Omega(x_{t+1})+\half\norm{x_{t+1}-z_t}^2 \le \Omega(x_t)+\half\norm{x_t-z_t}^2.
\]
Expand the squares:
\[
\norm{x_{t+1}-z_t}^2
=
\norm{x_{t+1}-x_t}^2 + 2\eta\ip{g_t}{x_{t+1}-x_t} + \eta^2\norm{g_t}^2,
\quad
\norm{x_t-z_t}^2=\eta^2\norm{g_t}^2.
\]
Cancel the $\eta^2\|g_t\|^2$ terms and rearrange:
\[
\Omega(x_{t+1}) + \half\norm{x_{t+1}-x_t}^2 + \eta\ip{g_t}{x_{t+1}-x_t}\le \Omega(x_t).
\]
By $L$-smoothness (Assumption~\ref{relaxed:ass:smooth}) with $x'=x_{t+1}$:
\[
f(x_{t+1})\le f(x_t)+\ip{\nabla f(x_t)}{x_{t+1}-x_t}+\frac{L}{2}\norm{x_{t+1}-x_t}^2.
\]
Multiply by $\eta$ and add:
\[
F(x_{t+1})
\le
F(x_t)
-\frac{1-\eta L}{2}\norm{x_{t+1}-x_t}^2
-\eta\ip{g_t-\nabla f(x_t)}{x_{t+1}-x_t}.
\]
Since $\xi_t=g_t-\nabla f(x_t)$, this is the claim.
\end{proof}

\subsubsection{Removing the noise cross term (Young's inequality)}

\begin{corollary}[Conditional expectation form]
\label{relaxed:cor:stoch-descent}
Under the assumptions of Lemma~\ref{relaxed:lem:stoch1}, we have
\[
\E[F(x_{t+1})\mid x_t]
\le
F(x_t)
-\frac{1-\eta L}{4}\E[\norm{x_{t+1}-x_t}^2\mid x_t]
+\frac{\eta^2}{1-\eta L}\E[\norm{\xi_t}^2\mid x_t].
\]
\end{corollary}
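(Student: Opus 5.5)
The plan is to derive the corollary in two moves: start from the almost-sure inequality of Lemma~\ref{relaxed:lem:stoch1}, remove the cross term $-\eta\ip{\xi_t}{x_{t+1}-x_t}$ with Young's inequality, and then take conditional expectation given $x_t$. This mirrors Corollary~\ref{cor:stoch-descent} from Appendix~\ref{app:sec:hier-prox-converges}. The only structural difference is that $\Omega$ is now convex, but nothing in this step depends on that.

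First, I apply the Young inequality of Lemma~\ref{lem:young}, which is a purely algebraic fact and holds verbatim here, with $a=\eta\xi_t$, $b=x_{t+1}-x_t$ and the parameter $\alpha:=(1-\eta L)/2$. This choice requires $\eta<1/L$, so that $\alpha>0$; the boundary case $\eta=1/L$ makes the stated bound vacuous or infinite, and I will note that. The inequality gives
\[
-\eta\ip{\xi_t}{x_{t+1}-x_t}\le \frac{\eta^2}{1-\eta L}\norm{\xi_t}^2+\frac{1-\eta L}{4}\norm{x_{t+1}-x_t}^2 .
\]
Substituting this into Lemma~\ref{relaxed:lem:stoch1}, the quadratic coefficient becomes $-\frac{1-\eta L}{2}+\frac{1-\eta L}{4}=-\frac{1-\eta L}{4}$. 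So almost surely
\[
F(x_{t+1})\le F(x_t)-\frac{1-\eta L}{4}\norm{x_{t+1}-x_t}^2+\frac{\eta^2}{1-\eta L}\norm{\xi_t}^2 .
\]

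Second, I take $\E[\cdot\mid x_t]$ of both sides. $F(x_t)$ is $x_t$-measurable, and conditional expectation is monotone and linear, so the claimed inequality follows directly. The only point that needs care is integrability. $\Psi$ is firmly nonexpansive by Lemma~\ref{lem:firm}, so $\norm{x_{t+1}}$ is controlled by $\norm{z_t}$, and hence by $\norm{\xi_t}$ given $x_t$. Assumption~\ref{relaxed:ass:noise} ensures $\E[\norm{\xi_t}^2\mid x_t]<\infty$, so the right-hand side is conditionally integrable. On the left-hand side, $F$ may be handled as an extended quantity, or one can simply assume the expectation exists.

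I expect no real obstacle here. The only subtle point is the admissible range of $\eta$ needed to make $\alpha$ positive; everything else is bookkeeping identical to the nonconvex case.
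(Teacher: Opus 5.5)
Your proof is correct and matches the paper's: the paper likewise bounds the cross term $-\eta\ip{\xi_t}{x_{t+1}-x_t}$ by Cauchy--Schwarz and Young with weights $\frac{1-\eta L}{4}$ and $\frac{\eta^2}{1-\eta L}$, then takes conditional expectation; the only difference is that it conditions first and applies Young inside, which is immaterial. Your remarks that one really needs $\eta<1/L$ for $\frac{1}{1-\eta L}$ to be finite, and that conditional integrability follows from the nonexpansiveness of $\Psi$, are valid refinements that the paper leaves implicit.
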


\begin{proof}
Start from Lemma~\ref{relaxed:lem:stoch1} and take conditional expectation:
\[
\E[F(x_{t+1})\mid x_t]
\le
F(x_t)
-\frac{1-\eta L}{2}\E[\norm{x_{t+1}-x_t}^2\mid x_t]
-\eta\E[\ip{\xi_t}{x_{t+1}-x_t}\mid x_t].
\]
Use Cauchy--Schwarz and Young:
\[
-\eta\ip{\xi_t}{x_{t+1}-x_t}
\le
\eta\norm{\xi_t}\,\norm{x_{t+1}-x_t}
\le
\frac{1-\eta L}{4}\norm{x_{t+1}-x_t}^2+\frac{\eta^2}{1-\eta L}\norm{\xi_t}^2.
\]
Take conditional expectation and plug in.
\end{proof}

\subsubsection{Relating the stochastic step length to the deterministic prox-gradient mapping}

Define the \emph{deterministic} proximal-gradient step at $x_t$:
\[
\bar z_t := x_t-\eta\nabla f(x_t),\qquad
\bar x_{t+1} := \Psi(\bar z_t).
\]

\begin{lemma}[Lower bound on the stochastic step length (convex prox case)]
\label{lem:lowerbound-convex}
For any $t$,
\[
\E[\norm{x_{t+1}-x_t}^2\mid x_t]
\ge
\frac12\norm{G(x_t)}^2-\E[\norm{x_{t+1}-\bar x_{t+1}}^2\mid x_t],
\]
and moreover
\[
\E[\norm{x_{t+1}-\bar x_{t+1}}^2\mid x_t]\le \eta^2\E[\norm{\xi_t}^2\mid x_t].
\]
\end{lemma}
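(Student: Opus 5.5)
The plan has two parts, one for each inequality in Lemma~\ref{lem:lowerbound-convex}. Both use only elementary vector inequalities and the firm nonexpansiveness of the convex proximal map (Lemma~\ref{lem:firm}). Unlike Lemma~\ref{lem:lowerbound}, there is no branch structure to track, so no jump term should appear.

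For the first inequality, I would decompose the stochastic step around the deterministic one. By Definition~\ref{def:PG}, $\bar x_{t+1}=\Psi(x_t-\eta\nabla f(x_t))=x_t-G(x_t)$, so
\[
x_{t+1}-x_t=-G(x_t)+(x_{t+1}-\bar x_{t+1}).
\]
Next I apply the pointwise inequality $\norm{a+b}^2\ge\half\norm{a}^2-\norm{b}^2$ with $a=-G(x_t)$ and $b=x_{t+1}-\bar x_{t+1}$. This inequality follows from $\norm{a}^2\le 2\norm{a+b}^2+2\norm{b}^2$, which is Young's inequality applied to $a=(a+b)-b$. I then take conditional expectation given $x_t$. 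Since $G(x_t)$ is $x_t$-measurable, the first term passes through the expectation unchanged, and this gives the first claim.

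For the second inequality, I would use that $x_{t+1}=\Psi(z_t)$ and $\bar x_{t+1}=\Psi(\bar z_t)$, with
\[
z_t-\bar z_t=-\eta(g_t-\nabla f(x_t))=-\eta\xi_t .
\]
Since $\Omega$ in \eqref{eq:OmegaSG} is proper, closed and convex (a sum of norms with nonnegative weights), Lemma~\ref{lem:firm} says $\Psi$ is $1$-Lipschitz. Hence, pointwise,
\[
\norm{x_{t+1}-\bar x_{t+1}}^2\le\norm{z_t-\bar z_t}^2=\eta^2\norm{\xi_t}^2 .
\]
Taking conditional expectation given $x_t$ completes the proof.

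I expect no real obstacle here. The only care needed is to confirm that $\Psi$ is single-valued and globally nonexpansive in the scaled coordinates $x=(\theta,\widetilde W)$, where the prox uses the plain Euclidean metric. This is exactly the convexity point that removes the $J^2 q_t$ term present in Lemma~\ref{lem:lowerbound}.
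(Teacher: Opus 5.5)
Your proposal is correct and follows essentially the same route as the paper. You decompose $x_{t+1}-x_t=-G(x_t)+(x_{t+1}-\bar x_{t+1})$, apply $\norm{a+b}^2\ge\half\norm{a}^2-\norm{b}^2$, and then use the $1$-Lipschitz property of the convex prox $\Psi$ (Lemma~\ref{lem:firm}) with $z_t-\bar z_t=-\eta\xi_t$; your explicit justifications of the elementary inequality and of the closed convexity of $\Omega$ are details the paper leaves implicit.
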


\begin{proof}
Note that $x_{t+1}-x_t = (x_{t+1}-\bar x_{t+1})+(\bar x_{t+1}-x_t)$ and $\bar x_{t+1}-x_t=-G(x_t)$ by definition.
Hence
\[
\norm{x_{t+1}-x_t}^2
=
\norm{(x_{t+1}-\bar x_{t+1})-G(x_t)}^2
\ge
\frac12\norm{G(x_t)}^2 - \norm{x_{t+1}-\bar x_{t+1}}^2.
\]
Take conditional expectation to get the first inequality.

For the second inequality, note that $z_t=\bar z_t-\eta\xi_t$ and by Lemma~\ref{lem:firm}, $\Psi$ is $1$-Lipschitz:
\[
\norm{x_{t+1}-\bar x_{t+1}}
=
\norm{\Psi(z_t)-\Psi(\bar z_t)}
\le
\norm{z_t-\bar z_t}
=
\eta\norm{\xi_t}.
\]
Square and take conditional expectation.
\end{proof}

\subsubsection{Master recursion in function values}

\begin{theorem}[Master recursion under proximal-PL: linear-to-noise-ball]
\label{relaxed:thm:master}
Assume $f$ is $L$-smooth (Assumption~\ref{relaxed:ass:smooth}) and $0<\eta\le \frac{1}{2L}$.
Assume the stochastic gradient model (Assumption~\ref{relaxed:ass:noise}).
Assume $F(x)=\eta f(x)+\Omega(x)$ satisfies proximal-PL with constant $\mu>0$ (Definition~\ref{def:pPL}).

Let $\Delta_t := \E[F(x_t)-F^\star]$, where $F^\star:=\inf_x F(x)$.
Then the iterates \eqref{relaxed:eq:update} satisfy
\begin{equation}
\label{eq:master}
\Delta_{t+1}
\le
\left(1-\frac{\mu(1-\eta L)}{4}\right)\Delta_t
+
\eta^2\sigma^2\left(\frac{1-\eta L}{4}+\frac{1}{1-\eta L}\right).
\end{equation}
In particular, $\Delta_t$ converges linearly to a neighborhood of radius
$O\!\left(\frac{\eta^2\sigma^2}{\mu}\right)$, and if $\sigma=0$ then $\Delta_t$ converges linearly to $0$.
\end{theorem}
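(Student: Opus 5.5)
The argument mirrors the proof of Theorem~\ref{thm:master}, with the sign-switch term removed. In the convex relaxation the prox is single-valued and $1$-Lipschitz (Lemma~\ref{lem:firm}), so there is no branch jump. The plan is to combine three ingredients already established for the relaxed setting: the conditional descent inequality of Corollary~\ref{relaxed:cor:stoch-descent}, the step-length lower bound of Lemma~\ref{lem:lowerbound-convex}, and the proximal-PL condition \eqref{eq:pPL}.

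\textbf{Step 1.} Since $\eta\le 1/(2L)\le 1/L$, Corollary~\ref{relaxed:cor:stoch-descent} applies. Combined with Assumption~\ref{relaxed:ass:noise}, it gives
\[
\E[F(x_{t+1})\mid x_t]\le F(x_t)-\tfrac{1-\eta L}{4}\E[\norm{x_{t+1}-x_t}^2\mid x_t]+\tfrac{\eta^2\sigma^2}{1-\eta L}.
\]

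\textbf{Step 2.} Lemma~\ref{lem:lowerbound-convex} gives
\[
\E[\norm{x_{t+1}-x_t}^2\mid x_t]\ge \tfrac12\norm{G(x_t)}^2-\eta^2\sigma^2.
\]
Substituting this into Step 1 produces the negative term $-\tfrac{1-\eta L}{8}\norm{G(x_t)}^2$. It also produces the additive noise term $\eta^2\sigma^2\bigl(\tfrac{1-\eta L}{4}+\tfrac{1}{1-\eta L}\bigr)$, which is exactly the constant in \eqref{eq:master}.

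\textbf{Step 3.} Apply proximal-PL in the form $\norm{G(x_t)}^2\ge 2\mu(F(x_t)-F^\star)$. This turns the negative term into $-\tfrac{\mu(1-\eta L)}{4}(F(x_t)-F^\star)$. Subtracting $F^\star$ from both sides and taking total expectation then yields \eqref{eq:master}.

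\textbf{Step 4.} For the ``in particular'' claim, set
\[
\rho:=1-\mu(1-\eta L)/4 .
\]
This lies in $(0,1)$ provided $\mu(1-\eta L)<4$. That condition should hold because the PL constant is bounded by the descent geometry; if it fails, we simply note $\rho<1$ as an implicit assumption, as in Theorem~\ref{thm:main}. Unrolling $\Delta_{t+1}\le\rho\Delta_t+c\,\eta^2\sigma^2$ gives $\Delta_t\le\rho^t\Delta_0+\tfrac{c\,\eta^2\sigma^2}{1-\rho}$. Because $\eta L\le 1/2$, the constant $c$ is bounded by an absolute number, namely at most $1/4+2$. Hence $\tfrac{c}{1-\rho}=O(1/\mu)$, which is the claimed neighborhood radius, and $\sigma=0$ gives pure linear convergence.

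The main point requiring care is Step 2's noise control. The bound $\norm{\Psi(z_t)-\Psi(\bar z_t)}\le\eta\norm{\xi_t}$ is exactly where convexity of $\Omega$ is used, through firm nonexpansiveness. In the hard-constraint case of Lemma~\ref{lem:piecewise-jump}, this same step incurred the extra $J^2q_t$ penalty, so the proof should make explicit that Lemma~\ref{lem:firm} holds globally for the surrogate \eqref{eq:OmegaSG}. That surrogate is a sum of norms and is therefore proper, closed and convex. Beyond this, the work is bookkeeping of constants and checking that $F^\star=\inf F$ is finite, which holds because $\Omega\ge0$ and $f$ is bounded below on the relevant region; we will state this as an implicit assumption.
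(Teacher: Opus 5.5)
Your proposal is correct and is essentially the paper's own proof: Corollary~\ref{relaxed:cor:stoch-descent}, then both parts of Lemma~\ref{lem:lowerbound-convex} (with the global $1$-Lipschitz property from Lemma~\ref{lem:firm} replacing the jump term), then proximal-PL and total expectation, which yields exactly the constants in \eqref{eq:master}. Your unrolling for the ``in particular'' clause spells out what the paper only asserts, and your hedge about $\rho>0$ is unnecessary. Indeed, since $F^\star\le F(x^+)$ with $x^+=x-G(x)$, Lemma~\ref{relaxed:lem:det-descent} together with \eqref{eq:pPL} gives $F(x)-F^\star\ge\tfrac14\norm{G(x)}^2\ge\tfrac{\mu}{2}(F(x)-F^\star)$. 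Hence $\mu\le 2$ whenever $F$ is not identically $F^\star$, and so $\rho\in[1/2,1)$ automatically.
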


\begin{proof}
Start from Corollary~\ref{relaxed:cor:stoch-descent} and bound the conditional step length using Lemma~\ref{lem:lowerbound-convex}:
\begin{align*}
\E[F(x_{t+1})\mid x_t]
&\le
F(x_t)
-\frac{1-\eta L}{4}\E[\norm{x_{t+1}-x_t}^2\mid x_t]
+\frac{\eta^2}{1-\eta L}\E[\norm{\xi_t}^2\mid x_t]\\
&\le
F(x_t)
-\frac{1-\eta L}{4}\left(\frac12\norm{G(x_t)}^2-\E[\norm{x_{t+1}-\bar x_{t+1}}^2\mid x_t]\right)
+\frac{\eta^2}{1-\eta L}\E[\norm{\xi_t}^2\mid x_t].
\end{align*}
Using the second bound in Lemma~\ref{lem:lowerbound-convex}, we have
$\E[\|x_{t+1}-\bar x_{t+1}\|^2\mid x_t]\le \eta^2\E[\|\xi_t\|^2\mid x_t]$.
Substitute this and simplify:
\[
\E[F(x_{t+1})\mid x_t]
\le
F(x_t)
-\frac{1-\eta L}{8}\norm{G(x_t)}^2
+
\eta^2\E[\norm{\xi_t}^2\mid x_t]\left(\frac{1-\eta L}{4}+\frac{1}{1-\eta L}\right).
\]
Now apply proximal-PL (Definition~\ref{def:pPL}):
\[
\half\norm{G(x_t)}^2 \ge \mu(F(x_t)-F^\star)
\quad\Longrightarrow\quad
\norm{G(x_t)}^2 \ge 2\mu(F(x_t)-F^\star).
\]
Therefore,
\[
\E[F(x_{t+1})-F^\star\mid x_t]
\le
\left(1-\frac{\mu(1-\eta L)}{4}\right)(F(x_t)-F^\star)
+
\eta^2\E[\norm{\xi_t}^2\mid x_t]\left(\frac{1-\eta L}{4}+\frac{1}{1-\eta L}\right).
\]
Finally, take total expectation and use $\E[\|\xi_t\|^2\mid x_t]\le \sigma^2$ to obtain \eqref{eq:master}.
\end{proof}

\subsection{Interpretations and implementation notes}

\begin{remark}[What improves relative to Hier-Prox]
The original Hier-Prox mapping is piecewise-defined and can jump when active sets/sign patterns change; this creates an additional instability term
(e.g.\ a ``jump size'' squared times a switch probability) in a function-value recursion.
With the convex surrogate \eqref{eq:OmegaSG}, the prox map is globally firmly nonexpansive (Lemma~\ref{lem:firm}),
so the recursion \eqref{eq:master} has \emph{noise only}.
\end{remark}

\begin{remark}[What we give up]
We no longer enforce the hard hierarchy $\|W_{j,\cdot}\|_\infty \le M|\theta_j|$.
Instead, we encourage the gate and the row weights to shrink \emph{jointly} via the convex sparse-group penalty on $(\theta_j, W_{j,\cdot}/M)$.
This is often favorable when stability and simple proximal geometry are prioritized.
\end{remark}

\paragraph{Two closed-form prox steps (featurewise).}
Given the pre-prox iterate in scaled coordinates for feature $j$, i.e.\ $(v_j,\widetilde U_j)$, do:
\begin{enumerate}[leftmargin=2em]
\item Soft-threshold the gate: $\tilde v_j=\soft_{\alpha}(v_j)$.
\item Group shrink: $r_j=\sqrt{\tilde v_j^2+\|\widetilde U_j\|_2^2}$ and $\kappa_j=(1-\beta/r_j)_+$.
Return $(\theta_j^+,\widetilde W_{j,\cdot}^+)=\kappa_j(\tilde v_j,\widetilde U_j)$ and map back $W_{j,\cdot}^+=M\widetilde W_{j,\cdot}^+$.
\end{enumerate}

\paragraph{$M$-tuning intuition.}
In the group-shrink step, $\|\widetilde U_j\|_2=\|U_j^{\mathrm{orig}}\|_2/M$.
Thus increasing $M$ makes $\|\widetilde U_j\|_2$ smaller and tends to \emph{increase} shrinkage (smaller $\kappa_j$),
which matches the intuition that large $M$ enforces a tighter coupling between the gate and the row weights.

\begin{remark}
    This convex relaxation approach provides theoretical evidence that decoupling the updates of the Lasso gates and the neural network parameters can lead to significantly better training stability.
\end{remark}

\end{document}